%% file: ICLR_full.tex
\documentclass{article}

\usepackage{iclr2026_conference,times}

\usepackage[disable]{todonotes}
\usepackage{amsmath}
\usepackage{tikz}
\usepackage{mathdots}
\usepackage{yhmath}
\usepackage{cancel}
\usepackage{siunitx}
\usepackage{array}
\usepackage{multirow}
\usepackage{amssymb}
\usepackage{tabularx}
\usepackage{extarrows}
\usepackage{booktabs}
\usetikzlibrary{fadings}
\usetikzlibrary{patterns}
\usetikzlibrary{shadows.blur}
\usetikzlibrary{shapes}
\usepackage{mathtools}
\usepackage{leftindex}

\usepackage{tablefootnote}

\usepackage[utf8]{inputenc} 
\usepackage[T1]{fontenc}    
\usepackage{hyperref}       
\usepackage{url}            
\usepackage{booktabs}       
\usepackage{amsfonts}       
\usepackage{nicefrac}       
\usepackage{microtype}      
\usepackage[para]{threeparttable}

\usepackage{graphicx} 
\usepackage{amsmath,amsfonts,amssymb,amsthm}
\usepackage{hyperref}
\usepackage[shortlabels]{enumitem}
\usepackage{algpseudocode}
\usepackage{algorithm}
\usepackage{todonotes}
\usepackage{xcolor,color}
\usepackage{appendix}
\usepackage{amssymb}
\usepackage{wrapfig}
\usepackage{algorithm}
\usepackage{algpseudocode}
\usepackage{sidecap}
\usepackage{tabularx}
\usepackage[skip=0pt]{caption}

\newtheorem{theorem}{Theorem}[section]
\newtheorem{definition}[theorem]{Definition}

\newtheorem{lemma}[theorem]{Lemma}

\newtheorem{hypothesis}{Hypothesis}

\usepackage{bbm}
\newtheorem{remark}{Remark}

\newtheorem{example}{Example}
\usepackage{thm-restate}
\newtheorem{corollary}[theorem]{Corollary}
\usepackage{empheq}
\usepackage{graphicx} 
\usepackage{subfig}   
\usepackage{array}    
\usepackage{booktabs} 

\usepackage[classfont = bold,
            langfont  = caps,
            funcfont  = roman]{complexity}

\newcommand{\EAPAC}[1]{\textsf{APAC}^{(#1)}}

\makeatletter
\newcommand{\algomanyline}[1]{%
  \begin{tabularx}{\dimexpr\linewidth-\ALG@thistlm}[t]{@{}X@{}}
    #1
  \end{tabularx}
}
\makeatother



\makeatletter
\newcommand{\algmargin}{\the\ALG@thistlm}
\makeatother
\newlength{\whilewidth}
\algdef{SE}[parWHILE]{parWhile}{EndparWhile}[1]
  {\parbox[t]{\dimexpr\linewidth-\algmargin}{%
     \hangindent\whilewidth\strut\algorithmicwhile\ #1\ \algorithmicdo\strut}}{\algorithmicend\ \algorithmicwhile}%
\algnewcommand{\parState}[1]{\State%
  \parbox[t]{\dimexpr\linewidth-\algmargin}{\strut #1\strut}}
  
\newlength{\ifwidth}  
\algdef{SE}[parIF]{parIf}{EndparIf}[1]
  {\parbox[t]{\dimexpr\linewidth-\algmargin}{%
     \hangindent\ifwidth\strut\algorithmicif\ #1\ \algorithmicthen\strut}}{\algorithmicend}%

\algdef{SE}[SUBALG]{Indent}{EndIndent}{}{\algorithmicend\ }%
\algtext*{Indent}
\algtext*{EndIndent}

\newlang{\UNSAT}{Unsat}
\newlang{\TAUT}{Taut}
\newlang{\GapSVP}{GapSVP}
\newlang{\SIVP}{SIVP}
\newlang{\SIS}{SIS}

\newcommand{\AttS}{Att^{(S)}}

\usepackage{multirow}

\theoremstyle{plain}

\newtheorem*{theorem*}{Theorem}
\newtheorem*{corollary*}{Corollary}
\theoremstyle{remark}

\newcommand{\AttT}{Att^{(T)}}
\newcommand{\AttP}[1]{Att^{(#1)}}
\newcommand{\OV}{\mathsf{OV}}

\renewcommand{\IP}{\mathsf{IP}}
\newcommand{\IPDelta}{\mathsf{IP\Delta}}
\newcommand{\GapIP}[1]{\textsf{Gap-$#1$IP}}
\newcommand{\GapIPDelta}{\textsf{Gap-IP$\Delta$}}
\newcommand{\HypergraphIP}{\mathsf{HypergraphIP}}

\newcommand{\MAXkSAT}{\textsf{Max-kSAT}}
\newcommand{\MAXSAT}[1]{\textsf{Max-$#1$SAT}}
\newcommand{\SETH}{\mathsf{SETH}}
\newcommand{\GapHypIP}{\textsf{Gap-HypergraphIP}}

\newcommand{\DirCycIP}[1]{\textsf{IP-DIR-$#1$CYC}}
\newcommand{\GapDirCycIP}[1]{\textsf{Gap-IP-DIR-$#1$CYC}}
\usepackage{tabularx}

\newcommand{\IPDirCyc}[1]{\textsf{IP-Dir-$#1$CYCLE}}

\title{Poly-attention: a general scheme for higher-order self-attention}

\author{%
  Sayak Chakrabarti \\
  Computer Science,\\
  Columbia University,\\
  New York, NY 10027. \\
  \texttt{sayaksc@gmail.com} \\
  \And
  Toniann Pitassi \\
  Computer Science, \\
  Columbia University,\\
  New York, NY 10027. \\
  \texttt{toni@cs.columbia.edu} \\
  \And
  Josh Alman \\
  Computer Science, \\
  Columbia University,\\
  New York, NY 10027. \\
  \texttt{josh@cs.columbia.edu} \\
}

\begin{document}

\maketitle

\begin{abstract}
  The self-attention mechanism, at the heart of the Transformer model, is able to effectively model pairwise interactions between tokens. However, numerous recent works have shown that it is unable to perform basic tasks involving detecting triples of correlated tokens, or compositional tasks where multiple input tokens need to be referenced to generate a result. Some higher-dimensional alternatives to self-attention have been proposed to address this, including higher-order attention (Sanford et al., 2023) and Strassen attention (Kozachinskiy et al., 2025), which can perform some of these polyadic tasks in exchange for slower, superquadratic running times.

    In this work, we define a vast class of generalizations of self-attention, which we call poly-attention mechanisms. Our mechanisms can incorporate arbitrary higher-order (tensor) computations as well as arbitrary relationship structures between the input tokens, and they include the aforementioned alternatives as special cases. We then systematically study their computational complexity and representational strength, including giving new algorithms and matching complexity-theoretic lower bounds on the time complexity of computing the attention matrix exactly as well as approximately, and tightly determining which polyadic tasks they can each perform. Our results give interesting trade-offs between different desiderata for these mechanisms, including a tight relationship between how expressive a mechanism is, and how large the coefficients in the model may be so that the mechanism can be approximated in almost-linear time.

    Notably, we give a new attention mechanism which can be computed exactly in quadratic time, and which can perform function composition for any fixed number of functions. Prior mechanisms, even for just composing two functions, could only be computed in superquadratic time, and our new lower bounds show that faster algorithms for them are not possible.
\end{abstract}

\input{main_paper}

\input{acknowledgements}

\bibliographystyle{iclr2026_conference}
\bibliography{references}

\appendix

\tableofcontents

\input{prelim}

\input{apx_related_works}

\input{apx_strassen}

\input{apx_tree-att}

\input{apx_poly-att}

\input{apx_func-comp}

\input{apx_root-finding}

\input{apx_experiments}

\section{The use of Large Language Models (LLMs)}
Large language models have been used to find related works, and to polish the code for experiments.

\end{document}

%% file: main_paper.tex
\section{Introduction}

The transformer architecture, introduced by \cite{vaswani2017attention}, has the \emph{self-attention} mechanism at its heart, which is used to capture pair-wise correlations in large language models. Since its inception, it has been used in a variety of large language model (LLM) architectures, including BERT \citep{devlin2019bert}, GPT series \citep{radford2018improving,brown2020language,achiam2023gpt}, Claude \citep{anthropic2024claude}, Llama \citep{grattafiori2024llama}, and o1 \citep{openai24o1}. Its success has led to its prominent use in nearly every area of modern deep learning.

Transformers consist of three main components within each block: an input Multilayer Perceptron (MLP) layer, followed by a self-attention mechanism, then finally an output MLP layer \cite{vaswani2017attention}. The self-attention mechanism is  a function from $\mathbb{R}^{n\times d}\rightarrow \mathbb{R}^{n\times d}$ which computes and combines weighted pairwise correlations between tokens in its input, and is key to the success of the Transformer model. 

\paragraph{Self-attention \citep{vaswani2017attention}.} For a matrix $M$ and index $i$, we write $M_i$ to denote the $i$th row of $M$. Given a query matrix $Q\in \mathbb{R}^{n\times d}$, key matrix $K \in \mathbb{R}^{n\times d}$ and value matrix $V \in \mathbb{R}^{n\times d}$ for a specific input, the output of the self-attention function is given by the matrix $Att\in \mathbb{R}^{n\times d}$, whose $i^{th}$ row is: 
\begin{equation*}
    Att_i = \frac{\sum_{j\in [n]}\exp(\frac{1}{d}\langle Q_i, K_j\rangle) ~V_j}{\sum_{j\in [n]}\exp(\langle Q_i, K_j\rangle)}.
\end{equation*}

Despite the widespread use of self-attention in Transformers, there are limits to its  expressive  power, which is intuitively limited to capturing pairwise correlations between tokens. In particular, researchers have defined a number of basic tasks such as iterated function composition, Match3, Parity, Majority, and Dyck-1 which require higher order relationships than pairwise correlations and provably cannot be solved by simple self-attention networks \citep{sanford2024representational,peng2024limitations,hahn2020theoretical}. Empirical studies have also confirmed this intuition, showing poor performance by simple Transformers on benchmark datasets like multiplication, logical puzzles and dynamic programming \cite{dziri2023faith}, memorized mappings \citep{zhang2025complexity} and other datasets like SCAN \citep{lake2018generalization}, PCFG \citep{hupkes2020compositionality}, CLUTRR \citep{sinha2019clutrr}, CoGS \citep{kim2020cogs}, GFQ \citep{keysers2019measuring}, and CREPE \citep{ma2023crepe}.

In this paper, we focus especially on a type of task called function composition. As a simple example, the language model may be given the query "If Sam lives in Toronto, Peter lives in Paris, Toronto is in Canada, and Paris is in France, which country does Sam live in?", and the model is expected to reply "Canada". This is a composition of two functions: the first maps people to cities, and the second maps cities to countries. Several works including \citep{peng2024limitations,dziri2023faith,lu2023chameleon} have shown, both theoretically and experimentally, that simple language models are unable to perform these tasks. In order to overcome these representational limitations, several stronger attention mechanisms have been proposed, notably {\it higher-order tensor attention} and {\it Strassen attention} which we define next.

\paragraph{Tensor-attention.} \citet{clift2020logic} came up with a tensor generalization of self-attention, called 2-simplical attention, which \cite{sanford2024representational} also studied as the \emph{higher-order tensor attention} (that we will call $3$-tensor attention) for a query matrix $Q^{(1)} \in \mathbb{R}^{n\times d}$, key matrices $Q^{(2)}, Q^{(3)}\in \mathbb{R}^{n\times d}$ and value matrices $V^{(2)}, V^{(3)} \in \mathbb{R}^{n\times d}$. The output is given by the matrix $\AttT \in \mathbb{R}^{n\times d}$, whose $i^{th}$ row is given by: 
\begin{equation*}
     \AttT_i =  \frac{\sum_{\ell_1, \ell_2\in [n]}\exp(\frac{1}{d}\langle Q^{(1)}_i, Q^{(2)}_{\ell_2}, Q^{(3)}_{\ell_3}\rangle) ~V^{(2)}_{\ell_2}\odot V^{(3)}_{\ell_3}}{\sum_{\ell_1, \ell_2\in [n]}\exp(\frac{1}{d}\langle Q^{(1)}_i, Q^{(2)}_{\ell_2}, Q^{(3)}_{\ell_3}\rangle)}\,.
\end{equation*}
Here $\odot$ denotes the element-wise product (also called Hadamard product), and for three vectors $a,b,c \in \mathbb{R}^d$, we define $\langle a,b,c \rangle = \sum_{\ell=1}^d a[\ell] b[\ell] c[\ell]$.

\cite{sanford2024representational} showed that one $3$-tensor attention head can solve more complicated tasks like $\mathsf{Match3}$, which requires finding a triple of correlated tokens. They also defined a natural generalization to $t$-tensor attention, which can solve $\textsf{Match-}t$ for $t \geq 3$. 

\paragraph{Strassen-attention. } Later, \cite{kozachinskiy2025strassen} gave a more efficient attention mechanism that can also perform $\mathsf{Match3}$ and several other tasks difficult for self-attention. 
(As we will discuss shortly, 3-tensor attention can have prohibitive computational complexity, and Strassen-attention was defined as a step toward addressing this.) This attention mechanism is again defined over a query matrix $Q^{(1)}\in \mathbb{R}^{n\times d}$, key matrices $Q^{(2)}, Q^{(3)}\in \mathbb{R}^{n\times d}$ and value matrices $V^{(2)}, V^{(3)} \in \mathbb{R}^{n\times d}$. The output matrix is $\AttS \in \mathbb{R}^{n\times d}$, where the $i^{th}$  row, for $i\in [n]$, is given by:  
\begin{equation*}
     \AttT_i =  \frac{\sum_{\ell_2, \ell_3\in [n]}\exp(\frac{1}{d}(\langle Q^{(1)}_i, Q^{(2)}_{\ell_2}\rangle + \langle Q^{(2)}_{\ell_2}, Q^{(3)}_{\ell_3}\rangle + \langle Q^{(3)}_{\ell_3},Q^{(1)}_i\rangle)) ~V^{(2)}_{\ell_2}\odot V^{(3)}_{\ell_3}}{\sum_{\ell_2, \ell_3\in [n]}\exp(\frac{1}{d}(\langle Q^{(1)}_i, Q^{(2)}_{\ell_2}\rangle + \langle Q^{(2)}_{\ell_2}, Q^{(3)}_{\ell_3}\rangle + \langle Q^{(3)}_{\ell_3},Q^{(1)}_i\rangle))}\,.
\end{equation*}

Quite recently, 3-tensor attention has been implemented and performances studied by \cite{roy2025fast}. We refer the reader to Section~\ref{sec:related-work} in which we survey other attention mechanisms and the landscape of results known about them in more detail.

\subsection{Running time considerations}
A natural trade-off arises in these proposed attention mechanisms: as the attention mechanism becomes more general to give more representational power, the required running time increases too. This can often be prohibitive: the quadratic running time of self-attention is already a computational bottleneck which is mitigated in practice only by extensive hardware; a \emph{superquadratic} running time may not be practical even with such hardware speedups.

We compare here the running times of various attention mechanisms as a function of $n$, the number of input tokens, where the embedding dimension is $d = O(\log n)$; see running times in Table~\ref{tab:comparison} below. 

\noindent {\bf Exact Algorithms.}
The best algorithms for self-attention take time $n^{2+o(1)}$, matching the straightforward algorithm.
For tensor attention, the best algorithm is also the straightforward algorithm, which for $t$-tensor attention ($t \geq 3$) runs in superquadratic time $n^{t+o(1)}$.

The straightforward algorithm for Strassen attention, just following its definition, takes time $n^{3 + o(1)}$. However, \cite{kozachinskiy2025strassen} give a faster algorithm for Strassen attention with running time  $O(n^\omega)$, where $\omega \leq 2.3714$ is the exponent of matrix multiplication~\citep{alman2025more}, i.e., the constant such that $n \times n$ matrices can be multiplied in time 
$O(n^\omega)$. 
This faster algorithm is still truly supercubic, and moreover, we note that the aforementioned bound on $\omega$ comes from a highly theoretical algorithm, and typically either $\omega \approx 2.81$ from Strassen's algorithm~\citep{strassen1969gaussian}, or even $\omega = 3$ from the straightforward matrix multiplication algorithm, are used in practice. (\cite{kozachinskiy2025strassen} named it after Strassen's matrix multiplication algorithm to emphasize this faster algorithm.)

It is natural to wonder whether even faster algorithms are possible, and particularly whether tensor attention or Strassen attention could be computed in quadratic time. In fact, these known running times are known to be optimal under standard complexity-theoretic assumptions, so these algorithms cannot be improved. For self-attention and tensor attention, this was shown in prior work~\citep{alman2023fast,alman2023capture}; for Strassen attention, we prove this here in Theorem \ref{thm:ub-lb-poly} below.

\noindent {\bf Approximation Algorithms.} 
In most cases, a sufficiently accurate \emph{approximation} of self-attention suffices, and this can sometimes be computed much faster. 
 \cite{alman2023fast} shows that as long as the entries of the query and key matrices are bounded (and all have magnitude at most $B = o(\sqrt{\log n})$) we can compute an entry-wise approximation of the self-attention matrix in almost linear time, $n^{1+o(1)}$.
 \footnote{An entry-wise approximation outputs a matrix where each entry is at most $\frac{1}{\poly(n)}$ far from the exact value.} \cite{alman2023capture} similarly showed how to compute an entry-wise approximation of tensor attention $\AttT$ in $n^{1+o(1)}$ time, with a smaller bound on $B$.
These prior works have also shown matching lower bounds, showing that these bounds $B$ are tight: if the weights are even slightly larger, than the straightforward exact running times discussed above are unavoidable. (These lower bounds use standard assumptions from fine-grained complexity theory; see Section \ref{sec:technique-overview} for more details.) 
Many different lines of experimental work studied Transformers with reasonable precision guarantees \citep{zafrir2019q8bert,sun2019hybrid,katharopoulos2020transformers,dettmers2022gpt3,xiao2023smoothquant,dettmers2022gpt3,perez2023training,roy2021efficient,han2023hyperattention}. 

\begin{wrapfigure}{r}{0.6\textwidth}

\centering
\begin{tabular}{@{}>{\raggedright\arraybackslash}p{2cm}ccc@{}}
\toprule
\textbf{Mechanism} & \textbf{Exact cc} & \textbf{Apx cc} & \textbf{Bound} \\ 
\midrule
Self-attention & $n^{2+o(1)}$ & $n^{1+o(1)}$ & $o(\sqrt{\log n})$ \\ 
$t$-Tensor & $n^{3+o(1)}$ & $n^{1+o(1)}$ & $o((\log n)^{1/t})$ \\ 
Strassen & $n^{\omega + o(1)}$ &$n^{1+o(1)}$ & $o(\sqrt{\log n})$ \\ 
Tree (new) & $n^{2+o(1)}$ &$n^{1+o(1)}$ & $o(\sqrt{\log n})$ \\ 
Poly (new) & $n^{t+o(1)}$ &$n^{1+o(1)}$ & $o((\log n)^{1/k})$ \\ 
\bottomrule
\end{tabular}

\captionof{table}{This summarizes the running times of both exact and approximate algorithms for these attention variants. For entry-wise approximation (Apx cc), the bound $B$ is the maximum absolute value of the matrix entries such that we can entry-wise approximate the output matrix in near-linear time; the attention polynomial is in $t$ variables and has degree $k$. \cite{alman2023fast,alman2023capture} proved bounds for self-attention and tensor-attention, while we prove the rest.}
\label{tab:comparison}
\end{wrapfigure}

In this paper, we build on this line of work and give the first fast approximation algorithm for Strassen attention. We show that, if all the weights are bounded by $B=o(\sqrt{\log n})$, then one can approximate Strassen attention in almost linear time $n^{1+o(1)}$, and if the weights are larger, then the exact running time of $n^{\omega - o(1)}$ cannot be avoided (again using fine-grained complexity assumptions). This lower bound fits within a new, much more general lower bound on different generalizations of attention which we will state in Theorem~\ref{thm:ub-lb-poly} later. In particular, although the statement appears similar to prior work, proving this requires substantial new techniques, since prior techniques focused on proving \emph{cubic} lower bounds, but Strassen attention actually has a subcubic (but superquadratic) time algorithm based on matrix multiplication; see Section~\ref{sec:technique-overview} for more details.

\medskip

\subsection{Poly-attention is all you need}

In this work we introduce a more general class of attention mechanisms 
called {\it poly-attention} that generalizes and improves upon these previous attention mechanisms. 

An instantiation of poly-attention is given by a {\it base} polynomial, $h$, 
over $t$ variables, 
degree $k$ and sparsity $s$. 
We will precisely define poly-attention shortly, and show that it includes self-attention, tensor attention, and Strassen attention as special cases. 

Our main results include complete and exhaustive analyses of the running times one can achieve to compute or approximate different poly-attentions, as well as the expressive power of each one. Using these, we identify new, specific instantiations of poly-attention which are simultaneously more expressive than self-attention, and easier to compute than prior replacements to self-attention. One may also use our results to identify attention mechanisms of interest which achieve a desired trade-off between expressiveness and computational complexity.

\noindent{\bf Tree-attention.} We particularly highlight a  subclass of our poly-attention mechanisms that we call \emph{tree-attention}, which loosely speaking is characterized by a subclass of degree-2 base polynomials $h$ that possesses a tree-like property.
We find that all tree-attention mechanisms can be computed in \emph{quadratic} time, matching the running time of standard self-attention. Furthermore, we show that tree-attention can solve $r$-fold function composition for \emph{any} constant $r$. 

This is a substantial improvement on prior attention mechanisms. Self-attention cannot even solve $2$-fold function composition. Meanwhile, 3-tensor attention and Strassen attention, which can solve $2$-fold function composition, require superquadratic time, and furthermore, they cannot solve $3$-fold function composition. Our new tree-attention can solve $r$-fold function composition for all $r$ and can be computed in quadratic time (Theorem \ref{thm:func-comp}).

We give a more detailed analysis of tree-attention, including tight exact and approximation algorithms, in Section~\ref{sec:tree}, (Theorem \ref{thm:tree-att}).
We posit tree-attention as the best of all worlds in terms of representational strength and time complexity. In addition to strictly improving the expressive power of the self-attention mechanism, we will see that the runtime of tree-attention matches the best possible runtimes in both the exact and approximate versions.  
We envision two types of users/applications: 
\begin{itemize}
    \item if quadratic running time can be tolerated then use the exact algorithm for tree-attention
    \item if a faster, almost linear running time is needed, then the user should find the largest bound $B$ on the weights which can be tolerated by their hardware and architecture, and then apply the most expressive tree-attention which can be approximated quickly for that $B$ (we will explore the trade-off in Section~\ref{sec:tree}).
\end{itemize}
We emphasize that our exact and approximate algorithms for tree-attention only use straightforward matrix multiplication algorithms, and do not rely on bounds on $\omega$ or other impractical fast matrix multiplication algorithms.
See Section~\ref{sec:introexperiments} for an experimental validation.

\noindent{\bf Full characterization of poly-attention.}
Beyond tree-attention, we give a full characterization of the running time needed to compute poly-attention as a function of the underlying properties of the base polynomial, $h$. We find that these mechanisms often require cubic or more time to compute exactly, but nonetheless have fast approximation algorithms when $B$ (the bound on the weights) is small enough, and meanwhile can perform \emph{very} complex tasks.


\section{The poly-attention mechanism}

In this section, we define the general class of poly-attention mechanisms. They will be described by a special class of multi-linear polynomials, which we will call \emph{attention polynomials}.

\begin{definition}[Attention polynomial]
    \label{def:att-poly}
    We call a polynomial $h(x_1, \dots, x_t)$ an \emph{attention polynomial} of degree $k$ if it is multi-linear, it has coefficients only in $\{0, 1\}$, and all its monomials have degree at least $2$ and at most $k$.
\end{definition}

Attention polynomials will be a central concept in this article. We will use them to concisely denote combinations of inner products of vectors. Given vectors $Y_1, \dots, Y_t \in \mathbb{R}^{d}$, consider a multi-linear monomial of an attention polynomial, $m$, of degree $k$ containing variables $x_{j_1}, \dots, x_{j_k}$, where $1 \leq j_1<  \ldots < j_k \leq t$. We denote $m(Y_1, \dots, Y_t) := \langle Y_{j_1}, Y_{j_2}, \dots, Y_{j_k}\rangle$,
which is an inner product of order $k$. Then, given an attention polynomial $h(x_1, \dots, x_t)$ containing $s$ monomials $m_1, \dots, m_s$, we define 
$h(Y_1, \dots, Y_t) := \sum_{i\in [s]}m_i(Y_1, \dots, Y_t)$.

Now, we describe our new class of \emph{poly-attention} mechanisms, of order $t$, using an attention polynomial $h(x_1, \dots, x_t)$ of degree $k$ having $s$ monomials (typically think of $t, k, s$ as small constants).

\begin{definition}[Poly-attention]
\label{def:poly-att}
    For an attention polynomial $h(x_1, \dots, x_t)$ having $s$ monomials of degree at most $k$, we define the \emph{poly-attention function} from $\mathbb{R}^{n\times d}$ to $\mathbb{R}^{n\times d}$, which depends on $h$ and has, as its parameters, query-key weights $W_{Q^{(1)}}, \dots, W_{Q^{(t)}} \in \mathbb{R}^{d\times d}$ and value weights $W_{V^{(2)}}, \dots, W_{V^{(t)}} \in \mathbb{R}^{d\times d}$.
    
    For an input $X \in \mathbb{R}^{n\times d}$, the query-key matrices are denoted as $Q^{(1)} := XW_{Q^{(1)}}, \dots, Q^{(t)} := X W_{Q^{(t)}}$ and the value matrices as $V^{(2)} := X W_{V^{(2)}}, \dots, V^{(t)} := X W_{V^{(t)}}$.

    The output of the poly-attention function will be given by the matrix 
    \[
    \AttP{h}(Q^{(1)}, \dots, Q^{(t)}, V^{(1)}, \dots, V^{(t)})\in \mathbb{R}^{n\times d},
    \]
    where the $\ell_1$-th row is defined as:
    \begin{equation}
    \label{eqn:poly-att}
        \AttP{h}_{\ell_1} = \frac{\sum_{\ell_2, \dots, \ell_t \in [n]}\exp\left(\frac{1}{d}h( Q^{(1)}_{\ell_{1}}, \dots, Q^{(t)}_{\ell_{t}})\right)V^{(2)}_{\ell_2}\odot V^{(3)}_{\ell_3}\odot \ldots \odot V^{(t)}_{\ell_t} }{\sum_{\ell_2, \dots, \ell_t \in [n]}\exp\left(\frac{1}{d}h( Q^{(1)}_{\ell_{1}}, \dots, Q^{(t)}_{\ell_{t}})\right)}.
    \end{equation}
\end{definition}

We will often drop the $Q^{(i)}$'s and $V^{(j)}$'s from the notation $\AttP{h}$ when it doesn't lead to ambiguity.

Here, $Q^{(1)}$ will be the query matrix as used in the usual self-attention mechanisms, and $Q^{(2)}, \dots, Q^{(t)}$ will be the key matrices, as the index of the row of $Q^{(1)}$ corresponds to the row of the output of poly-attention, and correlations are considered with respect to that. However, since we use all the variables (and hence, the matrices) in a symmetric sense, we denote both the query and the key matrices using $Q^{(j)}$ for ease of notation.

\begin{lemma}
    Poly-attention captures all the previous higher-order self-attention techniques. In particular, (i) self-attention is poly-attention with the base polynomial $h(x_1, x_2) = x_1x_2$; (ii) $t$-tensor attention is poly-attention with $h(x_1, \dots, x_t) = x_1\dots x_t$; and (iii) Strassen-attention is poly-attention with $h(x_1, x_2, x_3) = x_1x_2+x_2x_3+x_3x_1$.
\end{lemma}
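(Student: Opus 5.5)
The proof is a direct unfolding of Definition~\ref{def:poly-att} for each of the three base polynomials, followed by a comparison of the resulting row formula~\eqref{eqn:poly-att} with the defining formulas given in the introduction. Each polynomial is first checked to be an attention polynomial in the sense of Definition~\ref{def:att-poly}. Each of $x_1x_2$, $x_1\cdots x_t$ (with $t\ge 2$) and $x_1x_2+x_2x_3+x_3x_1$ is multilinear and has $\{0,1\}$ coefficients. Every monomial has degree between $2$ and $k$, where $k=2$, $t$ and $2$ respectively.

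Next, evaluate $h$ on the rows $Q^{(1)}_{\ell_1},\dots,Q^{(t)}_{\ell_t}$ using the convention $m(Y_1,\dots,Y_t)=\langle Y_{j_1},\dots,Y_{j_k}\rangle$.

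\emph{Case (i).} $h(x_1,x_2)=x_1x_2$ gives $h(Q^{(1)}_{\ell_1},Q^{(2)}_{\ell_2})=\langle Q^{(1)}_{\ell_1},Q^{(2)}_{\ell_2}\rangle$. With $t=2$ the Hadamard product reduces to the single factor $V^{(2)}_{\ell_2}$. Renaming $Q^{(1)}=Q$, $Q^{(2)}=K$, $V^{(2)}=V$ recovers $Att_i$.

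\emph{Case (ii).} The single monomial gives the order-$t$ inner product $\langle Q^{(1)}_{\ell_1},\dots,Q^{(t)}_{\ell_t}\rangle$. The numerator carries $V^{(2)}_{\ell_2}\odot\cdots\odot V^{(t)}_{\ell_t}$, which matches $t$-tensor attention; for $t=3$ this is exactly $\AttT$.

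\emph{Case (iii).} Summing the three degree-$2$ monomials gives the exponent $\langle Q^{(1)}_{\ell_1},Q^{(2)}_{\ell_2}\rangle+\langle Q^{(2)}_{\ell_2},Q^{(3)}_{\ell_3}\rangle+\langle Q^{(3)}_{\ell_3},Q^{(1)}_{\ell_1}\rangle$, which is the exponent of $\AttS$.

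In every case the summation ranges over $\ell_2,\dots,\ell_t\in[n]$ with the query index $\ell_1$ fixed, and both numerator and denominator carry the same $\frac1d$ scaling.

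There is no real obstacle. The only care needed concerns notational inconsistencies in the introductory formulas: the missing $\frac1d$ in the self-attention denominator, and the summation indices written as $\ell_1,\ell_2$ instead of $\ell_2,\ell_3$ in tensor attention. These should be read as typos, with the normalization taken to be the same in numerator and denominator, as in Definition~\ref{def:poly-att}. Modulo that reading, the identification is exact term by term.
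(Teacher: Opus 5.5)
Your proposal is correct and takes the same route as the paper. The paper gives no separate proof: it treats the lemma as an immediate unfolding of Definition~\ref{def:poly-att}, and the Strassen appendix, for instance, simply invokes $h_S(x_1,x_2,x_3)=x_1x_2+x_2x_3+x_3x_1$. Your term-by-term matching of the exponent, the Hadamard product of value rows, and the common $\frac{1}{d}$ normalization is exactly what is needed, including reading the introductory slips as typos (the missing $\frac{1}{d}$, the $\ell_1,\ell_2$ summation indices, and the Strassen row mislabeled $\AttT_i$).
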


\section{Beyond self-attention: the power of poly-attention}
\label{sec:beyond}

In this section, we study the strength and limitations of the poly-attention scheme. We begin in Section~\ref{sec:func-comp} by studying an illustrative example. Thereafter, we will consider tree-attention and poly-attention in full generality.

\subsection{An example: function composition}
\label{sec:func-comp}

\begin{wrapfigure}{r}{0.4\textwidth}
\vspace{-50pt}
\centering
\scalebox{0.9}{%
\begin{tabular}{@{}>{\raggedright\arraybackslash}p{2.5cm}cc@{}}
\toprule
\textbf{Mechanism} & \textbf{2-fold} & \textbf{3-fold} \\
\midrule
Self-attention & No & No \\
3-Tensor & Yes & No \\
Strassen & Yes & No \\
Tree (new) & Yes & Yes \\
Poly (new) & Yes & Yes \\
\bottomrule
\end{tabular}%
}
\captionof{table}{Compositionality results showing support for function composition. \cite{peng2024limitations} prove impossibility bounds for self-attention, \cite{kozachinskiy2025strassen} simulate 2-fold with Strassen-attention, while we prove the rest.}
\label{tab:func-comp}
\end{wrapfigure}

To demonstrate the power of poly-attention, we analyze a special case when $h(x_1, x_2, x_3) = x_1x_2+x_2x_3$. We show that this specific poly-attention can efficiently solve important tasks faster than any other previous attention mechanisms.

To demonstrate the strength of this polynomial $h$, we define the function composition problem demonstrated earlier. Mathematically, the 2-fold function composition problem is: given two functions $f_1, f_2 : [n] \rightarrow [n]$ and $x \in [n]$, output $f_2(f_1(x))$. To express this problem for an attention mechanism, the input is $X \in \mathbb{R}^{(2n+1)\times d}$, where $X_i$ for $i\in [n]$ contains an encoding of $f_1(i)$, $X_j$ for $j\in [n+1, 2n]$ contains an encoding $f_2(j-n)$ and $X_{2n+1}$ contains an encoding of $x$; and our goal is to output the value of $f_2(f_1(x))$ in the $(2n+1)$-th entry of the output.

\cite{peng2024limitations} proved that self-attention cannot simulate 2-fold function composition, and even that almost $n$ self-attention heads are needed in order to solve it. Since self-attention needs quadratic time to compute, it would take cubic time to compute $n$ heads. All prior mechanisms that solve this, including 3-tensor attention and Strassen-attention, require superquadratic time.
This leads to our punchline: poly-attention for this very simple polynomial $h_2$ can simulate 2-fold function composition in just quadratic time!

\begin{theorem}
\label{thm:2-fold}
    Let $h_2(x_1, x_2, x_3) = x_1x_2+x_2x_3$. Poly-attention for $h_2$ can simulate function composition using only one head. Furthermore, $\AttP{h_2}$ can be computed in $O(n^2)$ time.
\end{theorem}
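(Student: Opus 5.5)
There are two parts: a construction showing one head of $\AttP{h_2}$ solves 2-fold composition, and an $O(n^2)$ algorithm. I would do the algorithm first, since it only uses the tree structure of $h_2$.

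\textbf{Quadratic-time algorithm.} Write $A_{\ell_1\ell_2} = \exp(\frac1d\langle Q^{(1)}_{\ell_1},Q^{(2)}_{\ell_2}\rangle)$ and $B_{\ell_2\ell_3} = \exp(\frac1d\langle Q^{(2)}_{\ell_2},Q^{(3)}_{\ell_3}\rangle)$. Computing each of these $n\times n$ matrices takes $O(n^2 d)$ time. The exponential of $h_2$ factors as $A_{\ell_1\ell_2}B_{\ell_2\ell_3}$, so the numerator of row $\ell_1$ is
\[
\sum_{\ell_2} A_{\ell_1\ell_2}\, V^{(2)}_{\ell_2}\odot \Big(\sum_{\ell_3} B_{\ell_2\ell_3} V^{(3)}_{\ell_3}\Big) = \big(A\,(V^{(2)}\odot (BV^{(3)}))\big)_{\ell_1}.
\]
The denominator is $(A(B\mathbf 1))_{\ell_1}$. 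This is two products of an $n\times n$ matrix with an $n\times d$ matrix, costing $O(n^2 d)$, plus an $O(nd)$ Hadamard product and an $O(nd)$ division. For constant $d$, or counting $d$ as a factor as the paper does, this gives $O(n^2)$ time, with no fast matrix multiplication needed.

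\textbf{Simulating composition.} Index the tokens as in Section~\ref{sec:func-comp}: rows $i\in[n]$ encode $f_1(i)$, rows $n+j$ encode $f_2(j)$, and row $2n+1$ encodes $x$. The idea is to let $\ell_2$ range over the $f_1$-rows and $\ell_3$ over the $f_2$-rows.

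1. Give each token a positional encoding of its index on a circle, $p(i)=(\cos(2\pi i/n),\sin(2\pi i/n))$. Then $\langle p(a),p(b)\rangle = 1$ exactly when $a=b$, and is at most $1-\Omega(1/n^2)$ otherwise.
2. Add indicator coordinates for the three token types.
3. Choose $W_{Q^{(1)}}$ and $W_{Q^{(2)}}$ so that $\langle Q^{(1)}_{2n+1},Q^{(2)}_{i}\rangle = C\langle p(x),p(i)\rangle$ for $f_1$-tokens $i$, and a large negative value $-C'$ for every other $\ell_2$. This uses a type coordinate scaled by $C'$.
4. Choose $W_{Q^{(3)}}$ and a second block of $W_{Q^{(2)}}$ so that $\langle Q^{(2)}_i,Q^{(3)}_{n+j}\rangle = C\langle p(f_1(i)),p(j)\rangle$, again with a $-C'$ penalty when $\ell_3$ is not an $f_2$-token.

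Since the two inner products use disjoint coordinate blocks, $h_2$ is maximized uniquely at $\ell_2=x$, $\ell_3=n+f_1(x)$, with a gap of $\Omega(C/n^2)$ over every other pair.

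5. Set $V^{(2)}$ to the all-ones vector on $f_1$-tokens.
6. Set $V^{(3)}_{n+j}$ to the encoding of $f_2(j)$.

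The Hadamard product then returns $f_2(f_1(x))$, averaged with weights concentrated on the maximizer.

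The main obstacle is the approximation error. I would choose $C = \Theta(d\, n^2\log n)$, or use $O(\log n)$-bit binary encodings so that the gap is $\Omega(1)$ and $C=\Theta(d\log n)$ suffices. Then the total softmax weight on the $n^2$ wrong pairs is at most $1/\mathrm{poly}(n)$. The output row $2n+1$ is then within $1/\mathrm{poly}(n)$ of the encoding of $f_2(f_1(x))$, and rounding recovers it. Finally, I would check that all these inner products can be realized by linear maps $XW$ from a fixed-dimension encoding of each token, namely its position, type, and function value. This holds because each inner product above is bilinear in those features.
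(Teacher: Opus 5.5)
Your proof is correct. The quadratic-time algorithm is the paper's. Algorithm~\ref{alg:attp-quadratic} (Theorem~\ref{thm:tree-alg}), specialized to the path $x_1x_2+x_2x_3$, computes column by column exactly your $A\,(V^{(2)}\odot(BV^{(3)}))$ and $A(B\mathbf{1})$, and it too really gives $O(n^2d)=\tilde O(n^2)$.

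The simulation is where you take a different route. The paper (Theorem~\ref{thm:func-comp-gen}, adapting \cite{kozachinskiy2025strassen}) uses a sum-of-squares construction. From token features $(1,i,i^2,\phi(i),\phi(i)^2)$ it arranges $\langle Q^{(1)}_{\ell_1},Q^{(2)}_{\ell_2}\rangle=-A^2\log n\,(\phi(\ell_1)-\ell_2)^2$ and $\langle Q^{(2)}_{\ell_2},Q^{(3)}_{\ell_3}\rangle=-A^2\log n\,(\ell_3-n-\phi(\ell_2))^2$. Thus $h_2$ is $0$ on the correct pair and at most $-A^2\log n$ on every other pair, simply because the errors are integers. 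The offset $n$ in the index arithmetic already pins $\ell_2,\ell_3$ to the right blocks, so no type indicators are needed.

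You instead use the ``equality as maximal inner product of unit-norm encodings'' device familiar from \cite{sanford2024representational,peng2024limitations}. The periodicity of the circle encoding absorbs the block offset for free. In exchange you need explicit type penalties to confine $\ell_2,\ell_3$. With linear maps, a wrong-type pair gives $C\langle p(\cdot),p(\cdot)\rangle-C'\le C-C'$ rather than exactly $-C'$, which is harmless once $C'\ge C$. The $\Theta(1/n^2)$ angular gap also forces $C=\Theta(dn^2\log n)$. Switching to $\pm1$ binary encodings avoids this, giving an $\Omega(1)$ gap and polylogarithmic entries at the price of $\Theta(\log n)$ embedding dimension.

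The paper's route stays in constant dimension with a clean integer gap. Yours makes explicit why the tree structure matters: the two monomials containing $x_2$ read disjoint coordinate blocks of $Q^{(2)}$. It also adapts easily to any matching predicate expressible as a thresholded inner product.
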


We will tightly characterize what weights are needed for efficient approximation of all poly-attentions; in the case of $\AttP{h_2}$, we find:

\begin{theorem}
\label{thm:eg-cc}
    Given the polynomial $h_2(x_1, x_2, x_3) = x_1x_2+x_2x_3$, where the entries of the query-key matrices are in $[-B, B]$:
    \vspace{-1mm}
    \begin{enumerate}
        \item If $B = o(\sqrt{\log n})$, we can compute an entry-wise $(\nicefrac{1}{\poly(n)})$-approximation of $\AttP{h_2}$ in time $n^{1+o(1)}$.
        \vspace{-1mm}
        \item If $B = \Omega(\sqrt{\log n})$, then every algorithm for computing an entry-wise $(\nicefrac{1}{\poly(n)})$-approximation of $\AttP{h_2}$ requires time $\Omega(n^2)$, unless $\SETH$ is false.
    \end{enumerate}
\end{theorem}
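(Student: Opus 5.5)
Both parts of Theorem~\ref{thm:eg-cc} rest on one factorization, which follows from the tree structure of $h_2$. Write $A := \exp(\frac1d Q^{(1)}(Q^{(2)})^\top)$ and $C := \exp(\frac1d Q^{(2)}(Q^{(3)})^\top)$, with $\exp$ applied entry-wise; both are $n\times n$ matrices. Since $h_2$ has no monomial involving both $x_1$ and $x_3$, we get, for each output row $\ell_1$ and each coordinate $c\in[d]$:
\[
\sum_{\ell_2,\ell_3} A_{\ell_1\ell_2} C_{\ell_2\ell_3} V^{(2)}_{\ell_2}[c]\, V^{(3)}_{\ell_3}[c] \;=\; \bigl(A\,\mathrm{diag}(V^{(2)}[\cdot,c])\,C\,V^{(3)}[\cdot,c]\bigr)_{\ell_1}.
\]
The normalizer is $A\,(C\mathbf 1)$. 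So every column of the numerator, and the denominator, is obtained by two matrix--vector products, first with $C$ and then with $A$, plus a diagonal scaling in between.

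\textbf{Part 1 (upper bound).} I follow \cite{alman2023fast}. When all entries lie in $[-B,B]$ with $B=o(\sqrt{\log n})$ and $d=O(\log n)$, each entry of $\frac1d Q^{(1)}(Q^{(2)})^\top$ has magnitude at most $B^2=o(\log n)$.
\begin{itemize}
\item \emph{Low-rank approximation.} A Taylor/Chebyshev polynomial of degree $g=o(\log n)$ approximates $\exp$ on this range. This gives $A\approx U_1W_1^\top$ with rank $r=\binom{2d+2g}{2g}=n^{o(1)}$ and entry-wise error $1/\mathrm{poly}(n)$. The same construction gives $C\approx U_2W_2^\top$.
\item \emph{Fast evaluation.} Apply the factors right-to-left to each of the $d$ columns: $W_2^\top V^{(3)}[\cdot,c]$, then $U_2$, then the diagonal scaling, then $W_1^\top$, then $U_1$. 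Each step costs $n\cdot r$, so the total is $n^{1+o(1)}$. The denominator is computed the same way.
\item \emph{Error control.} Entries of $A$ and $C$ are at least $e^{-o(\log n)}=n^{-o(1)}$. The denominator is therefore at least $n^{2-o(1)}$, and the multiplicative errors compose. Choosing the polynomial accuracy $1/\mathrm{poly}(n)$ small enough, together with value entries bounded by $\mathrm{poly}(n)$, yields an entry-wise $1/\mathrm{poly}(n)$ error after division. This step is routine but needs care, because the product $A\,\mathrm{diag}\,C$ may accumulate relative error. Working with relative (multiplicative) errors handles this cleanly, since all the quantities involved are positive except the values.
\end{itemize}

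\textbf{Part 2 (lower bound).} I reduce from standard self-attention, whose $\Omega(n^{2})$ SETH lower bound for $B=\Omega(\sqrt{\log n})$ is known from \cite{alman2023fast}.
\begin{itemize}
\item \emph{Embedding.} Given a self-attention instance $(Q,K,V)$, set $Q^{(1)}=Q$ and $Q^{(2)}=K$. Make $Q^{(3)}$ orthogonal to $Q^{(2)}$ by placing them on disjoint coordinates; for instance $Q^{(3)}=0$, padding the dimension if needed. Then $C$ is the all-ones matrix.
\item \emph{Reading off self-attention.} Set $V^{(2)}=V$ and take $V^{(3)}$ to be all ones. The $\ell_3$-sum then contributes a factor $n$ to both numerator and denominator, which cancels. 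Hence $\AttP{h_2}=Att(Q,K,V)$ exactly.
\item \emph{Conclusion.} Any entry-wise approximation algorithm for $\AttP{h_2}$ running in $n^{2-\Omega(1)}$ time would approximate self-attention in that time, contradicting SETH.
\end{itemize}
The only subtlety is the $\frac1d$ scaling and the bound $B$: the dimension changes by at most a constant factor, so $B$ stays $\Theta(\sqrt{\log n})$, and I rescale entries by a constant to match the hard regime of \cite{alman2023fast}.

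I expect the main obstacle to be the error analysis in Part 1, specifically ensuring that the two chained low-rank approximations stay $1/\mathrm{poly}(n)$-accurate after normalization. The lower bound itself is a direct embedding.
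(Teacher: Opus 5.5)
Your proof is correct. Part 1 takes a different route from the paper, while Part 2 is essentially the paper's construction with the middle step black-boxed.

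\textbf{Part 1.} The paper gets Part 1 from the general Theorem \ref{thm:poly-att-ub}. It stacks blocks into $K^{(1)}=[Q^{(1)},\mathbf{1}]$, $K^{(2)}=[Q^{(2)},Q^{(2)}]$, $K^{(3)}=[\mathbf{1},Q^{(3)}]$. The three-way inner product of rows of the $K^{(j)}$ then equals $h_2$ evaluated at rows of the $Q^{(j)}$. The paper then calls the $3$-tensor attention polynomial method of \cite{alman2023capture} as a black box, using only that the exponent is at most $sB^2=o(\log n)$ in magnitude.

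You instead exploit that $h_2$ is a path. No monomial couples $x_1$ and $x_3$, so each output column is $A\,\mathrm{diag}(\cdot)\,C$ applied to a vector. Two ordinary matrix low-rank approximations, evaluated right to left, therefore suffice. This is the paper's Strassen warm-up (Algorithm \ref{alg:att-strassen}) with the closing factor removed, or the low-rank analogue of the exact tree algorithm (Algorithm \ref{alg:attp-quadratic}).

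What your route buys:
\begin{itemize}
\item It needs no tensor machinery.
\item Your relative-error bookkeeping is cleaner than the paper's triangle-inequality chains. The $(1\pm\varepsilon)$ factors on positive entries simply multiply, giving $O(\varepsilon\|V^{(2)}\|_\infty\|V^{(3)}\|_\infty)$ error after normalization. So the $n^{-o(1)}$ lower bound on entries is not even needed.
\end{itemize}
What it costs is generality. It uses the graph structure of $h_2$; it extends edge by edge along the tree recursion to tree polynomials. The paper's reduction ignores structure and gives the $o((\log n)^{1/k})$ threshold for every degree-$k$ attention polynomial at once.

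\textbf{Part 2.} The paper applies its general Lemma \ref{lem:case1}, reducing from $\varepsilon$-\GapIP{2}. Specialized to $h_2$, that construction takes $T=\{1,2\}$ for the leading monomial $x_1x_2$, sets $Q^{(3)}=0$, and gives $V^{(3)}$ an all-ones first column. That is exactly your embedding; the paper then re-derives the gap argument of \cite{alman2023fast} inline. Your black-box use of the self-attention lower bound is shorter, and the identity $\AttP{h_2}=Att(Q,K,V)$ is exact.

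Two small corrections to your closing remark:
\begin{itemize}
\item With $Q^{(3)}=0$ no padding is needed, so $d$ and the $\frac1d$ factor are unchanged.
\item You cannot rescale entries by a constant, since that changes the softmax. You also do not need to: the hard instances, in \cite{alman2023fast} and in Lemma \ref{lem:case1} alike, have entries bounded by $C_b\sqrt{\log n}$ for a suitably large constant $C_b$. They remain valid inputs for every larger $B$. Neither argument gives hardness for smaller constant multiples of $\sqrt{\log n}$, so the statement's ``$B=\Omega(\sqrt{\log n})$'' should be read in that sense.
\end{itemize}
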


We consider next 3-fold function composition, in which the input is three functions, $f_1, f_2, f_3 : [n]\rightarrow [n]$ and $x \in [n]$, and we want to compute $f_3(f_2(f_1(x)))$. To our knowledge, no prior attention mechanisms could perform 3-fold function composition. In particular, although Strassen-attention and 3-tensor attention were designed to solve problems like 2-fold function composition, we prove that they \emph{cannot} compute 3-fold function composition when the precision is bounded:

\begin{theorem}
\label{thm:func-comp-lb}
    Strassen-attention and 3-tensor attention, require at least $H >n^{1-o(1)}$ heads to simulate 3-fold function composition when the precision is bounded.
\end{theorem}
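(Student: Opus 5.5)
We reduce from a communication-complexity lower bound, following the strategy \cite{peng2024limitations} used to show that self-attention needs many heads for 2-fold composition. Split the input tokens between two or more parties so that any low-communication protocol cannot determine $f_3(f_2(f_1(x)))$. Then show that $H$ heads of Strassen-attention or 3-tensor attention, with bounded precision $p$ bits, can be simulated by a protocol using $O(H d p)$ bits, which is $H\cdot n^{o(1)}$ since $d,p=n^{o(1)}$. The required communication is $n^{1-o(1)}$, so $H>n^{1-o(1)}$.

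\textbf{Partition.} We use a three-party number-on-forehead style split, or more simply a two-party split, of the $3n+1$ tokens. A natural choice gives $f_1$ and $x$ to one party and $f_2,f_3$ to another. That split is too weak, since composing $f_2,f_3$ is local. So we split so that evaluating the composition requires pointer chasing across parties: Alice holds $f_2$, Bob holds $f_1,f_3,x$. Computing $f_3(f_2(f_1(x)))$ is then a pointer-chasing instance requiring alternation Bob$\to$Alice$\to$Bob. The known lower bound for $r$-round pointer chasing with the wrong speaker starting, from Nisan--Wigderson, gives $\Omega(n)$ (up to logarithmic factors) for protocols whose round structure is too short. To apply it we must show that each attention layer yields a protocol with a \emph{fixed small number of rounds} and the wrong starting party, namely a one-way message from Alice to Bob.

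\textbf{Simulation.} Consider the output row at position $3n+1$, which is held by Bob. For Strassen-attention, the numerator for head $j$ is $\sum_{\ell_2,\ell_3}\exp(\langle q,Q^{(2)}_{\ell_2}\rangle+\langle Q^{(2)}_{\ell_2},Q^{(3)}_{\ell_3}\rangle+\langle Q^{(3)}_{\ell_3},q\rangle)V^{(2)}_{\ell_2}\odot V^{(3)}_{\ell_3}$. Partition the pairs $(\ell_2,\ell_3)$ by owner: Alice--Alice, Alice--Bob, Bob--Alice, Bob--Bob. Bob computes the Bob--Bob terms alone. The Alice--Alice terms depend on the query $q$ through $\exp(\langle q,Q^{(2)}_{\ell_2}+Q^{(3)}_{\ell_3}\rangle)$, and with bounded precision that sum is not finitely describable for arbitrary $q$.

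This cross-term handling is the main obstacle. The fix I would try is a Bob$\to$Alice$\to$Bob protocol: Bob first sends $q$ ($O(dp)$ bits), and Alice returns her partial sums, each rounded to $p$ bits. The Alice--Bob mixed terms are the hard part, because they couple rows held by both parties. I would handle them by having Alice send a low-dimensional sketch of her rows, or by having Bob send his $Q^{(3)}$-side rows in aggregated form. Neither is obviously small, so I expect to instead use the fact that $x$ is located in Bob's query. This makes the protocol Bob$\to$Alice$\to$Bob, which has too few rounds for 3-fold chasing starting at Bob: two rounds, whereas 3 pointers need 3 alternations. The Nisan--Wigderson bound then gives $\tilde\Omega(n)$.

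\textbf{3-tensor attention and conclusion.} For 3-tensor attention, the same argument applies with $\langle q,Q^{(2)}_{\ell_2},Q^{(3)}_{\ell_3}\rangle$ in place of the pairwise sum. Mixed terms factor as $\sum_{\ell_2}\sum_{\ell_3}\exp(\langle q\odot Q^{(2)}_{\ell_2},Q^{(3)}_{\ell_3}\rangle)$, and the same round structure holds. Finally, summing over $H$ heads and the final MLP (computed by Bob) yields a 2-round protocol of cost $H\cdot n^{o(1)}$. Comparing with the $n^{1-o(1)}$ lower bound gives the theorem.
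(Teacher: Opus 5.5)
Your proposal has a genuine gap, and the two-party framework cannot repair it. First, the round count is wrong. With Alice holding $f_2$ and Bob holding $f_1,f_3,x$, Bob computes $y=f_1(x)$ himself and sends it, Alice replies $f_2(y)$, and Bob applies $f_3$. That is an $O(\log n)$-bit Bob$\to$Alice$\to$Bob protocol. So the Bob-first two-round simulation you arrive at yields no lower bound; only a one-way Alice$\to$Bob simulation would, via INDEX.

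Second, no such one-way simulation, and indeed no cheap simulation of any kind, exists, because of the cross terms you left open. A single Strassen head can use $\langle q,Q^{(2)}_j\rangle$ to penalize $j$ outside the $f_2$-block, and $\langle Q^{(2)}_j,Q^{(3)}_k\rangle=-C\log n\,(k-2n-f_2(j-n))^2$ as in Theorem~\ref{thm:func-comp-gen}. This concentrates its weight on the $n$ pairs with $k=2n+f_2(j-n)$. If $V^{(3)}_k$ carries the bit $[f_3(k-2n)=1]$, the head's output at row $3n+1$ decides whether $\mathrm{Im}(f_2)\cap f_3^{-1}(1)=\emptyset$. That is set disjointness across the cut, so it costs $\Omega(n)$ bits in any number of rounds. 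A $3$-tensor head does the same by using coordinates on which one factor is constant; writing the exponent as $\langle q\odot Q^{(2)}_{\ell_2},Q^{(3)}_{\ell_3}\rangle$ factors nothing. Every two-party split either separates two function blocks, and hits this obstacle, or puts all three on one side, where the composition is trivial. So the two-party route is closed.

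The missing idea is to use several players with \emph{overlapping} views sending simultaneous messages to a coordinator. In the paper every player knows $x$. Since $r^N_{j,k}$ depends only on $X_{3n+1},X_j,X_k$, a player who sees the blocks containing $j$ and $k$ computes that part of the numerator and denominator and sends it using $O(p\log\log n)$ bits per head. There are no rounds to control, and the $\Omega(n)$ myopic pointer-jumping bound (Lemma~\ref{lem:cc-lb}) then gives $Hdp\log\log n=\Omega(n)$.

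Be aware, though, that your obstacle resurfaces there. In the myopic views (Player~1: $f_2$; Player~2: $f_1,f_3$; Player~3: $f_1,f_2$), no player sees $f_2$ and $f_3$ together. Correspondingly, the three partial sums in the proof of Theorem~\ref{thm:strassen-func} omit the block $j\in[n+1:2n]$, $k\in[2n+1:3n]$ and its transpose. By the disjointness argument above (cut Player~2 from everyone else), even one head's output cannot be computed with $o(n)$ bits in that model. Covering every pair of blocks forces full number-on-forehead views, with each player missing one $f_i$. The argument would then need a lower bound for $3$-player NOF pointer jumping with simultaneous messages, not the myopic bound.
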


However,  we prove that poly-attention can indeed simulate $3$-fold composition, and even more generally \emph{$r$-fold composition} for any constant $r$, and still be evaluated in quadratic time!

\begin{theorem}
\label{thm:func-comp}
    For any integer $r \geq 2$, define the polynomial $h_r(x_1, \dots, x_r) = x_1x_2+x_2x_3+\ldots+ x_{r}x_{r+1}$. Then, poly-attention for $h_r$ can simulate $r$-fold function composition, and $\AttP{h_r}$ can be computed exactly in time $O(r^3n^2)$ (input dimension here is $O(rn)$, not $n$).
\end{theorem}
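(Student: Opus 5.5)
The theorem has two parts, representation and running time, and I will treat them separately. Note that $h_r$ has $r+1$ variables $x_1,\dots,x_{r+1}$.

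\textbf{Representation.} The input has $rn+1$ rows. Rows in block $j\in[r]$ (indices $(j-1)n+1,\dots,jn$) encode the pairs $(i, f_j(i))$, and the last row encodes $x$. The guiding idea is that the chain $x_1x_2+x_2x_3+\dots+x_rx_{r+1}$ rewards exactly the assignments $\ell_1,\ell_2,\dots,\ell_{r+1}$ that trace the path $x\mapsto f_1(x)\mapsto f_2(f_1(x))\mapsto\cdots$.

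\begin{enumerate}
\item Choose embeddings $e(i)\in\mathbb{R}^{d}$ with $d=O(\log n)$ and $\langle e(i),e(i)\rangle=d$. Two versions are possible: nearly orthogonal random signs, or an exact construction such as $(\cos,\sin)$ of angles in $O(1)$ coordinates with large scaling. Set a large scale factor $C$ and use separate coordinate blocks per position.
\item Design the weights so that, up to block-indicator coordinates:
\begin{itemize}
\item $Q^{(1)}_{\ell_1}$ is nonzero only for the row holding $x$, and there equals $C e(x)$.
\item For $j\ge 2$, $Q^{(j)}_{\ell}$ for a row $\ell=(i,f_{j-1}(i))$ in block $j-1$ carries $e(i)$ in the coordinates paired with $Q^{(j-1)}$, and $e(f_{j-1}(i))$ in the coordinates paired with $Q^{(j+1)}$.
\item Add a large negative bias via an indicator coordinate so that rows outside block $j-1$ are suppressed for position $j$.
\end{itemize}
\item With this design, $h_r$ equals $C\sum_j \langle e(\text{out}_{j-1}), e(\text{in}_j)\rangle$. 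It is maximized, with a gap $\Omega(C)$ to every other assignment, exactly on the unique consistent chain.
\item Then softmax concentrates on that chain. Take $V^{(j)}=\mathbf{1}$ for $j\le r$ and $V^{(r+1)}$ encoding $f_r(\cdot)$, so the output row for $x$ is approximately $e(f_r(\cdots f_1(x)))$. An output MLP can decode this. The bounded-precision argument needs the gap $C$ to be about $\log n$ larger than the other terms so the error is $1/\poly(n)$.
\end{enumerate}

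\textbf{Running time.} This is the key algorithmic point, done by dynamic programming along the path, as in message passing on a tree. Let $A^{(j)}\in\mathbb{R}^{N\times N}$ with $A^{(j)}_{ab}=\exp(\frac1d\langle Q^{(j)}_a,Q^{(j+1)}_b\rangle)$, where $N=O(rn)$. Each $A^{(j)}$ takes $O(N^2 d)$ time. The exponential factorizes:
\[
\exp\Big(\tfrac1d h_r\Big)=\prod_{j=1}^{r} A^{(j)}_{\ell_j\ell_{j+1}} .
\]
The Hadamard product of values also factorizes coordinatewise. So for each output coordinate $c\in[d]$ the numerator is
\[
A^{(1)}D_2^{c}A^{(2)}D_3^{c}\cdots A^{(r)}\,v^{c}_{r+1},
\qquad D_j^{c}=\mathrm{diag}\big(V^{(j)}_{\cdot,c}\big).
\]
Evaluate it right to left as matrix--vector products. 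Each product costs $O(N^2)$, giving $O(rN^2)=O(r^3n^2)$ per coordinate. The denominator is the same expression with all-ones vectors.

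To avoid the extra factor $d$, batch the $d$ coordinates as an $N\times d$ matrix, using $\odot$ with $V^{(j)}$ at each step. With $d$ treated as $n^{o(1)}$ or absorbed, this gives the claimed $O(r^3n^2)$.

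\textbf{Main obstacle.} I expect the hardest step to be the construction with bounded precision. Specifically, it is controlling the sum over the $N^{r}$ wrong chains against the single correct one. This requires the per-edge gap to exceed $r\log N$, which constrains $C$ relative to the entry bound.
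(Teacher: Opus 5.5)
Your proposal is correct. The running-time half is the paper's argument: Algorithm~\ref{alg:attp-quadratic} (Theorem~\ref{thm:tree-alg}) restricted to a path is exactly your right-to-left evaluation of $A^{(1)}D_2^{c}A^{(2)}\cdots A^{(r)}v^{c}_{r+1}$ by matrix--vector products, one column at a time.

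One correction there: batching the $d$ columns into an $N\times d$ matrix does not remove the factor $d$. Each step, and forming each $A^{(j)}$, still costs $O(N^2 d)$, so $O(r^3n^2)$ holds only with $d$ absorbed. The paper also absorbs it, via $\tilde O(n^2)$.

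The representation half takes a genuinely different route.

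\textbf{The paper's construction.} It follows the sum-of-squares idea of Kozachinskiy et al. With features $(1,i,i^2,\phi(i),\phi(i)^2)$, it makes each edge term equal to $-A^2\log n\,(\ell_{j+1}-(j-1)n-\phi(\ell_j))^2$. This has several consequences:
\begin{itemize}
\item The next row is pinned by its absolute index, so the block restriction comes for free and no indicator bias is needed.
\item Any wrong index loses at least $A^2\log n$, because a nonzero integer square is at least $1$.
\item The embedding needs only $O(r)$ coordinates.
\item The price is entries polynomially large in $n$, coming from the $(\ell-(j-2)n)^2$ coordinate.
\end{itemize}

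\textbf{Your construction.} Yours is key matching instead. You place codes $e(\cdot)$ with $\langle e(a),e(a)\rangle=d_e$ in a separate coordinate block per edge and add a block-indicator penalty. With random-sign codes this costs:
\begin{itemize}
\item $O(r\log n)$ dimensions;
\item an existence argument for the code;
\item an input MLP that emits block indicators.
\end{itemize}
In exchange, it keeps the per-edge gap at $\Omega(d_e)$, so the entries can stay of order $r\sqrt{\log n}$ rather than polynomial in $n$. Your $(\cos,\sin)$ variant has gap only $\Theta(1/n^2)$, so it needs polynomial scaling, like the paper's.

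\textbf{Shared step.} Both constructions rest on the same domination estimate: a single mismatched edge must cost more than $r\ln N$ plus $\Omega(\log n)$ to beat the at most $N^{r}$ wrong chains.

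Finally, your $V^{(r+1)}$, which returns the stored value $f_r(i)$ of the selected row (that is, $\phi(\ell)$), is the right choice. Returning the row index, as written in the proof of Theorem~\ref{thm:func-comp-gen}, would give $f_{r-1}(\cdots f_1(x))$ plus the offset $(r-1)n$, not the $r$-fold composition.
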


In fact, we give a general characterization of which polynomials $h$ can be used in $\AttP{h}$ to perform $r$-fold function composition. For example, we will also prove that poly-attention for $h_{r-1}$ can not simulate $r$-fold function composition.

\subsection{Tree-attention: polynomials leading to efficient poly-attention} \label{sec:tree}

\tikzset{every picture/.style={line width=0.75pt}} 
\begin{wrapfigure}{r}{0.45\textwidth}

\tikzset{every picture/.style={line width=0.75pt}} 

\begin{tikzpicture}[x=0.75pt,y=0.75pt,yscale=-0.3,xscale=0.4]

\draw [line width=1.5]    (282.41,63.64) -- (135.88,190.13) ;
\draw [line width=1.5]    (282.41,63.64) -- (296.63,135.07) -- (307.6,190.13) ;
\draw [line width=1.5]    (282.41,63.64) -- (444.97,190.13) ;
\draw [line width=1.5]    (135.88,190.13) -- (204.57,305.13) ;
\draw [line width=1.5]    (135.88,190.13) -- (78.64,305.13) ;
\draw [line width=1.5]    (444.97,190.13) -- (399.18,292.35) ;
\draw  [fill={rgb, 255:red, 126; green, 211; blue, 33 }  ,fill opacity=1 ] (270.37,63.75) .. controls (270.31,56.32) and (275.66,50.26) .. (282.31,50.19) .. controls (288.96,50.13) and (294.4,56.1) .. (294.45,63.53) .. controls (294.51,70.95) and (289.16,77.02) .. (282.51,77.08) .. controls (275.86,77.14) and (270.42,71.17) .. (270.37,63.75) -- cycle ;
\draw  [fill={rgb, 255:red, 126; green, 211; blue, 33 }  ,fill opacity=1 ] (123.83,190.25) .. controls (123.78,182.82) and (129.13,176.75) .. (135.78,176.69) .. controls (142.43,176.63) and (147.87,182.6) .. (147.92,190.02) .. controls (147.98,197.45) and (142.63,203.52) .. (135.98,203.58) .. controls (129.33,203.64) and (123.89,197.67) .. (123.83,190.25) -- cycle ;
\draw  [fill={rgb, 255:red, 126; green, 211; blue, 33 }  ,fill opacity=1 ] (66.59,305.24) .. controls (66.54,297.82) and (71.89,291.75) .. (78.54,291.69) .. controls (85.19,291.63) and (90.63,297.6) .. (90.68,305.02) .. controls (90.74,312.44) and (85.39,318.51) .. (78.74,318.57) .. controls (72.09,318.64) and (66.65,312.67) .. (66.59,305.24) -- cycle ;
\draw  [fill={rgb, 255:red, 126; green, 211; blue, 33 }  ,fill opacity=1 ] (193.22,305.8) .. controls (193.16,298.37) and (198.51,292.3) .. (205.16,292.24) .. controls (211.81,292.18) and (217.25,298.15) .. (217.3,305.57) .. controls (217.36,313) and (212.01,319.07) .. (205.36,319.13) .. controls (198.71,319.19) and (193.27,313.22) .. (193.22,305.8) -- cycle ;
\draw  [fill={rgb, 255:red, 126; green, 211; blue, 33 }  ,fill opacity=1 ] (295.55,190.25) .. controls (295.5,182.82) and (300.85,176.75) .. (307.5,176.69) .. controls (314.15,176.63) and (319.59,182.6) .. (319.64,190.02) .. controls (319.69,197.45) and (314.35,203.52) .. (307.7,203.58) .. controls (301.04,203.64) and (295.61,197.67) .. (295.55,190.25) -- cycle ;
\draw  [fill={rgb, 255:red, 126; green, 211; blue, 33 }  ,fill opacity=1 ] (432.93,190.25) .. controls (432.87,182.82) and (438.22,176.75) .. (444.87,176.69) .. controls (451.52,176.63) and (456.96,182.6) .. (457.02,190.02) .. controls (457.07,197.45) and (451.72,203.52) .. (445.07,203.58) .. controls (438.42,203.64) and (432.98,197.67) .. (432.93,190.25) -- cycle ;
\draw  [fill={rgb, 255:red, 126; green, 211; blue, 33 }  ,fill opacity=1 ] (387.14,292.47) .. controls (387.08,285.04) and (392.43,278.97) .. (399.08,278.91) .. controls (405.73,278.85) and (411.17,284.82) .. (411.22,292.24) .. controls (411.28,299.67) and (405.93,305.74) .. (399.28,305.8) .. controls (392.63,305.86) and (387.19,299.89) .. (387.14,292.47) -- cycle ;

\draw (321.49,42) node [anchor=north west][inner sep=0.75pt]   [align=left] {$\displaystyle x_{1}$};
\draw (69.64,178.72) node [anchor=north west][inner sep=0.75pt]   [align=left] {$\displaystyle x_{2}$};
\draw (332.94,178.72) node [anchor=north west][inner sep=0.75pt]   [align=left] {$\displaystyle x_{3}$};
\draw (470.32,178.72) node [anchor=north west][inner sep=0.75pt]   [align=left] {$\displaystyle x_{4}$};
\draw (12.4,297.55) node [anchor=north west][inner sep=0.75pt]   [align=left] {$\displaystyle x_{5}$};
\draw (241.36,293.72) node [anchor=north west][inner sep=0.75pt]   [align=left] {$\displaystyle x_{6}$};
\draw (424.52,284.77) node [anchor=north west][inner sep=0.75pt]   [align=left] {$\displaystyle x_{7}$};

\end{tikzpicture}
\vspace{1.5mm}
\caption{Graphical representation for the tree polynomial $h(x_1, \dots, x_7) = x_1x_2+x_1x_3+x_1x_4+x_2x_5+x_2x_6+x_4x_7$}
\end{wrapfigure}
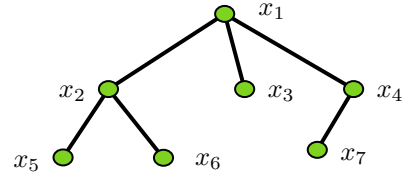

We saw in the previous section that instances of poly-attention which can be computed in quadratic time can have great representational strength. A natural question arises: what is the class of attention polynomials that can be exactly computed in only $n^{2+o(1)}$ time? Could there be even stronger ones? 
We answer this by giving a \emph{complete characterization}. We first define a few notations to describe them.

For an attention polynomial $h(x_1, \dots, x_t)$ of degree $2$, we say that a simple graph $G$ is the \emph{graphical representation} of $h$, if $G$ contains $t$ vertices $v_1, \dots, v_t$, where vertex $v_i$ corresponds to the variable $x_i$, and there exists an edge between $v_i$ and $v_j$ if and only if $x_ix_j$ is a monomial present in $h$. If the graphical representation of $h$ is a tree or a forest, we say that $h$ is a \emph{tree polynomial}, and poly-attention for a tree polynomial will be called \emph{tree-attention}. 

Our main result about tree-attention shows that it can be computed just as efficiently as self-attention, both for exact algorithms (where it can be computed in quadratic time) and approximate algorithms (which has the same bound $B = o(\sqrt{\log n})$ as in self-attention, which is also the largest bound for any poly-attention):

\begin{theorem}
\label{thm:tree-att}
    Given a tree polynomial $h$, where the entries of the query-key matrices are in $[-B, B]$:
    \begin{enumerate}
        \item The output of tree-attention, $\AttP{h}$, can be exactly computed in $n^{2+o(1)}$ time.
        \item If $B = o(\sqrt{\log n})$, entry-wise approximation of $\AttP{h}$ can be computed in $n^{1+o(1)}$ time.
        \item If $B = \Omega(\sqrt{\log n})$, under standard complexity assumptions, entry-wise approximation of $\AttP{h}$ requires $\Omega(n^2)$ time.
    \end{enumerate}
\end{theorem}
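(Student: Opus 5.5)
The plan has three parts, one per item of Theorem~\ref{thm:tree-att}. Throughout we assume, without loss of generality, that the graphical representation $G$ of $h$ is a tree rooted at $x_1$; if it is a forest, the components not containing $x_1$ contribute a factor that does not depend on $\ell_1$ and can be computed separately.

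\textbf{Item 1 (exact computation).} Write $A^{(i,j)} := \exp(\frac1d Q^{(i)}(Q^{(j)})^\top)\in\mathbb{R}^{n\times n}$ entrywise for each edge $x_ix_j$ of $G$. Because the exponential of a sum of edge terms factors into a product over edges, the numerator of (\ref{eqn:poly-att}) in coordinate $c\in[d]$ is a sum over $\ell_2,\dots,\ell_t$ of a product of edge weights $A^{(i,j)}_{\ell_i\ell_j}$ and vertex weights $V^{(i)}_{\ell_i}[c]$. This is a tree-structured sum-product, and we evaluate it by dynamic programming from the leaves up. For each non-root vertex $v$ we compute a vector $m_v\in\mathbb{R}^n$ by
\[
m_v[\ell] = V^{(v)}_\ell[c]\prod_{u\text{ child of }v}\Big(\sum_{\ell'}A^{(v,u)}_{\ell\ell'}m_u[\ell']\Big).
\]
Each edge costs one $n\times n$ matrix--vector product. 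Over all $d$ coordinates this is a product of an $n\times n$ matrix with an $n\times d$ matrix, which takes $O(n^2d)$ time. The root then combines its children's messages with no value factor. The denominator is the same computation with all $V$ replaced by $1$. Forming the matrices $A^{(i,j)}$ costs $O(n^2 d)$ each, so the total is $O(t n^2 d)=n^{2+o(1)}$ for $d=O(\log n)$. Only straightforward matrix multiplication is used.

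\textbf{Item 2 (approximation).} We follow \cite{alman2023fast}. When $B=o(\sqrt{\log n})$, each $A^{(i,j)}$ has a low-rank approximation $A^{(i,j)}\approx U^{(i,j)}(W^{(i,j)})^\top$, with $U^{(i,j)},W^{(i,j)}\in\mathbb{R}^{n\times n^{o(1)}}$ computable in $n^{1+o(1)}$ time. This comes from a polynomial (Taylor) approximation of $\exp$ of degree $o(\log n)$ on the interval $[-B^2,B^2]$, which gives entrywise error at most $1/\poly(n)$ relative to each entry. We then run the same message-passing algorithm with these factored matrices, so each matrix--vector product costs $n^{1+o(1)}$.

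The error analysis needs care. All entries of $A^{(i,j)}$ and all factors are positive, and the polynomial approximation gives a multiplicative error of $(1\pm\epsilon)$ per entry. Every term of the tree sum-product then picks up a multiplicative error of at most $(1\pm\epsilon)^{t-1}$. The numerator and the denominator are therefore approximated with relative error $O(t\epsilon)$. Taking their ratio and using $|V|\le\poly(n)$ turns this into an entrywise $1/\poly(n)$ error. One subtlety is that the values $V$ can be negative. We handle this by splitting $V=V^+-V^-$ coordinatewise, or by shifting $V$ by a constant so it is positive.

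\textbf{Item 3 (lower bound).} Since $G$ is a tree with at least one edge, we reduce self-attention approximation to it. Pick an edge $x_1x_j$ of $G$. For every other variable $x_u$, set $Q^{(u)}$ to zero and $V^{(u)}$ to all ones. Each inner product involving $Q^{(u)}$ then vanishes, and the sums over $\ell_u$ contribute a uniform factor $n$ that cancels between numerator and denominator. What remains is exactly self-attention on $(Q^{(1)},Q^{(j)},V^{(j)})$. If $x_1$ is isolated in $G$, we instead use a reduction that lets one edge $x_ix_j$ carry the hard instance, with a dummy query. The $\Omega(n^2)$ lower bound under SETH for $B=\Omega(\sqrt{\log n})$ then follows from \cite{alman2023fast}.

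I expect the main obstacle to be the error propagation in Item 2. The multiplicative-error argument requires nonnegativity throughout, so the handling of signed values and the $1/\poly(n)$ bookkeeping across the $t-1$ tree levels are where the details matter.
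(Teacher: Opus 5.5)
Your proposal is correct, and it takes a genuinely different route from the paper on Items~2 and~3. Item~1 is essentially the paper's Algorithm~\ref{alg:attp-quadratic}. Your message $m_v = V^{(v)}_{(1:n,c)}\odot\bigodot_u A^{(v,u)}m_u$ is exactly its recursively computed vector $D^{V^{(j_i)}}P^{(g_i)}$, and the Hadamard products at branchings are justified by Lemma~\ref{lem:var-sep}.

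\textbf{Item~2.} The paper ignores the tree structure here. Theorem~\ref{thm:poly-att-ub} embeds any degree-$k$ attention polynomial into $t$-tensor attention of dimension $sd$, with one column block per monomial and all-ones blocks for absent variables. It then invokes the tensor polynomial method of \cite{alman2023capture}, which yields $B=o((\log n)^{1/k})$ for every $h$ at once. You instead factor each edge matrix with the matrix low-rank lemma, as the paper does only for Strassen attention (Algorithm~\ref{alg:att-strassen}), and rerun your message passing on the factors. This is more elementary and makes the error bookkeeping transparent, but it is specific to degree-$2$ trees.

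Two small corrections to your error analysis. First, $U$ and $W$ need not be entrywise positive. What you actually use is that $\tilde A=UW^\top$ lies entrywise within a factor $(1\pm\varepsilon)$ of $A$, and that the factored message passing computes the sum--product of $\tilde A$ exactly. Second, signed values need no splitting. Write $N,D$ for the exact numerator and denominator and $\tilde N,\tilde D$ for the computed ones. Termwise you get $|\tilde N-N|\le O(t\varepsilon)\|V\|_\infty^{t-1}D$, and together with $|N|/D\le\|V\|_\infty^{t-1}$ this already gives $|\tilde N/\tilde D-N/D|\le O(t\varepsilon)\|V\|_\infty^{t-1}$.

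\textbf{Item~3.} The paper re-proves hardness directly from $\varepsilon$-Gap-IP (Lemma~\ref{lem:case1}). Like you, it zeroes the $Q^{(u)}$ off the chosen monomial and sets the corresponding value columns to all ones. It then adds a normalizing column block and redoes the numerator and denominator estimates. Your black-box reduction from self-attention hardness \cite{alman2023fast} is cleaner and preserves $n,d,B,\gamma$ exactly.

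\textbf{The underspecified case.} The one step that is too thin is when $x_1$ is isolated in $G$. There every output row is identical, so a single call yields only one $d$-vector and cannot reproduce the $n$ rows of a self-attention output. Your ``dummy query'' black-box reduction therefore does not go through as stated. You must reduce from the decision problem itself:
\begin{itemize}
\item put the two vector sets in the top halves of $Q^{(i)},Q^{(j)}$ for an edge $x_ix_j$;
\item let $V^{(i)},V^{(j)}$ be indicators of the top halves, so the numerator sums over both endpoints;
\item add normalizing bottom-half rows, so that the single output ratio is large if and only if a good pair exists.
\end{itemize}
This is exactly what Lemma~\ref{lem:case1} does in its case $r_1\neq 1$.
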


Tree polynomials include the polynomials $h_r$ from Theorem~\ref{thm:func-comp} which can compute function composition. More generally, the poly-attention for a tree polynomial, where the tree has depth $q$, can simulate $(q-1)$-fold function composition, as well as a variety of tree generalizations. (Function composition can be naturally seen as corresponding to the path graph, which is the graphical representation of $h_r$.)

We show next that, for any attention polynomial which is not a tree polynomial (either because it has degree more than $2$, or because its graphical representation contains a cycle), its poly-attention requires superquadratic time to compute. Thus, as promised, tree-attentions form a complete characterization of quadratic-time poly-attentions.

\subsection{Computational complexity of non-tree poly-attention}
\label{sec:cc-poly-att}

Next, we give a complete characterization of the computational complexity (both exact and approximate) for poly-attention for all attention polynomials $h$.

\begin{theorem}
\label{thm:ub-lb-poly}
    Given poly-attention for an attention polynomial $h(x_1, \dots, x_t)$ of degree $k$ and sparsity $s$ which is not a tree polynomial, where the query-key matrices have entries in $[-B, B]$:
    \begin{enumerate}
        \item If $B = o((\log n)^{1/k})$, entry-wise $\frac{1}{\poly(n)}$-approximation of $\AttP{h}$ can be computed in almost-linear time.
        \item If $B = \Omega((\log n)^{1/k})$, entry-wise $\frac{1}{\poly(n)}$-approximation of $\AttP{h}$ requires superquadratic time, assuming standard complexity assumptions. 
    \end{enumerate}
\end{theorem}

Prior work gave this characterization for specific polynomials $h$ (\cite{alman2023fast} for the usual self-attention (i.e., $h(x_1, x_2) = x_1 x_2$), followed by \cite{alman2023capture} for $t$-tensor attention i.e., $h(x_1, \ldots, x_t) = x_1 \cdots x_t$). We discuss in Section~\ref{sec:technique-overview} below a number of technical hurdles which we overcome to generalize their results to all attention polynomials and prove Theorem~\ref{thm:ub-lb-poly}.

Notably, for many polynomials such as $h(x_1, x_2, x_3) = x_1 x_2 + x_2 x_3 + x_1 x_3$ (corresponding to Strassen attention), there is a subcubic algorithm which uses fast matrix multiplication, so prior approaches, which can only prove cubic (or above) lower bounds, cannot apply. In fact, we generalize the Strassen attention algorithm~\citep{kozachinskiy2025strassen}, and prove that for \emph{any} degree-2 attention polynomial $h$ whose graphical representation contains exactly one cycle, there is an exact algorithm for $\AttP{h}$ running in subcubic time $O(n^\omega)$, and that this cannot be improved.

\subsection{Representational strength of poly-attention}
\label{sec:root-finding}

We have discussed function composition at length, but poly-attention is also able to perform a variety of other basic expressive problems. As an example, $\textsf{Match3}$ has been highlighted  by prior work \citep{sanford2024one,kozachinskiy2025strassen} as a problem which requires detecting correlated triples of tokens. We define here a generalization called \emph{polynomial root-finding} which can be solved by poly-attention.

The problem is defined in terms of a fixed polynomial $p(x_1, \dots, x_n)$ (which, unlike an attention polynomial, may have degree 1 monomials, and may not be multi-linear). In the problem, given as input a set $S$ containing $n$ integers, and the goal is to find $y_1, \dots, y_t \in S$ such that $p(y_1, \dots, y_t) = 0$. 

$\mathsf{Match3}$ is a special case of root-finding, corresponding to the simple polynomial $p(x_1, x_2, x_3) = x_1+x_2 +x_3$. Circuit evaluation for constant sized circuits, and other related problems can also be captured by polynomial root-finding by using arithmetization. We prove that for \emph{any} polynomial $p$, one can solve polynomial root-finding using poly-attention:

\begin{theorem}
\label{thm:poly-root}
    For every polynomial $p(x_1, \dots, x_t)$, there is an attention polynomial $h(x_1, \dots, x_t)$ such that a Transformer using two heads of poly-attention for $h$ can solve polynomial root-finding.
\end{theorem}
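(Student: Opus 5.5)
The plan is to build a poly-attention head whose exponent $h(Q^{(1)}_{\ell_1},\dots,Q^{(t)}_{\ell_t})$ equals, up to an additive constant, $-C\cdot p(y_{\ell_1},\dots,y_{\ell_t})^2$ for a large scaling $C$. Then the softmax weight concentrates on tuples with $p=0$. We fix the convention that the output row $\ell_1$ reports whether some root exists with $y_1=y_{\ell_1}$. If a global answer is wanted, an extra position attending uniformly can aggregate this; that is what the second head and the MLP are for.

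Step 1 (arithmetization into inner products). Expand $-p(x_1,\dots,x_t)^2=\sum_\alpha c_\alpha x^\alpha$ as a polynomial with integer coefficients. Each monomial $x^\alpha$ factors as $\prod_{j\in S_\alpha} x_j^{\alpha_j}$ over its support $S_\alpha$. Monomials with $|S_\alpha|\ge 2$ are realised as multilinear inner products across positions. We take $h$ to be the multilinear polynomial containing every monomial $\prod_{j\in S}x_j$ for $S\subseteq[t]$ with $|S|\ge2$ (coefficients in $\{0,1\}$, so $h$ is an attention polynomial). For each $S$ we reserve a block of coordinates in the embedding dimension. In that block, $Q^{(j)}$ for $j\in S$ places the features $y^{\alpha_j}$ for each monomial $\alpha$ with support $S$, with the coefficient $C c_\alpha$ absorbed into $Q^{(\min S)}$, and the block is zero for $j\notin S$. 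Then $m_S(Q^{(1)}_{\ell_1},\dots)=\sum_{\alpha:\,S_\alpha=S} C c_\alpha\prod_j y_{\ell_j}^{\alpha_j}$, and cross-block contributions vanish because each block is zero outside its own $S$. Monomials with $|S_\alpha|=1$, and the constant term, are not directly representable. We handle them with a constant coordinate: append a fixed entry $1$ to every $Q^{(j')}$ in a block, so that the pair monomial $x_jx_{j'}$ yields $C c_\alpha y_{\ell_j}^{\alpha_j}\cdot 1$. Since $t\ge2$ whenever there are at least two variables (the case $t=1$ is trivial, or is padded with a dummy variable), every univariate monomial is captured by pairing with another variable. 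The $x_1$-only terms are constant along a row and cancel in the softmax anyway. The degree of $h$ is then at most $t$, and $d$ stays $O(1)$ in $t$ and $\deg p$, times a $\log n$ encoding.

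Step 2 (values and decoding). Set the value vectors $V^{(j)}$ to carry a constant $1$ together with indicator features. Consider the normalized output with weights proportional to $\exp(-C p^2/d)$. Because $S$ consists of integers, any nonroot has $p^2\ge1$. Choosing $C\ge d\cdot c\log n$ (with entries still polynomially bounded) makes nonroot tuples contribute at most $n^{t}e^{-c\log n}$ relative to a root. The first head therefore computes, for each $\ell_1$, a quantity close to $1$ if a root with first coordinate $y_{\ell_1}$ exists and close to $0$ otherwise. To produce this signal we use the second head to compute the same softmax denominator but with a biased value/score; alternatively, we add an extra dummy "null" token of score $0$ against which roots dominate. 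The ratio of the two heads, or the output MLP thresholding the dummy token's weight, decides existence. The MLP also recovers the witness $y_2,\dots,y_t$ from the weighted average of the value vectors, which is essentially $V$ at the root.

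The main obstacle is Step 2's normalization. A softmax alone cannot distinguish "one root" from "no root", since it always sums to 1. This is precisely why two heads are needed, and we must choose a dummy token or a second reference head carefully so that its score is exactly $0$ while root scores are $0$ and nonroots are $\le -c\log n$. We would first try the dummy-token construction: a position whose $Q$ rows are all zero gives $h=0$. This requires shifting so that the constant term of $-p^2$ is handled consistently, which needs checking against the constant-coordinate trick in Step 1.
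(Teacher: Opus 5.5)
Your Step 1 does not give $h=-Cp^2$ as soon as $-p^2$ has a monomial whose support has at least three variables. The claim that cross-block contributions vanish because each block is zero outside its own $S$ only covers monomials $m_{S'}$ with $S'\not\subseteq S$.

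Your $h$ contains every $\prod_{j\in S'}x_j$ with $|S'|\ge 2$. For $S'\subsetneq S$, the inner product $m_{S'}(Q^{(1)}_{\ell_1},\dots,Q^{(t)}_{\ell_t})$ also runs over the coordinates of block $S$, where all $Q^{(j)}$ with $j\in S'$ are nonzero. Take $p=x_1x_2+x_3$. The term $-2x_1x_2x_3$ of $-p^2$ is stored in block $\{1,2,3\}$ as a coordinate with entries $(-2Cy_{\ell_1},\,y_{\ell_2},\,y_{\ell_3})$. Then $m_{\{1,2\}}$, $m_{\{1,3\}}$, $m_{\{2,3\}}$ pick up the spurious terms $-2Cy_{\ell_1}y_{\ell_2}$, $-2Cy_{\ell_1}y_{\ell_3}$ and $y_{\ell_2}y_{\ell_3}$, so the scores no longer peak exactly at roots.

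Dealing with this is the real work in the paper's proof. It processes the monomials of $h$ in the order of Definition~\ref{def:mon-order}. In the block of each lower-degree monomial it inserts negated copies (the integers $s_i^j$) that cancel what the higher blocks contribute to that monomial. The cheapest repair of your argument is to take $h=x_1x_2\cdots x_t$ instead. A single monomial has no sub-monomials, and your constant-$1$ trick gives $h=-Cp^2$ from one block: put entries $1$ for variables absent from a monomial $\alpha$ of $-p^2$, and absorb $Cc_\alpha$ into $Q^{(1)}$.

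Step 2 is left open, and the dummy-token fix is not established. A token with zero $Q$-rows kills only the monomials containing the slot it occupies. Tuples that use it in some but not all of the slots $\ell_2,\dots,\ell_t$ are therefore scored by partial sums of the monomials of $-Cp^2$. These are no longer controlled by $-Cp^2\le 0$, and depending on how you pair the univariate terms they can compete with or beat the roots. Also, as you note yourself, a head with constant value $1$ outputs exactly $1$ regardless of roots, so the ``close to $1$ / close to $0$'' sentence cannot hold.

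You do not need to measure softmax mass at all; the idea you mention only in passing suffices. In the paper, $V^{(r)}$ stores $y$ in coordinate $r$ and ones in the other coordinates of $[2:t]$. So row $\ell_1$ of the first head is essentially the tuple $(y_{\ell_2^0},\dots,y_{\ell_t^0})$ maximizing $-p^2(y_{\ell_1},\cdot)$. The second head just writes $y_{\ell_1}$ into the remaining coordinate, and the output MLP evaluates $p$ on the assembled tuple. That maximum is $0$ exactly when a root with first coordinate $y_{\ell_1}$ exists, so this decides existence and returns a witness. The normalization problem you single out as the main obstacle never arises.
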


Finding the attention polynomial $h$ for a given polynomial $p$ using our approach is straightforward but requires some details; it could be performed by a user who would like to answer query patterns corresponding to polynomial root-finding for a particular $p$.

\vspace{-10pt}
\subsection{Implications of poly-attention} 
\label{sec:implications}
\vspace{-10pt}

As we have seen, tree-attention can solve many problems which self-attention cannot, and still it can be computed in quadratic time. We show that this quadratic time is indeed practicable by showing in Figure \ref{tab:time} that the time-complexity does not hide large constants.

We further show in experiments in Section \ref{sec:experiments} that tree-attention is indeed more expressive than self-attention. This seems to be a promising area of research, and it will be interesting to study the large scale deployment of tree-attention instead of self-attention in follow-up work. One can select an appropriate tree-polynomial to use depending on the relationships between the data that the model intends to process.

When we move to more general poly-attention, for any attention polynomial $h$ which is not a tree polynomial, we have shown in Theorem \ref{thm:ub-lb-poly} that (without a small bound on the model weights) poly-attention provably requires superquadratic time. Thus, there is a trade-off between expressiveness (most straightforwardly represented by the degree and order of the polynomial $h$, although it could also take into account which tasks like polynomial root-finding can be performed), and running time (depending on how bounded the entries must be). Model designers therefore have a choice, potentially depending on the hardware available to them, the desired running time, and the logical structures they expect to see in their data and queries.

It would be exciting, in future work, to further study the expressive power of tree-attentions other than the ones studied here, and find more examples of complicated tasks with tree-like logical structures that it can solve. As an example, \cite{peng2024limitations} proposed some more problems like relationship composition, spatial composition and temporal composition which current language models cannot solve; it would be interesting to see how well tree-attention performs on these problems.

\vspace{-10pt}
\section{Technique overview} 
\vspace{-10pt}\label{sec:technique-overview}

\paragraph{Representational strength.} Our representational strength results include both constructions (e.g., showing that tree-attention can perform $r$-fold function composition) and lower bounds (e.g., showing that Strassen-attention and $3$-tensor attention cannot perform $3$-fold function composition). 

Our constructions use a generalization of the ``sum of squares'' approach of \cite{kozachinskiy2025strassen}: If one can design a simple polynomial $c$ which checks possible outputs of function composition, so that it outputs $0$ on correct outputs and large values on incorrect values, then the softmax underlying attention can detect $0$s and thus solve the problem. An interesting algebraic challenge arises of expressing $c$ in terms of the monomials available in an attention polynomial $h$.

Our lower bounds make use of communication complexity theory, similar to many other representational lower bounds in the literature. We show that if function-composition can be simulated by these mechanisms, then there is a resulting, very efficient communication protocol for a problem called \emph{myopic pointer jumping}. Results from \cite{chakrabarti2007lower,kozachinskiy2025strassen} showing that myopic pointer jumping cannot be solved with small communication can then be applied.


\paragraph{Fast approximation algorithms.} For obtaining entry-wise approximation algorithms for poly-attention, we use low-rank decomposition methods based on the \emph{polynomial method}, which were first applied in the context of Gaussian kernel density estimation (see \cite{aggarwal2022optimal,alman2024finer}). In this approach, one critically approximates the exponential function (part of softmax) with a low-degree single-variable polynomial. The bound $B$ on the weights then naturally comes into play: the smaller the interval one must approximate the exponential on, the lower degree polynomial one may use.

A similar approach has been used to design approximation algorithms for other variants on attention \cite{alman2023fast,alman2023capture,alman2024fast}, although a number of intricacies arise in this general setting. For instance (recalling that $t$ is the number of variables in the attention polynomial $h$, and $k$ is the degree), directly applying the approach of \cite{alman2023capture} would yield an approximation algorithm whenever $B = o((\log n)^{\nicefrac{1}{t}})$, but our algorithm works even for the much larger bound $o((\log n)^{\nicefrac{1}{k}})$. This is a significant improvement for $t > k$-- in tree-attention, one could choose $t = 20$ but $k=2$.

\vspace{-10pt}
\paragraph{Lower bounds.} Our running time lower bounds, where we show that different poly-attention mechanisms cannot be computed in quadratic time (for big enough bounds $B$ on the weights), make use of tools from fine-grained complexity theory. In particular, as in the previous works of \cite{alman2023fast,alman2023capture,alman2024fast} on the fine-grained complexity of attention mechanisms, we use a popular conjecture called the Strong Exponential Time Hypothesis ($\SETH$) to obtain conditional hardness results. First introduced in \cite{impagliazzo2001complexity},  $\SETH$ is a strengthening of the $\mathsf{P} \neq \mathsf{NP}$ conjecture (so, proving $\SETH$ would imply $\mathsf{P} \neq \mathsf{NP}$), and is perhaps the most widely used conjecture in fine-grained complexity. 

Notably, the way $\SETH$ has been used in prior work results in \emph{cubic} (or higher) lower bounds, and makes it difficult to prove lower bounds for running time $\Omega(n^\omega)$ from the matrix multiplication exponent $\omega < 3$. Indeed, for such a lower bound, our starting assumption must itself use matrix multiplication in some way! 

In order to prove our lower bound against Strassen attention and other poly-attention mechanisms with $O(n^\omega)$ running times, we therefore use a different conjecture, the $\MAXSAT{2}$ conjecture (see \cite{alman2020ov} and its uses in \cite{el2016computational,jansen2024optimal,bringmann2021current,lincoln2018tight}), which roughly asserts that our current best algorithm for the $\MAXSAT{2}$ problem cannot be substantially improved. We ultimately show that a faster algorithm for Strassen attention could be used to design a faster algorithm for $\MAXSAT{2}$, refuting the conjecture. Our proof of this makes use of the \emph{distributed PCP framework}~\citep{abboud2017distributed} for reducing variants of $\SAT$ to other problems through \emph{multi-party interactive communication protocols} \citep{aaronson2009algebrization,rubinstein2018hardness}.

\begin{wrapfigure}{H}{0.4\textwidth}
        \centering 
        \includegraphics[width=\linewidth]{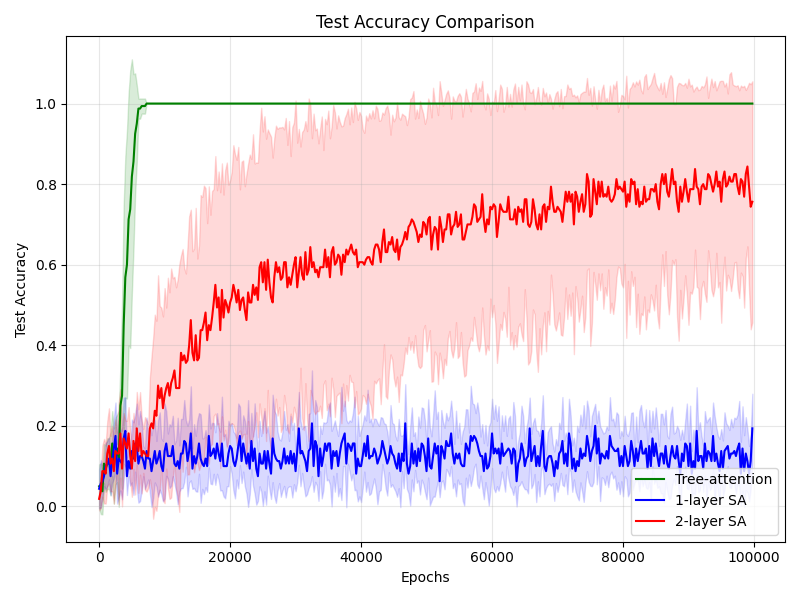}
        \caption{Accuracy per epoch for learning $f_1(f_2(x))$ for sequence length $51$, on a single layer of tree-attention, one layer self-attention and two layer self-attention.}
        \label{fig:graph}
\end{wrapfigure}

\vspace{-10pt}
\section{Experimental validation} \label{sec:introexperiments}
\vspace{-10pt}

We have proved that tree-attention can be computed in the same $O(n^2)$ time as self-attention, and can simulate function composition (whereas self-attention cannot). We complement this with a simple experiment to demonstrate empirical learnability and efficiency. We compare the following models: (i) a model with one head and one layer of tree-attention; (ii) a model with one head and two layers of self-attention; and (iii) a model with one head and one layer of self-attention. We train all three in the same way to solve function composition. As expected (proved by~\citet{peng2024limitations}), one head and one layer of self-attention is not able to learn function composition, but we find that the other two are. Furthermore, we find that our tree-attention model learns function composition in many fewer training epochs. Lastly, our empirical evaluation of inference time validates that tree-attention takes roughly similar time as self-attention.\footnote{As shown in Figure \ref{tab:time}, tree-attention takes around 1.3x time as that of self-attention.}
See Figure~\ref{fig:graph} for a summary, and Section~\ref{sec:experiments} for further details and quantitative results.

We also perform experiments comparing simple networks with self-attention and tree-attention on the COGS NLP dataset~\cite{kim2020cogs}. This is a dataset which tests whether a model can perform simple compositional tasks when processing language. We find that networks with tree-attention learn to higher accuracy in the same number of epochs. See Section~\ref{sec:experiments} for more details and quantitative results.

%% file: acknowledgements.tex
\section{Acknowledgements}

We especially thank Todd Morrill for valuable discussions that greatly improved the experiments in this paper. We also thank Richard Zemel for helpful suggestions that strengthened both the paper and the experimental evaluation. SC thanks Hantao Yu for pointing out \cite{kozachinskiy2025strassen} and \cite{roy2025fast}. Finally, we thank the anonymous reviewers for comments that improved the presentation.

%% file: prelim.tex
\section{Preliminaries}
\label{sec:prelim}

\subsection{Roadmap}
\label{sec:roadmap}
In the rest of this paper, we prove all the results that we have stated in the main version. After describing some relevant notations and conjectures that we will use, we prove the results in two parts. First we prove the computational complexities of the poly-attention mechanism, followed by proofs of representational strengths. The proofs of computational complexities use two subdivisions-- an upper bound where we show that if the entries of the query-key matrices are bounded, then we can compute an entry-wise approximation in near-linear time, and a lower bound where we show that if the entries of the query-key matrices are large, then assuming certain fine-grained complexity conjectures, computing entry-wise approximations are difficult. As a warm-up, we start with upper and lower bounds for Strassen-attention (Section \ref{apx:strassen}), based on which, we proceed to prove the same for general poly-attention in Section \ref{apx:poly-att}. In order to completely characterize time complexities for poly-attention, we also give quadratic time algorithms for tree-attentions in Section \ref{apx:tree-att}.

The proofs of the results stated in the main paper are given as follows:
\begin{itemize}
    \item For Theorem \ref{thm:2-fold}, poly-attention can simulate function-composition has been proved in Theorem \ref{thm:func-comp-gen} for $t_0 = 2$, and the time complexity of $O(n^2)$ has been proved in Theorem \ref{thm:tree-alg}.
    \item Theorem \ref{thm:eg-cc} Part 1 has been proved in Theorem \ref{thm:poly-att-ub}, and Theorem \ref{thm:eg-cc} Part 2 has been proved in Theorem \ref{thm:poly-att-lb} Part 1.
    \item Theorem \ref{thm:func-comp-lb} has been proved in Theorem \ref{thm:strassen-func} and Corollary \ref{cor:3-tensor-func}.
    \item Theorem \ref{thm:func-comp} has been proved in Theorem \ref{thm:func-comp-gen}.
    \item Theorem \ref{thm:tree-att} Part 1 has been proved in Theorem \ref{thm:tree-alg}, Theorem \ref{thm:tree-att} Part 2 has been proved in Theorem \ref{thm:poly-att-ub}, and Theorem \ref{thm:tree-att} Part 3 has been proved in Theorem \ref{thm:poly-att-lb}.
    \item Theorem \ref{thm:ub-lb-poly} Part 1 has been proved in Theorem \ref{thm:poly-att-ub} and Theorem \ref{thm:ub-lb-poly} Part 2 has been proved in Theorem \ref{thm:poly-att-lb}.
    \item Theorem \ref{thm:poly-root} has been proved in Theorem \ref{thm:root-finding}.
\end{itemize}

\subsection{Notation and background}

Throughout this article, for a natural number $n$ we denote $[n]$ as the set $\{1, 2, \dots, n\}$, $[i: j]$ as the set of integers $\{i, i+1, \dots, j\}$ for $i < j$, and $[i, j]$ as the set of real numbers between $i$ and $j$. Given a matrix $M \in \mathbb{R}^{n\times m}$, for $i\in [n], j\in [m]$, we denote $[M]_{i, j}$, and more loosely $M_{i, j}$, as the $(i, j)$-th entry of the matrix, $M_i$ as the $i$-th row as a $m$-dimensional vector, and $M_{:, j}$ as the $j$-th column as the transpose of a $n$-dimensional vector. $M_{(i_1:j_1, i_2:j_2)}$ will also denote the submatrix of $M$ having rows $[i_1: j_1]$ and columns $[i_2: j_2]$.  

For two matrices $A, B \in \mathbb{R}^{n\times m}$, we define $\frac{A}{B}$ as the entry-wise division, i.e., $[\frac{A}{B}]_{i, j} = \frac{A_{i, j}}{B_{i, j}}$. Given a vector $X \in \mathbb{R}^{n\times 1}$, by $diag(X)$, we denote the $n\times n$ diagonal matrix such that $[diag(X)]_{i, i} = X[i]$ for all $i\in [n]$. Some other operators on matrices are defined as follows.

\begin{definition}[Hadamard product $\odot$]
\label{def:hadamard}
    Given to matrices $A, B \in \mathbb{R}^{n\times m}$, we denote the Hadamard product, denoted by $A\odot B \in \mathbb{R}^{n\times m}$, as the entrywise product $$[A\odot B]_{i, j} = A_{i, j}\cdot B_{i, j},$$ for $i \in [n], j\in m$.
\end{definition}

\begin{definition}[Row-wise Kronecker product $\oslash$]
\label{def:khatri}
    For matrices $A\in \mathbb{R}^{n\times d}, B\in \mathbb{R}^{m\times d}$, we denote the row-wise Kronecker product as $A\oslash B \in \mathbb{R}^{nm\times d}$, where $$[A\oslash B]_{(i-1)m+j} = A_i \odot B_j,$$ for $i\in [n], j\in [m]$.
\end{definition}

\begin{definition}[Entry-wise  approximation]
\label{def:entry-wise}
    Given a matrix $M \in \mathbb{R}^{n\times d}$, we say that $\widehat M$ is an entry-wise $\gamma$-approximation of $M$ if for all $i \in [n], j\in [d]$, we have 
    $$|\widehat M_{i, j} - M_{i, j}| < \gamma.$$ 
\end{definition}
Throughout this paper, we will choose $\gamma = \nicefrac{1}{\poly(n)}$.

\begin{definition}[Entry-wise function]
    Given a function $f:\mathbb{R} \rightarrow \mathbb{R}$ and a matrix $M \in \mathbb{R}^{n\times m}$, we define the matrix $[M]^f$ as the $n\times n$ matrix such that the the $(i, j)$-th element is $$[M]^f_{i, j} = f(M_{i, j}).$$ 
\end{definition}
We will use $[M]^e$ as the entrywise exponentiation function, i.e., $[M]^e_{i, j} = e^{M_{i, j}}$. For a real number $c$, $M/c$ will also refer to the matrix obtained by dividing each entry of $M$ by $c$.

The \emph{coefficient of matrix multiplication}, $\omega$, roughly refers to the exponent of $n$ such that two $n\times n$ matrices can be multiplied in time $O(n^\omega)$ for large enough $n$. There is a series of works trying to improve this coefficient \cite{alman2024refined,duan2023faster,williams2024new,fawzi2022discovering,alman2025more}, with the fastest being \cite{alman2025more} that achieves $\omega = 2.371339$. However, these matrix multiplications require $n$ to be quite large and the hidden constants are enormous, which does not make implementations feasible. There is an algorithm by Strassen \cite{strassen1969gaussian} which is more practicable and achieves $\omega \approx 2.8 $, but in most cases, only the naive matrix multiplication algorithm is used as GPUs work better on them.

We will use some more concepts to define the ideas in this article. Given an integer $t$, we define the \emph{symmetric group} of order $r$, $\binom{[t]}{r}$, as the set of tuples:
    \begin{equation*}
        \binom{[t]}{r} = \{(j_1, j_2, \dots, j_r) \,|\, 1\leq j_1 < j_2 <\ldots < j_r \leq t \}.
    \end{equation*}
Note that $|\binom{[t]}{r}| = \binom{t}{r}$. Based on this, an \emph{elementary symmetric polynomial} of degree $r$ having $t$ variables is defined as 
    \begin{equation*}
        \sum_{1\leq j_1 < j_2 <\ldots < j_r \leq t} x_{j_1}x_{j_2}\ldots x_{j_r}.
    \end{equation*}

\begin{definition}[Variable separability]
\label{def:var-sep}
    We say that a polynomial $h(x_1, \dots, x_t)$ is \emph{variable separable} if there exists a maximum integer $r$ and non-zero attention polynomials $g_1(x_1, \Bar{x^1}), \dots, g_r(x_1, \Bar{x^r})$, where $\Bar{x^1}, \dots, \Bar{x^r}$ are disjoint subsets of the variables, such that $h(x_1, x_2, \dots, x_n) = g_1(x_1, \Bar{x^1})+\ldots + g_r(x_1, \Bar{x^r})$.

    We denote each of the polynomials $g_i(x_1, \Bar{x^i})$ as \emph{branches}.
\end{definition}
 Note that this definition of variable separability differs slightly from the folklore usage as here we allow $f$ and $g$ to share at most one variable, $x_1$.

 In this paper, for a given polynomial $h$, we are interested in computing the entry-wise approximation of $\AttP{h}$. For this, we define the following version of computing poly-attention approximately.

 \begin{definition}[Entry-wise \textsf{A}pproximate \textsf{P}oly-\textsf{A}ttention \textsf{C}omputation $\EAPAC{h}(n, d, \Gamma, \gamma)$]
 \label{def:eapac}
     Let $h$ be an attention polynomial in $t$ variables of degree $k$ having sparsity $s$.
     Given query-key matrices $Q^{(1)}, \dots, Q^{(t)}\in [-\Gamma, \Gamma]^{n\times d}$ and value matrices $ V^{(2)}, \dots, V^{(t)}\in \mathbb{R}^{n\times d}$, we want to output a matrix $\widehat{\AttP{h}}\in \mathbb{R}^{n\times d}$ such that for all $i\in [n], j\in [d]$,
     \[
     |[\widehat{\AttP{h}}]_{i, j} - [\AttP{h}(Q^{(1)}, \dots, Q^{(t)}, V^{(2)}, \dots, V^{(t)})]_{i, j}|\leq \gamma.
     \]
 \end{definition}

\subsection{Conjectured hard problems}

We define some commonly known problems in fine-grained complexity and state conjectures which will be used to show conditional hardness of generalized attention computations. 
First, we start by defining a few central problems in fine-grained complexity.

\begin{definition}[$k\IP$ problem]
    \label{def:k-OV}
    Given sets of vectors $A^1, \dots, A^k \subseteq \{0, 1\}^d$, each of size $n$, and a target inner product $m \in [d]$, the problem of $k\IP$ asks if there exists $a^1\in A^1, a^2\in A^2, \dots, a^k \in A^k$ such that $\langle a^1, a^2, \dots, a^k \rangle = m$.
\end{definition}
For $k = 2$ and $m=0$, it is the famous orthogonal vectors problem, which we will abbreviate $\mathsf{2OV}$ or just the $\OV$ problem, and for $k=2$ and arbitrary $m$, we will abbreviate the problem as $\IP$.

\begin{definition}[$k\SAT$]
In the $k\SAT$ problem for $k\geq 2$, given as input a $k$-$\mathsf{CNF}$ formula $\phi$, determine whether or not $\phi$ has a satisfying assignment.
\end{definition}

\begin{definition}[$\MAXkSAT$]
In the $\MAXkSAT_{n, m}$ problem for $k\geq 2$, given as input a $k$-$\mathsf{CNF}$ formula $\phi$ in $n$ variables and $m$ clauses,  determine the maximum number of clauses in $\phi$ that can be simultaneously satisfied by a Boolean assignment to the underlying variables.
\end{definition}

Based on these definitions, we are now ready to describe some popular conjectures in fine-grained complexity that we will use to prove our (conditional) hardness results.

\begin{hypothesis}[$\SETH$ \cite{impagliazzo2001complexity}]
    \label{conj:SETH}
For every $\delta>0$, there exists $k\geq 3$ such that $\mathsf{kSAT}$ can not be solved in time $O(2^{(1-\delta)n})$. 
\end{hypothesis}

The current fastest known algorithm for $k\SAT$ uses the reduction to $\OV$ with dimension $d = c\log n$. The best known time complexity of $\OV$ is $n^{2-\nicefrac{1}{O(\log c)}}$ given by \cite{abboud2014more,chan2016deterministic}.

Since $k\SAT$ is a special case of $\MAXkSAT$, $\SETH$ implies that $\MAXkSAT$ also cannot be solved in time $\Omega(2^{(1-\delta)n})$ for every $\delta>0$. 
The next hypothesis \cite{alman2020ov} strengthens this further to sparse instances of $\MAXkSAT$.

\begin{hypothesis}[Sparse $\MAXkSAT$ Hypothesis]
\label{hyp:max-ksat}
    For every $k \geq 3$ and every $\delta > 0$, there exists $c > 0$ such that $\MAXkSAT_{n, cn}$   cannot be solved in time $O(2^{(1-\delta)n})$.
\end{hypothesis}

The fastest known algorithm for sparse instances of $\MAXSAT{k}_{n, cn}$ for $k \geq 3$ takes time $2^{n(1-\nicefrac{1}{\Tilde{O}(c^{\nicefrac{1}{3}})})}$ \cite{alman2016polynomial}; therefore the above hypothesis is consistent with the state-of-the-art algorithms. 
In contrast to the special case of $\MAXkSAT$ for $k=2$, the hypothesis is false. The best algorithm for $\MAXSAT{2}$ \cite{williams2005new,williams2007algorithms} runs in time  $2^{\nicefrac{\omega n}{3}} \poly(n)$, where $\omega$ is the matrix multiplication exponent. 
The following {$\MAXSAT{2}$ hypothesis} states that William's algorithm \cite{williams2005new} is essentially optimal when $k=2$.

\begin{hypothesis}[$\mathsf{Max2SAT}$ hypothesis]
\label{hyp:max-2sat}
    For every $\delta > 0$, there exists a $c > 0$ such that $\MAXSAT{2}_{n, cn}$ cannot be solved in time $O(2^{n(\omega/3-\delta)})$, where $\omega$ is the matrix multiplication exponent.
\end{hypothesis}

The following theorem gives a reduction from $k\SAT$ to $k\IP$, thus proving the hardness of $k\IP$ under $\SETH$.

\begin{theorem}[\cite{williams2005new,abboud2014consequences,backurs2015edit,abboud2015tight}]
\label{thm:seth}
    Assuming $\SETH$, for every $k$ and $\delta > 0$,  there exists $c >0$ such that $k\IP_{n, c\log n}$  cannot be solved in time  $O(n^{(1-\delta)k})$, 
\end{theorem}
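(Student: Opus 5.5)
The plan is the standard split-and-list reduction from $k\SAT$ to $k\IP$, with a sparsification step to control the dimension.

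\textbf{Step 1: sparsify.} Fix $\delta>0$. By $\SETH$ choose $q\geq 3$ such that $q\SAT$ on $N$ variables has no $O(2^{(1-\delta/2)N})$ algorithm. Apply the Sparsification Lemma of Impagliazzo, Paturi and Zane: for every $\varepsilon>0$, a $q$-CNF $\phi$ is the disjunction of $2^{\varepsilon N}$ $q$-CNFs $\phi_1,\dots,\phi_{2^{\varepsilon N}}$. Each $\phi_i$ has at most $c_{\varepsilon,q}N$ clauses, and the whole list is computable in $2^{\varepsilon N}\poly(N)$ time. It therefore suffices to decide each sparse $\phi_i$ quickly.

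\textbf{Step 2: split and list.} Partition the $N$ variables into $k$ blocks of $N/k$ variables each, and set $n=2^{N/k}$. For block $j$ and each partial assignment $\alpha$ to that block, create a vector $a^j_\alpha\in\{0,1\}^{m}$, where $m\leq c_{\varepsilon,q}N$ is the number of clauses of $\phi_i$. Set $a^j_\alpha[c]=1$ if and only if $\alpha$ does \emph{not} satisfy any literal of clause $c$ lying in block $j$; this includes the case where clause $c$ has no literal in block $j$.

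\textbf{Step 3: correctness.} The product $\prod_j a^j_{\alpha_j}[c]=1$ exactly when the combined assignment falsifies clause $c$. So $\langle a^1,\dots,a^k\rangle$ counts the falsified clauses, and $\phi_i$ is satisfiable if and only if some tuple has inner product $0$. The target $m=0$ is the $k$-$\mathsf{OV}$ case. If a positive target in $[d]$ is required, append one extra coordinate that is $1$ in every vector and use target $1$. The dimension is $d\leq c_{\varepsilon,q}N+1=c_{\varepsilon,q}k\log n+1\leq c\log n$ for a suitable $c$ depending on $\delta$ and $k$.

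\textbf{Step 4: time accounting.} Suppose $k\IP_{n,c\log n}$ were solvable in $O(n^{(1-\delta)k})$. Each $\phi_i$ would then be decided in $2^{(1-\delta)N}\poly(N)$ time, and all of $\phi$ in $2^{(1-\delta+\varepsilon)N}\poly(N)$. Taking $\varepsilon=\delta/4$ gives a running time of $O(2^{(1-\delta/2)N})$, contradicting the choice of $q$.

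The only delicate point is ensuring that the dimension is $O(\log n)$ with a constant depending only on $\delta$ and $k$. This is exactly what sparsification provides, with $c$ determined by $q$, which depends on $\delta$, and by $\varepsilon=\delta/4$. The uneven case $k\nmid N$ is handled by padding with dummy variables.
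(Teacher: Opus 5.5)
Your proposal is correct and is essentially the standard argument behind the cited results: split-and-list over $k$ blocks of variables, with the Sparsification Lemma keeping the dimension at $O(\log n)$. The paper itself gives no proof beyond these citations. The one detail you leave implicit is the $2^{N/k}\mathrm{poly}(N)$ cost of building the vectors in Step 4, which is dominated by $2^{(1-\delta)N}$ once you assume, without loss of generality, that $k\ge 2$ and $\delta\le 1/2$.
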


%% file: apx_related_works.tex
\section{Related works}
\label{sec:related-work}

The most similar prior works on attention mechanisms which are more expressive than self-attention are \cite{sanford2024representational} and \cite{kozachinskiy2025strassen}, which we have already discussed in detail. 
There is another attention mechanism, \emph{triangular attention}, introduced by \cite{bergen2021systematic}, whose design was inspired by logic programming, and which was shown to perform better than self-attention on certain compositional tasks. However, \cite{kozachinskiy2025strassen} proved that it cannot perform function composition.

As we have discussed, the self-attention mechanism \citep{vaswani2017attention} is at the center of all large language models because of its expressivity in real-life applications, but the quadratic time complexity for computing its output is sometimes already prohibitively expensive. One extensive line of work has introduced faster \emph{heuristic} algorithms, which work well on many inputs. These have used different approximation techniques, including sparsity assumptions, norm bounds, and kernel density estimation   \citep{zandieh2023kdeformer,han2023hyperattention,kitaev2020reformer,choromanski2020rethinking,pagliardini2023faster,child2019generating,wang2020linformer,daras2020smyrf,katharopoulos2020transformers,chen2021scatterbrain,dao2021pixelated,qin2022cosformer,liu2023deja,he2020deberta,kacham2023polysketchformer,dao2022flashattention,dao2023flashattention,roy2021efficient,sun2021sparse,ding2023longnet,han2023lm,zaheer2020big,dass2023vitality}.

Other alternatives have been considered which completely replace attention with different mechanisms. A simple example is Hardmax attention, in which the softmax is replaced by a (hard) max, but training Hardmax attention Transformer models appears difficult as we do not know an efficient way to perform gradient descent. The power of hardmax has been explored in \cite{alcalde2024clustering,perez2021attention,kajitsuka2023transformers}.
Instead of computing the output of self-attention faster, some other alternatives to Transformers have been proposed that completely replace attention with other mechanisms; examples include Synthesizer \citep{tay2021synthesizer}, routing Transformers \citep{roy2021efficient}, and Mamba \citep{gu2023mamba}. These alternatives can typically be computed much faster than attention (often in almost linear time by definition), but in exchange appear to have weaker expressive power~\citep{alman2025fundamental}. This paper continues a long line of work on understanding the power and limitations of Transformers, and finding more expressive alternative models.

Some papers have studied the circuit complexity of Transformers  \citep{chiang2024transformers,merrill2023logic,strobl2023average,chen2024circuit,merrill2023parallelism,merrill2022saturated,chiang2023tighter}. Other works on the representational strength of Transformers focus on their relationship with other models of computation. For example, a line of work has studied the ability of Transformers to approximate other models of computation \citep{perez2021attention,wei2022statistically,malach2023auto,liu2022transformers,hao2022formal}. On the other hand, there are many more tasks, beyond those discussed here, which are difficult to solve by a Transformer, including compositional reasoning  \citep{dekker2022curriculum,zerroug2022benchmark,marcus2018deep,kozachinskiy2024lower,sanford2024one,peng2024limitations}.

Another approach to overcoming the limitations of Transformers is to augment them in other ways.
An important example is chain-of-thought \citep{wei2022chain}. \cite{merrill2023expressive} studied the space and time complexity of chain-of-thought, and \cite{peng2024limitations} studied how this relates to function composition.

%% file: apx_strassen.tex
\section{Warm-up: Strassen-attention upper and lower bounds}
\label{apx:strassen}

As a warm-up, we describe the polynomial method and show $\MAXSAT{2}$-based hardness results on Strassen-attention. Since Strassen-attention is only a special case of poly-attention, we will later move on to show similar algorithms and lower bounds on poly-attention in Section \ref{apx:poly-att}.

\subsection{Algorithm for Strassen-attention}
\label{sec:ub}

In this section, we give a near-linear algorithm for computing an entry-wise approximation of the output matrix of Strassen-attention, 
when the entries of the query-key matrices are bounded. We will use the polynomial method, which has been used in entry-wise approximations of other attention mechanisms as well, like in self-attention \cite{alman2023fast}, tensor-attention \cite{alman2023capture}, RoPE based attention \cite{alman2024fast}.

Our goal is to compute the $n\times d$ matrix $\AttS$, the output of Strassen-attention, for query-key matrices $Q^{(1)}, Q^{(2)}, Q^{(3)} \in [-\Gamma, \Gamma]^{n\times d}$ and value matrices $V^{(1)}, V^{(2)}\in \mathbb{R}^{n\times d}$. Using the expression of Strassen-attention \cite{kozachinskiy2025strassen}, it can also be written as 
\begin{equation}
\label{eqn:strassen-compute}
    \AttS_{i, \ell} = \frac{[\frac{1}{d}Q^{(1)}(Q^{(2)})^T]^e_{(i, 1:n)} D^{1, \ell} [\frac{1}{d}Q^{(2)}(Q^{(3)})^T]^eD^{2, \ell}[\frac{1}{d}Q^{(3)}(Q^{(1)})^T]^e_{(1:n, i)}}{[\frac{1}{d}Q^{(1)}(Q^{(2)})^T]_{(i, 1:n)}^e[\frac{1}{d}Q^{(2)}(Q^{(3)})^T]^e[\frac{1}{d}Q^{(3)}(Q^{(1)})^T]^e_{(1:n, i)}\mathbf{1}_n},
\end{equation}
for all $i\in [n], \ell \in [d]$, where $D^{1, \ell} = diag(V^{(1)}_{(1:n, \ell)})$ and $D^{2, \ell} = diag(V^{(2)}_{(1:n, \ell)})$.

We will compute the entry-wise approximations of the numerator and the denominator terms of Equation \ref{eqn:strassen-compute} separately. The main idea is to use a low rank entry-wise approximations for each of $[\frac{1}{d}Q^{(1)}(Q^{(2)})^T]^e, [\frac{1}{d}Q^{(2)}(Q^{(3)})^T]^e, [\frac{1}{d}Q^{(3)}(Q^{(1)})^T]^e$, and multiply the low rank matrices together-- something that can be done more efficiently. In order to obtain the low rank approximations, we will use 
the following lemma from \cite{aggarwal2022optimal}.

\begin{lemma}[\cite{aggarwal2022optimal}]
\label{lem:exp-poly}
    Let $\Gamma > 1$,  $\varepsilon \in (0, 0.1)$. There exists a polynomial $P(x) \in \mathbb{R}[x]$ of degree $t := \Theta\left(\max\left\{\frac{\log(\nicefrac{1}{\varepsilon})}{\log(\log(\nicefrac{1}{\varepsilon})/\Gamma)}, \Gamma\right\}\right)$ such that for all $a \in [-\Gamma, \Gamma],$ we have $
       |P(a) - e^a| < \varepsilon e^{a}$.
    Furthermore, $P$ can be computed in $\poly{(t)}$ time and its coefficients are rational numbers. 
\end{lemma}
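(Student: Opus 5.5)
The natural approach is a truncated Taylor series of $e^x$ around $0$, with the truncation degree chosen by a two-regime analysis. Note first that relative error is easy to control on a bounded interval: for $a\in[-\Gamma,\Gamma]$ we have $e^{a}\ge e^{-\Gamma}$. So it suffices to find $P$ with absolute error $|P(a)-e^a|\le \varepsilon e^{-\Gamma}$ on $[-\Gamma,\Gamma]$. I would take $P(x)=\sum_{j=0}^{D} x^j/j!$. Its coefficients are rationals, and it can be written down in $\poly(D)$ time, which gives the ``furthermore'' part once $D=t$ is shown to have the claimed order. For the lemma as used later in the paper, only the upper bound $D=O(\cdot)$ matters. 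The matching lower bound hidden in $\Theta$ is the known optimality result of \cite{aggarwal2022optimal}, which I would cite rather than reprove.

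By Taylor's theorem with Lagrange remainder, for $|a|\le\Gamma$ we have $|P(a)-e^a|\le \frac{\Gamma^{D+1}}{(D+1)!}e^{\Gamma}$. So we need $(D+1)!\ge \Gamma^{D+1}e^{2\Gamma}/\varepsilon$. Using $(D+1)!\ge ((D+1)/e)^{D+1}$ and writing $L:=\log(1/\varepsilon)$, it suffices that
\[
(D+1)\log\!\Big(\frac{D+1}{e\Gamma}\Big)\;\ge\; 2\Gamma+L .
\]
I would then split into two cases according to which term of the $\max$ dominates.

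\emph{Case $L\le e^{2}\Gamma$.} Take $D+1=e^{2}\cdot 3\Gamma$, so that $D=O(\Gamma)$. Then the logarithm is at least $1$, and the left side is at least $3e^{2}\Gamma\ge 2\Gamma+L$.

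\emph{Case $L> e^{2}\Gamma$.} Put $r:=L/\Gamma>e^{2}$ and take $D+1=C\,L/\log r$ for a large constant $C$. Then
\[
\frac{D+1}{e\Gamma}=\frac{C\,r}{e\log r}\ge \sqrt{r}
\]
for $C$ large, since $r/\log r\ge\sqrt r$ always. Hence $\log\frac{D+1}{e\Gamma}\ge \tfrac12\log r$, and the left side is at least $\tfrac{C}{2}L\ge 3L\ge 2\Gamma+L$.

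In both cases $D=O(\max\{L/\log(L/\Gamma),\,\Gamma\})$, which matches the stated degree.

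The main obstacle is the second regime. One has to check that $\log((D+1)/(e\Gamma))$ really is of order $\log(L/\Gamma)$ and not merely $\log(L/\Gamma)-\log\log(L/\Gamma)$ with bad constants. The $\sqrt r$ trick above is what I would try first. A secondary point is the boundary between the regimes: when $\log(L/\Gamma)$ is tiny, the first expression in the $\max$ blows up. I would handle this by noting that in that range $L=O(\Gamma)$, so the $\Gamma$ branch already covers it.
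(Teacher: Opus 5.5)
Your argument is correct. It actually goes further than the paper, which does not prove this lemma at all but imports it from \cite{aggarwal2022optimal}; that source establishes the bound together with its optimality, stated there on $[0,B]$. Your truncated Taylor series proves exactly the part the paper uses. The reduction to absolute error $\varepsilon e^{-\Gamma}$, the Lagrange bound, the sufficient condition $(D+1)\log\frac{D+1}{e\Gamma}\ge 2\Gamma+L$, and both cases all check out; in the second case $r/\log r\ge\sqrt r$ holds for every $r>1$, so $C=6$ works.

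Compared with the citation, your route works directly on the symmetric interval with relative error. Its coefficients $1/j!$ are rational and independent of $\Gamma$. Deriving this form from a $[0,B]$ statement would instead need a shift $a\mapsto a+\Gamma$ and a rescaling by $e^{-\Gamma}$, which destroys rationality unless handled separately. The citation buys optimality, which the paper never uses.

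Two small points:
\begin{enumerate}
\item You need no lower bound to get degree exactly of order $\Theta(\cdot)$. The function $x\log\frac{x}{e\Gamma}$ is increasing for $x>\Gamma$, and both of your choices have $D+1>\Gamma$, so any larger degree also satisfies your condition. You can therefore pad the degree up to $\lceil C'\max\{\cdots\}\rceil$. Your boundary remark in fact shows that the displayed expression overshoots the true optimal degree when $L$ is just above $\Gamma$, so the $\Theta$ is only meaningful as an achievability statement.
\item The strict inequality follows because your chains have slack: $m!>(m/e)^m$ strictly, and $(2+e^2)\Gamma<3e^2\Gamma$. Rounding $D+1$ up to an integer only helps.
\end{enumerate}
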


Using the previous lemma, we obtain the low rank matrix approximations as a corollary.

\begin{lemma}[Low rank approximation \cite{alman2023fast,alman2023capture}]
\label{lem:low-rank}
    Let $\varepsilon = \nicefrac{1}{\poly(n)}$, $d = O(\log n)$, $r = n^{o(1)}$, and $B = o(\log n)$. Given matrices $P, Q \in [-\Gamma, \Gamma]^{n\times d}$,
    we can compute matrices $U, W \in \mathbb{R}^{n\times r}$ in time $O(n^{1 + o(1)})$ such that $U W^T$ entry-wise $\varepsilon$-approximates $PQ^T$; that is: 
    $|[UW^T]_{i, j} - [PQ^T]_{i, j}^e| < \varepsilon [PQ^T]_{i, j}^e.$
\end{lemma}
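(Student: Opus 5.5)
The plan is the standard polynomial-method argument. First approximate the scalar function $e^x$ by a low-degree polynomial on the range of the inner products, using Lemma~\ref{lem:exp-poly}. Then expand that polynomial evaluated at $\langle P_i, Q_j\rangle$ as an explicit sum of products of monomials in the coordinates of $P_i$ and of $Q_j$. That expansion directly gives the factors $U$ and $W$.

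I read the hypothesis $B = o(\log n)$ as a bound on the magnitude of the entries of $PQ^T$. If only the entry bound $\Gamma$ on $P,Q$ is given, I first bound $|\langle P_i,Q_j\rangle| \le d\Gamma^2$ (or $\Gamma^2$ after the $\frac1d$ scaling used in attention) and call this $B$. I set $\varepsilon = 1/\poly(n)$. Lemma~\ref{lem:exp-poly} then gives a polynomial $f(x)=\sum_{a=0}^{g} c_a x^a$ with $|f(x)-e^x|<\varepsilon e^x$ on $[-B,B]$. Its degree is
\[
g=\Theta\left(\max\left\{\frac{\log n}{\log(\log n/B)},\,B\right\}\right).
\]
Since $B=o(\log n)$, the ratio $\log n/B\to\infty$, so both terms are $o(\log n)$ and hence $g=o(\log n)$.

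Next, for each $a\le g$, expand $\langle x,y\rangle^a=\sum_{\alpha} \binom{a}{\alpha} x^\alpha y^\alpha$. The sum runs over multi-indices $\alpha\in\mathbb{N}^d$ with $|\alpha|=a$, and $x^\alpha=\prod_\ell x[\ell]^{\alpha_\ell}$. Index the columns of $U$ and $W$ by all $\alpha$ with $|\alpha|\le g$. Set $U_{i,\alpha}=c_{|\alpha|}\binom{|\alpha|}{\alpha}P_i^\alpha$ and $W_{j,\alpha}=Q_j^\alpha$. Then $[UW^T]_{i,j}=f(\langle P_i,Q_j\rangle)$, which gives the required relative error $\varepsilon[PQ^T]^e_{i,j}$. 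Each entry is computable in $\poly(g,d)$ time, so the total time is $n\cdot r\cdot n^{o(1)}$.

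The main obstacle is showing that $r=\binom{d+g}{g}=n^{o(1)}$. Write $d=C\log n$ and $g=\delta\log n$ with $\delta=o(1)$. Then
\[
\binom{d+g}{g}\le \left(\frac{e(d+g)}{g}\right)^g = 2^{\delta\log n\cdot O(\log(C/\delta))}.
\]
Since $\delta\log(1/\delta)\to 0$, this is $n^{o(1)}$. This step is exactly where $g=o(\log n)$, and therefore $B=o(\log n)$, is needed. If $B$ were $\Theta(\log n)$, the degree $g$ would be $\Theta(\log n)$ and the rank would become polynomial in $n$.
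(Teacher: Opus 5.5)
Your proposal is correct and follows essentially the same route as the paper, which obtains this lemma as a direct corollary of Lemma~\ref{lem:exp-poly} via the polynomial method of \cite{alman2023fast,alman2023capture}: approximate $e^x$ on $[-B,B]$ by a polynomial of degree $g=o(\log n)$, expand $f(\langle P_i,Q_j\rangle)$ by the multinomial theorem to read off $U$ and $W$, and check that the number of monomials $\binom{d+g}{g}$ is $n^{o(1)}$. Your reading of $B$ as the bound on the entries of $\frac{1}{d}PQ^T$ (so $\Gamma^2=o(\log n)$ in the paper's applications, cf.\ Lemma~\ref{lem:bounded}) is the right way to resolve the statement's $B$/$\Gamma$ mismatch.
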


This is an instance of the Gaussian KDE which has widely been used in LLMs and machine learning algorithms \cite{zandieh2023kdeformer,backurs2018efficient,katharopoulos2020transformers,alman2020algorithms,aggarwal2022optimal,alman2023fast,alman2023capture}.

 We will show that we can compute $\forall i\in [n]$, $\gamma$-approximations of denominator in Equation \ref{eqn:strassen-compute} in time $O(n^{1+o(1)})$, and fixing any $\ell\in[d]$, we can compute $\gamma$-approximations of the numerator in time $O(n^{1+o(1)})$, $\forall i \in [n]$, where $\gamma = \nicefrac{1}{\poly(n)}$. Once we find the values of the denominator and the numerator, we perform a division, to compute the $\gamma$-approximation $\widehat{\AttS}_{i, \ell}$, which takes a total time of $O(n^{1+o(1)} + d.n^{1+o(1)} + nd) = O(n^{1+o(1)})$.  Using this as the central idea, we prove the following result. Since Strassen-attention is a special case of poly-attention with the polynomial $h_S(x_1, x_2, x_3) = x_1x_2+x_2x_3+x_3x_1$, we state the following result:
 
\begin{theorem}
    \label{thm:strassen-main}
    There is an algorithm that solves $\EAPAC{h_S}(n, d = O(\log n), \Gamma = o(\sqrt{\log n}), \gamma = \nicefrac{1}{\poly(n)})$ with query-key matrices $Q^{(1)}, Q^{(2)}, Q^{(3)} \in [-\Gamma, \Gamma]^{n\times d}$, and value matrices $V^{(2)}, V^{(t)}\in \mathbb{R}^{n\times d}$ in time $O(n^{1+o(1)})$.
   \end{theorem}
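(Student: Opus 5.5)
We follow the outline already sketched before the statement. We work from the matrix form \eqref{eqn:strassen-compute} and write $A^{(1)} = [\frac{1}{d}Q^{(1)}(Q^{(2)})^T]^e$, $A^{(2)} = [\frac{1}{d}Q^{(2)}(Q^{(3)})^T]^e$ and $A^{(3)} = [\frac{1}{d}Q^{(3)}(Q^{(1)})^T]^e$.

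\textbf{Low-rank replacement.} Each entry of $\frac{1}{d}Q^{(a)}(Q^{(b)})^T$ lies in $[-\Gamma^2,\Gamma^2]$, and $\Gamma^2 = o(\log n)$. We apply Lemma~\ref{lem:low-rank} to the pairs $(\frac{1}{d}Q^{(a)}, Q^{(b)})$, using Lemma~\ref{lem:exp-poly} with $\varepsilon = 1/\poly(n)$. The degree is then $g = o(\log n)$, and $r = \binom{d+g}{g} = n^{o(1)}$. This yields $U_a, W_a \in \mathbb{R}^{n\times r}$, computable in $n^{1+o(1)}$ time, with $|[U_aW_a^T]_{i,j} - A^{(a)}_{i,j}| \le \varepsilon A^{(a)}_{i,j}$ for every entry.

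\textbf{Fast evaluation.} For the denominator, we need for each $i$ the value $(U_1)_i W_1^T U_2 W_2^T \operatorname{diag}(\mathbf 1)\,(U_3W_3^T)_{:,i}$, where the last factor is the $i$th column. Write this as $(U_1)_i\, M\, (W_3)_i^T$ with $M = (W_1^TU_2)(W_2^TU_3)$. The matrix $M$ is $r\times r$ and is computed in $O(nr^2 + r^3) = n^{1+o(1)}$ time. Evaluating all $n$ bilinear forms then costs $O(nr^2)$.

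For the numerator, we fix $\ell$ and insert the diagonals. This gives $M_\ell = (W_1^T D^{1,\ell} U_2)(W_2^T D^{2,\ell} U_3)$, again computable in $n^{1+o(1)}$ time. Over all $\ell \in [d]$ with $d = O(\log n)$, the total time stays $n^{1+o(1)}$. Finally we divide numerator by denominator entrywise.

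\textbf{Error analysis (main obstacle).} The concern is that multiplicative entrywise errors compose through products and sums. The denominator is a sum of nonnegative terms $A^{(1)}_{i,j}A^{(2)}_{j,k}A^{(3)}_{k,i}$. Each approximated term equals the true term times a factor in $[(1-\varepsilon)^3,(1+\varepsilon)^3]$, so the approximate denominator has relative error $O(\varepsilon)$.

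The numerator carries the signed weights $V^{(2)}_{j,\ell}V^{(3)}_{k,\ell}$. Its absolute error is at most $O(\varepsilon)\cdot \max|V|^2$ times the denominator. After dividing, the error in $\AttS_{i,\ell}$ is $O(\varepsilon\max|V|^2)$. We assume the value entries are bounded by $\poly(n)$, as in prior work. Choosing $\varepsilon = 1/\poly(n)$ small enough then gives $\gamma = 1/\poly(n)$.

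The remaining delicate point is confirming that $r = n^{o(1)}$ under $\Gamma^2 = o(\log n)$ and $d = O(\log n)$. We plan to verify this with the standard bound $\binom{d+g}{g} \le (e(d+g)/g)^g = 2^{o(\log n)}$, using $g = o(\log n)$ from Lemma~\ref{lem:exp-poly}.
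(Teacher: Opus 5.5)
Your proposal is correct and follows the paper's proof essentially step for step. You use the same low-rank replacement of the three exponentiated matrices via Lemma~\ref{lem:low-rank}, the same $r\times r$ middle products $(W^1)^TU^2(W^2)^TU^3$ and $(W^1)^TD^{1,\ell}U^2(W^2)^TD^{2,\ell}U^3$ evaluated as per-row bilinear forms, and a final entrywise division. The only difference is the error bookkeeping: you bound each nonnegative summand multiplicatively by $(1\pm\varepsilon)^3$, whereas the paper uses additive triangle-inequality estimates with $\gamma=\varepsilon e^{\Gamma^2}$ and Lemma~\ref{lem:bounded}. Your version is cleaner, because the numerator error becomes a multiple of the true denominator and the normalization of the $n^2$-term sums is then automatic.
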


The algorithm is summed up as follows.

\begin{algorithm}[H]
\caption{Algorithm to compute entry-wise approximation of $\AttS$}\label{alg:att-strassen}
\begin{algorithmic}[1]
\Require A number $\Gamma = o(\sqrt{\log n})$, query-key matrices $Q^{(1)}, Q^{(2)}, Q^{(3)}\in [-\Gamma, \Gamma]^{n\times d}$, value matrices $V^{(1)}, V^{(2)} \in \mathbb{R}^{n\times d}$, an approximation parameter $\gamma = \nicefrac{1}{\poly(n)}$.
\Ensure Entry-wise $\gamma$-approximation $\widehat{\AttS}$ of $\AttS$.
\State Initialize $\widehat{\AttS} := \mathbf{0}_{n\times d}$.
\State Compute the low-rank $\gamma$-approximations $U^1 (W^1)^T$ of $[\frac{1}{d}Q^{(1)}(Q^{(2)})^T]^e$, $U^2(W^2)^T$ of $[\frac{1}{d}Q^{(2)}(Q^{(3)})^T]^e$ and $U^3(W^3)^T$ of $[\frac{1}{d}Q^{(3)}(Q^{(1)})^T]^e$ using Lemma \ref{lem:low-rank}, where $U^1, W^1, U^2, W^2, U^3, W^3 \in \mathbb{R}^{n\times r}$ for $r = n^{o(1)}$. \Comment{$O(n^{1+o(1)}r)$ time.}
\State $D^{1, \ell} := diag(V^{(1)}_{(1:n, \ell)})$, $D^{2, \ell} := diag(V^{(2)}_{(1:n, \ell)})$. \Comment{$O(n)$ time.}
\State Compute $\tilde{U^2} := D^{1, \ell}U^2, \tilde{W^2} := D^{2, \ell}W^2 \in \mathbb{R}^{n\times r}$. \Comment{$O(nr)$ time.}
\State Compute $A:= \underbrace{(W^1)^TU^2}_{r\times r}\underbrace{(W^2)^TU^3}_{r\times r}$ and $B := \underbrace{(W^1)^T\tilde{U^2}}_{r\times r}\underbrace{(\tilde{W^2})^TU^3}_{r\times r}$. \Comment{ $O(nr^2)$ times.}
\For{$i\in [n], \ell \in [d]$} 
\State Compute the $\Theta(\gamma)$-approximation of the denominator (Equation \ref{eqn:strassen-compute}) as $$R_i := U^1_{(i, 1:r)}A(W^3_{(i, 1:r)})^T\in \mathbb{R}.$$ \Comment{$O(r^2)$ time.}
\State Compute the $\Theta(\gamma)$-approximation of the numerator (Equation \ref{eqn:strassen-compute}) as $$P^\ell_i := U^1_{(i, 1:r)}B(W^3_{(i, 1:r)})^T \in \mathbb{R}.$$ \Comment{$O(r^2)$ time.}
\State Compute the $\ell$-th row of the entry-wise $\Theta(\gamma)_a$-approximation of $\AttS$ as 
\begin{equation*}
    \widehat{\AttS}[i, \ell] := \frac{P^\ell_i}{Q_i}.
\end{equation*} \Comment{$O(1)$ time.}
\EndFor
\State \textbf{Return}  $\widehat{\AttS}$. 
\end{algorithmic}
\end{algorithm}

Before proving the correctness of this algorithm, we first show that the entries of the exponentiated matrices are bounded, which is necessary for applying Lemma \ref{lem:low-rank}.

\begin{lemma}[Bounded entries]
\label{lem:bounded}
    The entries of $[\frac{1}{d}Q^{(1)}(Q^{(2)})^T]^e, [\frac{1}{d}Q^{(2)}(Q^{(3)})^T]^e, [\frac{1}{d}Q^{(3)}(Q^{(1)})^T]^e$ are bounded as $$e^{-\Gamma^2} \leq [\frac{1}{d}Q^{(1)}(Q^{(2)})^T]^e_{i, j}, [\frac{1}{d}Q^{(2)}(Q^{(3)})^T]^e_{i, j}, [\frac{1}{d}Q^{(3)}(Q^{(1)})^T]^e_{i, j}\leq e^{\Gamma^2},$$ for all $i, j \in [n]$. 
\end{lemma}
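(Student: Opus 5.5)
The plan is to bound the exponent pointwise and then use that $x\mapsto e^x$ is monotone. Fix $i,j\in[n]$ and consider the representative entry $[\frac{1}{d}Q^{(1)}(Q^{(2)})^T]^e_{i,j}=\exp\bigl(\frac{1}{d}\langle Q^{(1)}_i, Q^{(2)}_j\rangle\bigr)$. The other two matrices are handled by the identical argument with $(Q^{(2)},Q^{(3)})$ and $(Q^{(3)},Q^{(1)})$ in place of $(Q^{(1)},Q^{(2)})$, since the only property used is that every entry of every query-key matrix lies in $[-\Gamma,\Gamma]$.

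\textbf{Bounding the inner product.} Write $\langle Q^{(1)}_i, Q^{(2)}_j\rangle=\sum_{\ell=1}^d Q^{(1)}_{i,\ell}Q^{(2)}_{j,\ell}$. Each summand is a product of two numbers in $[-\Gamma,\Gamma]$, so its absolute value is at most $\Gamma^2$. By the triangle inequality the whole sum has absolute value at most $d\Gamma^2$. Dividing by $d$ gives
\[
-\Gamma^2\le \tfrac{1}{d}\langle Q^{(1)}_i, Q^{(2)}_j\rangle \le \Gamma^2 .
\]

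\textbf{Exponentiating.} Applying the increasing function $\exp$ to this inequality yields $e^{-\Gamma^2}\le [\frac{1}{d}Q^{(1)}(Q^{(2)})^T]^e_{i,j}\le e^{\Gamma^2}$. The same holds for the other two matrices, which is exactly the claim.

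There is no real obstacle here. The only point to get right is that the normalization $\frac{1}{d}$ in the definition of Strassen-attention cancels the factor $d$ coming from summing over the coordinates. This is what makes the bound $\Gamma^2$, rather than $d\Gamma^2$, and it is what lets Lemma \ref{lem:low-rank} be applied with exponent range $o(\log n)$ when $\Gamma=o(\sqrt{\log n})$.
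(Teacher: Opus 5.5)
Your proof is correct and is essentially the paper's argument: each coordinate product is bounded by $\Gamma^2$, the $\frac{1}{d}$ normalization gives $-\Gamma^2\le \frac{1}{d}\langle Q^{(1)}_i, Q^{(2)}_j\rangle\le \Gamma^2$, and monotonicity of $\exp$ finishes it. The other two matrices are handled by the same symmetric reasoning, just as in the paper.
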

\begin{proof}
    Without loss of generality, we prove the upper bound only for $X$ and the rest follows similarly. Since each entry of $Q^{(1)}, Q^{(2)}, Q^{(3)}$ are in $[-\Gamma, \Gamma]$, the value of $[Q^{(1)}(Q^{(2)})^T]_{i, j}$ is 
    \begin{equation*}
        \begin{split}
            & [Q^{(1)}(Q^{(2)})^T]_{i, j} = \langle Q^{(1)}_i, Q^{(2)}_j\rangle = \sum_{\ell\in [d]}Q^{(1)}_{i, \ell}Q^{(2)}_{j, \ell},\\
            \implies & -\Gamma^2 \leq \langle Q^{(1)}_i, Q^{(2)}_j \rangle/d \leq \Gamma^2 \quad \hbox{(Since $-\Gamma \leq Q^{(1)}_{i, \ell}, Q^{(2)}_{j, \ell}\leq \Gamma$)}.
        \end{split}
    \end{equation*}
    Therefore, $e^{-\Gamma^2} \leq [\frac{1}{d}Q^{(1)}(Q^{(2)})^T]^e_{i, j} \leq e^{\Gamma^2}$ for all $i, j\in [n]$, and we can similarly bound $[\frac{1}{d}Q^{(2)}(Q^{(3)})^T]^e, [\frac{1}{d}Q^{(3)}(Q^{(1)})^T]^e$.
\end{proof}

Now, we prove Theorem \ref{thm:strassen-main}, which is also the correctness of Algorithm \ref{alg:att-strassen}.

\begin{proof}[Proof of Theorem \ref{thm:strassen-main}]
First, we compute the low-rank approximations of $[\frac{1}{d}Q^{(1)}(Q^{(2)})^T]^e, [\frac{1}{d}Q^{(2)}(Q^{(3)})^T]^e, [\frac{1}{d}Q^{(3)}(Q^{(1)})^T]^e$ using Lemma \ref{lem:low-rank} (Step 2 of Algorithm \ref{alg:att-strassen}). However, in order for Lemma \ref{lem:low-rank} to succeed in Step 2 of Algorithm \ref{alg:att-strassen}, we need the entries of the exponentiated matrices to be bounded, which is true due to Lemma \ref{lem:bounded}.

We compute the Strassen-attention matrix in two steps, first computing the denominator, and then the numerator in Equation \ref{eqn:strassen-compute} to compute the entire self-attention matrix. 

\paragraph{Computing the denominator.} This has been described in Step  7 of Algorithm \ref{alg:att-strassen}, and we now prove its correctness. Since the entries of $[\frac{1}{d}Q^{(1)}(Q^{(2)})^T]^e$, $ [\frac{1}{d}Q^{(2)}(Q^{(3)})^T]^e$, $ [\frac{1}{d}Q^{(3)}(Q^{(1)})^T]^e$ are bounded, we can apply Lemma \ref{lem:low-rank} to find their low rank approximations. Let the low-rank approximations of $[\frac{1}{d}Q^{(1)}(Q^{(2)})^T]^e, [\frac{1}{d}Q^{(2)}(Q^{(3)})^T]^e, [\frac{1}{d}Q^{(3)}(Q^{(1)})^T]^e$ be $U^1(W^1)^T$, $U^2(W^2)^T$ and $U^3(W^3)^T$ respectively, with entry-wise error $\varepsilon$ for $\varepsilon = \nicefrac{1}{\poly(n)}$, where each of $U^i, W^i \in \mathbb{R}^{n\times r}$. Namely, for all $i, j, k\in [n]$,
\begin{equation}
\label{eqn:low-rank}
    \begin{split}
        \bigg|[\frac{1}{d}Q^{(1)}(Q^{(2)})^T]^e_{i, j}-[U^1(W^1)^T]_{i, j}\bigg| < \varepsilon [\frac{1}{d}Q^{(1)}(Q^{(2)})^T]^e_{i, j} < \gamma,\\
        \bigg|[\frac{1}{d}Q^{(2)}(Q^{(3)})^T]^e_{j, k} - [U^2(W^2)^T]_{j, k}\bigg| < \varepsilon [\frac{1}{d}Q^{(2)}(Q^{(3)})^T]^e_{j, k} < \gamma,\\
        \bigg|[\frac{1}{d}Q^{(3)}(Q^{(1)})^T]^e_{k, i} - [U^3(W^3)^T]_{k, i}\bigg| < \varepsilon [\frac{1}{d}Q^{(3)}(Q^{(1)})^T]^e_{k, i}< \gamma.
    \end{split}
\end{equation}

where $\gamma = \varepsilon e^{\Gamma^2}$. When we choose $\varepsilon$ as the inverse of a large enough polynomial such that $\varepsilon e^{\Gamma^2} = \frac{1}{\poly(n)}$, we have $\gamma = \nicefrac{1}{\poly(n)}$ (note that $\Gamma = O(\sqrt{\log n})$). Now, we claim that $$[U^1(W^1)^TU^2(W^2)^TU^3(W^3)^T]_{i, i}$$ is an approximation of $[[\frac{1}{d}Q^{(1)}(Q^{(2)})^T]^e[\frac{1}{d}Q^{(2)}(Q^{(3)})^T]^e[\frac{1}{d}Q^{(3)}(Q^{(1)})^T]^e]_{i, i}$. For ease of notation, let us denote $X = [\frac{1}{d}Q^{(1)}(Q^{(2)})^T]^e, Y = [\frac{1}{d}Q^{(2)}(Q^{(3)})^T]^e, Z = [\frac{1}{d}Q^{(3)}(Q^{(1)})^T]^e$. Now, 
\begin{equation}
\label{eqn:den}
    \begin{split}
     |[XY&Z]_{i, i} -  [U^1(W^1)^T  U^2(W^2)^TU^3(W^3)^T]_{i, i}| \\
        = &  \bigg\rvert\bigg([XYZ]_{i, i} - [U^1(W^1)^TYZ]_{i, i}\bigg) + \bigg([U^1(W^1)^TYZ]_{i, i} - [U^1(W^1)^TU^2(W^2)^TZ]_{i, i}\bigg) \\ & + \bigg([U^1(W^1)^TU^2(W^2)^TZ]_{i, i}- [U^1(W^1)^T  U^2(W^2)^TU^3(W^3)^T]_{i, i}\bigg)\bigg\rvert\\
        \leq & \bigg\rvert[XYZ]_{i, i} - [U^1(W^1)^TYZ]_{i, i}\bigg\rvert + \bigg\rvert[U^1(W^1)^TYZ]_{i, i} - [U^1(W^1)^TU^2(W^2)^TZ]_{i, i}\bigg\rvert
        \\ &+ \bigg\rvert[U^1(W^1)^TU^2(W^2)^TZ]_{i, i} - [U^1(W^1)^TU^2(W^2)^TU^3(W^3)^T]_{i, i}\bigg\rvert ,
    \end{split}
\end{equation}
where the last inequality follows from triangle inequality.

Now, using Equation \ref{eqn:low-rank} in each of the three terms, we can show that this is bounded above by $O(\gamma)$.

The computation of $[U^1(W^1)^TU^2(W^2)^TU^3(W^3)^T]_{i, i}$ for $i\in [n]$ from Algorithm \ref{alg:att-strassen}, takes $O(n^{1+o(1)})$ time for $r = n^{o(1)}$ (which is true for the choice of $d = O(\log n), B = o(\sqrt{\log n}), \gamma = \nicefrac{1}{\poly(n)}$ using the parameters of Lemma \ref{lem:low-rank}).

\paragraph{Computing the numerator.} 
An entry-wise $\gamma$-approximation of the numerator of the Strassen-attention matrix $\AttS \in \mathbb{R}^{n\times d}$ (Equation \ref{eqn:strassen-compute}) has been computed in Step 8 of Algorithm \ref{alg:att-strassen}. Here, essentially, we compute each entry  $[X D^{1, \ell}YD^{2, \ell} Z]_{i, i}$, for all $i\in [n]$ by fixing $\ell \in [d]$ at a time. 

We again make use of the low rank decompositions of $X, Y, Z$ as above (Equation \ref{eqn:low-rank}). Note that the value of each element of $\AttS$ is given as $$\AttS_{i, \ell} = [XD^{1, \ell}YD^{2, \ell}Z]_{i, i}.$$

We claim that $[U^1(W^1)^TD^{1, \ell}U^2(W^2)^TD^{2, \ell}U^3(W^3)^T]_{i, i}$ is an $O(\gamma)$-approximation of $[XD^{1, \ell}YD^{2, \ell}Z]_{i, i}$. Indeed, we have
\begin{equation}
\label{eqn:num}
    \begin{split}
         | [X&D^{1, \ell} YD^{2, \ell}Z]_{i, i} - [U^1(W^1)^TD^{1, \ell}U^2(W^2)^TD^{2, \ell}U^3(W^3)^T]_{i, i} | \\ 
          = & \bigg|\bigg([XD^{1, \ell}YD^{2, \ell}Z]_{i, i} - [U^1(W^1)^TD^{1, \ell}YD^{2, \ell}Z]_{i, i}\bigg)\\& + \bigg( [U^1(W^1)^TD^{1, \ell}YD^{2, \ell}Z]_{i, i} - [U^1(W^1)^TD^{1, \ell}U^2(W^2)^TD^{2, \ell}Z]_{i, i}\bigg)  \\ 
        & + \bigg( [U^1(W^1)^TD^{1, \ell}U^2(W^2)^TD^{2, \ell}Z]_{i, i} - [U^1(W^1)^TD^{1, \ell}U^2(W^2)^TD^{2, \ell}U^3(W^3)^T]_{i, i}\bigg) \bigg|\\
           \leq & \bigg|[XD^{1, \ell}YD^{2, \ell}Z]_{i, i} - [U^1(W^1)^TD^{1, \ell}YD^{2, \ell}Z]_{i, i}\bigg|  \\ &+ \bigg| [U^1(W^1)^TD^{1, \ell}YD^{2, \ell}Z]_{i, i} - [U^1(W^1)^TD^{1, \ell}U^2(W^2)^TD^{2, \ell}Z]_{i, i} \bigg| \\ 
        & + \bigg| [U^1(W^1)^TD^{1, \ell}U^2(W^2)^TD^{2, \ell}Z]_{i, i} - [U^1(W^1)^TD^{1, \ell}U^2(W^2)^TD^{2, \ell}U^3(W^3)^T]_{i, i} \bigg|,
    \end{split}
\end{equation}
which again follows from the triangle inequality, and each term can be shown to be upper bounded by $O(\gamma)$ using Equation \ref{eqn:low-rank}.

\paragraph{Wrapping up.} An approximation of the $(i, \ell)$-th element, $\AttS_{i, \ell}$, is obtained by approximating the value of $[XD^{1, \ell}YD^{2, \ell} Z]_{i, i}$ and then dividing by the approximate value of $[XYZ]_{i, i}$. Using 
\begin{equation*}
    \begin{split}
        P_{i}^\ell = U^1(W^1)^TD^{1, \ell}U^2(W^2)^TD^{2, \ell}U^3(W^3)^T,\\
        R_i = U^1_i \left( (W^1)^TU^2(W^2)^TU^3 \right) (W^3_i)^T,
    \end{split}
\end{equation*}
we have
$$|[XD^{1, \ell} YD^{2, \ell} Z]_{i, i} - P_{i}^\ell| \leq O(\gamma),$$ 
and, 
$$|[XYZ]_{i, i} - R_{i}|_\infty \leq O(\gamma),$$
for $i\in [n], \ell \in [d]$.

Therefore, the  error is given by
\begin{equation*}
    \begin{split}
        & | [XYZ]_{i, i}^{-1}[XD^{1, \ell} YD^{2, \ell} Z]_{i, i} - R_i^{-1}P_i^\ell| \leq |[XYZ]_{i, i}^{-1}[XD^{1, \ell} YD^{2, \ell} Z]_{i, i} - [XYZ]_{i, i}^{-1}P_i^\ell| \\&+ | [XYZ]_{i, i}^{-1}P_i^\ell - R_i^{-1}P_i^\ell|  \ \hbox{(Triangle inequality)}\\
        & \leq  O(\gamma),
    \end{split}
\end{equation*}
which follows from Equations \ref{eqn:den}, \ref{eqn:num}, repeated applications of triangle inequalities, and the fact that $\varepsilon$ is an inverse polynomial in $n$, and, 
\begin{equation*}
    \begin{split}
        & |[XD^{1, \ell}{Y}D^{2, \ell} Z]_{i, i}| = \bigg|\sum_{j, k\in [n]}X_{i, j}V^1_{j, \ell}Y_{j, k}V^2_{k, \ell}Z_{k, i}\bigg| < e^{3\Gamma^2}||V^1||_\infty||V^2||_\infty,\\
        & \bigg|\frac{1}{[XYZ]_{i, i}}\bigg| = \bigg|\frac{1}{\sum_{j, k\in [n]}X_{i, j} Y_{j, k}Z_{k, i}}\bigg| < e^{3\Gamma^2},
    \end{split}
\end{equation*}
for all $i\in [n], \ell \in [d]$ since the entries of $Q^{(1)}, Q^{(2)}, Q^{(3)}$ are in $[-\Gamma, \Gamma]$ (Lemma \ref{lem:bounded}). For $d = O(\log n)$, $\Gamma = o(\sqrt{\log n})$ and $||V^{(1)}||_{\infty}, ||V^{(2)}||_\infty = \poly(n)$, we can choose $\gamma_0 = \nicefrac{1}{\poly(n)}$ for a large enough polynomial such that
\[
\bigg|\frac{P_i^\ell}{R_i}-\AttS_{i, \ell}\bigg| < \gamma_0,
\]
where $\gamma_0 = O(\gamma) =  \nicefrac{1}{\poly(n)},$ which is our required approximation parameter.

As described in Algorithm \ref{alg:att-strassen} we compute this $\gamma$-approximation for all $ i\in [n]$ in time  $O(n^{1+o(1)})$, and repeating this over all $\ell \in [d]$ requires $O(n^{1+o(1)}d) = O(n^{1+o(1)})$ time since $d = O(\log n)$. This proves Theorem \ref{thm:strassen-main}.
\end{proof}

\subsection{Hardness of Strassen-attention}
\label{sec:lb}

Now, we introduce the techniques that will be used to prove lower bounds in this paper. We establish the  hardness of Strassen-attention in the high weight case, assuming the  $\MAXSAT{2}$ conjecture (Hypothesis \ref{hyp:max-2sat}). 
Our reduction will proceed in three steps. First, we use a reduction from \cite{alman2020ov} that establishes the hardness of $\IPDelta$ (Definition \ref{def:ipt}) assuming Hypothesis \ref{hyp:max-2sat} (hardness of $\MAXSAT{2}$).
Second, we prove the hardness of $\varepsilon\text{-}\GapIPDelta$ (an approximate version of $\IPDelta$ defined in Definition \ref{def:gap-ipt}) from the hardness of $\IPDelta$,   in Section \ref{sec:gap-ipt}. Lastly, we prove the hardness of Strassen-attention from the hardness of $\varepsilon\text{-}\GapIPDelta$ in Section \ref{sec:strassen-reduction}.

We begin by defining the problems $\IPDelta$ and $\varepsilon\text{-}\GapIPDelta$.

\begin{definition}[$\IPDelta$]
\label{def:ipt}
    Given three sets of vectors $A^1, A^2, A^3\subseteq \{0, 1\}^{d}$, $|A^1| = |A^2| = |A^3| = n$, and target inner products, $m_{12}, m_{23}, m_{31} \in \{0, \dots, d\}$,
    the problem  $\IPDelta_{n,d}(A^1,A^2,A^3,m_{12}, m_{23}, m_{31})$
    asks whether there exist vectors $a_1 \in A^1, a_2\in A^2, a_3\in A^3$ such that, simultaneously,
    $\langle a_1, a_2 \rangle = m_{12}, \langle a_2,a_3 \rangle =m_{23}, \langle a_3,a_1 \rangle = m_{31}$.
\end{definition}


\begin{definition}[$\varepsilon\text{-}\GapIPDelta$]
\label{def:gap-ipt}
    Let $\varepsilon > 0$. Given three sets of vectors $A^1,A^2,A^3 \subseteq \{0,1\}^d$, with $|A^1|=|A^2|=|A^3|=n$, a target inner product $m\in \{0, \dots, d\}$,
    and the promise that for every $a_1 \in A^1, a_2\in A^2, a_3\in A^3$,
    \begin{itemize}
        \item either $\langle a_1, a_2 \rangle \leq (1-\varepsilon)m$ or $\langle a_1, a_2 \rangle = m$,
        \item and, either $\langle a_2, a_3 \rangle \leq (1-\varepsilon)m$ or $\langle a_2, a_3 \rangle = m$,
        \item and, either $\langle a_3, a_1 \rangle \leq (1-\varepsilon)m$ or $\langle a_3, a_1 \rangle = m$,        
    \end{itemize}
    the problem $\varepsilon\text{-}\GapIPDelta_{n,d}
    (A^1,A^2,A^3,m)$ 
    is to decide if there exist vectors $a_1 \in A^1, a_2\in A^2, a_3\in A^3$ such that: 
    $\langle a_1,a_2 \rangle = \langle a_2,a_3 \rangle = \langle a_3,a_1 \rangle =m$.
\end{definition}

For $\IPDelta$ and $\varepsilon\text{-}\GapIPDelta$, we will drop the parameters 
$m,d$
when they are clear from context.  Note that even though $\IPDelta$ might have different inner products for all three pairs, for $\varepsilon\text{-}\GapIPDelta$, the three inner products being equal suffices as the reduction for proving its hardness accommodates this property, and for proving hardness of approximating the output of Strassen-attention, we need them to be equal.

As mentioned above, the first step uses a result due to \cite{alman2020ov} which proved that $\IPDelta$ is at least as hard as $\MAXSAT{2}$:

\begin{lemma}[\cite{alman2020ov}]
\label{lem:ipt}
    Assuming the $\MAXSAT{2}$ conjecture (Hypothesis \ref{hyp:max-2sat}), for every $\delta>0$ there exists $c>0$ such that $\IPDelta_{n,c\log n} $  cannot be solved in time $O(n^{\omega-\delta})$. 
\end{lemma}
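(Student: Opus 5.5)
The plan is a direct split-and-list reduction from sparse $\MAXSAT{2}$ to $\IPDelta$, in the style of Williams' $\MAXSAT{2}$ algorithm, with the triangle of pairwise inner products playing the role of the three matrix products. Fix $\delta>0$ and let $c>0$ be the constant from Hypothesis~\ref{hyp:max-2sat} for $\delta/3$. Take an instance $\phi$ of $\MAXSAT{2}_{N,cN}$ and partition its variables into three groups $X_1,X_2,X_3$ of size $N/3$ each. Every clause then falls into one of two classes: it is \emph{internal} to one group $X_i$, or it is a \emph{cross} clause between a pair $\{X_i,X_j\}$ of groups. This gives six clause classes. Let $C_{ij}$ denote the set of cross clauses between $X_i$ and $X_j$, so $|C_{12}|+|C_{23}|+|C_{31}|\le cN$.

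\textbf{Encoding.} For each partial assignment $\alpha$ to $X_i$ we build a $0/1$ vector with three blocks, indexed by $C_{12}$, $C_{23}$ and $C_{31}$.
\begin{itemize}
\item In a block $C_{ij}$ with $i$ involved, the coordinate of a clause $(\ell\vee\ell')$ is $1$ exactly when the literal of that clause lying in $X_i$ is falsified by $\alpha$.
\item The block not involving $i$ is set to zero.
\end{itemize}
A cross clause is unsatisfied iff both of its literals are false. Hence for assignments $a_i,a_j$ to $X_i,X_j$, the inner product of their vectors equals the number of unsatisfied clauses in $C_{ij}$. The zero padding ensures no other block contributes. This is exactly the pairwise structure in Definition~\ref{def:ipt}.

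\textbf{Enumerating targets.} The total number of satisfied clauses decomposes as
\[
w_1(a_1)+w_2(a_2)+w_3(a_3)+\sum_{\{i,j\}}\bigl(|C_{ij}|-\langle a_i,a_j\rangle\bigr),
\]
where $w_i(a_i)$ counts satisfied internal clauses of $X_i$. We enumerate all tuples $(w_1,w_2,w_3,m_{12},m_{23},m_{31})\in\{0,\dots,cN\}^6$, which is only $\mathrm{poly}(N)$ many. For each tuple, we set $A^i$ to the vectors of those assignments to $X_i$ with exactly $w_i$ satisfied internal clauses. We then ask $\IPDelta(A^1,A^2,A^3,m_{12},m_{23},m_{31})$. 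The maximum number of satisfiable clauses is the largest value of the objective over tuples whose instance answers yes.

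\textbf{Parameters.} Each instance has $n\le 2^{N/3}$ vectors per set. It has dimension $d\le cN=3c\log_2 n$, i.e.\ $d=c'\log n$. The sets $A^i$ can be padded with dummy vectors to size exactly $n$; dummy vectors that are all-ones on a fresh coordinate unique to each set would shift inner products, so it is cleaner to duplicate existing vectors instead. Building all instances takes $2^{N/3}\mathrm{poly}(N)$ time.

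If $\IPDelta_{n,c'\log n}$ were solvable in time $O(n^{\omega-\delta})$, the whole procedure would run in $\mathrm{poly}(N)\cdot 2^{N(\omega-\delta)/3}=O(2^{N(\omega/3-\delta/3+o(1))})$. This contradicts Hypothesis~\ref{hyp:max-2sat}, so we take the constant in the lemma to be $c'$.

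The main care point is the bookkeeping of the padding and block structure, so that each pairwise inner product counts exactly one cross-clause class. The other point is sparsity: the bound $m=cN$ is what keeps the dimension $O(\log n)$, and it is the reason the sparse form of the hypothesis is needed.
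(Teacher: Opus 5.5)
Your reduction is correct and is essentially the standard Williams-style split-and-list argument behind the cited result of \cite{alman2020ov}; the paper itself gives no proof beyond this citation. The ingredients match: three variable groups, one coordinate per cross clause so that each pairwise inner product counts the unsatisfied clauses of one cross class, and a $\mathrm{poly}(N)$ enumeration of the pair targets and internal weights. The one slip is in the exponent bookkeeping: $\mathrm{poly}(N)\cdot 2^{N(\omega-\delta)/3}$ is not $O(2^{N(\omega/3-\delta/3)})$, so you should invoke Hypothesis~\ref{hyp:max-2sat} with a constant strictly below $\delta/3$ (say $\delta/6$) and take $c$ from that.
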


\subsubsection{Conditional hardness of $\varepsilon\text{-}\GapIPDelta$}
\label{sec:gap-ipt}

In this subsection we prove the following theorem, establishing hardness of $\varepsilon\text{-}\GapIPDelta$ assuming hardness of $\IPDelta$.

\begin{theorem}
\label{thm:hardness}
    For every $\delta, \varepsilon >0$, there exists $c, c'> 0$ such that if $\varepsilon\text{-}\GapIPDelta_{n,c\log n}$ can be solved in time $\tilde O(n^{(\omega - \delta)})$, then
    $\IPDelta_{n,c'\log n}$ can be solved in time $\tilde O(n^{(\omega -\delta/2)})$. 
\end{theorem}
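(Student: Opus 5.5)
The plan is a fine-grained reduction in two stages. First, normalize the three target inner products of an $\IPDelta$ instance to a single common value $m$. Then run a gap amplification, in the style of the distributed PCP / low-overhead communication protocol approach of \cite{abboud2017distributed,rubinstein2018hardness}. The amplification turns ``exact inner product $m$'' into ``inner product equals $m'$, or is at most $(1-\varepsilon)m'$,'' while keeping the dimension $O(\log n)$. The overhead is at most $n^{\delta/2}$ extra instances or time.

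\emph{Step 1: equalizing targets.} Given $A^1,A^2,A^3$ with targets $m_{12},m_{23},m_{31}$, append padding coordinates to all three sets. These are blocks of coordinates on which exactly one pair of vectors (for example $a_1,a_2$) shares ones and the third vector is zero. This lets each pairwise inner product be shifted independently by a chosen constant. Setting the shifts to $m-m_{12}$, $m-m_{23}$, $m-m_{31}$ with $m=d$ makes all three targets equal to some $M=O(d)$. The dimension grows only by a constant factor.

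\emph{Step 2: exact-to-gap.} Treat ``$\langle a,b\rangle=M$'' as an equality-type predicate. Encode each $a$ so that the inner product of encodings detects whether $\langle a,b\rangle-M=0$, and otherwise is far from the target.

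Concretely, split the $d$ coordinates into $q=d/\ell$ blocks of size $\ell=\varepsilon' \log n$. Enumerate guesses $(s_1,\dots,s_q)$ of the per-block inner products that sum to $M$. There are $(\ell+1)^{q}=n^{O(\log(\ell)/\ell)}\le n^{\delta/6}$ such guesses for a suitably chosen $\ell$. We want one guess vector per pair, so we pay this factor three times in total, giving $n^{\delta/2}$.

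For a fixed guess, each block predicate ``$\langle a|_j,b|_j\rangle=s_j$'' depends on only $2^{\ell}=n^{\varepsilon'}$-size local data. We therefore need a dimension-efficient encoding. Rather than a $2^\ell$-dimensional indicator, use the polynomial/Reed--Solomon style encoding of the distributed PCP: Merlin's advice is the guess, and Alice/Bob/Charlie evaluate low-degree extensions over a field of size $\poly(\log n)$. Agreement then happens on all coordinates when the predicate holds, and on at most a $(1-\varepsilon)$ fraction otherwise, by Schwartz--Zippel. The agreement count is written as a Boolean inner product by one-hot encoding field symbols. This gives dimension $O(\poly(\log\log n)\cdot \log n)$, which we fold into $c\log n$ after reparametrizing $n$.

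Finally, run Step 1 again on the encoded instances so the three gapped targets coincide. Padding preserves the gap up to a constant factor, so we use a slightly larger base gap.

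\emph{Complexity.} We solve $n^{\delta/2}$ instances of $\varepsilon\text{-}\GapIPDelta_{n,c\log n}$, each in time $\tilde O(n^{\omega-\delta})$. The total is $\tilde O(n^{\omega-\delta/2})$, which decides $\IPDelta_{n,c'\log n}$. The answer is yes iff some guess yields a yes instance.

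\emph{Main obstacle.} The hardest step is keeping the dimension $O(\log n)$ while creating a constant gap. This requires choosing the block length, field size and code rate so that the number of Merlin guesses stays $n^{o(1)}$, or at least $n^{\delta/2}$, and the encoding length stays $O(\log n)$. The first thing to try is the exact parameter tradeoff of \cite{abboud2017distributed}, adapted to three pairwise predicates simultaneously. Care is needed because each vector appears in two predicates, so its encoding must concatenate two independently gapped parts.
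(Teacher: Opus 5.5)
Your outer architecture matches the paper's. You treat the three pairs separately, take one gap instance per pair ($q^3$ combinations), and place the per-pair encodings in disjoint zero-padded blocks, so each pairwise inner product sees only its own block (your closing remark about concatenating ``two independently gapped parts'').

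The genuine gap is Step~2, your from-scratch exact-to-gap encoding: it does not achieve dimension $c\log n$.
\begin{enumerate}
\item The block guesses $(s_1,\dots,s_q)$ accomplish nothing. Each block predicate $\langle a|_j,b|_j\rangle=s_j$ is again an exact inner-product predicate on $\Theta(\log n)$ bits.
\item ``Agreement on all coordinates when the predicate holds'' describes an equality test, not an inner-product test. In the Reed--Solomon/AW-style protocol, the check at a random point $r$ is $P(r)=\sum_{i\le L}\hat a_i(r)\hat b_i(r)$. Writing acceptance as a Boolean inner product therefore means one-hot encoding Alice's entire message in $F^L$; Bob's accepting set is generically an affine hyperplane of $|F|^{L-1}$ points. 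That costs about $|F|^{L+1}$ coordinates.
\item Meanwhile Merlin's advice is about $(d/L)\log|F|$ bits, with $|F|\gtrsim d/L$ needed for Schwartz--Zippel. For $d=c\log n$ you cannot have dimension $O(\log n)$ and at most $(\delta/6)\log n$ advice bits at the same time. Already $L=1$ needs $\Theta(\log n\log\log n)$ advice bits, i.e.\ $n^{\Theta(\log\log n)}$ instances. Any $L$ large enough to cut the advice makes $|F|^{L+1}$ superlogarithmic. This is also why following the tradeoff of \cite{abboud2017distributed} only yields dimension $n^{o(1)}$.
\item A $\poly(\log\log n)$ overhead cannot be ``folded into $c\log n$ by reparametrizing $n$.'' Here $c$ must be a constant, and passing to $N=n^{\poly(\log\log n)}$ makes an $N^{\omega-\delta}$-time algorithm superpolynomial in $n$.
\end{enumerate}

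The paper avoids all of this by invoking Lemma~\ref{lem:reduction} (the two-set reduction of \cite{abboud2025finer}, building on \cite{rubinstein2018hardness}) as a black box for each of $(A,B)$, $(B,C)$, $(A,C)$. That lemma already gives dimension $d'=O(d)$, $q=n^{o(1)}$ instances, and a target $m'$ independent of the input target, so your Step~1 is harmless but unnecessary. The bound $q^3\le n^{\delta/2}$ then gives the time bound.

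Substituting that lemma for your Step~2 repairs the argument. When you do, make explicit that the three-set no-case needs per-pair soundness: every specific pair with $\langle a,b\rangle\neq m$ must yield derived vectors with inner product at most $(1-\varepsilon)m'$ in every instance. This is stronger than the instance-level no-case as the lemma is stated. It holds for that construction because it is local: each derived vector depends only on its source vector and the instance index. So the lemma can be applied to the one-pair instance $\{a\},\{b\}$, padded to size $n$ by repetition.
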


Building on
\cite{aaronson2009algebrization}, Rubinstein \cite{rubinstein2018hardness} gave a reduction from the $\IP$ problem to the gap version, $\varepsilon\text{-}\GapIP{}$. That is, they proved a similar reduction to what we want, but where $\IP$ and $\varepsilon\text{-}\GapIP{}$ take as input two sets $A^1,A^2$ instead of three sets.
\cite{chen2019equivalence,abboud2025finer} further improved their reduction; for our reductions, we will  use and build upon the  proof given by \cite{abboud2025finer}.

The following lemma was proven in  \cite{abboud2025finer} (see the proofs of Lemma 4.1 and Claim 4.3 in their paper).

\begin{lemma}[{\cite{abboud2025finer}}]
\label{lem:reduction}
For all $n, d=O(\log n)$, there exists $d' = O(d)$, $q=n^{o(1)}$, $m'=O(\log n)$,  such that for every instance of $\IP_{n,d}$ given by sets of vectors $A,B$ and a target inner product $m \in \{0, \dots, d\}$, there is a  set of $q$ instances $\{(\tilde{A^i},\tilde{B^i},m') ~|~ i \in [q]\}$ of $\varepsilon\text{-}\GapIP{}_{n,d'}$ computable in $O(n^{1+o(1)})$ time, where $\varepsilon \in (0, 1)$ is a constant such that:
\begin{enumerate}
    \item (Yes case) If there exists $(a,b) \in A \times B$ such that $\langle a,b \rangle =m$, then there exists $i \in [q]$ such that $(\tilde{A}^i, \tilde{B}^i,m')$ is a yes instance of $\varepsilon\text{-}\GapIP{}_{n,d'}$.
    \item (No case) If for every pair $(a,b) \in A \times B$, $\langle a,b \rangle \neq m$, then for all $i \in [q]$, $(\tilde{A}^i,\tilde{B}^i,m')$ is a no instance of $\varepsilon\text{-}\GapIP{}_{n,d'}$.
\end{enumerate}
\end{lemma}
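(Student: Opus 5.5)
The plan is to follow the distributed PCP approach of \cite{aaronson2009algebrization,rubinstein2018hardness}, with the parameter refinements of \cite{chen2019equivalence,abboud2025finer}. We view an $\IP_{n,d}$ instance as a two-party problem. Alice holds $a\in A$, Bob holds $b\in B$, and they must decide whether $\langle a,b\rangle = m$. We design a Merlin--Arthur style protocol for this predicate with three features: a short proof $\pi$ from Merlin that depends on both inputs, few random bits for Arthur, and acceptance determined by a test comparing one symbol computed by Alice with one symbol computed by Bob.

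\textbf{Step 1 (arithmetization).} Split the $d$ coordinates into $T$ blocks of size $d/T$. Pick a prime field $\mathbb{F}_p$ with $p>d$. Let $\alpha_1,\dots,\alpha_T\in\mathbb{F}_p$ be distinct evaluation points, and let $\hat a_j,\hat b_j$ be the univariate low-degree extensions, of degree less than $T$, of the $j$-th coordinate of each block. Then $\langle a,b\rangle=\sum_{i\le T}\sum_j \hat a_j(\alpha_i)\hat b_j(\alpha_i)$. Merlin's proof $\pi$ is the polynomial $R(x)=\sum_j \hat a_j(x)\hat b_j(x)$, which has degree at most $2T$.

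\textbf{Step 2 (the protocol).} Bob checks the constraint $\sum_{i}R(\alpha_i)=m$ on his own, and we discard any $\pi$ that fails it. Arthur then picks a uniformly random $r\in\mathbb{F}_p$. Alice sends the vector $(\hat a_j(r))_j$, and Bob accepts iff $\sum_j \hat a_j(r)\hat b_j(r)=R(r)$. This test is an equality test between a symbol computed from $(a,r)$ and a symbol computed from $(b,\pi,r)$.
\begin{itemize}
\item \emph{Yes case.} If $\langle a,b\rangle=m$, the honest $R$ passes Bob's check and is accepted for every $r$.
\item \emph{No case.} If $\langle a,b\rangle\neq m$, then every $\pi$ that passes Bob's check differs from the true polynomial. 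By Schwartz--Zippel it is accepted on at most a $2T/p$ fraction of the $r$'s.
\end{itemize}
We choose $T$ so that the proof length $O(T\log p)$ is $o(\log n)$ bits. The number of proofs is then $q=2^{o(\log n)}=n^{o(1)}$, and we index the output instances by $\pi$. We take $p$ a large constant multiple of $T$, which makes the soundness gap a constant $\varepsilon$.

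\textbf{Step 3 (building the gap instances).} For each $\pi$ and each vector we build new vectors with one block per random string $r$. Alice's block is the one-hot encoding of her message. Bob's block is the one-hot encoding of the unique message he would accept, or the zero block if none. Each block contributes $1$ to the inner product iff the test on $r$ accepts. Hence the new inner product equals the number of accepting $r$. This number is $m'=p$ in the yes case, and at most $(1-\varepsilon)m'$ for every pair in the no case. This is exactly the gap promise of $\varepsilon\text{-}\GapIP{}$, and every instance is computable in $n^{1+o(1)}$ time.

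\textbf{Main obstacle.} I expect the hardest step to be keeping the dimension at $d'=O(d)$ with $m'=O(\log n)$. In the naive encoding each block has size equal to the alphabet of Alice's message, which is $p^{d/T}$. Summing over the $p$ blocks, this blows up far beyond $O(\log n)$. To fix it, I would first try the refinements of \cite{chen2019equivalence,abboud2025finer}. There, the message symbols are made constant-size by further splitting and recursive composition. Alternatively, the Reed--Solomon code is replaced by an algebraic-geometry code over a constant-size field, which keeps the distance constant while the number of evaluation points stays $O(d)$. One then balances $T$ so that the proof length remains $o(\log n)$ bits, which is what gives $q=n^{o(1)}$.
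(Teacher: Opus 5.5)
Your skeleton is the right one: one instance per proof string, Alice's block one-hot in her message, Bob's block marking the messages he accepts, and the inner product equal to the number of accepting random strings. The yes/no/gap-promise reasoning in Steps~2--3 is also sound. The gap is that the quantitative heart of this lemma is the dimension bound $d'=O(d)$ with $m'$ a constant fraction of $d'$, which is what the downstream reductions rely on, and your proposal does not establish it. You label it the ``main obstacle'' and defer to the cited refinements.

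With the parameters you actually fix, the construction fails this bound. You need $p>d=\Theta(\log n)$, so that $\sum_i R(\alpha_i)=m$ in $\mathbb{F}_p$ certifies the integer inner product. You also need $T\log p=o(\log n)$ to get $q=n^{o(1)}$. Together these force $T=o(\log n/\log\log n)$, so Alice's message has $(d/T)\log p=\omega((\log\log n)^2)$ bits. Then $d'=p^{1+d/T}$ is super-polylogarithmic, and $m'/d'=p^{-d/T}\to 0$.

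This is not cosmetic. In Theorem~\ref{thm:strassen-lb} the threshold $B=\Theta(\sqrt{\log n})$ comes from $\lambda=m/\tilde d$ being a constant, and that fails for your instances. Separately, $p>d$ contradicts your choice of ``$p$ a constant multiple of $T$''. This only improves soundness, but your accounting should use one choice.

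The remedies you propose are named rather than executed, and neither works as stated.
\begin{itemize}
\item Over a constant-size field, the check $\sum_i R(\alpha_i)=m$ certifies $\langle a,b\rangle\equiv m$ only modulo the characteristic. Exactness for a target $m\in\{0,\dots,d\}$ therefore needs an additional mechanism against wrap-around.
\item Even granting constant-size symbols, the instantiation you describe has $\Theta(T)$ random strings, a $\Theta(T)$-bit proof, and per-string blocks of size $2^{\Theta(d/T)}$. So $d'=\Theta(T)\cdot 2^{\Theta(d/T)}$. Since $q=n^{o(1)}$ forces $T=o(d)$, write $k=d/T\to\infty$; then $d'=\Theta(d/k)\cdot 2^{\Theta(k)}=\omega(d)$.
\end{itemize}
So ``balancing $T$'' cannot give $q=n^{o(1)}$ and $d'=O(d)$ at the same time, and a genuinely further idea is needed. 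The paper does not prove this; it imports it from \cite{abboud2025finer} (Lemma~4.1 and Claim~4.3 there).

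A smaller error: Bob's test $\sum_j u_j\hat b_j(r)=R(r)$ is a single linear equation in Alice's message $u\in\mathbb{F}_p^{d/T}$. There is therefore no unique accepted message. Bob's block must be the indicator of the whole accepted set, which is fine for correctness but is not one-hot.
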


\begin{proof}[Proof of Theorem \ref{thm:hardness}]
    We start with an instance of $\IPDelta_{n,d=O(\log n)}$, given by a target inner product $m$, and   matrices $A,B,C$ each of dimension $n \times d$, where the rows of $A$ correspond to a set of $n$ vectors, and similarly for $B$ and $C$. 
    For the pair $(A,B)$,  
    we apply Lemma \ref{lem:reduction} to create a set of $q$ instances of $\varepsilon\text{-}\GapIP{}_{n,d'}$, each with target inner product $m'$:
    $$(\tilde{A}_{AB},\tilde{B}_{AB}) = \{(\tilde{A}_{AB}^i, \tilde{B}_{AB}^i,) ~|~ i \in [q]\}.$$ 
    Similarly we apply the Lemma to the pair $(B,C)$ to get $\varepsilon\text{-}\GapIP{}$ instances
    $$(\tilde{B}_{BC},\tilde{C}_{BC}) = \{ (\tilde{B}_{BC}^i,\tilde{C}_{BC}^i) ~|~ i \in [q]\}$$
    and to the pair $(A,C)$ to get instances
    $$(\tilde{A}_{AC},\tilde{C}_{AC}) = \{(\tilde{A}_{AC}^i,\tilde{C}_{AC}^i) ~|~ i \in [q]\}.$$

    By Lemma \ref{lem:reduction}, the following properties are satisfied by $(\tilde{A}_{AB},\tilde{B}_{AB})$:

    \begin{itemize}
        \item[(1)] For all $i \in [q]$, the instance $(\tilde{A}_{AB}^i,\tilde{B}_{AB}^i,m')$  satisfies the gap property. That is, for every $a^i_{AB} \in \tilde{A}^i _{AB}$, $b^i_{AB} \in \tilde{B}^i _{AB}$, $\langle a^i_{AB}, b^i_{AB} \rangle$ is either equal to $m'$ or is at most $(1-\varepsilon)m'$. 
        \item [(2)] Correctness of the reduction:
        \begin{itemize}
            \item [(2b)]
        If there exists $(a,b) \in A \times B$ such that $\langle a,b \rangle =m$, then there exists $i \in [q]$, and vectors $a^i_{AB} \in \tilde{A}_{AB}^i, b^i_{AB} \in \tilde{B}_{AB}^i$
        such that $\langle a^i_{AB},b^i_{AB}\rangle = m'$. 
        \item [(2c)]
        If for every $(a,b) \in A \times B$, $\langle a,b \rangle \neq m$, then for all $i \in [q]$, and for all vectors $a^i_{AB} \in \tilde{A}_{AB}^i, b^i_{AB} \in \tilde{B}_{AB}^i$, we have $\langle a^i_{AB},b^i_{AB} \rangle \leq (1-\varepsilon)m'$.
        \end{itemize}
        \end{itemize}

    By the same argument, the above two properties are also satisfied by 
    $(\tilde{B}_{BC},\tilde{C}_{BC})$ and
    $(\tilde{A}_{AC},\tilde{C}_{AC})$.

    Equipped with the above pairs of 3-dimensional tensors, we are now ready to describe our reduction from the instance $(A,B,C,m)$ of $\IPDelta_{n,d}$ to a set of $q^3$ instances of $\varepsilon\text{-}\GapIPDelta_{n,O(\log n)}$, denoted by:
 
    $$({\cal A},{\cal B}, {\cal C}) = \{ ({\cal A}^{i,j,k}, {\cal B}^{i,j,k},{\cal C}^{i,j,k}) ~|~ i,j,k \in [q]\}$$

    For each $i,j,k \in [q]$, we define ${\cal A}^{i,j,k}$ to consist of the following set of length $3d'$ vectors: 

    $${\cal A}^{i,j,k} = \{ a^i_{AB} ~ 0^d ~ a^k_{AC} ~|~ a^i_{AB} \in \tilde{A}_{AB}^i, ~ a^k_{AC} \in \tilde{A}^k_{AC}\}$$

    Similarly we define ${\cal B}^{i,j,k}$ and
   $ {\cal C}^{i,j,k}$:

    $${\cal B}^{i,j,k} = \{  b^i_{AB} ~ b^j_{BC} ~0^d  ~|~ b^i_{AB} \in \tilde{B}_{AB}^i, ~ b^j_{BC} \in \tilde{B}^j_{BC}\}$$

    $${\cal C}^{i,j,k} = \{ 0^d ~ c^j_{BC} ~ c^k_{AC} ~|~ c^j_{BC} \in \tilde{C}_{BC}^j, ~ c^k_{AC} \in \tilde{C}^k_{AC}\}$$

\noindent{\bf Gap Property}
First, we prove that every instance 
$({\cal A}^{i,j,k},{\cal B}^{i,j,k},{\cal C}^{i,j,k})$ satisfies the gap property.
Consider a generic triple $(a^{i,j,k},b^{i,j,k},c^{i,j,k}) \in {\cal A}^{i,j,k} \times {\cal B}^{i,j,k} \times {\cal C}^{i,j,k}$,
where
$$a^{i,j,k} = a^i_{AB} 0^{d'} a^k_{AC},$$
$$b^{i,j,k} = b^i_{AB} b^{j}_{BC} 0^{d'},$$
$$c^{i,j,k} = 0^{d'} c^j_{BC} c^k_{AC}.$$

Since $\langle a^{i,j,k},b^{i,j,k}\rangle = \langle a^i_{AB}, b^i_{AB}\rangle$, we can apply property (1) to 
$(\tilde{A}_{AB},\tilde{B}_{AB})$ to infer that this inner product is either $m'$ or at most $(1-\varepsilon)m'$.
By a similar argument we can show that  $\langle b^{i,j,k},c^{i,j,k} \rangle$ and $\langle a^{i,j,k},c^{i,j,k} \rangle$ are either $m'$ or at most $(1-\varepsilon)m'$. This completes the proof of the gap property.

\noindent{\bf Proof of Correctness.}
    We first consider the yes case, when there exists $(a,b,c) \in A \times B \times C$ such that $\langle a,b\rangle = \langle b,c \rangle = \langle a,c \rangle =m$. 
    Applying property (2a) above, we have:
    \begin{enumerate}
        \item There exists $i \in [q]$, $a^i_{AB} \in \tilde{A}^i_{AB}$, $b^i_{AB} \in \tilde{B}_{AB}^i$ such that $\langle a^i_{AB},b^i_{AB}\rangle =m'$.

\item There exists $j \in [q]$,  $b^j_{BC} \in \tilde{B}^j_{BC}$, $c^k_{BC} \in \tilde{C}_{BC}^j$ such that $\langle b^j_{BC},c^j_{BC}\rangle =m'$.

\item There exists $k \in [q]$,  $a^k_{AC} \in \tilde{A}^k_{AC}$, $c^k_{AC} \in \tilde{C}_{AC}^k$ such that $\langle a^k_{AC},c^k_{AC}\rangle =m'$.
\end{enumerate}

Now consider the corresponding vectors $a^{i,j,k} \in {\cal A}^{i,j,k}$,
$b^{i,j,k} \in {\cal B}^{i,j,k}$,
and $c^{i,j,k}$ in ${\cal C}^{i,j,k}$
defined as:

$$a^{i,j,k} = a^i_{AB} 0^{d'} a^k_{AC}$$
$$b^{i,j,k} = b^i_{AB} b^j_{BC} 0^{d'}$$
$$c^{i,j,k} = 0^{d'} c^j_{BC} c^k_{AC}$$

By inspection together with the above three properties (1, 2, 3), we have
$$\langle a^{i,j,k},b^{i,j,k}\rangle = \langle b^{i,j,k},c^{i,j,k} \rangle = \langle a^{i,j,k},c^{i,j,k} \rangle =m',$$
thus completing the "yes" case of correctness.

In the no case, suppose for all $(a,b,c) \in A \times B \times C$, either $\langle a,b \rangle \neq m$ or $\langle b,c \rangle \neq m$ or $\langle a,c \rangle \neq m$.
We want to show that for all $i,j,k \in [q]$ and for all $(a^{i,j,k},b^{i,j,k},c^{i,j,k}) \in {\cal A}^{i,j,k} \times {\cal B}^{i,j,k} \times {\cal C}^{i,j,k}$, at least one of the following holds: (i) $\langle a^{i,j,k},b^{i,j,k} \rangle \leq (1-\varepsilon)m'$ or
(ii) $\langle b^{i,j,k},c^{i,j,k} \rangle \leq (1-\varepsilon)m'$ or
(iii) $\langle a^{i,j,k},c^{i,j,k} \rangle \leq (1-\varepsilon)m'$.

Fix $i,j,k \in [q]$ and consider a generic triple $(a^{i,j,k},b^{i,j,k},c^{i,j,k})$ in ${\cal A}^{i,j,k}\times {\cal B}^{i,j,k} \times {\cal C}^{i,j,k}$,
where
$$a^{i,j,k} = a^i_{AB} 0^{d'} a^k_{AB},$$
$$b^{i,j,k} = b^i_{AB} b^j_{BC} 0^{d'},$$
$$c^{i,j,k} = 0^{d} c^j_{BC} c^k_{BC}$$

Consider first the case where $\langle a,b \rangle \neq m$.
Then by applying property (2b) to $(\tilde{A}_{AB},\tilde{B}_{AB})$ we have
$\langle a^i_{AB},b^i_{AB} \rangle \leq (1-\varepsilon)m'$, and therefore 
$\langle a^{i,j,k},b^{i,j,k} \rangle \leq (1-\varepsilon)m'$, so case (i) above holds.

Similarly in the second case where $\langle b,c \rangle \neq m$, applying property (2b) $(\tilde{B}_{BC},\tilde{C}_{BC})$
it follows that $\langle b^{i,j,k},c^{i,j,k} \rangle \leq (1-\varepsilon)m'$, so case (ii) holds. 
For the last case where $\langle a,c \rangle \neq m$ we can similarly use (2b) to show that (iii) holds.  

This completes the proof of correctness of the reduction.

\paragraph{Time complexity.}
Assume we are able to solve $\varepsilon \text{-} \GapIPDelta$ in time  $n^{\omega-\delta}$ for a constant $\delta > 0$.
Then we can solve all $q^3$ instances $({\cal A}^{i,j,k},{\cal B}^{i,j,k},{\cal C}^{i,j,k})$ of $\varepsilon\text{-}\GapIPDelta$ in time $q^3 n^{\omega - \delta}$.
Since $q=n^{o(1)}$, $q^3 < n^{\delta/2}$ for $n$ sufficiently large, and thus the  runtime of the (Turing) reduction from $\IPDelta$ to $\GapIPDelta$ is at most $n^{\omega - \delta/2}$. This completes the proof of Theorem \ref{thm:hardness}.
\end{proof}

\subsubsection{Hardness of approximating Strassen-attention}
\label{sec:strassen-reduction}

In this subsection, we prove the following theorem which is the last step of our reduction for proving the lower bound. The following theorem gives an efficient reduction from $\varepsilon\text{-}\GapIPDelta$ to Strassen-attention when the weights are large. We again use the fact that Strassen-attention is poly-attention for the polynomial $h_S(x_1, x_2, x_3) = x_1x_2+x_2x_3+x_3x_1$.

\begin{theorem}[Hardness of Strassen-attention]
\label{thm:strassen-lb}
    For every constant $\varepsilon>0$, every $\delta \in (0, 0.01)$, every $c, M > 0$, there exist constants $C_a>0$ and $C_b > 0$ such that if $\EAPAC{h_S}(2n, 2c\log n, \Gamma = C_b\sqrt{\log n}, \gamma = n^{-C_a})$ (Definition \ref{def:eapac}) with query-key matrices $Q^{(1)}, \dots, Q^{(t)}\in [-\Gamma, \Gamma]^{2n\times  2c\log n}$, value matrices $V^{(2)}, \dots, V^{(t)} \in \mathbb{R}^{2 n\times 2c\log n}$ can be solved in time $O(n^{\omega-\delta})$, then $\varepsilon\text{-}\GapIPDelta_{n, c\log n}$ (Definition \ref{def:gap-ipt}) with target inner product $m = M\log n$ can also be solved in $O(n^{\omega-\delta})$ time.
\end{theorem}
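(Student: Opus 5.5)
We follow the reduction of \cite{alman2023fast,alman2023capture} from gap inner product problems to attention approximation, adapted to the triangle structure of $h_S$. Given an instance $(A^1,A^2,A^3,m)$ of $\varepsilon\text{-}\GapIPDelta_{n,c\log n}$ with $m=M\log n$, we set $Q^{(1)}_i=\lambda a^1_i$, $Q^{(2)}_j=\lambda a^2_j$, $Q^{(3)}_k=\lambda a^3_k$ for a scaling $\lambda$ to be chosen. Then
\[
\tfrac{1}{d}h_S(Q^{(1)}_i,Q^{(2)}_j,Q^{(3)}_k)=\tfrac{\lambda^2}{d}\bigl(\langle a^1_i,a^2_j\rangle+\langle a^2_j,a^3_k\rangle+\langle a^3_k,a^1_i\rangle\bigr).
\]
By the gap promise, a triple is either a solution (the sum equals $3m$) or the sum is at most $(3-\varepsilon)m$. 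We choose $\lambda^2=C\,d/\varepsilon$ for a large constant $C$, so that $\lambda=\Theta(\sqrt{\log n})$ and entries lie in $[-\Gamma,\Gamma]$ with $\Gamma=C_b\sqrt{\log n}$. With this choice, a solution triple contributes weight $e^{3Cm/\varepsilon}$, while each non-solution contributes at most $e^{(3-\varepsilon)Cm/\varepsilon}=e^{3Cm/\varepsilon}\cdot n^{-CM}$. Summed over all $n^2$ non-solution pairs $(j,k)$, the non-solutions therefore carry at most an $n^{2-CM}$ fraction of the solution weight.

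To read off existence from an entry-wise approximation, we need an observable quantity rather than a normalized ratio. We use the padding that explains the sizes $2n$ and dimension $2c\log n$. We append $n$ dummy rows to $Q^{(2)}$ and $Q^{(3)}$, and extra coordinates, so that the dummy pairs $(j,k)$ contribute a known baseline weight $e^{(3-\varepsilon/2)Cm/\varepsilon}$ for every $i$. This sits strictly between the two regimes and is independent of $i$. We set the value matrices to $V^{(2)}=V^{(3)}=\mathbf{1}$ on real rows and $0$ on dummy rows, taking care that the Hadamard product $V^{(2)}_j\odot V^{(3)}_k$ vanishes whenever either index is a dummy. Then $\AttP{h_S}_{i,1}$ is the fraction of softmax mass on real pairs. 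If row $i$ participates in a solution triple, this fraction is at least $1-n^{-\Omega(1)}$. Otherwise it is at most roughly $n^2 e^{-\varepsilon\cdot\frac{C}{2\varepsilon}m}\le n^{2-CM/2}$, which is $n^{-\Omega(1)}$ for large $C$. Taking $\gamma=n^{-C_a}$ smaller than this gap lets one approximate call decide the instance: answer yes iff some $i$ has $\widehat{\AttP{h_S}}_{i,1}>1/2$.

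\textbf{Main obstacle.} The hardest step is building the dummy padding under $h_S$. For the baseline to be $i$-independent, every dummy triple $(i,j,k)$ must have the same value of all three pairwise inner products, and the dummy rows must not create spurious large-weight mixed triples (dummy $j$ with real $k$). We would first try two block coordinates. In the first block, real rows of $Q^{(1)}$ get zeros, and dummy rows of $Q^{(2)},Q^{(3)}$ get vectors with a prescribed mutual inner product $(3-\varepsilon/2)m$. In the second block, we encode a dummy/real indicator with a negative entry $-\Gamma$, so that mixed triples are pushed to negligible weight, while $\langle Q^{(1)}_i,\cdot\rangle$ on dummies is $0$. The required mutual inner product is $\Theta(\log n)$ with entries $O(\sqrt{\log n})$, so it fits in $c\log n$ extra coordinates. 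The remaining work is to check that all exponents stay within $\pm\Gamma^2$ and that the error bounds hold, and this is routine. The total running time is $O(n\log n)$ for the construction plus one oracle call, which gives $O(n^{\omega-\delta})$.
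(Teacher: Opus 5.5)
Your reduction is correct, but it uses the padding differently from the paper.

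\textbf{The paper's route.} It puts $B\mathbf{1}_d$ in a second coordinate block of every row of $Q^{(1)}$ and of the dummy rows of $Q^{(2)},Q^{(3)}$. An all-dummy triple then has normalized exponent $3B^2/2$, which exceeds even a solution triple's $3MB^2/(2c)$ (since $M<c$). The denominator is pinned to $[n^2e^{3B^2/2},2n^2e^{3B^2/2}]$, and the output at row $i$ is polynomially small in both cases. The yes and no values differ only by a factor of about $e^{\varepsilon MB^2/(2c)}/(2n^2)$, so $\gamma=n^{-C_a}$, with $C_a$ depending on $C_b$, is essential there.

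\textbf{Your route.} You calibrate the dummy-pair weight to lie strictly between the solution level $e^{3\kappa}$ and the non-solution level $e^{(3-\varepsilon)\kappa}$, where $\kappa=\lambda^2m/(2d)$ in your scaling. The output at row $i$ is then essentially a $0/1$ indicator of membership in a solution, so any constant $\gamma<1/4$ already suffices. This is a slightly stronger conclusion. The price is that the baseline must be tuned to $\varepsilon$ and $m$, and $Q^{(1)}$ must not see the padding block, so that the baseline is independent of $i$.

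\textbf{The negative-indicator block is unnecessary.} Take dummy rows to vanish on the original $d$ coordinates, and take real rows and all of $Q^{(1)}$ to vanish on the padding block. A mixed triple then keeps only one of its three pairwise products, so its exponent is at most $\kappa$. All $2n^2$ mixed triples together carry at most a $2n^2e^{-2\kappa}=n^{-\Omega(C)}$ fraction of a single solution's weight. Separately, a single $-\Gamma$ coordinate would move the normalized exponent by only $O(1)$, so it could not provide polynomial suppression on its own.

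\textbf{Dimension.} Dropping that block also keeps you within the prescribed dimension $2c\log n$. Realize the dummy inner product $(3-\varepsilon/2)\lambda^2 m$ on $m\le d$ coordinates with entries $\lambda\sqrt{3-\varepsilon/2}$, and take $\Gamma=\sqrt{3}\,\lambda$.
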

\begin{proof}
We start with an instance of $\varepsilon\text{-}\GapIPDelta_{n,d=c \log n}$, defined by sets $A,B,C \subseteq \{0,1\}^d$, and target inner product $m = M\log n$ for a constant $M$, satisfying the promise given by the definition of $\varepsilon\text{-}\GapIPDelta$ (e.g., for every pair of vectors from different sets, their inner product is either equal to $m$ or at most $(1-\varepsilon)m$). 
From this instance we now want to create an instance of Strassen attention, given by matrices $Q^{(1)},Q^{(2)}. Q^{(3)},V^{(1)},V^{(2)}$.

Now, for a positive real number $B=\omega(1)$ that we will fix later, similar to \cite{alman2023fast,alman2023capture}, we construct the matrices $Q^{(1)}, Q^{(2)}, Q^{(3)} \in \mathbb{R}^{\tilde n\times \tilde d}$ for $\tilde n=2n, \tilde d = 2d$ as: 
\begin{equation*}
    Q^{(1)} = B\begin{bmatrix}
        a_1 & 1_{ d}\\
        \vdots & \vdots \\
        a_n & 1_{ d}\\
        0_{ d} & 1_{ d}\\
        \vdots & \vdots\\
        0_{ d} & 1_{ d}
    \end{bmatrix}_{2n\times 2d},\quad Q^{(2)} =  B\begin{bmatrix}
        b_1 & 0_{ d}\\
        \vdots & \vdots \\
        b_n & 0_{ d}\\
        0_{ d} & 1_{ d}\\
        \vdots & \vdots\\
        0_{ d} & 1_{ d}
    \end{bmatrix}_{2n\times 2d},\quad
    Q^{(3)} = B\begin{bmatrix}
        c_1 & 0_{ d}\\
        \vdots & \vdots \\
        c_n & 0_{ d}\\
        0_{ d} & 1_{ d}\\
        \vdots & \vdots\\
        0_{ d} & 1_{ d}
    \end{bmatrix}_{2n\times 2d}.
\end{equation*}

We also define $V^{(1)}, V^{(2)} \in \mathbb{R}^{\tilde n\times \tilde d}$ whose first columns are 
\begin{equation*}
    V_{(1:2 n, 1)}^{(1)} = \begin{bmatrix}
        1_n^T\\
        0_n^T
    \end{bmatrix},\quad
    V_{(1:2n, 1)}^{(1)} = \begin{bmatrix}
        1_n^T\\
        0_n^T
    \end{bmatrix},
\end{equation*}
and the remaining entries are zeros.

\paragraph{Correctness of the construction.} 
We have defined the matrices $Q^{(1)}, Q^{(2)}, Q^{(3)}$ underlying Strassen-attention so that, for any $i, j, k\in [n]$ we will have  $\langle Q^{(1)}_i, Q^{(2)}_j\rangle + \langle Q^{(2)}_j, Q^{(3)}_k\rangle +\langle Q^{(3)}_k, Q^{(1)}_i\rangle = B^2(\langle a_i, b_j\rangle + \langle b_j, c_k\rangle + \langle c_k, a_i\rangle)$, and the bottom half of the matrices, $Q^{(1)}_{(n+1:2n)}, Q^{(2)}_{(n+1:2n)}, Q^{(3)}_{(n+1:2n)}$, will act as a normalizing terms when we compute the softmax.

As before, computing the output of the Strassen-attention works in two steps: for all $i\in [\tilde n]$, we first calculate the value of the denominator $[XYZ]_{i, i}$, where $X = [\frac{1}{\tilde d}Q^{(1)}(Q^{(2)})^T]^e$, $Y = [\frac{1}{\tilde d}Q^{(2)}(Q^{(3)})^T]^e$ and $Z = [\frac{1}{\tilde d}Q^{(3)}(Q^{(1)})^T]^e$. The normalizing term will allow us to give similar upper and lower bounds on this. Next, we will compute the numerator, $[XD^{1, \ell} Y D^{2, \ell} Z]_{i, i}$, for all $\ell \in [\tilde d]$, where $D^{1, \ell} = diag(V^{(1)}_{1:2n, \ell})$ and $D^{2, \ell} = diag(V^{(2)}_{1:2n, \ell})$. Our approach is to show that if there exists some $i\in [n]$ such that for some $j, k \in [n]$, we have $\langle a_i, b_j\rangle = M\log n$, $\langle b_j, c_k\rangle = M\log n$ and $\langle c_k, a_i \rangle = M\log n$, then we will be able to find such an $i$ using the entry-wise approximation of one Strassen-attention head. Thus, further improvements to the entry-wise approximation algorithm would imply an algorithm for solving $\varepsilon\text{-}\GapIPDelta$ in time $n^{\omega-\Omega(1)}$ time.

\paragraph{Bounds on the denominator.} We analyze the denominator term and give upper and lower bounds on $[XYZ]_{i, i}$.  For computing this value, we find the value of $\sum_{j, k \in [\tilde n]}\exp{(\frac{1}{\tilde d}(\langle Q^{(1)}_i, Q^{(2)}_j\rangle + \langle Q^{(2)}_j, Q^{(3)}_k\rangle + \langle Q^{(3)}_k, Q^{(1)}_i\rangle))}$. We only care about the first $n$ rows of the attention matrix as this is where the existence of an $\IPDelta$ will be noticed. For $i\in [n]$, this is equivalent to computing 
\begin{equation}
\label{eqn:exp}
\begin{split}
    [XYZ]_{i, i} & =  \sum_{j, k \in [n]} e^{(\langle a_i, b_j\rangle + \langle b_j, c_k\rangle + \langle c_k, a_i\rangle)B^2/\tilde d} + \sum_{j \in  [n+1:2n], k\in [n]} e^{(d + 0 + \langle c_k, a_i\rangle)B^2/\tilde d}  \\
     &  + \sum_{j\in [n],k\in [n+1: 2n]} e^{(\langle a_i, b_j\rangle + 0 + d)B^2/\tilde d}      + \sum_{j,  k\in [n+1: 2n]}e^{(d + d + d)B^2/\tilde d}.
\end{split}
\end{equation}

Using the gap property that the inner products of any pairs of $a_i, b_j, c_k$ are either less than $(1-\varepsilon)M\log n$ or exactly equal to $M\log n$, and denoting $\lambda := \frac{M\log n}{\tilde d} $ where $\tilde d = 2c\log n$, from the previous equation, we get
\begin{eqnarray*}
        [XYZ]_{i, i} &\geq & \sum_{j, k \in [n]} e^{3(1-\varepsilon)\lambda B^2}   + \sum_{j \in [n+1: 2n], k\in [n]} e^{(1+(1-\varepsilon)\lambda)B^2} \\   & & +\sum_{j\in [n], k \in [n+1: 2n]} e^{((1-\varepsilon)\lambda +1)B^2 }   +\sum_{j, k\in [n+1: 2n]}e^{3B^2/2} \\
        & \geq & n^2e^{3(1-\varepsilon)\lambda B^2} + 2n^2e^{(1+(1-\varepsilon)\lambda)B^2} + n^2e^{3B^2/2} \geq n^2e^{3B^2/2}
\end{eqnarray*}

We also have $\lambda <1/2$ since $M < c$. Now, an upper bound of $[XYZ]_{i, i}$ can also be computed using $\langle a_i, b_j\rangle, \langle b_j, c_k\rangle, \langle c_k, a_i \rangle \leq M\log n$ and Equation \ref{eqn:exp} as,
\begin{equation*}
    \begin{split}
          [XYZ]_{i, i} 
         & \leq  \sum_{j, k \in [n]} e^{3\lambda B^2}   + \sum_{j \in [n+1: 2n], k\in [n]} e^{(1+\lambda)B^2}    +\sum_{j\in [n], k \in [n+1: 2n]}  e^{(1+\lambda)B^2}  +\sum_{j , k \in [n+1: 2n]}e^{3B^2/2},\\
         &\leq n^2e^{3\lambda B^2} + 2n^2e^{(1+\lambda)B^2} + n^2e^{3B^2/2}
          \leq 2n^2e^{3B^2/2}, 
    \end{split}
\end{equation*}
for large enough $B$ when $\lambda$ is constant.

\paragraph{Bounds on the numerator.} We analyze bounds on $[XD^{1, 1}YD^{2, 1}Z]_{i, i}$ when a positive certificate of $\IPDelta$ contains $a_i$ versus when it does not.

\paragraph{Case 1: $\IPDelta$ present at $i$.} In this case, we have 
\begin{equation*}
    \begin{split}
         [XD^{1, 1}YD^{2, 1}Z]_{i, i}  &= \sum_{j, k\in [\tilde n]}e^{(\langle Q^{(1)}_i, Q^{(2)}_j\rangle + \langle Q^{(2)}_j, Q^{(3)}_k\rangle + \langle Q^{(3)}_k, Q^{(1)}_i\rangle)/\tilde d}V^{(1)}_{j, 1}V^{(1)}_{k, 2}\\
        & =  \sum_{j, k\in [n]}e^{(\langle Q^{(1)}_i, Q^{(2)}_j\rangle + \langle Q^{(2)}_j, Q^{(3)}_k\rangle + \langle Q^{(3)}_k, Q^{(1)}_i\rangle)/\tilde d} \quad \hbox{(Using values of $V^{(1)}, V^{(2)}$)},\\
        &=  \sum_{j, k\in [n]}e^{(\langle a_i, b_j\rangle + \langle b_j, c_k\rangle + \langle c_k, a_i\rangle)B^2/\tilde d} 
         >  e^{3\lambda B^2} + (n^2-1)e^{3(1-\varepsilon)\lambda B^2}
        >  e^{3\lambda B^2},
    \end{split}
\end{equation*}
since we have some $j, k\in [n]$ such that $\langle a_i, b_j\rangle = \langle b_j, c_k\rangle = \langle c_k, a_i\rangle = m$.

Therefore, 
\begin{equation}
\label{eqn:lb}
    \begin{split}
        &\frac{[XD^{1, 1}YD^{2, 1}Z]_{i, i}}{[XYZ]_{i, i}} > \frac{e^{3\lambda B^2}}{2n^2e^{3 B^2/2}}. 
    \end{split}
\end{equation}

\paragraph{Case 2: $\IPDelta$ not present in $i$.} Here, we have all $\langle a_i, b_j\rangle + \langle b_j, c_k\rangle + \langle c_k, a_i\rangle \leq (2M+(1-\varepsilon)M)\log n$ for all $j, k$ since otherwise it will contain a $\IPDelta$. Therefore, 
\begin{equation*}
    \begin{split}
        & [XD^{1, 1}YD^{2, 1}Z]_{i, i} = \sum_{j, k\in [\tilde n]}e^{(\langle Q^{(1)}_i, Q^{(2)}_j\rangle + \langle Q^{(2)}_j, Q^{(3)}_k\rangle + \langle Q^{(3)}_k, Q^{(1)}_i\rangle)/\tilde d}V^{(1)}_{j, 1}V^{(2)}_{k, 2} \\
        &= \sum_{j, k\in [n]}e^{(\langle Q^{(1)}_i, Q^{(2)}_j\rangle + \langle Q^{(2)}_j, Q^{(3)}_k\rangle + \langle Q^{(3)}_k, Q^{(1)}_i\rangle)/\tilde d} ,\\
        &=\sum_{j, k\in [n]}e^{(\langle a_i, b_j\rangle + \langle b_j, c_k\rangle + \langle c_k, a_i\rangle) B^2/\tilde d} \leq \sum_{j, k\in [n]}e^{(3-\varepsilon)\lambda B^2}\leq n^2e^{(3-\varepsilon)\lambda B^2}.
    \end{split}
\end{equation*}
which implies
\begin{equation}
\label{eqn:ub}
    \begin{split}
        &\frac{[XD^{1, 1}YD^{2, 1}Z]_{i, i}}{[XYZ]_{i, i}} < \frac{e^{(3-\varepsilon)\lambda B^2}}{e^{3 B^2/2}}.
    \end{split}
\end{equation}

\paragraph{Wrapping up.} Let $u_i$ be the value of the approximation of the $i$-th entry of the first row of the Strassen-attention matrix, i.e., $$\bigg|u_i - \frac{[XD^{1, 1}YD^{2, 1}Z]_{i, i}}{[XYZ]_{i, i}}\bigg|\leq \gamma.$$

 We will show that $u_i$ is a distinguisher between the yes and no instances of $\IPDelta$; in particular for appropriate settings of the parameters we will see that the value of $u_i$ in Case 1 (the yes case) is always greater than the value of $u_i$ in Case 2 (the no case).

 In Case 1, using Equations \ref{eqn:lb}, we have
 \begin{equation*}
     \begin{split}
          u_i > \frac{[XD^{1, 1}YD^{2, 1}Z]_{i, i}}{[XYZ]_{i, i}} - \gamma > \frac{e^{3\lambda B^2}}{2n^2e^{3 B^2/2}} - \gamma,
     \end{split}
 \end{equation*}
 and in Case 2, using Equation \ref{eqn:ub}, we have
 \begin{equation*}
     \begin{split}
         u_i < \frac{[XD^{1, 1}YD^{2, 1}Z]_{i, i}}{[XYZ]_{i, i}} + \gamma <  \frac{e^{(3-\varepsilon)\lambda B^2}}{e^{3 B^2/2}} - \gamma.
     \end{split}
 \end{equation*}

Thus it suffices to verify the following inequality: 
\begin{equation*}
    \begin{split}
        &\frac{e^{(3-\varepsilon)\lambda B^2}}{e^{3 B^2/2}} + \gamma < \frac{e^{3\lambda  B^2}}{2n^2e^{3 B^2/2}} - \gamma,
    \end{split}
\end{equation*}
which is indeed satisfied for $\gamma < \frac{1}{n^{2+\Omega(1)}}$ and $e^{\varepsilon\lambda B^2} >n^2 $. Therefore, $ B^2 = \Omega(\log n)$ suffices. 

Therefore, we have reduced $\GapIPDelta$ to  $\EAPAC{h_S}$ where $\Gamma  =B= \Omega(\sqrt{\log n})$, completing the proof of the lemma.
\end{proof}

Therefore, if $\EAPAC{h_S}$ could be solved in $O(n^{\omega-\delta})$ time, then that would imply that $\IPDelta$ could be solved in $O(n^{\omega-\Omega(\delta)})$ time (Theorem \ref{thm:hardness}), which in turn would imply $\MAXSAT{2}$ could be solved in $2^{(\nicefrac{\omega}{3}-\Omega(\delta))n}$ time (Lemma \ref{lem:ipt}), which can not be true for an absolute constant $\delta > 0$ (Hypothesis \ref{hyp:max-2sat}).

%% file: apx_tree-att.tex
\section{Proofs of Section \ref{sec:tree}: tree-attention}
\label{apx:tree-att}

In this section, we prove the first part of Theorem \ref{thm:tree-att} by giving an algorithm to exactly compute the output of tree-attention. The second and third parts are computational complexities of special subcases of poly-attention, which has been proved in Section \ref{apx:poly-att}.

Before giving an algorithm for the exact computation complexity of tree-attention, we show a property of branchings in the graphical representation. This happens when the underlying polynomial for the poly-attention is a variable separable polynomial.

\begin{lemma}[Variable separability]
    \label{lem:var-sep}
    If $h(x_1, \dots, x_t) = f(x_1, \dots, x_i)+g(x_1, x_{i+1}, \dots, x_t)$ for some $i\in [t-1]$ and some polynomials $f, g$ of minimum possible degrees, i.e., $h$ is variable separable (Definition \ref{def:var-sep}), then we have $\AttP{h} = \AttP{f}\odot\AttP{g}$ and also the entrywise-approximation $\widehat{\AttP{h}} = \widehat{\AttP{f}}\odot \widehat{\AttP{g}}$. If the (entrywise-approximations of)  outputs of poly-attention, $\AttP{f}$ and $\AttP{g}$, can be computed in time $T^f(n)$ and $T^g(n)$ respectively, then computing the (entrywise-approximation of) output of poly-attention for $h$, $\AttP{h}$, can be performed in time $O(\max\{T^f(n), T^g(n)\} + nd)$.
    \end{lemma}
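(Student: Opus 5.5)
The plan is to exploit that, for a fixed query index $\ell_1$, the summand in Equation~\eqref{eqn:poly-att} factors over the two disjoint groups of key indices. Write $\ell_1$ for the row. Since $h = f + g$ with $f$ depending only on $x_1,\dots,x_i$ and $g$ only on $x_1, x_{i+1},\dots,x_t$, we have
\[
\exp\!\Big(\tfrac1d h(Q^{(1)}_{\ell_1},\dots,Q^{(t)}_{\ell_t})\Big)=\exp\!\Big(\tfrac1d f(Q^{(1)}_{\ell_1},\dots,Q^{(i)}_{\ell_i})\Big)\cdot\exp\!\Big(\tfrac1d g(Q^{(1)}_{\ell_1},Q^{(i+1)}_{\ell_{i+1}},\dots,Q^{(t)}_{\ell_t})\Big),
\]
and the value product also splits as $\big(V^{(2)}_{\ell_2}\odot\cdots\odot V^{(i)}_{\ell_i}\big)\odot\big(V^{(i+1)}_{\ell_{i+1}}\odot\cdots\odot V^{(t)}_{\ell_t}\big)$. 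The sum over $(\ell_2,\dots,\ell_t)\in[n]^{t-1}$ is a product of a sum over $(\ell_2,\dots,\ell_i)$ and a sum over $(\ell_{i+1},\dots,\ell_t)$, coordinatewise in the value dimension. This holds for both the numerator and the denominator. So the ratio factors entrywise as the ratio for $f$ times the ratio for $g$, giving $\AttP{h}_{\ell_1}=\AttP{f}_{\ell_1}\odot\AttP{g}_{\ell_1}$.

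Here $\AttP{g}$ is interpreted with $x_1$ as query and the variables renamed, so $g$ is an attention polynomial on $t-i+1$ variables with query $Q^{(1)}$, keys $Q^{(i+1)},\dots,Q^{(t)}$ and values $V^{(i+1)},\dots,V^{(t)}$. The minimality (``minimum possible degrees'') assumption is only used so that $f$ and $g$ are genuine nonzero attention polynomials. If one branch has a variable appearing in no monomial, that index simply contributes a factor $n$ to both numerator and denominator and cancels. I would note this edge case in passing.

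For the algorithmic claim, compute $\AttP{f}$ and $\AttP{g}$ (or their approximations) in times $T^f(n)$ and $T^g(n)$, then take their Hadamard product in $O(nd)$ time. The total is $O(T^f(n)+T^g(n)+nd)=O(\max\{T^f,T^g\}+nd)$.

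The only step needing care is the approximate version. If $|\widehat{\AttP{f}}-\AttP{f}|\le\gamma$ and likewise for $g$ entrywise, then
\[
|\widehat{\AttP{f}}\widehat{\AttP{g}}-\AttP{f}\AttP{g}|\le \gamma\big(|\AttP{f}|+|\AttP{g}|\big)+\gamma^2 .
\]
Each attention entry is a convex combination of value products, so it is bounded by $\prod\|V^{(j)}\|_\infty=\poly(n)$. I therefore plan to run the subroutines with accuracy $\gamma/\poly(n)$ for a suitably large polynomial, which is still $1/\poly(n)$ and does not change the running-time bounds used later. That yields a $\gamma$-approximation of $\AttP{h}$. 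I expect this error bookkeeping, rather than the algebraic factorization, to be the main point requiring care.
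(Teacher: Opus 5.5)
Your proposal is correct and is essentially the paper's proof: the paper likewise writes $\AttP{f}_{j,k}\cdot\AttP{g}_{j,k}$ as a product of two ratios, merges each product of sums into a single sum over $(\ell_2,\dots,\ell_t)$ using $h=f+g$ on the two index groups, recognizes $\AttP{h}_{j,k}$, and takes the Hadamard product in $O(nd)$ time. Your error bound $\gamma(|\AttP{f}|+|\AttP{g}|)+\gamma^2$, handled by running the subroutines at accuracy $\gamma$ divided by a suitable polynomial in $n$, is more careful than the paper's bare claim of $O(\gamma)$ error, which tacitly assumes bounded values. The one slip is your aside on an unused variable $x_j$: it contributes $\sum_{\ell_j}V^{(j)}_{\ell_j}$ to the numerator, not $n$, but your factorization never relies on that remark.
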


\begin{proof}
    For all $j\in [n], k\in [d]$, we have,
    \begin{equation}
    \label{eqn:var-sep}
        \begin{split}
            \AttP{f}_{j, k}&\cdot \AttP{g}_{j, k}  \\
             = &\frac{\sum_{\ell_2, \dots, \ell_i}\exp(\frac{1}{d}f(Q^{(1)}_j, Q^{(2)}_{\ell_2}, \dots, Q^{(i)}_{\ell_i}))V^{(2)}_{\ell_2, k}\dots V^{(i)}_{\ell_i, k} }{\sum_{\ell_2, \dots, \ell_i}\exp(\frac{1}{d}f(Q^{(1)}_{j}, Q^{(2)}_{\ell_2}, \dots, Q^{(i)}_{\ell_i}))}\\
             & \quad \quad\quad \quad\times \frac{\sum_{\ell_{i+1}, \dots, \ell_{t}}\exp(\frac{1}{d}g(Q^{(1)}_{j}, Q^{(i+1)}_{\ell_{i+1}}, \dots, Q^{(t)}_{\ell_t}))V^{(i+1)}_{\ell_{i+1}, k}\dots V^{(t)}_{\ell_t, k} }{\sum_{\ell_{i+1}, \dots, \ell_t}\exp(\frac{1}{d}g(Q^{(1)}_{j}, Q^{(i+1)}_{\ell_{i+1}}, \dots, Q^{(t)}_{\ell_t}))}\\
            =& \frac{\sum_{\ell_2, \dots, \ell_t}\exp\left(\frac{1}{d}(f(Q^{(1)}_{j}, Q^{(2)}_{\ell_2}, \dots, Q^{(i)}_{\ell_i})+g(Q^{(1)}_{j}, Q^{(i+1)}_{\ell_{i+1}}, \dots, Q^{(t)}_{\ell_t}))\right)V^{(2)}_{\ell_2, k}\dots V^{(t)}_{\ell_t, k} }{\sum_{\ell_2, \dots, \ell_t}\exp\left(\frac{1}{d}(f(Q^{(1)}_{j}, Q^{(2)}_{\ell_2}, \dots, Q^{(i)}_{\ell_i})+g(Q^{(1)}_{j}, Q^{(i+1)}_{\ell_{i+1}}, \dots, Q^{(t)}_{\ell_t}))\right)}\\
            =& \frac{\sum_{\ell_2, \dots, \ell_t}\exp(\frac{1}{d}h(Q^{(1)}_{j}, Q^{(2)}_{\ell_2}, \dots, Q^{(t)}_{\ell_t}))V^{(2)}_{\ell_2, k}\dots V^{(t)}_{\ell_i, k} }{\sum_{\ell_2, \dots, \ell_t}\exp(\frac{1}{d}h(Q^{(1)}_{j}, Q^{(2)}_{\ell_2}, \dots, Q^{(t)}_{\ell_t}))} = \AttP{h}_{j, k}.
        \end{split}
    \end{equation}
    This implies $\AttP{f}\odot\AttP{g} = \AttP{h}$, and if we obtain entrywise approximations $\widehat{\AttP{f}}$ and $\widehat{\AttP{g}}$ respectively with error $\gamma = \frac{1}{\poly{(n)}}$, then $\widehat{\AttP{f}}\odot \widehat{\AttP{g}}$ will be an entrywise approximation of $\AttP{h}$ with error $\gamma_0 = O(\gamma) =  \frac{1}{\poly(n)}$ as well.

    Note that the polynomials might not even contain the variable $x_1$, in which case we all the rows of the output of the corresponding poly-attention matrix will be the same.
\end{proof}

Now, we prove that $\AttP{h}$, where $h$ is a tree polynomial, can be computed in $O(n^2)$ time.

    \begin{algorithm}
    \caption{Algorithm to compute tree attention $\AttP{h}$}\label{alg:attp-quadratic}
    \begin{algorithmic}[1]
        \Require A polynomial $h(x_1, \dots, x_t)$ whose graphical representation is a tree, query-key matrices $Q^{(1)}, \dots, Q^{(t)}\in \mathbb{R}^{n\times d}$ and value matrices $V^{(2)}, \dots, V^{(t)}\in \mathbb{R}^{n\times d}$.
        \Ensure $\AttP{h}\in \mathbb{R}^{n\times d}$
        \State Construct $G$ as the graphical representation of $h$, with vertices $v_1, \dots, v_t$.
        \For{$\ell \in [d]$}
            \State Let $p$ be the number of children of $v_1$.
            \For{ all child node $v_{j_i}$ of $v_1$, $i\in [p]$}
            \If{$v_{j_i}$ is not a leaf}
                \State Let $g_i(x_{j_i}, \bar{x^i})$ be the polynomial of the subtree rooted at $v_{j_i}$.
                \parState{Compute $\AttP{g_i(x_{j_i}, \bar{x^i})}_{(1:n, \ell)}$ recursively, where $v_{j_i}$ is the query variable, by computing the numerator term and the denominator term separately. Let the numerator term be $P^{(g_i(x_{j_i}, \bar{x^i}))} \in \mathbb{R}^{n\times 1}$ and the denominator term be $R^{(g_i(x_{j_i}, \bar{x^i}))}\in \mathbb{R}^{n\times 1}$.}
                \parState{Define the numerator
                \begin{equation*}
                         P^{(x_1x_{j_i} + g_i(x_{j_i}, \bar{x^i}))} := [Q^{(1)}(Q^{(j_i)})^T]^eD^{V^{(j_i)}}P^{(g_i(x_{j_i}, \bar{x^i}))},
                \end{equation*}
                and the denominator,
                \begin{equation*}
                         R^{(x_1x_{j_i} + g_i(x_{j_i}, \bar{x^i}))} := [Q^{(1)}(Q^{(j_i)})^T]^eR^{(g_i(x_{j_i}, \bar{x^i}))},
                \end{equation*}
                where $D^{V^{(j_i)}} = diag(V^{(j_i)}_{(1:n, \ell)})\in \mathbb{R}^{n\times n}$.}
                \State Compute $$\AttP{x_1x_{j_i} + g_i(x_{j_i},  \bar{x^i})}_{(1:n, \ell)} := \frac{P^{(x_1x_{j_i} + g_i(x_{j_i}, \bar{x^i}))}}{R^{(x_1x_{j_i} + g_i(x_{j_i}, \bar{x^i}))}}.$$
            \Else
                \State Here, $g_i(x_{j_i}, \bar{x^i}) = 0$ since there is no tree rooted at $v_{j_i}$.
                \parState{Define the numerator
                \begin{equation*}
                        P^{(x_1x_{j_i})} := [Q^{(1)}(Q^{(j_i)})^T]^eV^{(j_i)}_{(1:n, \ell)},
                \end{equation*}
                and the denominator,
                \begin{equation*}
                        R^{(x_1x_{j_i})} := [Q^{(1)}(Q^{(j_i)})^T]^e\mathbf{1}_{n\times 1}.
                \end{equation*}}
                \State Compute \begin{equation*}
                    \AttP{x_1x_{j_i}}_{(1:n, \ell)} := \frac{P^{(x_1x_{j_i})}}{R^{(x_1x_{j_i})}}.
                \end{equation*}
            \EndIf
            \EndFor
            \parState{For composing the branches together, compute the final numerator
            \begin{equation*}
                    P^{(h)} := P^{(x_jx_{j_1} + g_1(x_{j_1}, \bar{x^1}))}\odot \ldots \odot P^{(x_jx_{j_p} + g_p(x_{j_p}, \bar{x^p}))},
            \end{equation*}
                and the final denominator,
                \begin{equation*}
                    R^{(h)} := R^{(x_jx_{j_1} + g_1(x_{j_1}, \bar{x^1}))}\odot \ldots \odot R^{(x_jx_{j_p} + g_p(x_{j_p}, \bar{x^p}))},
            \end{equation*}
            where $h = x_jx_{j_1} + g_1(x_{j_1}, \bar{x^1}) + \ldots + x_jx_{j_p} + g_p(x_{j_p}, \bar{x^p})$ (by definition).}
            \State Define $$\AttP{h}_{(1:n, \ell)} := \frac{P^{(h)}}{R^{(h)}}.$$
        \EndFor
        \State \textbf{return} $\AttP{h}$.
    \end{algorithmic}
    \end{algorithm}

\begin{theorem}
    \label{thm:tree-alg}
        If $h$ is a tree polynomial (graphical representation of $h$ is a tree or a forest), then we can compute $\AttP{h}$ exactly in $\Tilde{O}(n^2)$ time.
\end{theorem}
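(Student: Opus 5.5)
We will prove the theorem by showing that Algorithm~\ref{alg:attp-quadratic} is correct and runs in $\tilde O(n^2)$ time. The approach is a dynamic program over the tree: the exponent splits into a sum over edges, so the sum over all $n^{t-1}$ index tuples factorizes along the tree.

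\textbf{Reduction to a single rooted tree.} First reduce to the case where the graphical representation $G$ of $h$ is a tree containing $v_1$. If $G$ is a forest, then $h$ is a sum of polynomials on disjoint variable sets. Apply Lemma~\ref{lem:var-sep}, whose proof only uses that the two summands share at most the variable $x_1$. This writes $\AttP{h}$ as a Hadamard product of the attentions of the components.

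A component not containing $x_1$ gives a constant row vector. For such a component, pick any vertex as an artificial root, compute the numerator and denominator vectors as below, and sum them over that root's index. Isolated vertices of $G$ (variables absent from $h$) contribute the row average of $V^{(j)}$.

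Now root the tree at $v_1$. For a vertex $v_j$ with subtree polynomial $g_j$ (the sum of monomials on edges inside the subtree of $v_j$), define vectors $P^{(j)}, R^{(j)} \in \mathbb{R}^{n}$, indexed by $\ell_j$. Here $R^{(j)}_{\ell_j}$ is the sum over the indices of all proper descendants of $\exp(\frac1d g_j(\cdot))$, with $Q^{(j)}_{\ell_j}$ plugged in for $x_j$. The vector $P^{(j)}$ is the same sum weighted by $\prod V^{(u)}_{\ell_u,\ell}$ over the descendants $u$. For a leaf, $P^{(j)} = V^{(j)}_{(1:n,\ell)}$ and $R^{(j)} = \mathbf 1$.

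\textbf{The key identity.} Let $v_j$ have children $c_1,\dots,c_p$. Then
\[
R^{(j)} = \bigodot_{i\in[p]} [\tfrac1d Q^{(j)}(Q^{(c_i)})^T]^e R^{(c_i)},
\qquad
P^{(j)} = \bigodot_{i\in[p]} [\tfrac1d Q^{(j)}(Q^{(c_i)})^T]^e D^{V^{(c_i)}} P^{(c_i)}.
\]
The proof is by induction on subtree height. The exponent $\frac1d g_j$ equals $\sum_i \big(\frac1d\langle Q^{(j)}_{\ell_j},Q^{(c_i)}_{\ell_{c_i}}\rangle + \frac1d g_{c_i}\big)$, and the descendant index sets of distinct children are disjoint. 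Hence the exponential and the value products factor across children, and the sum over all descendant indices distributes into a product of independent sums. This is exactly the computation in equation~\eqref{eqn:var-sep}, applied at every vertex instead of only at $x_1$.

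Within child $c_i$, summing over $\ell_{c_i}$ last gives a matrix--vector product of $[\frac1d Q^{(j)}(Q^{(c_i)})^T]^e$ with $R^{(c_i)}$, or with $D^{V^{(c_i)}}P^{(c_i)}$. Applying the identity at the root yields $\AttP{h}_{(1:n,\ell)} = P^{(1)}/R^{(1)}$, directly from Definition~\ref{def:poly-att}.

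\textbf{Running time.} For each edge $(v_j, v_c)$, form the $n\times n$ matrix $[\frac1d Q^{(j)}(Q^{(c)})^T]^e$ once, in $O(n^2 d)$ time. This is shared across all $\ell\in[d]$. Each $\ell$ then needs $t-1$ matrix--vector products and $O(t)$ Hadamard products, each costing $O(n^2)$. The total is $O(t n^2 d) = \tilde O(n^2)$ for constant $t$ and $d = O(\log n)$.

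\textbf{Main obstacle.} The only real care needed is in the correctness step. One must check that the factorization genuinely uses the absence of cycles, so that each index $\ell_u$ appears in exactly one parent factor, and that the root index $\ell_1$ is not summed. Also note that no division happens at intermediate levels: the algorithm's intermediate ratios $P/R$ are just bookkeeping, and only the root quotient is the output. The rest is routine.
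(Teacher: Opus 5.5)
Your argument is the same dynamic program as the paper's Algorithm~\ref{alg:attp-quadratic}. You root the tree at $v_1$, push each child's numerator and denominator vectors up through $[\tfrac1d Q^{(j)}(Q^{(c)})^T]^e$ (inserting $D^{V^{(c)}}$ for the numerator), combine children by Hadamard products via the factorization behind Lemma~\ref{lem:var-sep}, and repeat for each column $\ell$. Your handling of components not containing $x_1$, of isolated variables, and of sharing the exponentiated matrices across $\ell$ is, if anything, more careful than the paper's.

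There is one bookkeeping slip you need to fix. You define $P^{(j)}$ as a sum over the \emph{proper} descendants of $v_j$, the same sum as $R^{(j)}$ but weighted by the descendants' values. Your recursion $P^{(j)}=\bigodot_i [\tfrac1d Q^{(j)}(Q^{(c_i)})^T]^e D^{V^{(c_i)}}P^{(c_i)}$ is correct for that definition. With that definition, though, the leaf base case must be $P^{(j)}=\mathbf{1}$, not $V^{(j)}_{(1:n,\ell)}$. As written, every leaf child $c$ gets weighted by $V^{(c)}_{(1:n,\ell)}\odot V^{(c)}_{(1:n,\ell)}$, so the computed numerator is wrong in general. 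The paper avoids this by treating leaf children separately, using $[Q^{(1)}(Q^{(j_i)})^T]^e V^{(j_i)}_{(1:n,\ell)}$ directly.

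The same convention issue appears for a component not containing $x_1$. With an artificial root $r$, the component's numerator is $\sum_{\ell_r}V^{(r)}_{\ell_r,\ell}P^{(r)}_{\ell_r}$, while its denominator is $\sum_{\ell_r}R^{(r)}_{\ell_r}$. Summing $P^{(r)}$ alone drops the root's value factor. With these two corrections the proof is complete and matches the paper's.
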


\begin{proof}
    Algorithm \ref{alg:attp-quadratic} gives a procedure for computing the output of tree-attention given query-key and value matrices as inputs. Indeed, if there were multiple forests, we could have computed the output of tree-attention for each of them separately, and composed them together using Lemma \ref{lem:var-sep}.

    \paragraph{Overview.} We start with a tree rooted at $v_1$, and compute poly-attention on each of the subtrees (polynomials corresponding to the subtrees) where the query variable\footnote{\emph{Query variable} refers to the variable of the highest priority in the polynomial (priority of monomials and variables has been defined in Definition \ref{def:mon-order}). It is usually the variable $x_1$, and the indices of the corresponding query-key matrix in the softmax computation correspond to the rows of $\AttP{h}$ (see Equation \ref{eqn:poly-att}).} is the root of the subtree. The main idea to compute this is, whenever we have a branching, we compute each of the subtrees separately, and compose them together using Hadamard product of Lemma \ref{lem:var-sep}. 

    In Algorithm \ref{alg:attp-quadratic}, we fix each of the columns $\ell \in [d]$ (Step 2), and compute $\AttP{h}_{(1:n, \ell)}$, one at a time. The computation of  proceeds as computing the numerator and the denominator terms separately, from the graphical representation $G^h$ (as in Equation \ref{eqn:poly-att}). In this recursive formulation, we employ compute the values in a DFS fashion, first, we fix the root of the tree given by variable $x_1$ (vertex $v_1$ in the graph), having the corresponding query-key matrix $Q^{(1)}$, and proceed to computing the output of the poly-attention mechanism for its subtree polynomial.

    \paragraph{Each branch.} Without loss of generality, consider the root variable $v_1$, and for each branch from $v_1$, consider an edge given by $(v_1, v_{j_i})$, i.e., $v_1\text{---}v_{j_i}$, for $i\in [p]$, where $p$ is the number of branches. When $v_{j_i}$ is a leaf, we compute the poly-attention $\AttP{x_1x_{j_i}}$, and recursively pass it up the tree. The denominator and numerator of $\AttP{x_1x_{j_i}}$  are defined in Step 12 of Algorithm \ref{alg:attp-quadratic} -- two vectors in $\mathbb{R}^{n\times 1}$ which can be computed in $O(n^2)$ time and then their ratio is the poly-attention output for this branch (Step 13).

    Next, when $v_{j_i}$ is not a leaf, i.e., the tree proceeds as  $v_{1} \mbox{---} v_{j_i}\mbox{---}$, let us assume the polynomial whose subtree rooted at $v_{j_i}$ is given by $g_i(x_{j_i}, \bar{x^i})$ and that we have already computed $\AttP{g_i(x_{j_i}, \bar{x^i})}$ (the numerator and the denominator are separately given to us as $P^{(g_i(x_{j_i}, \bar{x^i}))}, R^{(g_i(x_{j_i}, \bar{x^i}))}\in \mathbb{R}^{n\times 1}$ respectively). By $\bar{x^i}$, we simply denote the subset of variables other than $x_{j_i}$ that the subtree consists of. The output of tree-attention of the subtree rooted at $v_1$ is essentially  $\AttP{x_1x_{j_i} + g_i(x_{j_i}, \bar{x^i})}$. For this, the numerator and the denominator can be computed as in Step 8 -- both of these computations take $O(n^2)$ time. The final value of $\AttP{x_1x_{j_i} + g_i(x_{j_i}, \bar{x^i})}_{(1:n, \ell)}$ is given by Step 9, and we pass the numerator and denominator vectors up the tree recursively.

    \paragraph{Along a branching.} For conglomerating the branches, let us say that the children nodes of $v_1$ are $v_{j_1}, \dots, v_{j_p}$, where the polynomials corresponding to their subtrees are $g_1(x_{j_1}, \bar{x^1}), \dots,  g_{p}$ $(x_{j_p}, \bar{x^p})$ ($\bar{x^1}, \dots, \bar{x^p}$ are disjoint subsets of variables which are precisely the ones present in each of the $p$ subtrees, respectively). We also assume that we have recursively computed the $\ell$-th columns of the poly-attention outputs $\AttP{g_1(x_{j_1}, \bar{x^1})}_{(1:n, \ell)}, \dots, \AttP{g_p(x_{j_p}, \bar{x^p})}_{(1:n, \ell)}$, in terms of the numerators $P^{(g_1(x_{j_1}, \bar{x^1}))}, \dots, P^{(g_p(x_{j_p}, \bar{x^p}))}$, and denominators $R^{(g_1(x_{j_1}, \bar{x^1}))}, \dots, R^{(g_p(x_{j_p}, \bar{x^p}))}$ respectively. Now, the poly-attention output for the polynomial having the subtree rooted at $v_1$, which is 
    $$h(x_1, \bar{x^1}, \dots, \bar{x^p}) := x_1x_{j_1} + g_{j_1}(x_{j_1}, \bar{x^1}) + \ldots +  x_1x_{j_p} + g_{j_p}(x_{j_p}, \bar{x^p}),$$
    is computed in Steps 16-17, and the correctness of this computation follows from Lemma \ref{lem:var-sep}.

    \paragraph{Time complexity.} We show a quadratic time-complexity for Algorithm \ref{alg:attp-quadratic}. Let us assume that recursively in a branch, the numerator and the denominator of $\AttP{g_i(x_{j_i}, \bar{x})}$ can be computed in $\tilde O(n^2)$ time (Step 7). From this, extending the output matrix of poly-attention to the current vertex (Steps 8-9, 12-13, followed by 16-17) each require $\tilde O(n^2)$ time. The number of these sub-tree attention computations required is at most the size of the tree, $O(s)$, which is a constant. Therefore, this gives a DFS-style procedure to compute the $\AttP{h}_{(1:n, \ell)}$ in time $\Tilde O(n^2)$ since the graph is of constant size, and repeating for all $\ell\in [d]$, we will be able to find the entire matrix $\AttP{h}$.
\end{proof}

%% file: apx_poly-att.tex
\section{Proofs of Section \ref{sec:cc-poly-att}: computational complexities of poly-attention}
\label{apx:poly-att}

Throughout this paper, we will compute the numerator and the denominator in Equation \ref{eqn:poly-att} separately, where the \emph{numerator term} is $\sum_{\ell_2, \dots, \ell_t \in [n]}\exp\left(\frac{1}{d}h( Q^{(1)}_{\ell_{1}}, \dots, Q^{(k)}_{\ell_{k}}\rangle\right)V^{(2)}_{\ell_2}\odot V^{(3)}_{\ell_3}\odot \ldots \odot V^{(t)}_{\ell_t}$, and the \emph{denominator term} is $\sum_{\ell_2, \dots, \ell_t \in [n]}\exp\left(\frac{1}{d}h( Q^{(1)}_{\ell_{1}}, \dots, Q^{(k)}_{\ell_{k}}\rangle\right)$.

We also define a \emph{monomial ordering}, which will help us proceed with the proofs of these theorems. 

\begin{definition}[Monomial ordering]
    \label{def:mon-order}
    A monomial $m_1$ is said to be \emph{higher preference} than another monomial $m_2$ if either of the following holds:
    \begin{itemize}
        \item $\deg(m_1) > \deg(m_2)$, or
        \item $\deg(m_1)=\deg(m_2)$ and  $m_1$ comes lexicographically before $m_2$, i.e., if $i$ is the smallest index such that $x_i$ is present in exactly one of the monomials, then the monomial in which $x_i$ is present has higher preference.
    \end{itemize}
\end{definition}

We will order the monomials of $h$ according to this order, and $m_i$ will denote the $i$-th monomial. Note that this definition can also be used with variables, where a variable $x_i$ has a higher preference than $x_j$ if and only if $i < j$.

The polynomial method \cite{alman2023fast,alman2023capture,alman2024fast} can again be applied to poly-attention, by reducing $\EAPAC{h}$ to the computation the output of a larger $t$-tensor attention, where the query-key vectors in tensor attention are of dimension $n\times (sd)$. However, the bound on the variables in this case of computing poly-attention will be $o((\log n)^{1/k})$ in contrast to that of tensor attention being $o((\log n)^{1/t})$ \cite{alman2023capture}.

For proving Theorem \ref{thm:ub-lb-poly}, we show the two parts, upper and lower bounds, separately. For upper bounds, we give a polynomial method algorithm if the entries of the query-key matrices are bounded (Theorem \ref{thm:poly-att-ub}), and if the entries are large, we give hardness results for entry-wise approximation conditioned on fine-grained complexity conjectures (Theorem \ref{thm:poly-att-lb}).

\begin{theorem}[Polynomial method on poly-attention]
\label{thm:poly-att-ub}
    Given an attention polynomial $h(x_1, \dots, x_t)$  of degree $k$ having $s$ monomials, where $t, k, s$ are constants, there is an algorithm that solves $\EAPAC{h}(n, d = O(\log n), \Gamma = o((\log n)^{\nicefrac{1}{k}}), \gamma = \nicefrac{1}{\poly(n)})$ with query-key matrices $Q^{(1)}, \dots, Q^{(t)} \in [-\Gamma, \Gamma]^{n\times d}$, and value matrices $V^{(2)}, \dots, V^{(t)}\in \mathbb{R}^{n\times d}$ in time $O(n^{1+o(1)})$.
\end{theorem}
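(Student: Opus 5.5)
The plan is the polynomial method, applied once per monomial rather than to the exponent of the whole softmax argument. Write $h=\sum_{i\in[s]} m_i$, so
\[
\exp\Big(\tfrac{1}{d}h(Q^{(1)}_{\ell_1},\dots,Q^{(t)}_{\ell_t})\Big)=\prod_{i\in[s]}\exp\Big(\tfrac{1}{d}m_i(Q^{(1)}_{\ell_1},\dots,Q^{(t)}_{\ell_t})\Big).
\]
Each factor depends only on the at most $k$ indices $\ell_j$ with $x_j\in m_i$. Since all entries lie in $[-\Gamma,\Gamma]$, the argument $\frac1d\langle Q^{(j_1)}_{\ell_{j_1}},\dots,Q^{(j_{k'})}_{\ell_{j_{k'}}}\rangle$ of a degree-$k'$ monomial has magnitude at most $\Gamma^{k'}\le \Gamma^k = o(\log n)$. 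This is exactly where the bound $o((\log n)^{1/k})$, rather than $o((\log n)^{1/t})$, comes from.

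\textbf{Step 1: factor each monomial.} For each monomial, apply Lemma \ref{lem:exp-poly} on the interval $[-\Gamma^k,\Gamma^k]$ with $\varepsilon=1/\poly(n)$. Because $\Gamma^k=o(\log n)$, the degree is $g=o(\log n)$. Expanding $P(\frac1d\sum_{a\in[d]}\prod_j Q^{(j)}_{\ell_j,a})$ exactly as in Lemma \ref{lem:low-rank} (the tensor version of \cite{alman2023capture}) produces, for each variable $x_j$ of $m_i$, a matrix $U^{i,j}\in\mathbb{R}^{n\times r}$ with $r\le\binom{d+g}{g}=n^{o(1)}$. These satisfy
\[
\exp\big(\tfrac1d m_i\big)\approx\sum_{c\in[r]}\prod_{j\in m_i}U^{i,j}_{\ell_j,c},
\]
with relative error $\varepsilon$, and all of them are computable in $n^{1+o(1)}$ time.

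\textbf{Step 2: regroup by variable.} Multiply the $s$ expansions together. The summand becomes a sum over $(c_1,\dots,c_s)\in[r]^s$ of $\prod_{j\in[t]}F^{(j)}_{\ell_j,(c_i)_{i\ni j}}$. Here $F^{(j)}$ is the row-wise (Hadamard/Kronecker-type) product of the $U^{i,j}$ over monomials $i$ containing $x_j$, indexed by the $c_i$ of those monomials. This $F^{(j)}$ has at most $r^s=n^{o(1)}$ columns.

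For the numerator at column $\ell\in[d]$, first scale the rows of $F^{(j)}$ by $V^{(j)}_{\ell_j,\ell}$ for $j\ge2$. Then sum each index $\ell_j$ ($j\ge2$) out independently: $G^{(j)}_{c}:=\sum_{\ell_j}F^{(j)}_{\ell_j,c}\,(V^{(j)}_{\ell_j,\ell})$. This costs $n\cdot n^{o(1)}$ per $j$. Finally, for each $\ell_1$ compute
\[
\sum_{(c_1,\dots,c_s)}F^{(1)}_{\ell_1,\cdot}\prod_{j\ge2}G^{(j)}_{\cdot}
\]
by brute force over the $r^s=n^{o(1)}$ tuples. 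The total is $n^{1+o(1)}$ per column, times $d=O(\log n)$ columns. The denominator is handled the same way without the $V$ scaling. Equivalently, this is a $t$-tensor-attention instance of width $r^s$, as the paper remarks.

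\textbf{Step 3: error analysis.} This is the main care point. Each factor has relative error $\varepsilon$, so the product of $s$ factors has relative error $(1+\varepsilon)^s-1=O(s\varepsilon)$ termwise. Every summand of the denominator is positive and at least $e^{-s\Gamma^k}$, so the denominator is approximated with relative error $O(\varepsilon)$. The numerator's absolute error is at most $O(\varepsilon)\,n^{t-1}e^{s\Gamma^k}\|V\|_\infty^{t-1}$. Dividing and using $e^{s\Gamma^k}=n^{o(1)}$ and $\|V\|_\infty\le\poly(n)$, choosing $\varepsilon=1/\poly(n)$ small enough gives entrywise error $\gamma=1/\poly(n)$, as in the proof of Theorem \ref{thm:strassen-main}. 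This step only costs an $O(\log n)$ factor in $\log(1/\varepsilon)$, which keeps $g=o(\log n)$ and hence $r=n^{o(1)}$.

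The obstacle I expect is verifying that the rank stays $n^{o(1)}$, i.e.\ that $\binom{d+g}{g}^s=n^{o(1)}$ when $d=O(\log n)$ and $g=o(\log n)$. This holds because the degree bound from Lemma \ref{lem:exp-poly} is $\Theta(\log n/\log(\log n/\Gamma^k))$ with $\log n/\Gamma^k\to\infty$, exactly as in \cite{alman2023fast,alman2023capture}.
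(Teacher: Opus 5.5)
Your argument is correct, but it is organized differently from the paper's. The paper never approximates monomials separately. It pads each $Q^{(j)}$ into $K^{(j)}\in\mathbb{R}^{n\times sd}$, where block $i$ equals $Q^{(j)}$ if $x_j\in m_i$ and the all-ones matrix otherwise. This gives $\langle K^{(1)}_{\ell_1},\dots,K^{(t)}_{\ell_t}\rangle=h(Q^{(1)}_{\ell_1},\dots,Q^{(t)}_{\ell_t})$, so $\AttP{h}$ literally becomes $t$-tensor attention of width $sd$; the paper's remark refers to this width $sd$ before any polynomial approximation, not to width $r^s$. It then calls the tensor-attention algorithm of \cite{alman2023capture} (Theorem~\ref{thm:poly-poly-att}) as a black box, noting that its hypothesis depends only on $\Lambda=\|\frac1d K^{(1)}(K^{(2)}\oslash\cdots\oslash K^{(t)})^T\|_\infty\le s\Gamma^k$ (Lemma~\ref{lem:bounded-poly-att}).

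So the paper uses a single polynomial approximating $e^x$ on $[-s\Gamma^k,s\Gamma^k]$, applied to the whole exponent, with rank roughly $\binom{sd+g}{g}$. You instead use $\prod_i P(y_i)$, one factor per monomial, and regroup by variable, getting rank $\binom{d+g}{g}^s$. Since $s$ is constant, both ranks are $n^{o(1)}$.

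The paper's route is shorter because the low-rank construction and error analysis are inherited wholesale. However, its $1/k$ threshold (rather than $1/t$) rests entirely on reading the black box in terms of $\Lambda$. The padded entries are only bounded by $\max\{1,\Gamma\}$, so an entrywise reading would give the weaker $\Gamma^t$. Your route is self-contained and makes the source of the $1/k$ exponent visible factor by factor.

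One simplification for your Step 3. All weights are positive and each is approximated with relative error $\eta=(1+\varepsilon)^s-1$. Hence the numerator error is at most $\eta R\|V\|_\infty^{t-1}$, where $R$ is the true denominator, and the final error is $O(\eta)\|V\|_\infty^{t-1}$. So the $e^{\pm s\Gamma^k}$ factors are unnecessary; the bound on $\Gamma$ matters only for keeping the degree $g=o(\log n)$.
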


\begin{theorem}[Lower bound for poly-attention]
\label{thm:poly-att-lb}
Given an attention polynomial $h(x_1, \dots, x_t)$  of degree $k$ having $s$ monomials, where $t, k, s$ are constants,  we are interested in computing an entry-wise $\gamma$-approximation $\AttP{h}$ having query-key matrices $Q^{(1)}, \dots, Q^{(t)}\in [-\Gamma, \Gamma]^{n\times d}$, and value matrices $V^{(2)}, \dots, V^{(t)} \in \mathbb{R}^{n\times d}$, for $d = O(\log n), \gamma = \nicefrac{1}{\poly(n)}$. Then, depending on the structure of $h$,
\begin{enumerate}
    \item If $k\geq 2$, then assuming $\SETH$ (Hypothesis \ref{conj:SETH}), an entry-wise approximation of $\AttP{h}$ can not be computed in time $O(n^{k-\Omega(1)})$ when $\Gamma=\Omega((\log n)^{1/k})$.
    \item If $h$ contains an elementary symmetric polynomial $\binom{[t_0]}{k}$ for some $ t_0\leq t$, then assuming the $\MAXSAT{k}$ conjecture (Hypothesis \ref{hyp:max-ksat}), an entry-wise approximation of $\AttP{h}$ can not be computed in time $O(n^{k_0-\Omega(1)})$ when $\Gamma = \Omega((\log n)^{1/k})$.
    \item If $k=2$ and $h$ is not a tree polynomial, then assuming the $\MAXSAT{2}$ conjecture (Hypothesis \ref{hyp:max-2sat}), an entry-wise approximation of $\AttP{h}$ can not be computed in time $O(n^{\omega-\Omega(1)})$ when $\Gamma = \Omega((\log n)^{1/2})$.
\end{enumerate}
\end{theorem}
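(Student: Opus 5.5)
The plan is to run a separate reduction for each of the three parts, following the template of the Strassen lower bound: (conjecture) $\Rightarrow$ (exact inner-product problem) $\Rightarrow$ (gap version) $\Rightarrow$ $\EAPAC{h}$. The last step reuses the padding/normalization construction of Theorem \ref{thm:strassen-lb}. In each part we choose a sub-structure $S$ of $h$ that we want to "activate":
\begin{itemize}
\item for Part 1, a single monomial of degree $k$;
\item for Part 2, the variables $x_1,\dots,x_{t_0}$ carrying $\binom{[t_0]}{k}$;
\item for Part 3, a cycle $x_{i_1}x_{i_2}+\dots+x_{i_r}x_{i_1}$ in the graph of $h$, which exists because $h$ has degree $2$ and is not a tree polynomial.
\end{itemize}
Variables outside $S$ are neutralized by giving their query-key matrices only normalization rows (all-$0$ data coordinates) and value columns identically $1$. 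Their contribution to every exponent is then a fixed shift, independent of the indices we care about, and it cancels between numerator and denominator. Monomials mixing variables of $S$ with neutralized variables are handled the same way.

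For Part 1, the plan is:
\begin{enumerate}
\item Start from $k\IP_{n,c\log n}$, which is $\SETH$-hard at $n^{k-\Omega(1)}$ by Theorem \ref{thm:seth}.
\item Pass to a gap version by generalizing Lemma \ref{lem:reduction} from $2$ to $k$ sets. I would use the algebrization/distributed-PCP protocol of \cite{abboud2025finer} with $k$ parties, producing $n^{o(1)}$ gap instances.
\item Encode the $k$ sets as rows of the $k$ matrices for the variables of the chosen monomial, scaled by $B$, so that the monomial evaluates to $B^k\langle a^1,\dots,a^k\rangle$. Append normalizing all-ones blocks as in Theorem \ref{thm:strassen-lb}.
\item Ensure that every other monomial supported on $S$ contributes either a constant or a quantity we can control. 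The simplest way is to place the data coordinates of different monomials in disjoint blocks of coordinates, so that any other monomial's inner product vanishes on the data part.
\item Since the output row index is $\ell_1$, fix the query variable appropriately. If $x_1\notin S$, we instead read off the denominator/normalization effect through a single row, or relabel variables, since the order of variables in the definition is symmetric apart from which one is the query.
\item Reuse the bounds on the denominator and numerator from Theorem \ref{thm:strassen-lb}. The threshold $e^{\varepsilon\lambda B^k}>n^{k}$ forces $B^k=\Omega(\log n)$, i.e. $\Gamma=\Omega((\log n)^{1/k})$.
\item The reduction reads off one output column. It finds the query index in a yes-witness, which decides the gap instance.
\end{enumerate}

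Part 2 is the same argument with a different starting problem. Sparse $\MAXSAT{k}$ (Hypothesis \ref{hyp:max-ksat}) reduces, as in \cite{alman2020ov}, to a problem asking for $a^1,\dots,a^{t_0}$ with all $\binom{t_0}{k}$ $k$-wise inner products equal to prescribed targets. We gap-amplify each $k$-subset separately and concatenate the resulting vectors into disjoint coordinate blocks, exactly as in the block construction of Theorem \ref{thm:hardness}: the block for subset $J$ is nonzero only in the vectors indexed by $J$. With this layout the monomial for $J$ sees only its own block. The $q^{\binom{t_0}{k}}=n^{o(1)}$ blow-up is harmless.

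Part 3 is handled as follows:
\begin{enumerate}
\item Take the shortest cycle $C$ of length $r\ge 3$ in the graph of $h$. Being shortest, it is chordless.
\item Reduce $\IPDelta$ (Lemma \ref{lem:ipt}, via Theorem \ref{thm:hardness}) to the cycle version. Put $a,b,c$ on three consecutive-arc segments of $C$. On the remaining cycle vertices, use "copy" gadgets that force equality of indices: make the edge inner product large exactly when the two indices coincide, with an identity-like encoding of strength $B^2$.
\item This forces the whole path between two data vertices to carry the same index. The cycle therefore behaves like a triangle on $a,b,c$.
\item Vertices off $C$ and edges to them are neutralized as before. Chords do not exist by minimality.
\end{enumerate}

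I expect the main obstacle to be the equality gadgets together with the neutralization of extra monomials. Index-equality requires orthogonal-ish codes of length $O(\log n)$ with inner-product gap $\Omega(\log n)$ after scaling. I would try random $\pm$ or constant-weight codes, whose pairwise inner products concentrate; this fits $d=O(\log n)$ and $\Gamma=O(\sqrt{\log n})$. The cost is a careful recomputation of the numerator/denominator bounds, since many non-witness index tuples now contribute slightly subthreshold terms. The resulting count of at most $n^{O(1)}$ terms is absorbed by taking the scaling constant $C_b$ large.
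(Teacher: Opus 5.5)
\textbf{The gap: sub-monomials of the activated monomial when $k\ge 3$.} Your pipeline (conjecture $\Rightarrow$ exact inner-product problem $\Rightarrow$ gap version $\Rightarrow$ $\EAPAC{h}$ with a normalizing bottom half) is the paper's, and Parts 1--2 track Lemmas \ref{lem:case1} and \ref{lem:case2}. However, the mechanism you give for neutralizing the other monomials supported on $S$ fails once $k\ge 3$.

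Disjoint coordinate blocks only separate monomials that carry \emph{different} data. Take a proper sub-monomial $m'\mid m_1$ of the activated degree-$k$ monomial, say $x_1x_2$ inside $x_1x_2x_3$. All of its variables carry data in the very block where $m_1$ computes $B^k\langle a^1,\dots,a^k\rangle$; they must share those coordinates, or $m_1$ would compute nothing. So on that block $m'$ evaluates to $B^{\deg m'}\langle a^{j_1},\dots,a^{j_{\deg m'}}\rangle$. This is neither zero nor constant: it depends on exactly the indices being searched over. The same happens in Part 2 for every monomial of degree $<k$ supported inside the $t_0$ symmetric variables.

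\textbf{How to repair it.} You need one of two extra ideas.
\begin{itemize}
\item \emph{The paper's cancellation.} Each such $m_i\mid m_1$ gets its own block, in which its variables again carry the data vectors and its highest-preference variable is multiplied by an integer $s_i$. The $s_i$ are chosen inductively along the monomial order so that $m_i$'s contributions over all blocks sum to zero, and $|s_i|<s^s$ keeps $\Gamma=\Theta(B)$. Then $h$ equals exactly $B^k\langle a^1,\dots,a^k\rangle$ on top-half indices.
\item \emph{A lower-order estimate (simpler).} All entries are $O(B)$, so after dividing by $\tilde d$ each monomial of degree $\le k-1$ shifts the exponent by at most $O(B^{k-1})=O(C_b^{k-1}(\log n)^{1-1/k})=o(\log n)$. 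This multiplies every term of $P_{\ell_1}$ and $R_{\ell_1}$ by $n^{\pm o(1)}$, which the $n^{\Omega(\varepsilon C_b^k)}$ yes/no gap absorbs. It is the same estimate the paper uses to get $\lambda=1+o(1)$.
\end{itemize}

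\textbf{Two smaller points.} Relabeling variables is not allowed: $x_1$ is distinguished, since it indexes output rows and has no value matrix. Your other option, zeroing $Q^{(1)}$ so all rows coincide and one row decides the instance, is what the paper does. Also, make the neutralized matrices identically zero. If they had nonzero normalization coordinates, mixed monomials would depend on whether $S$-indices lie in the top or bottom half, contrary to your ``fixed shift'' claim.

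\textbf{Part 3: a different but viable route.} The paper reduces $\IPDelta$ to $\DirCycIP{r}$ via Lemma \ref{lem:red-cyc}, gap-amplifies each cycle edge (Corollary \ref{cor:dir-hardness}), and in Lemma \ref{lem:case3} gives every cycle edge its own block. So every edge is a data edge with a common target, and chords are harmless. You instead keep the triangle gap problem of Theorem \ref{thm:hardness} and realize the extra cycle vertices with code-based equality gadgets.

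What your route buys: it is self-contained, avoiding the IP version of the \cite{alman2020ov} cycle reduction, which the paper only asserts carries over from OV. What it costs: a code construction and a longer argument that tuples violating a copy constraint are suppressed. Three notes on carrying it out:
\begin{itemize}
\item Chordlessness is unnecessary once each edge has its own block.
\item Use an explicit constant-rate, constant-distance code rather than a random one, so the reduction stays deterministic.
\item Make the normalization block wider than both the target $m$ and the code's self-inner-product, so the all-bottom-half tuples still dominate $R_{\ell_1}$.
\end{itemize}
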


\subsection{Polynomial method for poly-attention}
In this section, we prove Theorem \ref{thm:poly-att-ub}. We start with the polynomial $h$ as defined in Theorem \ref{thm:poly-att-ub}, and reduce the problem of computing an entry-wise approximation of $\AttP{h} \in \mathbb{R}^{n\times d}$ to that of $\AttT \in \R^{n\times (sd)}$, by constructing  query-key matrices $K^{(1)}, \dots, K^{(t)} \in \mathbb{R}^{n\times (sd)}$ and value matrices $W^{(1)}, \dots, W^{(t)}\in \mathbb{R}^{n\times (sd)}$, such that the row-softmax matrix of $$\frac{1}{d}K^{(1)}\left(K^{(2)}\oslash K^{(3)}\oslash \ldots \oslash K^{(t)}\right)^T,$$ is same as the softmax matrix of $\AttP{h}$, and $\AttP{h}$ is exactly equal to $\AttT_{(1:n, 1:d)}$ using these inputs, and the remaining entries of $\AttT$ are zeros.

\paragraph{Defining $K^{(j)}$.} We define $K^{(j)} \in \mathbb{R}^{n\times (sd)}$, for all $j\in [t]$, by dividing the columns into $s$ blocks, each having $d$ columns. These blocks are defined as, for $j\in [t]$:
\begin{itemize}
    \item the $i$-th block, for $i \in [s]$, contains the matrix $Q^{(j)}$ if the $i$-th monomial of $h$ contains the variable $x_j$, 
    \item otherwise, the $i$-th block, for $i\in [s]$, contains the all ones matrix $\mathbf{1}_{n\times d}$.
\end{itemize}

Roughly, the query-key matrices can be seen as:
\begin{equation*}
    K^{(j)} = \begin{bmatrix}\overbracket{\underbrace{
        \begin{matrix}
            1  & \ldots & 1\\
            \vdots & & \vdots \\
            1 & \ldots & 1
        \end{matrix}}_{\text{$x_j$ not in $m_1$}}}^{d} & \overbracket{\underbrace{\begin{matrix}
            Q^{(j)}_{1, 1}  & \ldots & Q^{(j)}_{1, d}\\
            \vdots & & \vdots \\
            Q^{(j)}_{n, 1} & \ldots & Q^{(j)}_{n, d}
        \end{matrix}}_{\text{$x_j$ is in $m_2$}}}^{d} &
        \overbracket{\underbrace{
        \begin{matrix}
            1  & \ldots & 1\\
            \vdots & & \vdots \\
            1 & \ldots & 1
        \end{matrix}}_{\text{$x_j$ not in $m_3$}}}^{d}
        & \ldots
    \end{bmatrix}_{n\times (sd)}.
\end{equation*}

Using these definitions, it can be verified that for this choice of $K^{(j)}$'s, we have 
\begin{eqnarray}
\label{eqn:k-matrix}
    \langle K^{(1)}_{\ell_1}, K^{(2)}_{\ell_2}, \dots, K^{(t)}_{\ell_t}\rangle & = & \sum_{i\in [s]} \langle K^{(1)}_{\ell_1, (i-1)d+1:id}, K^{(2)}_{\ell_2, (i-1)d+1:id}, \dots, K^{(t)}_{\ell_t, (i-1)d+1:id}\rangle \\
    & = & \sum_{i\in [s]} \langle Q^{(j_1)}_{\ell_{j_1}}, \dots, Q^{(j_{k_i})}_{\ell_{j_{k_i}}}\rangle,
\end{eqnarray}
where the monomials of $h$ are defined as before (Definition \ref{def:poly-att}). 

\paragraph{Defining $W^{(j)}$.} The value matrices for the $t$-tensor attention operation will be the same as that of poly-attention. In order to match the embedding dimensions of the query-key matrices and the value matrices of the $t$-tensor attention operation (as was used in \cite{alman2023capture}), we can simply consider the new $n\times (sd)$ dimensional value matrices, $W^{(j)}$'s to contain the corresponding $n\times d$ dimensional value matrices $V^{(t)}$ in the first $d$-columns, and all the remaining entries of $W^{(j)}$ contain zero. More specifically, 
\begin{equation}
\label{eqn:W-j}
    W^{(j)} = \begin{bmatrix}
        V^{(j)} & \mathbf{0}_{n\times d} & \ldots & \mathbf{0}_{n\times d}
    \end{bmatrix}_{n\times (sd)}.
\end{equation}
\\

Now, in Equation \ref{eqn:poly-att}, note that the  poly-attention output can be written as 
\[
D^{-1}A W^{(2)}\oslash \ldots \oslash W^{(t)},
\]
where $A \in \mathbb{R}^{n\times n^{t-1}}$ is defined as
\[
A = [\frac{1}{d}K^{(1)}(K^{(2)}\oslash\ldots\oslash K^{(t)})^T]^e,
\]
and $D$ is the $n\times n$ diagonal matrix 
\[
D = diag\left( [\frac{1}{d}K^{(1)}(K^{(2)}\oslash\ldots\oslash K^{(t)})^T]^e\underbrace{\mathbf{1}_{n\times 1}\oslash \ldots \oslash \mathbf{1}_{n\times 1}}_{\hbox{$(t-1)$ times}}\right).
\]
This is precisely the form of a $t$-tensor attention mechanism. Next, in order to use the polynomial method on this matrix, we need the entries to be bounded.

\begin{lemma}[Bounded entries]
\label{lem:bounded-poly-att}
    Given $Q^{(j)}\in [-\Gamma, \Gamma]^{n\times d}$ and $h$  defined as above, we have 
    $$e^{-s\Gamma^k}\leq\exp\left(\frac{1}{d}h(Q^{(1)}_{\ell_1}, \dots, Q^{(t)}_{\ell_t})\right) \leq e^{s\Gamma^k},$$
    for all $\ell_1, \dots, \ell_t \in [n]$. For $\Gamma = o(\frac{1}{s}(\log n)^{1/k}) = o((\log n)^{1/k})$, the entries $\exp\left(\frac{1}{d}h(Q^{(1)}_{\ell_1}, \dots, Q^{(t)}_{\ell_t})\right)$ are sub-polynomial in $n$. 
\end{lemma}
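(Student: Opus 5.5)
The plan is to bound the exponent $\frac{1}{d}h(Q^{(1)}_{\ell_1}, \dots, Q^{(t)}_{\ell_t})$ monomial by monomial, exactly as in Lemma \ref{lem:bounded} for Strassen-attention, and then exponentiate. Write $h = \sum_{i\in[s]} m_i$. For a monomial $m_i$ of degree $k_i$ with $2\le k_i\le k$ in the variables $x_{j_1},\dots,x_{j_{k_i}}$, we have
\[
m_i(Q^{(1)}_{\ell_1},\dots,Q^{(t)}_{\ell_t}) = \sum_{p\in[d]} Q^{(j_1)}_{\ell_{j_1},p}\cdots Q^{(j_{k_i})}_{\ell_{j_{k_i}},p}.
\]
Each summand is a product of $k_i$ entries of absolute value at most $\Gamma$, so it has absolute value at most $\Gamma^{k_i}$. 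Summing over the $d$ coordinates and dividing by $d$ gives $|\frac{1}{d}m_i(\cdot)|\le \Gamma^{k_i}$.

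The only point needing care is replacing $\Gamma^{k_i}$ by $\Gamma^k$. This is valid when $\Gamma\ge 1$, since then $\Gamma^{k_i}\le\Gamma^k$ for $k_i\le k$. If $\Gamma<1$, every monomial is bounded by $1$, and the exponent is at most $s$ in absolute value. That is a constant, so the entries are trivially sub-polynomial. To make the displayed inequality hold uniformly, I would either assume $\Gamma\ge1$ without loss of generality (enlarging $\Gamma$ only weakens the hypothesis on the entries) or read $\Gamma^k$ as $\max(1,\Gamma)^k$.

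Summing over the $s$ monomials with the triangle inequality gives $|\frac{1}{d}h(\cdot)|\le s\Gamma^k$. Since $\exp$ is monotone, this yields $e^{-s\Gamma^k}\le \exp(\frac1d h(\cdot))\le e^{s\Gamma^k}$.

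For the final claim, take $\Gamma=o(\frac1s(\log n)^{1/k})$. Because $s\ge1$ is a constant, $s\Gamma^k \le s\cdot s^{-k}\,o(\log n)=o(\log n)$. Therefore $e^{s\Gamma^k}=n^{o(1)}$, and likewise $e^{-s\Gamma^k}\ge n^{-o(1)}$. This is the sub-polynomial bound needed to apply Lemma \ref{lem:exp-poly} and Lemma \ref{lem:low-rank} with $\varepsilon=1/\poly(n)$. None of the steps is a real obstacle; the only subtlety is the $\Gamma<1$ case noted above.
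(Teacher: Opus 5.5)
Your proof is correct and follows the same route as the paper: bound each monomial's normalized inner product by $\Gamma^{k}$, sum over the $s$ monomials with the triangle inequality, and exponentiate. Your $\Gamma<1$ caveat is a genuine refinement that the paper's one-line argument skips, since it implicitly takes $\Gamma\ge 1$ as in Lemma~\ref{lem:exp-poly}: with monomials of mixed degree the displayed bound can fail for $\Gamma<1$, but the sub-polynomial conclusion is unaffected.
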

\begin{proof}
    Since $h$ is a degree $k$ polynomial with constant coefficients, for each monomial $m_i$ of $h$, $\frac{1}{d} m_i(Q^{(1)}_{\ell_1}, \dots, Q^{(t)}_{\ell_t})$ in in the range of $[-\Gamma^k, \Gamma^k]$. There are $s$ monomials and the total value is bounded inside the interval $[-s\Gamma^k, s\Gamma^k]$, which gives the required result after exponentiation.
\end{proof}

For completing the algorithm, we use results which follow from the proofs in \cite[Apx.~E]{alman2023capture}.

\begin{theorem}[{\cite{alman2023capture}}]
\label{thm:poly-poly-att}
    Given matrices $K^{(1)}, \dots, K^{(t)} \in [-\Gamma, \Gamma]^{n\times d}$ and value matrices $W^{(2)}, \dots, W^{(t)} \in \mathbb{R}^{n\times d}$, we can compute an entry-wise $\gamma$-approximation, for $\gamma = \nicefrac{1}{\poly(n)}$, of the following:
    \begin{enumerate}
        \item A matrix $\widehat{Att} \in \mathbb{R}^{n\times d}$ which is the entry-wise $\gamma$-approximation of the numerator matrix of tensor attention output
        \[
        Att = [\frac{1}{d}K^{(1)}(K^{(2)}\oslash\ldots\oslash K^{(t)})^T]^eW^{(2)}\oslash \ldots \oslash W^{(t)},
        \]
        that is, for all $i\in [n], j\in [d]$,
        \[
        |\widehat{Att}_{i, j}-Att_{i, j}| < \gamma.
        \]
        \item A diagonal matrix $\hat{D} \in \mathbb{R}^{n\times n}$ which is an entry-wise approximation of the diagonal matrix $D \in \mathbb{R}^{n\times n}$ given by $$D = diag\left([\frac{1}{d}K^{(1)}(K^{(2)}\oslash\ldots\oslash K^{(t)})^T]^e\mathbf{1}_{n\times 1}\oslash \ldots \oslash \mathbf{1}_{n\times 1}\right),$$
        that is, for all $i\in [n]$,
        \[
        |\hat D_{i, i}-D_{i, i}| < \gamma.
        \]
    \end{enumerate}
    Here, when the condition $\max\left\{\frac{\log(1/\gamma)}{\log\left(\log (1/\gamma)/\Lambda\right)}, \Lambda\right\} = o(\log n)$ is met (where $\Lambda = || \frac{1}{d}K^{(1)}(K^{(2)}\oslash\ldots\oslash K^{(t)})^T ||_\infty$), the time complexity for finding the matrices $\widehat{Att}$, $\hat{D}$, and hence an entry-wise $2\gamma$-approximation of $D^{-1}Att$, is $n^{1+o(1)}$. 
\end{theorem}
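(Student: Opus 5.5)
The plan is to follow the polynomial method of \cite{alman2023capture}. We replace the entry-wise exponential by a low-degree polynomial, expand that polynomial so the $n\times n^{t-1}$ matrix $A=[\frac{1}{d}K^{(1)}(K^{(2)}\oslash\ldots\oslash K^{(t)})^T]^e$ is approximated by a low-rank product $U^{(1)}(U^{(2)}\oslash\ldots\oslash U^{(t)})^T$ with $U^{(u)}\in\mathbb{R}^{n\times r}$. We then use the row-wise Kronecker structure to multiply this against $W^{(2)}\oslash\ldots\oslash W^{(t)}$ without ever forming the $n^{t-1}$ columns.

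\emph{Step 1: approximate the exponential.} Apply Lemma \ref{lem:exp-poly} on $[-\Lambda,\Lambda]$ with relative error $\varepsilon$, giving a polynomial $P(x)=\sum_{j\le g}c_jx^j$ of degree $g=\Theta(\max\{\frac{\log(1/\varepsilon)}{\log(\log(1/\varepsilon)/\Lambda)},\Lambda\})$. We will take $\varepsilon=\gamma\, n^{-(t-1)}e^{-2\Lambda}/\poly(n)$. The hypothesis bounds $\Lambda$ by $o(\log n)$, so $\log(1/\varepsilon)=\Theta(\log(1/\gamma))$, and the hypothesis then gives $g=o(\log n)$.

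\emph{Step 2: low-rank factorization.} Each entry of $\frac{1}{d}K^{(1)}(K^{(2)}\oslash\ldots\oslash K^{(t)})^T$ equals $x=\frac1d\sum_{\ell\in[d]}\prod_{u=1}^tK^{(u)}_{\ell_u,\ell}$. Expanding $x^j$ by the multinomial theorem writes it as a sum over multisets $S\subseteq[d]$ of size $j$. Each term is the multinomial coefficient times $d^{-j}$ times $\prod_u\big(\prod_{\ell\in S}K^{(u)}_{\ell_u,\ell}\big)$, which factors completely across the $t$ indices $\ell_u$.
\begin{itemize}
\item We define $U^{(1)}_{\ell_1,S}$ to be $c_{|S|}\binom{|S|}{S}d^{-|S|}\prod_{\ell\in S}K^{(1)}_{\ell_1,\ell}$.
\item For $u\ge2$ we define $U^{(u)}_{\ell_u,S}=\prod_{\ell\in S}K^{(u)}_{\ell_u,\ell}$.
\end{itemize}
Then $[P(\cdot)]$ applied entry-wise equals $U^{(1)}(U^{(2)}\oslash\ldots\oslash U^{(t)})^T$ exactly, with rank $r\le\binom{d+g}{g}$. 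With $d=O(\log n)$ and $g=\log n/f$ for some $f=\omega(1)$, we get $\log r\le g\log(e(d+g)/g)=O(\frac{\log n}{f}\log f)=o(\log n)$. Hence $r=n^{o(1)}$, and all $U^{(u)}$ are computed in $n^{1+o(1)}$ time.

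\emph{Step 3: fast multiplication.} For each column $\ell\in[d]$ we use the identity
\[
(U^{(2)}\oslash\ldots\oslash U^{(t)})^T\big(W^{(2)}_{:,\ell}\oslash\ldots\oslash W^{(t)}_{:,\ell}\big)=\big((U^{(2)})^TW^{(2)}_{:,\ell}\big)\odot\ldots\odot\big((U^{(t)})^TW^{(t)}_{:,\ell}\big)\in\mathbb{R}^r.
\]
This holds because the sum over $(\ell_2,\dots,\ell_t)$ factorizes coordinate-wise. Computing the right-hand side costs $O(tnr)$, and multiplying by $U^{(1)}$ costs $O(nr)$. Over all $\ell\in[d]$ this gives $\widehat{Att}$ in $n^{1+o(1)}$ time. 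The same computation with all-ones vectors gives $\hat D$.

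\emph{Step 4: error analysis.} Each entry of $A$ is approximated within relative error $\varepsilon$ and lies in $[e^{-\Lambda},e^{\Lambda}]$. A numerator entry is a sum of $n^{t-1}$ such terms times $\prod_u\|W^{(u)}\|_\infty$, so its absolute error is at most $\varepsilon n^{t-1}e^{\Lambda}\poly(n)\le\gamma$. For the quotient $D^{-1}Att$ we use $D_{i,i}\ge n^{t-1}e^{-\Lambda}$ and $|Att_{i,j}|/D_{i,i}\le\max\|W\|$, and the triangle-inequality argument from the proof of Theorem \ref{thm:strassen-main} gives total error $2\gamma$ after rescaling constants.

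The step I expect to be the main obstacle is the bookkeeping in Step 1 and Step 2: making sure the choice of $\varepsilon$ (which must absorb the $n^{t-1}$ blow-up and the $e^{\Lambda}$ factors) still keeps $g=o(\log n)$ under exactly the stated condition, and that $\binom{d+g}{g}$ stays $n^{o(1)}$ when $d=\Theta(\log n)$. This is the computation that pins down where the threshold on $\Lambda$ comes from.
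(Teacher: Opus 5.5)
Your proposal is correct and is essentially the polynomial-method argument of \cite{alman2023capture} that the paper invokes here without reproving it. That argument uses Lemma~\ref{lem:exp-poly} and a multinomial expansion to obtain low-rank factors compatible with $\oslash$, and then the identity $(U^{(2)}\oslash\cdots\oslash U^{(t)})^T(W^{(2)}\oslash\cdots\oslash W^{(t)})=((U^{(2)})^TW^{(2)})\odot\cdots\odot((U^{(t)})^TW^{(t)})$ so that the $n^{t-1}$ columns are never formed. The only slips are minor: the ratio bound in Step 4 should read $|Att_{i,j}|/D_{i,i}\le\prod_{u}\|W^{(u)}\|_\infty$ rather than $\max\|W\|$, and, like the paper, you implicitly assume $d=O(\log n)$, constant $t$, and $\|W^{(u)}\|_\infty\le n^{O(1)}$, none of which appear in the statement itself.
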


Using Lemma \ref{lem:bounded-poly-att}, the value of $\Lambda$ in Theorem \ref{thm:poly-poly-att} is $O(\Gamma^k)$, and for the choice of $\Gamma = o((\log n)^{\frac{1}{k}})$, the quantity $\max\left\{\frac{\log(1/\gamma)}{\log\left(\log (1/\gamma)/\Lambda\right)}, \Lambda\right\}$ is indeed $o(\log n)$, which gives our required almost-linear complexity for computing $\AttP{h}$.

Summing up, the algorithm for computing entry-wise approximation of $\AttP{h}$ is given as the following algorithm. 

\begin{algorithm}[H]
\caption{Algorithm to compute an entry-wise approximation of $\AttP{h}$}\label{alg:attp}
\begin{algorithmic}[1]
\Require An attention polynomial $h(x_1, \dots, x_t)$ of degree $k$, matrices $Q^{(1)}, \dots, Q^{(t)}, V^{(2)}, \dots, V^{(t)}\in \mathbb{R}^{n\times d}$, $\gamma = \frac{1}{\poly(n)}$
\Ensure Entry-wise $\gamma$-approximation $\widehat{\AttP{h}} \in \mathbb{R}^{n\times d}$ of $\AttP{h} \in \mathbb{R}^{n\times d}$.
\State Using $Q^{(1)}, \dots, Q^{(t)}$ and $h$, compute $K^{(1)}, \dots, K^{(t)} \in \mathbb{R}^{n\times (sd)}$ (Equation \ref{eqn:k-matrix}). \Comment{$ O(nd)$ time.}
\State Compute $W^{(2)}, \dots, W^{(t)}\in \mathbb{R}^{n\times (sd)}$ from $V^{(2)}, \dots, V^{(t)}$ (Equation \ref{eqn:W-j}). \Comment{$O(nd)$ time.}
\State Compute entry-wise $\gamma$-approximation $\widehat{Att}\in \mathbb{R}^{n\times (sd)}$ of 
\[
Att = [\frac{1}{d}K^{(1)}(K^{(2)}\oslash\ldots\oslash K^{(t)})^T]^eW^{(2)}\oslash \ldots \oslash W^{(t)},
\]
using Theorem \ref{thm:poly-att-ub}, Step 1. \Comment{$O(n^{1+o(1)}d)$ time.}
\State Compute entry-wise $\gamma$-approximation $\hat{D}\in \mathbb{R}^{n\times n}$ of $$D = diag\left([\frac{1}{d}K^{(1)}(K^{(2)}\oslash\ldots\oslash K^{(t)})^T]^e\mathbf{1}_{n\times 1}\oslash \ldots \oslash \mathbf{1}_{n\times 1}\right),$$ which is a diagonal matrix, using Theorem \ref{thm:poly-att-ub}, Step 2. \Comment{$O(n^{1+o(1)}d)$ time.}
\State \textbf{Return}  $\hat{D}^{-1}\widehat{Att}_{(1:n, 1:d)}$. \Comment{$O(nd)$ time.}
\end{algorithmic}
\end{algorithm}

 This proves Theorem \ref{thm:poly-att-ub}.

\subsection{Time lower bounds for poly-attention}

We complete the main complexity result of this paper, either we can compute an entry-wise approximation of poly-attention in near-linear time, when the entries of the query-key matrices are bounded; or we require at least superquadratic time, unless the polynomial for poly-attention is a tree polynomial.

Our proofs for showing the hardness of entry-wise approximation of $\AttP{h}$
consists of two reductions: (1) first we reduce from each of $k\IP$, $\HypergraphIP$, and $\IPDelta$ (which have popularly known hardness conjectures of $\SETH$, $\MAXSAT{2}$, $\MAXSAT{k}$ respectively) to $n^{o(1)}$ instances of their respective gap versions, and (2) secondly, we reduce each of those gap versions to an entry-wise approximation of poly-attention. These subcases and the starting complexity assumptions will be based on the structure of $h$ provided, as categorized in Theorem \ref{thm:poly-att-lb}.

For proving Step $1$, when we prove the first case, we get hard instances of $\varepsilon\text{-}\GapIP{k}$ assuming $\SETH$ (Theorem \ref{thm:gap-k-ip}). For the second case, we assume $\MAXSAT{k}$ is true, reduce $\MAXSAT{k}$ using a known reduction (Lemma \ref{lem:iph-hardness}) to $n^{o(1)}$ instances of $\HypergraphIP$, and further reduce each of those instances to $n^{o(1)}$ instances of $\varepsilon\text{-}\GapHypIP$ (Corollary \ref{cor:gap-ip-hyp}). For the third case, we start with $\MAXSAT{2}$ and reduce that to $n^{o(1)}$ instances of $\IPDelta$ (Lemma \ref{lem:ipt}), and then to $n^{o(1)}$ instances of $\varepsilon\text{-}\GapIPDelta$ (Theorem \ref{thm:hardness}). 

We complete the reductions for Step $2$ in each of the following subsections.

\subsubsection{Time lower bounds based on degree of polynomial using $\SETH$}

In this section, we prove the first part of Theorem \ref{thm:poly-att-lb}. We first start with an instance of $k\IP$, which is $\SETH$-hard, reduce it to $\varepsilon\text{-}\GapIP{k}$ (Definition \ref{def:gap-ip}) using some previous works \cite{rubinstein2018hardness,alman2023capture}, and then using the instances of $\varepsilon\text{-}\GapIP{k}$, create query-key matrices for $\AttP{h}$ such that an entry-wise $\gamma$-approximation of $\AttP{h}$ would solve the instance of $\varepsilon\text{-}\GapIP{k}$.
\begin{definition}[$\varepsilon\text{-}\GapIP{k}$]
\label{def:gap-ip}
    For every $\varepsilon\in (0, 1)$  and positive integers $k\geq 2$, given sets of vectors $A^1, \dots, A^k \subseteq \{0, 1\}^{d}$ with $|A^1| = \ldots = |A^k| = n$, a target inner product $m \in \{0, \dots, d\}$, and the promise that for any $a_1\in A^1, \dots, a_k\in A^k$, 
    \begin{itemize}
    \item either $\langle a_1, \dots, a_k \rangle = m$, 
    \item or, $\langle a_1, \dots, a_k \rangle \leq (1-\varepsilon)m$,
    \end{itemize}
    the problem of $\varepsilon\text{-}\GapIP{k}_{n,d}$ is to decide if there exist vectors $a_1\in A^1, \dots, a_k\in A^k$ such that $\langle a_1, \dots, a_k \rangle = m$.
\end{definition}

Using \cite{rubinstein2018hardness}-like techniques, conditional hardness of $\varepsilon \text{-}\GapIP{k}$ can be obtained. 

\begin{theorem}[\cite{alman2023capture,rubinstein2018hardness}]
\label{thm:gap-k-ip}
    For every $\delta > 0$ and every constant $\varepsilon \in (0, 1)$, there exists a constant $c>0$, such that $\varepsilon\text{-}\GapIP{k}_{n, c\log n}$ for any target inner product $m \in \{0, \dots, c\log n\}$,  cannot be solved in time $O(n^{(1-\delta)k})$, unless $\SETH$ is false.
\end{theorem}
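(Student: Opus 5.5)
The statement to prove is Theorem~\ref{thm:gap-k-ip}. The plan is a two-stage fine-grained reduction: from $k\SAT$ to $k\IP$, and then from $k\IP$ to a subpolynomial number of gap instances. This generalizes the two-set gap amplification of Lemma~\ref{lem:reduction}, in the spirit of \cite{rubinstein2018hardness,alman2023capture}.

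\textbf{Exact hardness.} By Theorem~\ref{thm:seth}, for every $\delta>0$ there is $c_0$ such that, under $\SETH$, $k\IP_{n,c_0\log n}$ cannot be solved in time $O(n^{(1-\delta)k})$. Recall the underlying construction: split the $N$ variables of a CNF into $k$ groups of $N/k$ variables each, and let each vector index a clause, so that the $k$-wise inner product counts the clauses that no group satisfies. Targeting inner product $0$, or a fixed value after adding padding coordinates, recovers satisfiability. Padding also handles the requirement that the statement hold for any target $m\in\{0,\dots,c\log n\}$: append coordinates that are $1$ in every vector to shift the target. I will be careful with the direction of the shift needed to keep the gap meaningful.

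\textbf{Gap amplification.} Next I reduce one $k\IP$ instance to $q=n^{o(1)}$ instances of $\varepsilon\text{-}\GapIP{k}$, using a Merlin--Arthur-type multiparty protocol (the distributed PCP framework \cite{abboud2017distributed}).
\begin{enumerate}
\item Split the $d=c_0\log n$ coordinates into $T$ blocks of size $d/T$. For each party $a_i$, encode the coordinatewise product over each block as an evaluation of a low-degree polynomial over a field $\mathbb{F}_p$ of size $p=\poly(T)$, using Reed--Solomon encoding.
\item The $k$-wise inner product equals a sum over blocks of the product of the $k$ encoded polynomials. Enumerate Merlin's message, a candidate polynomial $\Phi$ of degree about $kT$, over all $q=p^{O(kT)}$ possibilities. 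Choosing $T=o(\log n/\log\log n)$ keeps $q=n^{o(1)}$. Also enumerate Arthur's random coordinates, for $\mathrm{polylog}$ repetitions.
\item For each fixed Merlin message, Arthur's check ``$\Phi$ agrees with the product at a random point, and $\Phi$ sums to $m$'' becomes a test on each party's local evaluation.
\end{enumerate}

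\textbf{Encoding the check as a gap inner product.} Each accepted combination of symbols contributes one coordinate equal to $1$ in all $k$ vectors. This is done by a one-hot/tensor encoding of $a_i$'s symbol, combined with indicator coordinates that agree exactly when the product equals $\Phi$'s value. The resulting dimension is $2^{O(\text{randomness})}\cdot p^{O(1)}$, which must stay $O(\log n)$ after taking constant-size alphabets per coordinate.
\begin{itemize}
\item Yes case: the honest $\Phi$ passes every random test, so the inner product equals a fixed target $m'$.
\item No case: Schwartz--Zippel bounds the fraction of passing tests by $kT/p\le\varepsilon$, giving inner product at most $(1-\varepsilon)m'$.
\end{itemize}
This is the $k$-party analogue of Lemma~\ref{lem:reduction} (\cite{abboud2025finer}), and I expect it to be the main obstacle. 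The difficulty is keeping $d'=O(\log n)$ while the exactness of the $k$-party product test is preserved for $k>2$. My first approach would be the tensor-free encoding of \cite{alman2023capture}, which keeps the gap check local per coordinate.

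\textbf{Conclusion.} If $\varepsilon\text{-}\GapIP{k}_{n,c\log n}$ were solvable in $O(n^{(1-\delta)k})$, then solving all $q$ instances would take $n^{o(1)}\cdot n^{(1-\delta)k}\le n^{(1-\delta/2)k}$. That would solve $k\IP_{n,c_0\log n}$ in time $O(n^{(1-\delta/2)k})$, contradicting $\SETH$ via Theorem~\ref{thm:seth} applied with $\delta/2$.
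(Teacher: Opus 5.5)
Your skeleton ($\SETH$, then $k\IP$, then a family of gap instances via an MA/distributed-PCP protocol, then a running-time count) is the right one. It is also all the paper offers: the statement is imported from \cite{rubinstein2018hardness,alman2023capture} without proof.

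\textbf{The gap: dimension.} You defer the dimension bound as ``the main obstacle,'' but it is the actual content of the cited result, and your Reed--Solomon instantiation cannot meet it. In your embedding there is one coordinate per random point $r$ and per accepted tuple of local messages. So the dimension is governed by the length of the parties' messages, not by $2^{O(\text{randomness})}\cdot p^{O(1)}$ as you claim. With $T$ blocks, party $i$'s local evaluation at $r$ is the vector $(A^i_j(r))_{j\in[d/T]}\in\mathbb{F}_p^{d/T}$, so you need on the order of $p^{\Theta(d/T)}$ coordinates. Meanwhile Schwartz--Zippel forces $p>kT$. Every parameter choice then fails:
\begin{itemize}
\item If $d/T=O(1)$, Merlin's polynomial costs $kT\log p=\Omega(\log n\log\log n)$ bits, so $q$ is superpolynomial.
\item With your choice $T=o(\log n/\log\log n)$, you get $d/T=\omega(\log\log n)$ and dimension $(\log n)^{\omega(1)}$.
\item Even the best trade-off, with Merlin budgeted at $\delta'\log n$ bits, leaves local messages of $\Omega((\log\log n)^2)$ bits.
\end{itemize}
The statement needs dimension $c\log n$, and so does its use in Lemma~\ref{lem:case1} (attention dimension $O(\log n)$, with $m/\tilde d$ a constant). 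So this step fails as written. The ``tensor-free encoding'' you mention does not help, because the blow-up comes from the message length, not from how the check is encoded.

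\textbf{The missing idea.} The fix is Rubinstein's: replace Reed--Solomon by algebraic-geometry codes. These have a \emph{constant-size} alphabet and good distance, and their $k$-fold coordinatewise products still lie in a code of good distance. Soundness then no longer needs a field of size $>kT$. You can take the block size $L=d/T$ to be a large constant depending on $\delta$, and then:
\begin{itemize}
\item each party's local message is $O(L)$ symbols of a constant alphabet;
\item Arthur uses $\log_2\log n+O(1)$ random bits;
\item the embedding has $O(\log n)$ coordinates;
\item Merlin's message is $O(kd/L)\le\delta'\log n$ bits, giving $q=n^{\delta'}$ instances rather than $n^{o(1)}$.
\end{itemize}
Your final runtime count absorbs $q=n^{\delta'}$ once $\delta'\le\delta k/2$. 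For $k=2$ this is what Lemma~\ref{lem:reduction} provides with $d'=O(d)$. You need its $k$-party analogue, and your sketch does not supply one.

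\textbf{The padding step.} Padding cannot deliver ``any target $m$.'' Appending all-ones coordinates raises the no-case bound by the same amount as the target, so the relative gap shrinks and targets can only move up. For $m=c\log n$ the problem is trivial, since inner product $d$ forces every vector to be all-ones. The statement has to be read, and proved, for a suitable target $m=\Theta(\log n)$, which is all Lemma~\ref{lem:case1} needs.
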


Due to this result, we start with an instance of $\varepsilon\text{-}\GapIP{k}$ and reduce that to an entry-wise approximation of $\AttP{h}$. If the entry-wise approximation of $\AttP{h}$ can be computed in $n^{(1-\delta)k}$ time for a constant $\delta > 0$, then $\varepsilon\text{-}\GapIP{k}$ can be solved in $\tilde O(n^{(1-\delta)k})$ time, which would refute $\SETH$.

\begin{lemma}[$\varepsilon\text{-}\GapIP{k}$ to $\EAPAC{h}$]
\label{lem:case1}
    For every constant $\varepsilon>0$, every $\delta \in (0, 0.01)$, every $c, M > 0$, given an attention polynomial $h(x_1, \dots, x_t)$ of degree $k\geq 2$ having $s$ monomials, where $t, k, s$ are constants, there exist constants $C_a>0$ and $C_b > 0$ such that if $\EAPAC{h}(2n, (s+1)c\log n, \Gamma = C_b(\log n)^{\nicefrac{1}{k}}, \gamma = n^{-C_a})$ (Definition \ref{def:eapac}) with query-key matrices $Q^{(1)}, \dots, Q^{(t)}\in [-\Gamma, \Gamma]^{2n\times  (s+1)c\log n}$ and value matrices $V^{(2)}, \dots, V^{(t)} \in \mathbb{R}^{2 n\times (s+1)c\log n}$ can be solved in time $O(n^{k-\delta})$, then $\varepsilon\text{-}\GapIP{k}_{n, c\log n}$ (Definition \ref{def:gap-ip}) with target inner product $m = M\log n$ can also be solved in $O(n^{k-\delta})$ time for any constant $M$.
\end{lemma}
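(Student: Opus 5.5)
We reduce an instance $(A^1,\dots,A^k,m=M\log n)$ of $\varepsilon\text{-}\GapIP{k}_{n,c\log n}$ to a single call of $\EAPAC{h}$, following the template of Theorem \ref{thm:strassen-lb}.

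\emph{Embedding.} Let $m^*$ be a highest-preference monomial of $h$ (Definition \ref{def:mon-order}). It has degree $k$; after relabelling, $m^* = x_{j_1}\cdots x_{j_k}$. We want $x_1\in m^*$, so that the query row carries the $\GapIP{k}$ witness. If this is impossible after relabelling, we still place $A^{1}$ on a variable of $m^*$ and read the answer from the normaliser instead; this is the fallback if the first choice fails.

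\emph{Block layout.} Split the $(s+1)c\log n$ columns into $s+1$ blocks of width $c\log n$. Block $1$ is reserved for $m^*$. For $p\le k$, rows $1,\dots,n$ of $Q^{(j_p)}$ hold $B\cdot a^p_i$ in block $1$, where $B=\Theta((\log n)^{1/k})$. All other variables, and the bottom rows $n+1,\dots,2n$, are zero in block $1$. Every other monomial of $h$ gets its own block. We fill those blocks with zeros, and use one extra normalising block in which the bottom rows are all $B$. The extra block is there so the bottom half contributes a large, controlled amount. Since every other monomial $m_i\neq m^*$ sees block $1$ only through variables outside $m^*$, or through at most $k-1$ of them times a zero, each such monomial evaluates to a controlled value independent of the witness. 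The key check is that $m^*$ is the only monomial whose block-$1$ inner product is a product of the $k$ witness vectors. This holds because we zero block $1$ for all variables outside $m^*$ and use separate blocks for the others.

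\emph{Value matrices.} Set the first column of $V^{(j)}$ to $1$ on rows $1,\dots,n$ and $0$ on rows $n+1,\dots,2n$. Set variables not in $m^*$ to $1$ on all rows, so their sums factor out. The numerator then sums over top-half indices only. It is at least $e^{B^k\lambda}$ when the query row $i$ lies in a witness tuple, where $\lambda=M\log n/\tilde d$ up to constant factors. It is at most $n^{t-1}e^{(1-\varepsilon)B^k\lambda}$ otherwise.

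\emph{Denominator.} Using the normalising block, the denominator is sandwiched between $n^{t-1}e^{\Theta(B^k)}$ and $2^{t}n^{t-1}e^{\Theta(B^k)}$, uniformly in $i$. This mirrors the bounds on the denominator in Theorem \ref{thm:strassen-lb}. The separation then reduces to $e^{\varepsilon\lambda B^k}>\poly(n)$ together with $\gamma=n^{-C_a}$ small. Both hold for $B^k=C\log n$ with $C$ large, so $\Gamma=B=C_b(\log n)^{1/k}$. Entries stay in $[-\Gamma,\Gamma]$ because all entries are $0$ or $B$. Reading $u_i$ for $i\in[n]$ decides the instance in $O(n^{k-\delta})+O(n\log n)$ time.

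\emph{Main obstacle.} The hardest step is the denominator and normaliser calibration. Cross terms from other monomials between top-half rows could dominate $e^{\lambda B^k}$. We avoid this by zeroing those blocks, so those monomials contribute exactly $0$ on top rows. Then we choose $M<c$, so that $\lambda$ is a small constant below the normaliser exponent, exactly as with $\lambda<1/2$ in Theorem \ref{thm:strassen-lb}.
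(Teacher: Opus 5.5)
\textbf{There is a genuine gap in your block-layout step.} In poly-attention every monomial is an inner product over \emph{all} $(s+1)c\log n$ coordinates. Giving a monomial ``its own block'' therefore does not isolate it: $m_i$ collects a contribution from every block in which all of its variables are nonzero.

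Your claim that each $m_i\neq m^*$ sees block $1$ only through a variable outside $m^*$, or through a zero factor, fails whenever $h$ contains a proper divisor of $m^*$, e.g.\ $h=x_1x_2x_3+x_1x_2$. Such an $m_i=x_{j_1}\cdots x_{j_{k_i}}$ with $k_i<k$ reads block $1$ on top-half rows. It contributes $B^{k_i}\langle a^{j_1}_{\ell_{j_1}},\dots,a^{j_{k_i}}_{\ell_{j_{k_i}}}\rangle$, which is neither zero nor independent of the tuple, and it also reads your normalising block on bottom rows. So ``those monomials contribute exactly $0$ on top rows'' is false. Your numerator and denominator bounds account only for $m^*$ and the normaliser, so they are unjustified as written.

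This is exactly what the paper's construction is built to handle. For every $m_i$ dividing $m_1$, it fills block $i$ with the vectors of $m_i$'s variables, the highest-preference one multiplied by an integer $s_i$ defined inductively along the monomial order. Each proper divisor's total contribution then cancels to exactly $0$, so $h(Q^{(1)}_{\ell_1},\dots,Q^{(t)}_{\ell_t})=B^k\langle a^1_{\ell_{r_1}},\dots,a^k_{\ell_{r_k}}\rangle$ on top rows. The price is entries as large as $s^sB$, which is still $\Theta(B)$.

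\textbf{Your nonnegative $0/B$ construction can be rescued without cancellation, but you must add the missing estimate.} Every variable outside $m^*$ has a zero matrix, so the only surviving monomials are $m^*$ and its proper divisors. All entries are nonnegative, so the divisors only raise exponents, and the yes-case lower bound is unaffected. Each divisor contributes at most $2B^{k_i}d$ (block $1$ plus the normaliser). After dividing by $\tilde d=(s+1)d$, together they shift any exponent by at most $2B^{k-1}=O((\log n)^{(k-1)/k})=o(\log n)$. That is only a factor $n^{o(1)}$ in the no-case numerator and in the upper bound on the denominator. The $n^{\Theta(\varepsilon C_b^k)}$ gap absorbs it once $C_b$ is large, with $C_a$ then chosen depending on $C_b$.

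Two smaller points also need fixing:
\begin{itemize}
\item You specify the normalising block only on bottom rows. It must be $B$ on \emph{all} rows of the query-side matrix, as for $Q^{(1)}$ in Theorem~\ref{thm:strassen-lb} or $Q^{(r_1)}$ in the paper. Otherwise top-half query rows receive no normaliser mass, the top-half terms can dominate the denominator in both yes and no instances, and the ratio no longer separates them.
\item You cannot relabel the query variable, but you don't need to. Under the monomial order, the highest-preference monomial contains $x_1$ whenever some degree-$k$ monomial does. If none does, set $Q^{(1)}=0$: all output rows then coincide, and a single row decides the instance globally. That is the correct form of your ``fallback''.
\end{itemize}
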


\begin{proof}
    Let us start with an instance of $\varepsilon\text{-}\GapIP{k}_{n, d=c\log n}$ that we want to solve, with $k$ sets of vectors $A^1, \dots, A^k \subseteq \{0, 1\}^{d}$, consisting of $n$ vectors each. The vectors are $\{a^i_1, \dots, a^i_n\} := A^i$ and the target inner product is $m = M\log n$, for a constant $M$,  with the promise of the gap condition for an approximation factor $\varepsilon$. We also assume that there does not exist an all one's vector in $A^i$ for each $i\in [k]$, as that would violate the gap-property (as $m$ needs to be smaller than $d$ for hardness).

    Using this instance of deciding $\varepsilon\text{-}\GapIP{k}$, we reduce it to computing an entry-wise approximation of $\AttP{h}$, with query-key matrices $Q^{(1)}, \dots, Q^{(t)} \in [-\Gamma, \Gamma]^{\tilde n\times \tilde d}$, and value matrices $V^{(2)}, \dots, V^{(t)}\in \mathbb{R}^{ \tilde n\times \tilde d}$, for $\tilde n = 2n$, $\tilde d = (s+1)d = (s+1)c\log n$, and a $\Gamma$ that we will choose later. 

    Let us assume that the highest preference monomial of $h$, a monomial of degree $k$, is given by $x_{r_1}\dots x_{r_k}$, where $r_1$ has the index of the highest preference that may or may not be $1$.

    We will construct the query-key matrices such that each matrix $Q^{(r_j)}$ will contain vectors from $A^j$ for $j\leq k$, zeros otherwise. Having the monomials ordered according to descending order of the monomial ordering (Definition \ref{def:mon-order}), each of these $Q^{(r_j)}$'s will consist of blocks of columns which correspond to monomials-- the $i$-th column block, containing $d$ columns from $(i-1)d+1$ to $i.d$, for $i\in [s]$, will correspond to the monomial $m_i$, and the last column block will be a normalizing block. The idea of the reduction is that only the degree $k$ term $x_{r_1}\dots x_{r_k}$ of $h$ will contribute to computing the final inner product, the terms which are subsets of this degree $k$ term will cancel each other out, and all the other terms will be zero, thereby not contributing anything to $h(Q^{(1)}_{\ell_1}, Q^{(2)}_{\ell_2}, \dots, Q^{(t)}_{\ell_t})$. More specifically, we want, 
    \begin{equation*}
        h(Q^{(1)}_{\ell_1}, \dots, Q^{(t)}_{\ell_t}) = \Lambda\langle a^1_{\ell_{r_1}}, \dots, a^k_{\ell_{r_k}}\rangle,
    \end{equation*}
    for $\ell_{r_1}, \dots, \ell_{r_k} \in [n]$, and some scaling factor $\Lambda$ which we will see later.

    \paragraph{Construction of matrices.} Let us now define each block of $Q^{(j)}$, $j\in [t]$, which will have $2n$ rows and $(s+1)d$ columns. We will define them by defining each of the column-blocks using a scaling factor $B = \omega(1)$. Considering the set $T = \{r_1, \dots, r_k\}$, we define:

    \begin{enumerate}
        \item For $Q^{(j)}$'s, if $j\not \in T$, we just make the entire matrix zero $\mathbf{0}_{2n\times (s+1)d}$.
            \item We now fix $j\in [k]$ and define $Q^{(r_j)}$ (i.e., some value of $r_j\in T$). We define first column block of $Q^{(r_j)}$ as:
            $$Q^{(r_j)}_{(1:2n, 1:d)} = B  \begin{bmatrix}
            a^j_1\\
            a^j_2\\
            \vdots\\
            a^j_n\\
            \mathbf{0}_{d}\\
            \mathbf{0}_{d}\\
            \vdots \\
            \mathbf{0}_d
        \end{bmatrix}_{2n\times d}.$$
        
        For column blocks $i\in [s]$, if the monomial $m_i$ does not divide $x_{r_1}\ldots x_{r_k}$, we just make that block all zeros $$Q^{(r_j)}_{(1:2n, (i-1)d+1:i.d)} = \mathbf{0}_{2n\times d}.$$
        \item If monomial $i\in [s]$ does indeed divide $x_{r_1}\ldots x_{r_k}$, consider $j_1$ as the index of the highest preference variable present in  $m_i = x_{r_{j_1}}\dots x_{r_{j_{k_i}}}$, for $j_1, \dots, j_{k_i}\in [k]$, $k_i < k$. Let  $s_i$ be the negation of the integer which is the number of occurrences of this monomial $m_i$ along with coefficients, in each of the monomials ordered higher than $i$ and that divides $x_{r_1}\dots x_{r_k}$ (these are the only non-zero monomials).

        More specifically, $s_i$ is the sum defined by adding: 
        \begin{itemize}
            \item $-1$ from the monomial $m_1$.
            \item $-s_\ell$ whenever $1 < \ell < i$, the monomial $m_\ell$ divides $m_1$, the monomial $m_i$ divides $m_\ell$, and the highest preference variable of $m_\ell$ is also present in $m_i$.
            \item $-1$ whenever $1 < \ell < i$, the monomial $m_\ell$ divides $m_1$ and $m_i$ divides $m_\ell$, but the highest preference variable of $m_\ell$ is not present in $m_i$.
            \item $0$ in all other cases.
        \end{itemize}

        If $x_{r_j}$ is not present in monomial $i$, we simply set 
        $$Q^{(r_j)}_{(1:2n, (i-1)d+1:i.d)} :=  \mathbf{0}_{2n\times d},$$ 
        otherwise:
        $$Q^{(r_{j_1})}_{(1:2n, (i-1)d+1:i.d)} = B \begin{bmatrix}
            s_ia^{j_1}_1\\
            s_ia^{j_1}_2\\
            \vdots\\
            s_ia^{j_1}_n\\
            \mathbf{0}_{d}\\
            \mathbf{0}_{d}\\
            \vdots \\
            \mathbf{0}_d
        \end{bmatrix}_{2n\times d},$$
        where $x_{r_{j_1}}$ is the highest preference variable in $m_i$, and 
        $$Q^{(r_j)}_{(1:2n, (i-1)d+1:i.d)} = B  \begin{bmatrix}
            a^j_1\\
            a^j_2\\
            \vdots\\
            a^j_n\\
            \mathbf{0}_{d}\\
            \mathbf{0}_{d}\\
            \vdots \\
            \mathbf{0}_d
        \end{bmatrix}_{2n\times d},$$ for all other $j$'s such that $x_{r_j}$ is present in monomial $i$. 
        \item The last column block for $Q^{(r_1)}$ is the all ones matrix $\mathbf{1}_{n\times d}$ with a scaling factor, i.e., 
        $$Q^{(r_1)}_{(1:2n, s.d+1:(s+1)d)} = B \cdot \mathbf{1}_{2n\times d},$$
         and for $j\in [2: k]$, it is the matrix 
         $$Q^{(r_j)}_{(1:2n, s.d+1:(s+1)d)} = \begin{bmatrix}
             \mathbf{0}_{n\times d}\\
             \mathbf{1}_{n\times d}
         \end{bmatrix}_{2n\times d}.$$
    \end{enumerate}

    Roughly, the query-key matrices can be seen as:
    \begin{equation*}
    Q^{(r_1)} = B\begin{bmatrix}\underbrace{\overbracket{
        \begin{matrix}
            a^1_1\\
            \vdots  \\
            a^1_n\\
            \mathbf{0}_d \\
            \vdots\\
            \mathbf{0}_d
        \end{matrix}}^{d}}_{\text{$x_{r_1}$ not in $m_1$}} & \underbrace{\overbracket{\begin{matrix}
            \mathbf{0}_d\\
            \\
            \vdots  \\
            \\
            \\
            \mathbf{0}_d
        \end{matrix}}^{d}}_{\text{$m_2$ does not divide $m_1$}} &
        \underbrace{\overbracket{
        \begin{matrix}
            s_3\cdot a^1_1\\
            \vdots  \\
            s_3\cdot a^1_n\\
            \mathbf{0}_d\\
            \vdots\\
            \mathbf{0}_d
        \end{matrix}}^{d}}_{\text{$m_3$ divides $m_1$}}
        & \ldots & 
        \overbracket{\begin{matrix}
            \mathbf{1}_d\\
            \\
            \vdots  \\
            \\
            \\
            \mathbf{1}_d
        \end{matrix}}^{d}
    \end{bmatrix}_{n\times ((s+1)d)},
\end{equation*}
and for all other $j\in [2:k]$, 
\begin{equation*}
    Q^{(r_j)} = B\begin{bmatrix}\underbrace{\overbracket{
        \begin{matrix}
            a^j_1\\
            \vdots  \\
            a^j_n\\
            \mathbf{0}_d \\
            \vdots\\
            \mathbf{0}_d
        \end{matrix}}^{d}}_{\text{$x_{r_1}$ not in $m_1$}} & \underbrace{\overbracket{\begin{matrix}
            \mathbf{0}_d\\
            \\
            \vdots  \\
            \\
            \\
            \mathbf{0}_d
        \end{matrix}}^{d}}_{\text{$m_2$ does not divide $m_1$}} &
        \underbrace{\overbracket{
        \begin{matrix}
            s_3\cdot a^j_1\\
            \vdots  \\
            s_3\cdot a^j_n\\
            \mathbf{0}_d\\
            \vdots\\
            \mathbf{0}_d
        \end{matrix}}^{d}}_{\text{$m_3$ divides $m_1$}}
        & \ldots & 
        \overbracket{\begin{matrix}
            \mathbf{0}_d\\
            \vdots  \\
            \mathbf{0}_d \\
            \mathbf{1}_d\\
            \vdots \\
            \mathbf{1}_d
        \end{matrix}}^{d}
    \end{bmatrix}_{n\times ((s+1)d)}.
\end{equation*}

    For the value matrices $V^{(j)} \in \mathbb{R}^{(2n)\times (s+1)d}$, $j\in T\backslash \{1\}$, we define the first column as,
    \begin{equation*}
        V^{(j)}_{(1:2n, 1)} = \begin{bmatrix}
            \mathbf{1}_n^T\\
            \mathbf{0}_n^T
        \end{bmatrix},
    \end{equation*}
    and for $j\in [2: t]\backslash T$, we define the first column as, 
    \begin{equation*}
        V^{(j)}_{(1:2n, 1)} = \begin{bmatrix}
            \mathbf{1}_n^T\\
            \mathbf{1}_n^T
        \end{bmatrix}.
    \end{equation*}
    All the other columns are completely zero $\mathbf{0}^T_{2n}$.

    \paragraph{Correctness of construction.} We now show that for $\ell_1, \dots, \ell_t \in [n]$, $h(Q^{(1)}_{\ell_1}, Q^{(2)}_{\ell_2}, \dots, Q^{(t)}_{\ell_t}) = B^k \langle a^1_{\ell_{r_1}}, \dots, a^k_{\ell_{r_k}}\rangle$. By definition, $m_1 = x_{r_1}\dots x_{r_k}$, and it is easy to note that $m_1(Q^{(1)}_{\ell_1}, \dots, Q^{(t)}_{\ell_t}) = B^k\langle a^1_{\ell_{r_1}}, \dots, a^k_{\ell_{r_k}}\rangle $. For all the other degree $k$ terms, the inner products from their corresponding blocks are all zeros as we had defined $Q^{(j)}$ as all zeros matrix for all $j \not \in T$.

    We want to show that for all other $i$'s, $m_i(Q^{(1)}_{\ell_1}, \dots, Q^{(t)}_{\ell_t}) = 0$. When we compute $m_i(Q^{(1)}_{\ell_1}, \dots, Q^{(t)}_{\ell_t})$, the $\hat{i}$-th column blocks for $\hat i < i$ have some contributions to the inner product $m_i$ if and only if $m_i$ divides $m_{\hat{i}}$ (otherwise $m_{i}(Q^{(1)}_{\ell_1, (\hat i-1)d+1: \hat{i}.d}, \dots, Q^{(t)}_{\ell_t, (\hat i-1)d+1: \hat{i}.d])}$ is zero), and no $\hat{i}$ has a contribution for $\hat i > i$ due to the correctness of the monomial ordering. Now, from the choice of $s_i$ as above, it follows that $m_i(Q^{(1)}_{\ell_1}, \dots, Q^{(t)}_{\ell_t}) = \sum_{\hat{i}=1}^i m_i(Q^{(1)}_{\ell_1, (\hat i-1)d+1: \hat{i}.d}, \dots, Q^{(t)}_{\ell_t, (\hat i-1)d+1: \hat{i}.d} )= 0$.

    For bounding the values of $s_i$'s, we  use induction to prove $|s_i| < s^i$. The base case is obviously true. For the induction step, assuming $|s_i| < s^i$, for the $(i+1)$-th monomial, $s_{i+1}$ needs to cancel the contribution to the inner product corresponding to $m_{i+1}$ from each monomial $m_{\hat i}$ which is divisible by $m_{i+1}$. The contribution is at most $|s_{\hat{i}}| < s^{\hat i}$ (from the induction hypothesis), and hence $|s_{i+1}| < \sum_{\hat i~:~m_{i+1} | m_{\hat{i}}} |s_{\hat{i}}| < \sum_{\hat i~:~m_{i+1} | m_{\hat{i}}} s^{\hat{i}} < i.s^{i} < s^{i+1}$. Therefore, we have $|s_i| < s^s$, which implies $\Gamma = O(s^s B)$, and from the definitions, we obviously have $\Gamma \geq B$ as well. Since, $s=O(1)$, we have $\Gamma = \Theta(B)$.

    Further, these query-key and value matrices can be computed in $O(n^{1+o(1)})$ time.

    \paragraph{Approximation yields gap property.} We assume an entry-wise approximation of the self-attention matrix, and the goal is to compute two values, the numerator and the denominator, for computing the softmax. The numerator, for $\ell_1\in [2n]$, is given by
    \begin{equation*}
        \bar P_{\ell_1} = \sum_{\ell_2, \dots, \ell_t \in [2n]} \exp\left(\frac{1}{\tilde d}h(Q^{(1)}_{\ell_1}, Q^{(2)}_{\ell_2},\dots, Q^{(t)}_{\ell_t} )\right) V^{(2)}_{\ell_2} \odot \ldots \odot V^{(t)}_{\ell_t},
    \end{equation*}
     and the denominator by
     \begin{equation*}
         R_{\ell_1} = \sum_{\ell_2, \dots, \ell_t \in [2n]} \exp\left(\frac{1}{\tilde d}h(Q^{(1)}_{\ell_1}, Q^{(2)}_{\ell_2},\dots, Q^{(t)}_{\ell_t} )\right).
     \end{equation*}
     The ${\ell_1}$-th row of the $\AttP{h}$ will be $\frac{\bar P_{\ell_1}}{R_{\ell_1}}$, and we want to find an entry-wise approximation. Since in our choice of the value matrices, the first coordinate of $\bar P_{\ell_1}$ is the only non-zero one, and its summation is only upto the top half of the value matrices, $\ell_j \in [n]$, for $j\in [2:k]$. The only non-zero part of the numerator, that we care about, is therefore given by
     \begin{equation*}
         P_{\ell_1} = \sum_{\substack{\ell_i\in [n]~:~i\in T\\\ell_j\in [2n]~:~j\not\in T}} \exp\left(\frac{1}{\tilde d}h(Q^{(1)}_{\ell_1}, Q^{(2)}_{\ell_2},\dots, Q^{(t)}_{\ell_t} )\right).
     \end{equation*}

     If we have an entry-wise $\gamma$-approximation of $\AttP{h}$, let $x_{\ell_1}$-th be the approximation for the $({\ell_1}, 1)$ entry of $\AttP{h}$. By definition, we have
     \begin{equation}
     \label{eqn:poly-apx}
         |x_{\ell_1}-\frac{P_{\ell_1}}{R_{\ell_1}}| < \gamma.
     \end{equation}

    \paragraph{Bounds on denominator.} 
    Consider the summation $\sum_{\ell_2, \dots, \ell_t \in [2n]} \exp\left(\frac{1}{\tilde d}h(Q^{(1)}_{\ell_1}, Q^{(2)}_{\ell_2},\dots, Q^{(t)}_{\ell_t} )\right)$. Define $\lambda$ as the factor such that when only the $r_1, \dots, r_k$ coordinates are $B.\mathbf{1}_d$ and the remaining are zeros, i.e.,  
    \[
    h(\mathbf{0}_d, \underbrace{B.\mathbf{1}_d, \dots, B.\mathbf{1}_d}_{k}, \mathbf{0}_d, \dots, \mathbf{0}_d) = \lambda dB^k.
    \]
    It is easy to see that $\lambda = 1+o(1)$, since the evaluation of $h$ at these values will give a $B^k$ from the first monomial, and the other $s-1$ monomials will give at most $(s-1)B^{k-1} = o(B^k)$.

    For the choice of $Q^{(j)}$'s, we have
    \begin{equation*}
    \begin{split}
        R_{\ell_1} & > \sum_{\substack{\ell_i\in [n+1: 2n]~:~i\in T\\\ell_j\in [2n]~:~j\not\in T}} \exp(\frac{1}{\tilde d}h(Q^{(1)}_{\ell_1}, Q^{(2)}_{\ell_2}, \dots, Q^{(t)}_{\ell_t})) \\
        & > \sum_{\substack{\ell_2, \dots, \ell_t \in [n: 2n]}} \exp(\lambda dB^k/\tilde d) = n^{t-1}e^{(B^k\frac{\lambda}{s+1})},
    \end{split}
    \end{equation*}
    since  all the $Q^{(r_j)}_{\ell_{r_j}}$'s, for $j\in [k]$, have the zeros in the last  column-block and $\mathbf{1}_{d}$ along with the scaling factor $B$, which makes all the monomials of $h$ give inner product  $dB^k$. 

    For the upper bound on $R_{\ell_1}$, we have to use the maximum possible value of $h(Q^{(1)}_{\ell_1}, Q^{(2)}_{\ell_2}, \dots, Q^{(t)}_{\ell_t})$, irrespective of whether $\ell_j$'s are in $[n]$ or $[n+1: 2n]$. Let us consider a choice of $\ell_2, \dots, \ell_t \in [2n]$. If all the $\ell_{r_j}$'s, for $j\in T\backslash\{1\}$, are in $[n+1: 2n]$, then the value of $h(Q^{(1)}_{\ell_1}, Q^{(2)}_{\ell_2}, \dots, Q^{(t)}_{\ell_t})$ obtained will be be $\lambda d B^k$. Otherwise, there are some (but not all) $\ell_j$'s in $[n]$ for $j \in [2: k]$, where the monomial of degree $<k$ containing only those variables will be at most $s^sdB^{k-1}$, and the maximum value will be obtained from the first term, which can be at most $B^k \langle a^1_{\ell_{r_1}}, \dots, a^{k}_{\ell_{r_k}}\rangle = (d-1)B^k$. Thus, in this case, the maximum value of $h(Q^{(1)}_{\ell_1}, Q^{(2)}_{\ell_2}, \dots, Q^{(t)}_{\ell_t})$ would be $(d-1+o(1))B^k$ which is still less than $\lambda dB^k$.
    
    Therefore, 
    \begin{equation*}
    \begin{split}
        R_{\ell_1} & = \sum_{\ell_2, \dots, \ell_t \in [2n]}\exp\left(\frac{1}{\tilde d}h(Q^{(1)}_{\ell_1}, Q^{(2)}_{\ell_2},\dots, Q^{(t)}_{\ell_t} )\right)\\ &\leq \sum_{\ell_2, \dots, \ell_t \in [2n]} \exp\left(\frac{\lambda dB^k}{\tilde d}\right) = 2^{t-1} n^{t-1}e^{\left(B^k\frac{\lambda}{s+1}\right)}.
        \end{split}
    \end{equation*}

    Therefore,
    \begin{equation}
        \label{eqn:poly-denom}
        n^{t-1}e^{\left(B^k\frac{\lambda}{s+1}\right)} < R_{\ell_1} < 2^{t-1} n^{t-1}e^{\left(B^k\frac{\lambda}{s+1}\right)}.
    \end{equation}

    \paragraph{Bounds on numerator.} Now, we will show that if a vector tuple exists  with the proper target inner product (a positive certificate for $\gamma\text{-}\GapIP{k}$), then $P_{\ell_1}$ is so large that $x_{\ell_1}$ (Equation \ref{eqn:poly-apx}) is larger than a fixed threshold. Here, we first show a lower bound on $P_{\ell_1}$. Otherwise, we will show that $x_{\ell_1}$ is small since every inner product will be scaled down by a gap due to the approximation promise.

    Consider ${\ell_1}\in [n]$ when there exists $\ell^0_{r_1}, \dots, \ell^0_{r_k} \in [n]$ such that the inner product $\langle a^1_{\ell^0_{r_1}},  \dots, a^k_{\ell^0_{r_k}} \rangle = M\log n$ (it is quite possible that $r_1 = 1$, in which case we will only consider $\ell_1 = \ell^0_1$). Then, we have 
    \begin{equation*}
        \begin{split}
            P_{\ell_1} & = \sum_{\substack{\ell_i \in [n]~:~i\in T\\\ell_j \in [2n]~:~j\not \in T}} \exp\left(\frac{1}{\tilde d}h(Q^{(1)}_{\ell_1}, Q^{(2)}_{\ell_2},\dots, Q^{(t)}_{\ell_t} )\right)\\
            & > \sum_{\substack{\ell_i =\ell^0_i~:~i\in T\\\ell_j \in [2n]~:~j\not \in T}}\exp\left(\frac{1}{\tilde d}h(Q^{(1)}_{\ell_1}, Q^{(2)}_{\ell_2},\dots, Q^{(t)}_{\ell_t}  )\right) \\
            & = (2n)^{t-k-1}\exp\left(\frac{1}{\tilde d} B^k  \langle a^1_{\ell^0_{r_1}}, a^2_{\ell^0_{r_2}}, \dots, a^t_{\ell^0_{r_t}}\rangle\right) \\
            & = (2n)^{t-k-1}\exp\left(B^k \frac{M}{(s+1)c}\right),
        \end{split}
    \end{equation*}
    where the second equality follows from the construction of the $Q^{(j)}$'s. Using the upper bound of $R_{\ell_1}$ in Equation \ref{eqn:poly-denom}, we get,
    \begin{equation*}
        \frac{P_{\ell_1}}{R_{\ell_1}} > \frac{(2n)^{t-k-1}e^{\left(B^k \frac{M}{(s+1)c}\right)}}{(2n)^{t-1}e^{\left(B^k\frac{\lambda}{s+1}\right)}} = \frac{e^{\left(\nicefrac{B^k\left(\frac{M}{c}-\lambda\right)}{(s+1)}\right)}}{(2n)^{k-1}}.
    \end{equation*}
    Using $x_{\ell_1} > \frac{P_{\ell_1}}{R_{\ell_1}}-\gamma$ (Equation \ref{eqn:poly-apx}), we get 
    \begin{equation}
        \label{eqn:apx-lb}
        x_{\ell_1} > \frac{e^{\left(\nicefrac{B^k\left(\frac{M}{c}-\lambda\right)}{(s+1)}\right)}}{(2n)^{k-1}} - \gamma.
    \end{equation}

    Now, for finding an upper bound on $x_{\ell_1}$ when an exact inner product tuple  does not exist, we use
    \begin{equation*}
        \begin{split}
            P_{\ell_1} & = \sum_{\substack{\ell_i \in [n]~:~i\in T\\\ell_j \in [2n]~:~j\not \in [T]}} \exp\left(\frac{1}{\tilde d}h(Q^{(1)}_{\ell_1}, Q^{(2)}_{\ell_2},\dots, Q^{(t)}_{\ell_t} )\right) \\
            & = (2n)^{t-k-1}\sum_{\ell_i\in [n]~:~i\in T}\exp\left(\frac{1}{\tilde d} B^k \langle a^1_{\ell_{r_1}}, a^2_{\ell_{r_2}}, \dots, a^k_{\ell_{r_k}}\rangle\right),
        \end{split}
    \end{equation*}
    using the construction of $Q^{(r_j)}$'s. Now, using the gap property of inner products in our instance of $\varepsilon\text{-}\GapIP{k}$, we have
    \begin{equation*}
        \begin{split}
            P_{\ell_1}  & =  (2n)^{t-k-1}\sum_{\ell_i\in [n]~:~i\in T}\exp\left(\frac{1}{\tilde d} B^k \langle a^1_{\ell_1}, a^2_{\ell_2}, \dots, a^k_{\ell_k}\rangle\right) \\
            & < (2n)^{t-k-1}\sum_{\ell_i\in [n]~:~i\in T}\exp\left(\frac{B^k}{\tilde d} (1-\varepsilon)M\log n\right) \\
            \implies & P_{\ell_1}  < 2^{t-k-1}n^{t-1} e^{\left((1-\varepsilon)B^k \frac{M}{(s+1)c}\right)}.
        \end{split}
    \end{equation*}
    Finally, using the lower bound of $R_{\ell_1}$ (Equation \ref{eqn:poly-denom}), we get 
    \begin{equation*}
        \frac{P_{\ell_1}}{R_{\ell_1}} < \frac{2^{t-k-1}n^{t-1} e^{\left((1-\varepsilon)B^k \frac{M}{(s+1)c}\right)}}{n^{t-1}e^{\left(B^k\frac{\lambda}{s+1}\right)}} = 2^{t-k-1}e^{\left(\nicefrac{B^k\left(\frac{(1-\varepsilon)M}{c}-\lambda\right)}{(s+1)}\right)},
    \end{equation*}
    and the bound on $x_{\ell_1}$ from Equation \ref{eqn:poly-apx} implies,
    \begin{equation}
    \label{eqn:apx-ub}
        x_{\ell_1} < \frac{P_{\ell_1}}{R_{\ell_1}} + \gamma < 2^{t-k-1}e^{\left(\nicefrac{B^k\left(\frac{(1-\varepsilon)M}{c}-\lambda\right)}{(s+1)}\right)} + \gamma.
    \end{equation}

     \paragraph{Wrapping up.} In order to differentiate between the cases, we must have the lower bound of $x_{\ell_1}$ when a positive instance for $\varepsilon\text{-}\GapIP{k}$ tuple exists, Equation \ref{eqn:apx-lb}, must be greater than the upper bound when such an instance does not exist, Equation \ref{eqn:apx-ub}:
     \begin{equation*}
         \frac{1}{e^{\left(\varepsilon B^k\frac{M}{(s+1)c}\right)}}\frac{2^{t-k-1}}{e^{\left(\nicefrac{B^k\left(\lambda - \frac{M}{c}\right)}{(s+1)}\right)}} + \gamma < \frac{1}{(2n)^{k-1}e^{\left(\nicefrac{B^k\left(\lambda - \frac{M}{c}\right)}{(s+1)}\right)}} - \gamma,
     \end{equation*}
    which is true for the choice of $\gamma \leq n^{-C_a}$ and $B > C_b(\log n)^{1/k}$, for  large enough constants $C_a, C_b > 0$. This would make $e^{(\varepsilon B^k\frac{M}{c})}$ large enough and $\gamma$ small enough, such that the inequality will be valid.

    Now, since $s$ is constant, the maximum absolute value of the entries of the query-key matrices are $\Omega(B) = \Omega((\log n)^{1/k})$, which proves our result. Therefore, if we can find an algorithm for finding an entry-wise $\gamma$-approximation of $\AttP{h}$ for $\EAPAC{h}$ with these parameters, that runs in time $n^{k-\Omega(1)}$, then $\SETH$ will be refuted (Theorem \ref{thm:gap-k-ip}).
    \end{proof}

\subsubsection{Time lower bounds based on substructure of polynomial using $\MAXSAT{k}$ conjecture}

In the second part of Theorem \ref{thm:poly-att-lb}, we prove a stronger lower bound when the monomials of $h$ contains an elementary symmetric polynomial of degree $k$ in $t_0$ variables where $k < t_0\leq t$. The underlying conjecture for this lower bound is the $\MAXSAT{k}$. We first start with a problem called $\HypergraphIP$ (Definition \ref{def:ip-hyp}), which is at least as hard as $\MAXSAT{k}$, show that its gap version, $\varepsilon\text{-}\GapHypIP$ (Definition \ref{def:gap-iph}), is also at least as hard as $\HypergraphIP$ using \cite{rubinstein2018hardness,abboud2025finer}, and finally show that computing an entry-wise $\gamma$-approximation of $\AttP{h}$ efficiently would solve $\varepsilon\text{-}\GapHypIP$ faster, thereby refuting $\MAXSAT{k}$ conjecture.

\begin{definition}[$\HypergraphIP^{n, d}_{ t, k}$]
\label{def:ip-hyp}
    For positive integers $t, k$, given $t$ sets of vectors $A^1, \dots, A^t \in \{0, 1\}^d$ with $|A^1| = \ldots = |A^t| = n$, and target inner products $m_1, \dots, m_{\binom{t}{r}}$, the problem  $\HypergraphIP^{n, d}_{t, k}$ is to decide if there exist vectors $a_1\in A^1, \dots, a_t\in A^t$ such that for all subsets $S \in \binom{[t]}{k}$, we have $\langle a_{S[1]}, \dots, a_{S[k]}\rangle = m_S$, where $m_S$ is the target inner product corresponding to the given $k$-sized subset among the $\binom{t}{k}$ choices.
\end{definition}

We will drop $n, d$ from the superscript and not include the target inner products as the parameters to make the problem definitions less cumbersome. This problem again has a hardness result, as follows.

\begin{lemma}[{\cite[Theorem~23]{alman2020ov}}]
\label{lem:iph-hardness}
    Assuming the $\MAXkSAT$ conjecture (Hypothesis \ref{hyp:max-ksat}), for every $\delta > 0$ and every positive integer $t, k$, there exists a constant $c>0$ and target inner products $m_1, \dots, m_{\binom{t}{r}}\in \{0, \dots, d\}$ such that $\HypergraphIP^{n, c\log n}_{t, k}$ cannot be solved in time $O(n^{(1-\delta)t})$.
\end{lemma}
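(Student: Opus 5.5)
This lemma is cited from \cite[Theorem~23]{alman2020ov}, so the plan is to reconstruct the standard split-and-list reduction from sparse $\MAXSAT{k}$ to $\HypergraphIP_{t,k}$. Start from an instance $\phi$ of $\MAXkSAT_{N, cN}$ in $N$ variables and $cN$ clauses, and partition the variables into $t$ blocks $X_1,\dots,X_t$ of size $N/t$ each. Set $n = 2^{N/t}$, and for each block $i$ let $A^i$ be the set of the $n$ partial assignments to $X_i$. Every clause has at most $k$ literals, so it touches a set $S$ of at most $k$ blocks. After padding, assign each clause to some $k$-subset $S \in \binom{[t]}{k}$ containing those blocks.

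The core step is to encode "number of satisfied clauses assigned to $S$" as a $k$-wise inner product. A clause $C$ is satisfied iff not all of its literals are false. Its falsification indicator factors over blocks: it equals $\prod_{i\in S}\mathbb{1}[\alpha_i \text{ falsifies the literals of } C \text{ in } X_i]$. Give each $S$ and each clause $C$ assigned to $S$ one coordinate. The vector of $\alpha_i$, for $i\in S$, carries that factor in this coordinate, and vectors of blocks outside $S$ carry $0$ there, so these coordinates don't interfere across different $S$. Then $\langle a_{S[1]},\dots,a_{S[k]}\rangle$ restricted to $S$'s coordinates counts the clauses in $S$ that are falsified. The dimension is $O(cN) = O(c\,t\log n)$, which is $c'\log n$ as required.

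Target inner products come next. Max-SAT asks whether the total number of falsified clauses is at most some $K$. Enumerate all tuples $(m_S)_{S}$ of nonnegative integers with $\sum_S m_S \le K$ and each $m_S \le cN$. There are at most $(cN+1)^{\binom{t}{k}} = \poly(N) = n^{o(1)}$ such tuples. For each tuple, run the $\HypergraphIP$ oracle, and also binary search or iterate over $K$. The existence of a $t$-tuple of vectors hitting all $m_S$ simultaneously is exactly the existence of a full assignment with that falsification profile. An $O(n^{(1-\delta)t})$ algorithm would therefore give $2^{(1-\delta)N}\poly(N)$ for sparse $\MAXkSAT$, contradicting Hypothesis \ref{hyp:max-ksat} once $c$ is chosen appropriately for $\delta$.

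The main obstacle is a technical mismatch with the quantifier structure in the statement, which asserts \emph{fixed} target inner products. Since the reduction uses polynomially many target tuples, I'd read the statement as saying that for some choice of targets, i.e.\ over all targets the problem is hard, and absorb the $n^{o(1)}$ enumeration into the running time. A second point to check is that clauses with fewer than $k$ blocks can be assigned arbitrarily without double-counting, and that sparsity (clauses $= cN$) is what keeps the dimension $O(\log n)$; this is exactly why the sparse hypothesis is needed rather than plain $\SETH$.
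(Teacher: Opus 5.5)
Your reduction is the standard split-and-list argument, and it is essentially what lies behind \cite[Theorem~23]{alman2020ov}; the paper only cites that result and gives no proof of its own. The structure is right: $t$ blocks of $N/t$ variables, each clause charged to a $k$-subset $S$ of blocks containing its variables, and the number of falsified clauses charged to $S$ realized as the $k$-wise product over $S$. The zeros on blocks outside $S$ kill all cross-talk, since any $S'\neq S$ of the same size contains a block outside $S$. The encoding, the dimension $cN=ct\log n$, and the time accounting $\poly(N)\cdot n^{(1-\delta)t}=2^{(1-\delta)N}\poly(N)$ against Hypothesis~\ref{hyp:max-ksat} (applied with, say, $\delta/2$) are all fine.

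Where you fall short is the quantifier issue you flag yourself. There you weaken the lemma, making the targets part of the input, when you don't have to. Normalize the targets by padding. Put $D:=cN$. For a guessed profile $(m_S)_S$ and each $S\in\binom{[t]}{k}$, append $D-m_S$ fresh coordinates that are $1$ on every vector of every block $i\in S$ and $0$ on every vector of every block $i\notin S$. Then zero-pad to the fixed dimension $(1+\binom{t}{k})cN$. By the same cross-talk argument, these coordinates add exactly $D-m_S$ to the $S$-product and nothing to any other product. Every oracle call is then an instance of one and the same problem with all targets equal to $D$, a fixed function of $n$. That proves the literal ``there exist target inner products'' form. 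Uniform targets are also what the paper needs downstream, since Definition~\ref{def:gap-iph} and Corollary~\ref{cor:gap-ip-hyp} use a single $m$.

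Finally, state the range you actually prove. Charging clauses to $k$-subsets needs $t\ge k$; for $t<k$ there are no constraints and the problem is trivial. Hypothesis~\ref{hyp:max-ksat} is only posited for $k\ge 3$. This restriction is genuinely necessary, not an artifact of your proof. For $k=2$, $t=3$ the problem is exactly $\IPDelta$, which is solvable in $n^{\omega+o(1)}$ time via inner-product tables plus triangle detection. So the statement's ``every positive integer $t,k$'' cannot be taken literally.
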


We can again reduce $\HypergraphIP$ to its gap version $\GapHypIP$ to show that this problem is hard as well.

\begin{definition}[$\varepsilon\text{-}\GapHypIP^{n, d}_{t, k}$]
\label{def:gap-iph}
    For every $\varepsilon \in (0, 1)$ and positive integers $t, k$, given $t$ sets of vectors $A^1, \dots, A^t \in \{0, 1\}^d$ with $|A^1|=\ldots = |A^t| = n$, and target inner product $m\in \{0, \dots, d\}$, along with the promise that for every $a_1\in A^1, \dots, a_t\in A^t$ and $\forall S \in \binom{[t]}{k}$, 
    \begin{itemize}
        \item either, $\langle a_{S[1]}, \dots, a_{S[k]}\rangle =m$,
        \item or, $\langle a_{S[1]}, \dots, a_{S[k]}\rangle\leq (1-\varepsilon)m$,
    \end{itemize}
    the problem $\varepsilon\text{-}\GapHypIP^{n, d}_{t, k}$ is to decide if there exist vectors $a_1\in A^1, \dots, a_t\in A^t$ such that $\forall  S \in \binom{[t]}{k}$, we have $\langle a_{S[1]}, \dots, a_{S[k]}\rangle = m$. 
\end{definition}

Again, similar to $\GapIPDelta$, for $\GapHypIP$, we consider the target inner products to be the same for all the subsets of inner products, since the \cite{rubinstein2018hardness}-like reduction accommodates this, and we need this condition for reducing $\GapHypIP$ to entry-wise approximation of $\AttP{h}$.

The hardness of $\varepsilon\text{-}\GapHypIP$ follows from a proof very similar to Theorem \ref{thm:hardness}, given by the following corollary.

\begin{corollary}
\label{cor:gap-ip-hyp}
    For positive integers $t, k$ with $k\geq 3$, and every $\delta > 0$, assuming the $\MAXkSAT$ conjecture, there exists a constant $c$ and target inner product $m\in \{0, \dots, c\log n\}$, the problem $\varepsilon\text{-}\GapHypIP^{n, c\log n}_{t, k}$ cannot be solved in time $O(n^{(1-\delta)t})$.
\end{corollary}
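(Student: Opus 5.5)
The proof will follow the template of Theorem \ref{thm:hardness}: a Turing reduction from $\HypergraphIP^{n,d}_{t,k}$ to $n^{o(1)}$ instances of $\varepsilon\text{-}\GapHypIP^{n,O(d)}_{t,k}$, composed with Lemma \ref{lem:iph-hardness}. We fix $\delta>0$ and take the constant $c$ and target inner products $m_S$ from Lemma \ref{lem:iph-hardness}, applied with $\delta/2$. We then argue by contradiction: assume $\varepsilon\text{-}\GapHypIP^{n,c'\log n}_{t,k}$ can be solved in time $O(n^{(1-\delta)t})$.

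\textbf{Step 1 (gap instances for each hyperedge).} For each $S\in\binom{[t]}{k}$, the question is whether $\langle a_{S[1]},\dots,a_{S[k]}\rangle=m_S$. This is a $k\IP$-type question on the $k$ sets $A^{S[1]},\dots,A^{S[k]}$. We apply the $k$-party analogue of Lemma \ref{lem:reduction}, i.e.\ the \cite{rubinstein2018hardness,abboud2025finer}-style distributed PCP reduction already used in Theorem \ref{thm:gap-k-ip}. This produces $q=n^{o(1)}$ families of vectors $\tilde A^{S[r],i}_S$, for $i\in[q]$, of dimension $d'=O(\log n)$, all with a common target $m'$. They satisfy three properties:
\begin{itemize}
\item the gap promise (every $k$-wise product equals $m'$ or is at most $(1-\varepsilon)m'$);
\item \emph{yes}: if $\langle a_{S[1]},\dots,a_{S[k]}\rangle=m_S$, there is $i$ for which the images of these particular vectors achieve $m'$;
\item \emph{no}: otherwise, for every $i$, the images have product at most $(1-\varepsilon)m'$.
\end{itemize}
It is important that each original vector maps to one image per index $i$, so that $n$ stays the same. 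We pad so that $m'$ is the same for every $S$ (for example, by appending coordinates that are $1$ in all $k$ participating sets).

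\textbf{Step 2 (concatenation).} For each tuple $\vec i=(i_S)_{S}\in[q]^{\binom{t}{k}}$, we build vectors of dimension $\binom{t}{k}d'$ with one block per $S$. In block $S$, a vector of $A^j$ carries its image under $\tilde A^{j,i_S}_S$ if $j\in S$, and $0^{d'}$ otherwise. For a $k$-set $S$, any other block $S'\neq S$ contains a zero block from some $j\in S\setminus S'$. Hence the $k$-wise product over $S$ equals exactly the block-$S$ product. The gap promise, yes case and no case then transfer verbatim, as in the proof of Theorem \ref{thm:hardness}; the no case needs some $S$ that fails, which gives $\le(1-\varepsilon)m'$ in its block.

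\textbf{Step 3 (time).} There are $q^{\binom tk}=n^{o(1)}$ instances, since $t,k$ are constants. Each has dimension $O(\log n)$ and is built in $n^{1+o(1)}$ time. Solving all of them costs $n^{(1-\delta)t+o(1)}\le n^{(1-\delta/2)t}$, which contradicts Lemma \ref{lem:iph-hardness}.

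The main obstacle is Step 1. Lemma \ref{lem:reduction} is stated only for $k=2$, so I must verify that the \cite{abboud2025finer} protocol extends to $k$ parties with each party's encoding depending only on its own vector and the shared randomness/index $i$. The Merlin–Arthur-style $k$-party inner-product protocol gives this, as in Theorem \ref{thm:gap-k-ip}. A second point to verify is that the same index $i$ works simultaneously for all yes-vectors within a hyperedge; this holds because $i$ ranges over the protocol's advice and randomness strings, which are independent of which tuple is chosen.
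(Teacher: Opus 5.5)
Your proposal is correct and follows essentially the same route as the paper: for each hyperedge $S\in\binom{[t]}{k}$ you reduce the $k\IP$ subproblem to $n^{o(1)}$ gap instances, concatenate one block per hyperedge (zeros for sets outside $S$) as in Theorem~\ref{thm:hardness}, enumerate the $q^{\binom{t}{k}}=n^{o(1)}$ index tuples, and invoke Lemma~\ref{lem:iph-hardness} with $\delta/2$. One small correction to your closing remark: the instance index should range only over Merlin's advice, since the randomness is absorbed into the coordinates so that the inner product counts accepting random strings. Also, the yes case works not because the index is independent of the tuple (the honest advice depends on it), but because each party's encoding depends only on its own vector and the index.
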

\begin{proof}
    We can again use the reductions of Lemma \ref{lem:reduction}. We start with an instance of $\HypergraphIP_{t, k}^{n, d}$ having sets of vectors $A^1, \dots, A^t \subseteq \{0, 1\}^{d}$ containing $n$ vectors each, and reduce that to $n^{o(1)}$ instances of $\varepsilon\text{-}\GapHypIP_{ t, k}^{n, \tilde d}$ having sets of vectors $B^1, \dots, B^k \subseteq \{0, 1\}^{\tilde d}$ for $\tilde d = \Theta(\log n)$, where each $B^i$ contains $n$ vectors. 

    The proof goes as-- for each $k$-tuple $(j_1, \dots, j_k) \in \binom{[t]}{k}$, we reduce $A^{j_1}, \dots, A^{j_k}$, an instance of $k\IP$, to $n^{o(1)}$ instances of $\varepsilon\text{-}\GapIP{k}$ of dimension $d_0$ (using methods of \cite{alman2023capture,rubinstein2018hardness,abboud2025finer}). Then, we combine each of the $\varepsilon\text{-}\GapIP{k}$ instances for all $(j_1, \dots, j_k) \in \binom{[t]}{k}$ by creating $\binom{t}{k}$ column blocks, each of dimension $d_0$, as done in the proof of Theorem \ref{thm:hardness}, where the block corresponding to $(j_1, \dots, j_k)$ will contain vectors obtained from the above reduction, and the rest will be zero. The hardness result also holds true when the target inner product for every subset of $B^1, \dots, B^k$ are equal.
\end{proof}

Now, to show hardness of computing an entry-wise $\gamma$-approximation of $\AttP{h}$ where $h$ satisfies the conditions of Part 2 of Theorem \ref{thm:poly-att-lb}, we reduce $\varepsilon\text{-}\GapHypIP_{t_0, r}$ (which we know is at least as hard as $\MAXSAT{k}$), to an entry-wise approximation of $\AttP{h}$. Armed with Corollary \ref{cor:gap-ip-hyp}, we are now ready to prove the following lemma which completes the second part of Theorem \ref{thm:poly-att-lb}.

\begin{lemma}[$\varepsilon\text{-}\GapHypIP$ to $\EAPAC{h}$]
    \label{lem:case2}
    For every constant $\varepsilon>0$, every $\delta \in (0, 0.01)$, every $c, M > 0$, given an attention polynomial $h(x_1, \dots, x_t)$ of degree $k\geq 3$ having $s$ monomials, such that the set of monomials of $h$ contains as a subset all the monomials of the elementary symmetric polynomial in $t_0 < t$ variables  of degree $k$, where $t, k, s, t_0$ are constants,  there exist constants $C_a>0$ and $C_b > 0$ such that if $\EAPAC{h}(2n, (s+1)c\log n, \Gamma = C_b(\log n)^{\nicefrac{1}{k}}, \gamma = n^{-C_a})$ (Definition \ref{def:eapac}) with query-key matrices $Q^{(1)}, \dots, Q^{(t)}\in [-\Gamma, \Gamma]^{2n\times  (s+1)c\log n}$ and value matrices $V^{(2)}, \dots, V^{(t)} \in \mathbb{R}^{2 n\times (s+1)c\log n}$ can be solved in time $O(n^{t_0-\delta})$, then $\varepsilon\text{-}\GapHypIP_{t_0, k}^{n, c\log n}$ (Definition \ref{def:gap-iph}) with target inner product $m = M\log n$ can also be solved in $O(n^{t_0-\delta})$ time for any constant $M$.
\end{lemma}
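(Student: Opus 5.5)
The reduction will mirror Lemma~\ref{lem:case1}, with the single top monomial replaced by the whole elementary symmetric block. Relabel variables so the elementary symmetric polynomial of degree $k$ lives on $x_{r_1},\dots,x_{r_{t_0}}$, and set $T=\{r_1,\dots,r_{t_0}\}$. Start from an instance $A^1,\dots,A^{t_0}$ of $\varepsilon\text{-}\GapHypIP_{t_0,k}^{n,c\log n}$ with common target $m=M\log n$.

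\textbf{Construction.} For $j\notin T$, set $Q^{(j)}=\mathbf{0}$. For $r_j\in T$, split the columns into $s+1$ blocks of width $d$ and use the scaling factor $B$.
\begin{itemize}
\item For each degree-$k$ monomial $m_i$ supported in $T$, block $i$ of $Q^{(r_j)}$ is $B\,[a^j_1;\dots;a^j_n;\mathbf{0};\dots]$ if $x_{r_j}\in m_i$, and zero otherwise.
\item Every degree-$k$ monomial not supported in $T$ involves some $Q^{(j)}$ with $j\notin T$, so it vanishes automatically.
\item For each lower-degree monomial $m_i$ dividing some degree-$k$ monomial on $T$, reuse the cancellation scheme of Lemma~\ref{lem:case1}. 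Put coefficients $s_i$ on the highest-preference variable of $m_i$, chosen inductively along the monomial order. The only change is that the $s_i$ must now cancel contributions coming from all degree-$k$ blocks (and earlier lower blocks) that $m_i$ divides. This keeps $|s_i|\le s^s$, so $\Gamma=\Theta(B)$.
\item The last block is a normalizing block: $B\mathbf{1}$ in $Q^{(r_1)}$, and $[\mathbf{0};\mathbf{1}]$ in the other $Q^{(r_j)}$.
\item The value matrices have first column $[\mathbf{1}_n;\mathbf{0}_n]$ for indices in $T\setminus\{1\}$ and $\mathbf{1}_{2n}$ otherwise.
\end{itemize}
With this construction, for rows in the top halves,
\[
h(Q^{(1)}_{\ell_1},\dots,Q^{(t)}_{\ell_t})=B^k\sum_{S\in\binom{[t_0]}{k}}\langle a_{\ell_S}\rangle .
\]

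\textbf{Numerator and denominator.} The denominator is handled exactly as in Lemma~\ref{lem:case1}. The all-bottom choice gives the dominant value $\lambda dB^k$, and every mixed choice is smaller. The two cases for the numerator are:
\begin{itemize}
\item \emph{Yes case.} A good tuple makes all $\binom{t_0}{k}$ inner products equal to $m$, giving an exponent of $\binom{t_0}{k}MB^k/((s+1)c)$.
\item \emph{No case.} The gap promise forces at least one $k$-subset to have inner product at most $(1-\varepsilon)m$. This lowers the exponent by $\varepsilon MB^k/((s+1)c)$, against at most $n^{t_0}$ summands.
\end{itemize}
Choosing $B=C_b(\log n)^{1/k}$ with $C_b$ large makes $e^{\varepsilon MB^k/((s+1)c)}\gg n^{t_0}\,2^{t}$, and $\gamma=n^{-C_a}$ then separates the two cases, exactly as in the wrap-up of Lemma~\ref{lem:case1}. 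A row $\ell_1$ whose entry exceeds the threshold certifies a yes instance. The construction takes $n^{1+o(1)}$ time, so an $O(n^{t_0-\delta})$ algorithm for $\EAPAC{h}$ solves the gap problem in the same time.

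\textbf{Main obstacle.} The delicate step is the denominator's upper bound. Here both the normalizing block and the $\binom{t_0}{k}$ symmetric blocks contribute, and $\lambda$ must be defined so that the all-bottom configuration still dominates, i.e. $\lambda>\binom{t_0}{k}M/c$ fails to hurt. I would handle this by scaling the normalizing block so its contribution is $\binom{t_0}{k}dB^k$ (using $M<c$, as in Theorem~\ref{thm:strassen-lb}). I would then check that the mixed top/bottom configurations lose at least a constant fraction of $B^k d$. The cancellation bookkeeping for the $s_i$ when several degree-$k$ monomials share a lower-degree divisor is the other point that needs care. I would argue it by the same induction on the monomial order used in Lemma~\ref{lem:case1}.
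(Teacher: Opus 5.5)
Your proposal is correct and follows essentially the same reduction as the paper. Both zero out the matrices outside $T$, give each symmetric degree-$k$ monomial its own $a$-block, cancel lower-degree monomials with the coefficients $s_i$, add a normalizing block with top-half-restricting value matrices, and compare the yes/no bounds using $B=C_b(\log n)^{1/k}$ and $\gamma=n^{-C_a}$. Your concern about the denominator is justified and resolves as you suggest: once the bottom rows of the normalizing block carry the factor $B$, each of the $\binom{t_0}{k}$ symmetric monomials picks up $dB^k$ there in the all-bottom configuration, so the dominant factor is $\lambda=\binom{t_0}{k}+o(1)$ (the paper's ``$\lambda=1+o(1)$'' is carried over from Lemma~\ref{lem:case1}); since $m<d$, every other configuration is smaller, and this only changes the constants $C_a,C_b$.
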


\begin{proof}
    First, we consider that the subset of the monomials of $h$, which constitute a symmetric polynomial in $t_0$ variables of degree $k$, is given by the set of subset of variables $x_{r_1}, \dots, x_{r_{t_0}}$. Let us denote $T := \{r_1, \dots, r_{t_0}\}\subseteq [t]$.

    Let us start instance of $\varepsilon\text{-}\GapHypIP_{t_0, k}^{n, d=c\log n}$ with $t_0$ sets of vectors be $A^1, \dots, A^{t_0} \subseteq \{0, 1\}^{d}$, having $n$ vectors each,  and the target inner product being $m  = M\log n$ with a promise of gap given with a constant approximation factor of $\varepsilon$. More specifically, we want to check if there exists $\ell_{r_1}, \dots, \ell_{r_{t_0}} \in [n]$ such that for all $(j_1, \dots, j_k) \in \binom{[t_0]}{k}$, we have $\langle a^{j_1}_{\ell_{j_1}}, \dots, a^{j_1}_{\ell_{j_k}}\rangle = m $, i.e., 
    \[
    \sum_{j_1, \dots, j_k \in \binom{[t_0]}{k}} \langle a^{j_1}_{\ell_{r_{j_1}}}, \dots, a^{j_k}_{\ell_{r_{j_k}}}\rangle = \binom{t_0}{k}m  =: m _0,
    \]
    where $m _0 = M_0\log n$. We also have the promise that for every other tuple $\ell_{r_1}, \dots, \ell_{r_{t_0}} \in [n]$ where $\HypergraphIP_{t_0, k}$ property is not satisfied, 
    \[
    \sum_{j_1, \dots, j_k \in \binom{[t_0]}{k}} \langle a^{j_1}_{\ell_{r_{j_1}}}, \dots, a^{j_k}_{\ell_{r_{j_k}}}\rangle < \left(\binom{t_0}{k}-1\right)m  + (1-\varepsilon)m  =: (1-\varepsilon_0)m_0,
    \]
    for another constant $\varepsilon_0 = \nicefrac{\varepsilon}{\binom{t_0}{k}}$.

    \paragraph{Constructing the matrices.} Now, we define the matrices $Q^{(j)}$'s, such that 
    $$h(Q^{(1)}_{\ell_1}, \dots, Q^{(t)}_{\ell_t})= \Lambda \sum_{j_1, \dots, j_k \in \binom{[t_0]}{k}} \langle a^{j_1}_{\ell_{r_{j_1}}}, \dots, a^{j_k}_{\ell_{r_{j_k}}}\rangle,$$ for a scaling factor $\Lambda$, in a construction quite similar to the proof of Lemma \ref{lem:case1}. The query-key matrices will be $Q^{(1)}, \dots, Q^{(t)} \in [-\Gamma, \Gamma]^{\tilde n\times\tilde d}$, for $\tilde n = 2n, \tilde d = (s+1)d$, defined as follows using a scaling value $B = \omega(1)$ which we will choose later:
    \begin{enumerate}
        \item For $Q^{(j)}$'s, if $j\not \in T$, we just make the entire matrix zero $\mathbf{0}_{2n\times (s+1)d}$.
            \item For some $i\in [s]$, if $m_i$ is equal to some $x_{r_{j_1}}\dots x_{r_{j_k}}$ for $j_1, \dots, j_k \in \binom{[t_0]}{k}$, we define that block as:
            \[
            Q^{(r_{j_\ell})}_{(1:2n, (i-1)d+1:i.d)} = B  \begin{bmatrix}
            a^{j_\ell}_1\\
            a^{j_\ell}_2\\
            \vdots\\
            a^{j_\ell}_n\\
            \mathbf{0}_d\\
            \mathbf{0}_d\\
            \vdots\\
            \mathbf{0}_d
        \end{bmatrix}_{2n\times d},
            \]
        for all $\ell\in [k]$, and 
        $$Q^{(j)}_{(1:2n, (i-1)d+1:i.d)} = \mathbf{0}_{2n\times d},$$
        for all other $j \in [t]\backslash\{r_{j_1}, \dots, r_{j_k}\}$.
        \item However, if for $i\in [s]$, monomial $i$ has degree $\leq k-1$, let this be equal to $x_{r_{j_1}}\dots x_{r_{j_{k_i}}}$, where $j_1, \dots, j_{k_i}\in [t_0]$, $k_i< k$ is the degree (note that if the variables are anything outside $T$, we have defined the corresponding query-key matrices to be zeros anyway). Let $s_i$ be the integer which is the negation of the number of occurrences of this monomials in each of the monomials ordered higher preference than $i$. 
        
        As before, $s_i$ is the sum defined by adding:
        \begin{itemize}
            \item $-1$ whenever $\ell < i$ and $m_\ell$ is of degree $k$.
            \item $-s_{\ell}$ whenever $\ell < i$, the monomial $m_{\ell}$ is of degree $\leq k$, $m_i$ divides $m_\ell$, and the highest preference variable of $m_\ell$ is also present in $m_i$.
            \item $-1$ whenever $\ell < i$, the monomial $m_{\ell}$ is of degree $\leq k$ and $m_i$ divides $m_\ell$, but the highest preference variable of $m_\ell$ is also present in $m_i$.
            \item $0$ in all other cases.
        \end{itemize}

        If $x_{r_j}$ is not present in monomial $i$, we just set 
        $$Q^{(r_j)}_{(1:2n, (i-1)d+1:i.d)} :=  \mathbf{0}_{2n\times d},$$ 
        otherwise:
        $$Q^{(r_{j_1})}_{(1:2n, (i-1)d+1:i.d)} = B \begin{bmatrix}
            s_ia^{j_1}_1\\
            s_ia^{j_1}_2\\
            \vdots\\
            s_ia^{j_1}_n\\
            \mathbf{0}_d\\
            \mathbf{0}_d\\
            \vdots\\
            \mathbf{0}_d
        \end{bmatrix}_{2n\times d},$$
        where $x_{r_{j_1}}$ is the highest preference variable of $m_i$, and 
        $$Q^{(r_j)}_{(1:2n, (i-1)d+1:i.d)} = B  \begin{bmatrix}
            a^j_1\\
            a^j_2\\
            \vdots\\
            a^j_n\\
            \mathbf{0}_d\\
            \mathbf{0}_d\\
            \vdots\\
            \mathbf{0}_d
        \end{bmatrix}_{2n\times d},$$ for all other $j$'s such that $x_{r_j}$ is present in monomial $i$. 
        \item The last column block for $Q^{(r_1)}$ is the all ones matrix $\mathbf{1}_{n\times d}$ with a scaling factor, i.e., 
        $$Q^{(r_1)}_{(1:2n, s.d+1:(s+1)d)} = B \cdot \mathbf{1}_{2n\times d},$$
         and for $j\in \{2, \dots, t_0\}$, it is the all zeros matrix 
         $$Q^{(r_j)}_{(1:2n, s.d+1:(s+1)d)} = \begin{bmatrix}
             \mathbf{0}_{n\times d}\\
             \mathbf{1}_{n\times d}
         \end{bmatrix}_{2n\times d}.$$
    \end{enumerate}

    For the value matrices $V^{(j)} \in \mathbb{R}^{(2n)\times (s+1)d}$, $j\in T\backslash \{1\}$, we define the first column as,
    \begin{equation*}
        V^{(j)}_{(1:2n, 1)} = \begin{bmatrix}
            \mathbf{1}_n^T\\
            \mathbf{0}_n^T
        \end{bmatrix},
    \end{equation*}
    and for $j\in [2: t]\backslash T$, we define the first column as, 
    \begin{equation*}
        V^{(j)}_{(1:2n, 1)} = \begin{bmatrix}
            \mathbf{1}_n^T\\
            \mathbf{1}_n^T
        \end{bmatrix},
    \end{equation*}
    with every other columns $\mathbf{0}_{2n}^T$.

    \paragraph{Correctness of construction.} Again, similar to the proof of Lemma \ref{lem:case1}, we can prove that this construction does indeed give 
    \begin{equation*}
        h(Q^{(1)}_{\ell_1}, \dots, Q^{(t)}_{\ell_t}) = B^{k}\sum_{j_1, \dots, j_r \in S^r_{t_0}} \langle a^{j_1}_{\ell_{r_{j_1}}}, \dots, a^{j_k}_{\ell_{r_{j_k}}}\rangle,
    \end{equation*}
    and the entries of the query-key matrices are in $[-\Gamma, \Gamma]$ for $B < \Gamma < O(s^sB)$.

    Also, these query-key and value matrices can be computed in $O(n^{1+o(1)})$ time.

    \paragraph{Approximation yields gap property.} As before, let us assume there exists an entry-wise approximation $x_{\ell_1}$ of the $({\ell_1}, 1)$-th element of $\AttP{h}$ such that 
    \begin{equation*}
        |x_{\ell_1} - \frac{P_{\ell_1}}{R_{\ell_1}}| < \gamma,
    \end{equation*}
    where 
    \[
    P_{\ell_1} = \sum_{\substack{\ell_i\in [n]~:~i\in T\\\ell_j\in [2n]~:~j\not\in T}} \exp\left(\frac{1}{\tilde d}h(Q^{(1)}_{\ell_1}, Q^{(2)}_{\ell_2},\dots, Q^{(t)}_{\ell_t} )\right),
    \]
    \[
    R_{\ell_1} = \sum_{\ell_2, \dots, \ell_t \in [2n]} \exp\left(\frac{1}{\tilde d}h(Q^{(1)}_{\ell_1}, Q^{(2)}_{\ell_2},\dots, Q^{(t)}_{\ell_t} )\right),
    \]
    and the $({\ell_1}, 1)$-th element of $\AttP{h}$ is $\frac{P_{\ell_1}}{R_{\ell_1}}$.

    \paragraph{Bounds on $P_{\ell_1}, R_{\ell_1}$.} Similar to before, we can prove
    \begin{equation*}
        n^{t-1}e^{\left(\frac{\lambda B^k}{(s+1)}\right)} < R_{\ell_1} < 2^{t-1} n^{t-1}e^{\left(\frac{\lambda B^k}{(s+1)}\right)},
    \end{equation*}
    where $\lambda = 1+o(1)$.
    For the numerator, we can show that when a positive certificate of $\varepsilon\text{-}\GapHypIP$  does exist (if $r_1 \neq 1$, then this holds for all $\ell_1$'s, otherwise, there will be a fixed $\ell_1$ such that $a^1_{\ell_1}$ is included in the positive certificate), 
    \begin{equation*}
        P_{\ell_1} > (2n)^{t-k-1}e^{\left(B^k \frac{M_0}{(s+1)c}\right)},
    \end{equation*}
    which implies 
    \begin{equation}
    \label{eqn:case2-lb}
    x_{\ell_1} > \frac{P_{\ell_1}}{R_{\ell_1}}-\gamma > \frac{e^{\left(\nicefrac{B^k \left(\frac{M_0}{c}-\lambda\right)}{(s+1)}\right)}}{(2n)^{k-1}}-\gamma.
    \end{equation}

    Otherwise, if no positive certificate of $\varepsilon\text{-}\GapHypIP$ exists when $r_1 \neq 1$, or when $r_1 =1$, the positive certificate, if exists, does not contain the vector $a^1_{\ell_1}$, then
    \begin{equation*}
        P_{\ell_1} < 2^{t-k-1}n^{t-1}e^{\left((1-\varepsilon_0)B^k\frac{M_0}{(s+1)c}\right)}, 
    \end{equation*}
    and therefore, 
    \begin{equation}
    \label{eqn:case2-ub}
        x_{\ell_1} <  \frac{P_{\ell_1}}{Q_{\ell_1}}+\gamma < e^{\left(\nicefrac{B^k\left(\frac{(1-\varepsilon_0)M_0}{c}-\lambda\right)}{(s+1)}\right)} + \gamma.
    \end{equation}

    \paragraph{Wrapping up.} In order to maintain a gap between the cases of an $\HypergraphIP$ existing, we require the lower bound (Equation \ref{eqn:case2-ub}) must be less than the upper bound (Equation \ref{eqn:case2-lb})
    \begin{equation*}
        \frac{1}{e^{\left(\varepsilon_0 B^k\frac{M_0}{(s+1)c}\right)}}\frac{2^{t-k-1}}{e^{\left(\nicefrac{B^k\left(\lambda-\frac{M_0}{c}\right)}{(s+1)}\right)}} + \gamma < \frac{1}{(2n)^{k-1}e^{\left(\nicefrac{B^k\left(\lambda-\frac{M_0}{c}\right)}{(s+1)}\right)}} - \gamma.
    \end{equation*}
    Now, there exist large enough constants $C_a, C_b > 0$ such that this inequality is satisfied for $\gamma \leq n^{-C_a}$ and $B \geq C_b(\log n)^{1/k}$. 

    This proves that any algorithm for an entry-wise $\gamma$-approximation of $\AttP{h}$ having maximum value of the entries $\Gamma = \Omega((\log n)^{1/k})$ requires time $\Omega(n^{t_0})$, assuming the $\MAXSAT{k}$ conjecture, since if $\EAPAC{h}$ could be solved in $O(n^{t_0-\delta})$ time, then that would imply  $\MAXSAT{k}$ could be solved in $2^{(1-\Omega(\delta))n}$ time (Corollary \ref{cor:gap-ip-hyp}), something that can not be true for an absolute constant $\delta > 0$ (Hypothesis \ref{hyp:max-ksat}).
\end{proof}

\begin{remark}
In Lemma \ref{lem:case2}, for computing $\EAPAC{h}$,  when $h$ is in $t$ variables, of degree $k$ and contains as a subpolynomial an elementary symmetric polynomial in $t_0 = t$ variables and degree $k$, the time-complexity is lower bounded by $\Omega(n^t)$. This is the strongest time complexity lower bound we can achieve, as the trivial algorithm for summing over the indices of all the query-key matrix also requires $O(n^t)$ time and we say that this is the best we can hope for!
\end{remark}

\subsubsection{Time lower bounds for degree 2 polynomials using $\MAXSAT{2}$ conjecture}

In this section, we prove the final part of Theorem \ref{thm:poly-att-lb}, where we show a lower bound for a certain subcase of $h$ when the degree is $2$. For the remaining degree $2$ cases, we have already shown in Sections \ref{sec:tree} and \ref{apx:tree-att} that they can be computed in $O(n^2)$ time, which is essentially tight from Part 1 of Theorem \ref{thm:poly-att-lb}.

Unlike using $\SETH$ which proves lower bounds which are integer powers of $n$, in order to prove lower bounds of the form $n^\omega$, we use the $\MAXSAT{2}$ conjecture (Hypothesis \ref{hyp:max-2sat}) by giving a reduction from $\varepsilon\text{-}\GapIPDelta$ (Theorem \ref{thm:hardness}) to entry-wise approximation of $\AttP{h}$. 

The reductions work as, we first use the reduction of $\MAXSAT{2}$ to $\IPDelta$, then reduction of $\IPDelta$ to a new problem $\DirCycIP{r}$ using \cite{alman2020ov}, which then is reduced to its gap version containing $n^{o(1)}$ instances of $\varepsilon\text{-}\GapDirCycIP{r}$. Finally, we reduce $\varepsilon\text{-}\GapDirCycIP{r}$ to computing an entry-wise approximation of $\AttP{h}$.

For these sets of reductions, we first define the new problem of $\DirCycIP{r}$, which was introduced in \cite{alman2020ov}.
\begin{definition}[$\DirCycIP{r}$]
    For a positive integer $r$, given  $r$ sets of vectors $A^1, \dots, A^r \subseteq \{0, 1\}^{d}$ with $|A^1| = \ldots = |A^r| = n$, and target inner products $m_1, \dots, m_r \in \{0, \dots, d\}$, the problem $\DirCycIP{r}_{n, d}$ is to decide if there exist vectors $a_1\in A^1, \dots, a_r\in A^r$ such that simultaneously $\langle a_1, a_2 \rangle= m_1$, $\langle a_2, a_3 \rangle = m_2, \dots, \langle a_{r-1}, a_r\rangle =m_{r-1}, \langle a_{r}, a_1\rangle = m_r $.
\end{definition}

Naturally, to prove hardness of entry-wise approximation of poly-attention, we will again require the hardness of the gap version of this problem, $\varepsilon\text{-}\GapDirCycIP{r}$.

\begin{definition}[$\varepsilon\text{-}\GapDirCycIP{r}$]
\label{def:gap-dir-cyc}
    For every $\varepsilon>0$ and positive integer $r$, given $r$ sets of vectors $A^1, \dots, A^r \in \{0, 1\}^d$ with $|A^1| = \ldots = |A^r| = n$, and a target inner product $m\in \{0, \dots, d\}$ along with the promise that for all $i \in [r]$, for all vectors $a_i \in A^i$, and $ a_{i+1\bmod r} \in A^{i+1\bmod r}$, 
    \begin{itemize}
        \item either $\langle a_i, a_{(i+1)\bmod r}\rangle = m$,
        \item or $\langle a_i, a_{(i+1)\bmod r}\rangle \leq (1-\varepsilon)m$,
    \end{itemize}
      the problem of $\varepsilon\text{-}\GapDirCycIP{r}_{n, d}$ is to decide if there exist vectors $a_1\in A^1, \dots, a_r\in A^r$ such that simultaneously $\langle a_1, a_2 \rangle= \langle a_2, a_3 \rangle = \ldots= \langle a_{r-1}, a_r\rangle = \langle a_{r}, a_1\rangle =m $.
\end{definition}

Now, we know that $\DirCycIP{r}$ is at least as hard as $\IPDelta$, which in turn is at least as hard as $\MAXSAT{2}$ (Lemma \ref{lem:ipt}), given by the following lemma. An $\OV$ version of this lemma was proved in \cite[Lemma 21]{alman2020ov}, i.e., when the target inner products are zero, by reducing $\OV\Delta$ to $\textsf{OV-DIR-$r$CYC}$, but all the proofs work similarly for reducing $\IPDelta$ to $\DirCycIP{r}$ as well.

\begin{lemma}[{$\IPDelta$ to $\DirCycIP{r}$ \cite{alman2020ov}}]
\label{lem:red-cyc}
    For every $\delta > 0$ and positive integer $r \geq 3$, if $\DirCycIP{r}_{n, d}$, can be computed in $O(n^{\omega-\delta})$ time, then $\IPDelta_{n, d}$ can also be computed in time $O(n^{\omega-\delta})$.
\end{lemma}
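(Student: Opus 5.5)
We reduce a $\IPDelta$ instance to a single $\DirCycIP{r}$ instance of the same size (up to constant factors in $n$ and $d$), following the $\OV$ version in \cite[Lemma 21]{alman2020ov}. The idea is to keep the triangle on $A^1,A^2,A^3$ and subdivide one of its edges into a path of length $r-2$ by inserting $r-3$ "copy" layers. Let the input be $A,B,C\subseteq\{0,1\}^d$ with targets $m_{12},m_{23},m_{31}$. We set $A^1:=A$ and $A^2:=B$. Each intermediate layer $A^j$, for $3\le j\le r-1$, and the final layer $A^r$ will consist of encodings of the vectors $c\in C$. A path $a_1\to a_2\to\cdots\to a_r\to a_1$ should then certify $\langle a,b\rangle=m_{12}$, $\langle b,c\rangle=m_{23}$ and $\langle c,a\rangle=m_{31}$, with every copy of $c$ forced to be the same vector.

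The key gadget is an equality-testing edge. For vectors $c,c'\in\{0,1\}^d$ we use the encodings $x\mapsto (x,\bar x)$ on one side and $x\mapsto(\bar x,x)$ on the other. Their inner product is $\langle c,\bar c'\rangle+\langle \bar c,c'\rangle$, which is $0$ exactly when $c=c'$. A cleaner option is to use the encodings $(x,\bar x)$ on both sides, giving inner product $d$ exactly when $c=c'$ and strictly less than $d$ otherwise.

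We split coordinates into disjoint blocks, one per edge of the cycle, so that each edge's inner product depends only on its own block. Concretely, the vector for layer $j$ has a nonzero block only for the two edges incident to $j$; this is the standard block-concatenation trick already used in the proof of Theorem~\ref{thm:hardness}. The three blocks carry the original data:
\begin{itemize}
\item edge $(1,2)$ carries $a$ against $b$ with target $m_{12}$;
\item edge $(2,3)$ carries $b$ against the first copy of $c$ with target $m_{23}$;
\item edge $(r,1)$ carries the last copy of $c$ against $a$ with target $m_{31}$.
\end{itemize}
Every remaining edge $(j,j+1)$ is an equality-testing edge with target $d$, or $0$ under the complemented encoding. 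The dimension becomes $O(rd)=O(d)$, and the construction takes $O(n d r)$ time.

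For correctness, a $\IPDelta$ witness $(a,b,c)$ yields a cycle witness by using the same $c$ in every copy layer. Conversely, any cycle witness must use the same vector in all the copy layers, because the equality edges force consecutive copies to agree. That common vector $c$, together with $a$ and $b$, satisfies all three original constraints. For the running time, a hypothetical $O(n^{\omega-\delta})$ algorithm for $\DirCycIP{r}_{n,O(d)}$ then solves $\IPDelta_{n,d}$ in the same time bound, since $r$ is a constant.

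The only delicate point is to make sure that the equality-testing inner product takes a fixed target value exactly on equal pairs while the three original constraints keep their exact targets. The disjoint blocks, each carrying its own target $m_i$, handle this. One could also allow distinct targets per edge, as the problem definition already permits.
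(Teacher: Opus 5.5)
Your reduction is correct and is essentially the argument the paper relies on. The paper gives no proof of its own: it points to the OV-version reduction of \cite[Lemma 21]{alman2020ov} and asserts it carries over to arbitrary targets. Your construction (copies of $C$ chained by equality gadgets, one disjoint coordinate block per cycle edge, each edge with its own target) is that argument adapted to arbitrary targets. You also correctly note that the dimension grows from $d$ to $O(rd)=O(d)$, so the lemma should be read with $d=c\log n$ for an adjustable constant $c$, which is how it is used.
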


Again, the $\varepsilon\text{-}\GapDirCycIP{r}$ is at least as hard as $\DirCycIP{r}$ using proofs very similar to Theorem \ref{thm:hardness}.

\begin{corollary}
\label{cor:dir-hardness}
    For every $\delta > 0$, positive integer $r \geq 3$ and every constant $\varepsilon > 0$, assuming the $\mathsf{Max2SAT}$ conjecture,  there exists a constant $c>0$ and target inner product $m\in \{0, \dots, d\}$, such that $\varepsilon\text{-}\GapDirCycIP{r}_{n,  c\log n}$ cannot be solved in time $O(n^{\omega - \delta})$ .
\end{corollary}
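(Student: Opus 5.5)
The plan is to chain two reductions, mirroring the proof of Theorem \ref{thm:hardness}. First, assume the $\mathsf{Max2SAT}$ conjecture. By Lemma \ref{lem:ipt}, for every $\delta>0$ there is $c>0$ such that $\IPDelta_{n,c\log n}$ has no $O(n^{\omega-\delta})$ algorithm. Lemma \ref{lem:red-cyc} then transfers this to $\DirCycIP{r}_{n,c\log n}$ for every fixed $r\ge 3$. Its target inner products $m_1,\dots,m_r$ may differ, and the dimension stays $O(\log n)$.

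The second step reduces a $\DirCycIP{r}$ instance $(A^1,\dots,A^r,m_1,\dots,m_r)$ to $q^r=n^{o(1)}$ instances of $\varepsilon\text{-}\GapDirCycIP{r}$ with one common target $m'$. For each cyclic edge $(i,i+1 \bmod r)$, apply Lemma \ref{lem:reduction} to the $\IP$ instance $(A^i,A^{i+1},m_i)$. This yields $q=n^{o(1)}$ gap instances $(\tilde A^{i,j}_{\mathrm{out}},\tilde A^{i+1,j}_{\mathrm{in}},m')$ for $j\in[q]$, each of dimension $d'=O(\log n)$. I will assume Lemma \ref{lem:reduction} can be applied with the same $d'$, $m'$ and $\varepsilon$ regardless of the original target $m_i$. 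If it cannot, I will pad with fixed blocks that equalize the targets: append a block of ones of differing lengths to one side of each pair so every pair's target becomes $\max_i m'_i$.

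For each tuple $(j_1,\dots,j_r)\in[q]^r$, build vectors of length $r d'$ with $r$ blocks, one per cyclic edge. The vector for $a\in A^i$ places its edge-$(i,i+1)$ outgoing image (index $j_i$) in block $i$ and its edge-$(i-1,i)$ incoming image (index $j_{i-1}$) in block $i-1$, with zeros elsewhere. Since $r\ge 3$, consecutive sets share exactly one nonzero block. Hence $\langle a_i,a_{i+1}\rangle$ equals the inner product of the edge-$i$ gap instance. The gap promise, the yes case and the no case then follow edge by edge, exactly as in the proof of Theorem \ref{thm:hardness}. In the yes case, choose for each edge the index $j_i$ witnessing $m'$. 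In the no case, some edge fails in the original instance, so every image pair on that edge is at most $(1-\varepsilon)m'$.

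One subtlety arises because Lemma \ref{lem:reduction} maps each original vector to one image per instance, and each $a\in A^i$ appears in two edges. The combined vector must therefore be the concatenation of the images of the same original $a$. This keeps the set size at $n$ and preserves the correspondence to original tuples.

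The running-time accounting is the same as before. An $O(n^{\omega-\delta})$ algorithm for the gap problem would solve all $q^r=n^{o(1)}$ instances in $O(n^{\omega-\delta/2})$ total time. That would solve $\DirCycIP{r}$, and hence $\IPDelta$, in that time, contradicting the conjecture with $\delta/2$ in place of $\delta$.

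I expect the main obstacle to be the one just noted. The ``same original vector'' consistency and the uniform target $m'$ across different $m_i$ both require looking inside the Lemma \ref{lem:reduction} construction, since the lemma as stated only speaks of sets of instances.
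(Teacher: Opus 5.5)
Your proposal is correct and follows the paper's route. You chain Lemma~\ref{lem:ipt} and Lemma~\ref{lem:red-cyc} to get hardness of $\DirCycIP{r}$, gap-amplify each cyclic edge with Lemma~\ref{lem:reduction}, and place the images in $r$ blocks as in the proof of Theorem~\ref{thm:hardness}; your rule that the concatenated images must come from the same original vector makes explicit something the paper leaves implicit. You also don't need the padding fallback: Lemma~\ref{lem:reduction} quantifies $m'$ before the target $m$, which gives the uniform target directly. This is fortunate, since ones appended to only one side of a pair leave every inner product unchanged; they would have to go on both sides, and doing so shrinks $\varepsilon$.
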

\begin{proof}
    We prove the hardness of $\varepsilon\text{-}\GapDirCycIP{k}$ by starting with a hard instance of $\DirCycIP{r}$ containing sets vectors $A^1, \dots, A^r \subseteq \{0, 1\}^d$, where $n = |A^i|$ and $d = c\log n$. 

    Following the technique of the proof of Theorem \ref{thm:hardness}, we consider $A^i, A^{i+1\bmod r}$ for each $i\in [r]$ as a $2\IP$ instance, and reduce it to $n^{o(1)}$ many instances of $\varepsilon\text{-}\mathsf{GapIP}$ having two sets $n$ vectors of dimension $d_0$. For the final instance of $\varepsilon\text{-}\GapDirCycIP{r}$, we create vectors having $r$  blocks, each block having the dimension $d_0$. The $((i-1)\bmod r)$-th block and the $i$-th block in the final instances of the reduction will contain vectors from each of the instances of $\varepsilon\text{-}\mathsf{GapIP}$ obtained from the instances of $2\IP$ from $A^{(i+1)\bmod r}, A^i$ and $A^i, A^{(i+1)\bmod r}$ respectively, while the other blocks will be zero, exactly similar to the proof of Theorem \ref{thm:hardness}. This hardness result is also true when all the target inner products are the same.
\end{proof}

Therefore,  for proving the hardness of the entry-wise approximation of poly-attention based on $\MAXSAT{2}$ conjecture, it is sufficient to start with a hard instance of $\varepsilon\text{-}\GapDirCycIP{r}$. Further, we prove the lower bound for poly-attention for all polynomials that are not tree polynomials (since we already know that tree polynomials have exact computational complexity $O(n^2)$). If a polynomial is not a tree polynomial, the graphical representation must contain at least one cycle.

\begin{lemma}[$\varepsilon\text{-}\GapDirCycIP{r}$ to $\EAPAC{h}$]
    \label{lem:case3}
    For every constant $\varepsilon>0$, every $\delta \in (0, 0.01)$, every $c, M > 0$, given an attention polynomial $h(x_1, \dots, x_t)$ of degree $ 2$ having $s$ monomials, such that its graphical representation contains a cycle of size $r$, where $t, s, r$ are constants, there exist constants $C_a>0$ and $C_b > 0$ such that if $\EAPAC{h}(2n, (r+1)c\log n, \Gamma = C_b\sqrt{\log n}, \gamma = n^{-C_a})$ (Definition \ref{def:eapac}) with query-key matrices $Q^{(1)}, \dots, Q^{(t)}\in [-\Gamma, \Gamma]^{2n\times  (s+1)c\log n}$ and value matrices $V^{(2)}, \dots, V^{(t)} \in \mathbb{R}^{2 n\times (s+1)c\log n}$ can be solved in time $O(n^{\omega-\delta})$, then $\varepsilon\text{-}\GapDirCycIP{r}_{n, c\log n}$ (Definition \ref{def:gap-dir-cyc}) with target inner product $m = M\log n$ can also be solved in $O(n^{\omega-\delta})$ time for any constant $M$.
\end{lemma}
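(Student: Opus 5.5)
The construction mirrors Theorem~\ref{thm:strassen-lb} and Lemma~\ref{lem:case1}. The difference is that only the edges of the chosen cycle carry the input vectors, and every other monomial of $h$ is killed by zero blocks.

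Fix a cycle $x_{r_1}x_{r_2}\cdots x_{r_r}x_{r_1}$ in the graphical representation of $h$, and set $T=\{r_1,\dots,r_r\}$. Since $h$ has degree $2$ and is multilinear, each of its monomials is an edge $x_ix_j$. I would index the column blocks by the $r$ cycle edges, plus one normalizing block, each of width $d=c\log n$; this gives $\tilde d=(r+1)d$, matching the lemma. Start from an instance $A^1,\dots,A^r$ of $\varepsilon\text{-}\GapDirCycIP{r}$ with target $m=M\log n$, and build the matrices as follows.
\begin{itemize}
\item For every $j\notin T$, set $Q^{(j)}=\mathbf{0}$.
\item For the cycle edge $e_p=x_{r_p}x_{r_{p+1}}$ (indices mod $r$), put $B\,a^p_\ell$ in block $p$ of row $\ell\le n$ of $Q^{(r_p)}$. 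Put $B\,a^{p+1}_\ell$ in block $p$ of $Q^{(r_{p+1})}$. All other entries of block $p$ are zero, and the bottom $n$ rows are zero in blocks $1,\dots,r$.
\end{itemize}
Each $Q^{(r_p)}$ is then nonzero only in blocks $p-1$ and $p$. Consequently, for any monomial $x_ix_j$ of $h$, $\langle Q^{(i)}_{\ell_i},Q^{(j)}_{\ell_j}\rangle$ can be nonzero only if $i,j\in T$ and the two variables share a block, which means $\{i,j\}$ is a cycle edge. Chords of the cycle inside $T$ are therefore harmless, provided $r\ge 4$. When $r=3$ the blocks of a vertex pair always intersect, but in that case the only pairs are cycle edges anyway. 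This gives $h(Q^{(1)}_{\ell_1},\dots,Q^{(t)}_{\ell_t})=B^2\sum_p\langle a^p_{\ell_{r_p}},a^{p+1}_{\ell_{r_{p+1}}}\rangle$ for top-half indices. No cancellation coefficients $s_i$ are needed, so $\Gamma=B$.

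For normalization I would copy the Strassen construction. The last block is $B\mathbf{1}$ in all rows of $Q^{(r_1)}$, and $[\mathbf 0;\mathbf 1]$ in $Q^{(r_2)}$ (or split the normalizing block between two adjacent cycle vertices), scaled so that bottom-half indices contribute a fixed exponent $\lambda B^2$ with $\lambda$ a constant exceeding $rM/\tilde d\cdot$ anything achievable from data rows. The value matrices are $[\mathbf 1;\mathbf 0]$ in column 1 for $j\in T\setminus\{1\}$ and $[\mathbf 1;\mathbf 1]$ for $j\notin T$, exactly as in Lemma~\ref{lem:case1}. I would then take the query row $\ell_1$ to range over the top half when $1\in T$ (after relabelling so that $r_1=1$), or over any row otherwise.

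The two cases then separate as follows.
\begin{itemize}
\item \textbf{Yes case.} A certificate tuple gives $P_{\ell_1}\ge (2n)^{t-r-1}e^{B^2 rM\log n/\tilde d}$.
\item \textbf{No case.} At least one cycle edge is at most $(1-\varepsilon)m$, so every summand is at most $e^{B^2(r-\varepsilon)M\log n/\tilde d}$. This gives $P_{\ell_1}\le (2n)^{t-1}e^{B^2(r-\varepsilon)M\log n/\tilde d}$.
\item \textbf{Denominator.} $R_{\ell_1}$ is sandwiched within a $2^{t-1}$ factor of $n^{t-1}e^{\lambda' B^2}$, by the same upper and lower bound arguments as in Theorem~\ref{thm:strassen-lb}.
\end{itemize}
Choosing $B=C_b\sqrt{\log n}$ large enough makes $e^{\varepsilon B^2M\log n/\tilde d}\gg n^{t}$, and $\gamma=n^{-C_a}$ then separates the two cases, so reading the first column of the approximation decides the instance. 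The construction takes $n^{1+o(1)}$ time. Combined with Corollary~\ref{cor:dir-hardness}, this gives the $n^{\omega-\Omega(1)}$ lower bound.

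The main obstacle is the denominator upper bound, specifically mixed index tuples. These are tuples where some cycle vertices take top-half rows and others take bottom-half rows. They must not exceed the all-bottom normalizing value, and the bottom half must dominate without swamping the yes-case signal by more than a $\mathrm{poly}(n)$ factor. I would first try putting the normalizing $\mathbf 1$-block on a single cycle edge $x_{r_1}x_{r_2}$ only. I would then check that any mixed tuple loses at least the normalizing contribution $dB^2$ while gaining at most $(r-1)\cdot dB^2$ from data edges. If this fails, I would scale the normalizing block by a constant larger than $r$, accepting $\Gamma=O(B)$.
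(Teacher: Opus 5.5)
There is a genuine gap. It sits at the denominator step you deferred to ``the same arguments as in Theorem~\ref{thm:strassen-lb}''.

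With the last block nonzero only in $Q^{(r_1)}$ ($B\mathbf 1$) and $Q^{(r_2)}$ ($[\mathbf 0;\mathbf 1]$), let $\Lambda$ be the exponent of the normalizing term. Every tuple with $\ell_{r_2}$ in the bottom half collects $\Lambda$, whatever its other indices are. Now take a tuple with $\ell_{r_2}$ in the bottom half and $\ell_{r_1},\ell_{r_3},\dots,\ell_{r_r}$ in the top half. The path $e_3,\dots,e_r$ still carries data, so with $\beta:=B^2m/\tilde d$ this tuple has exponent up to $\Lambda+(r-2)\beta$. Such a path can have all inner products equal to $m$ even in a no-instance. So your planned check, that every mixed tuple ``loses the normalizing contribution'', is false, and $R$ is not sandwiched around any instance-independent quantity.

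This is fatal, not cosmetic. Take $r=3$, $r_1=1$; the factors from indices outside $T$ cancel between $P$ and $R$.
\begin{itemize}
\item At the certificate row of a yes-instance, pair the certificate's $\ell_{r_3}$ with any bottom $\ell_{r_2}$. This gives $R\ge n e^{\Lambda+\beta}$, while $P\le n^2e^{3\beta}$, so $P/R\le\min\{1,\,n e^{2\beta-\Lambda}\}$.
\item Now take a no-instance in which every pair of $A^3\times A^1$ has inner product exactly $(1-\varepsilon)m$, but some $a^1,a^2,a^3$ have $\langle a^1,a^2\rangle=\langle a^2,a^3\rangle=m$. Then $P\ge e^{(3-\varepsilon)\beta}$ and $R\le 4n^2\max\{e^{(3-\varepsilon)\beta},e^{\Lambda+(1-\varepsilon)\beta}\}$, so $P/R\ge \min\{1,e^{2\beta-\Lambda}\}/(4n^2)$.
\end{itemize}
The two cases now agree up to a factor $4n^3$ that does not grow with $B$. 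The factor $e^{\varepsilon\beta}$, which was supposed to beat all polynomial multiplicities, has cancelled, and this happens for every value of $\Lambda$. Your fallback of rescaling the normalizing block by a constant larger than $r$ only moves $\Lambda$, so it does not help. Note also that this is not the Strassen construction: there both $Q^{(2)}$ and $Q^{(3)}$ carry $[\mathbf 0;\mathbf 1]$.

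The missing idea is that \emph{every} cycle vertex must carry the normalizer, so that moving a vertex to the bottom half trades its data edges for normalizer edges. This is what Theorem~\ref{thm:strassen-lb} does, and what the paper's proof does. There $Q^{(t_0)}$ gets $B\mathbf 1$ in all rows of the last block, and every other cycle vertex gets $[\mathbf 0;\mathbf 1]$ there; that block should also be scaled by $B$, so it contributes $B^2d$ per edge. With this choice, take any monomial $x_ix_j$ with $i,j\in T$, whether a cycle edge or a chord. Its data block contributes only if both rows are in the top half. Its normalizing block contributes only if every endpoint other than the all-ones vertex is in the bottom half. These events are mutually exclusive, so the monomial contributes at most $B^2\max\{m,d\}=B^2d$ in every tuple. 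Equality holds for tuples with $\ell_{r_2},\dots,\ell_{r_r}$ all in the bottom half.

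Hence $R$ is within an $n^{O(1)}$ factor of $e^{|E(T)|B^2d/\tilde d}$ for every instance, where $E(T)$ is the set of monomials of $h$ inside $T$. The yes and no ratios then differ by $e^{\varepsilon\beta}/n^{O(1)}$. Since $|E(T)|d\ge rm$, the yes-ratio can itself be polynomially small, about $e^{-(|E(T)|d-rm)B^2/\tilde d}$. So $C_a$ must be chosen after $C_b$, which the lemma permits. The rest of your plan is sound: one data block per cycle edge, chords carrying no data for $r\ge4$, no cancellation coefficients, and $\Gamma=\Theta(B)$. Your explicit treatment of chords is cleaner than the paper's.
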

\begin{proof}
    In our final part of Theorem \ref{thm:poly-att-lb}, we reduce $\MAXSAT{2}$ to entry-wise approximate computation of poly-attention. We  start with an instance of $\varepsilon\text{-}\GapDirCycIP{r}_{n, d = c\log n}$, since we know that this is at least as hard as $\MAXSAT{2}$ (Theorem \ref{cor:dir-hardness}, Lemma \ref{lem:red-cyc}), consisting of sets of vectors $A^1, \dots,  A^r \subseteq \{0, 1\}^{d}$, where $A^i$ for all $i\in [t]$ has $n$ vectors $\{a^{i}_1, \dots, a^i_n\}$ each. The target inner product is $M\log n$, and the constant approximation factor is $\varepsilon$ for the gap condition. This is equivalent to checking if there exists $a^1_{j_1}\in A^1, a^2_{j_2}\in A^2, \dots, a^r_{j_r}\in A^r$ such that $\langle a^1_{j_1}, a^{2}_{j_2}\rangle + \langle a^{2}_{j_2}, a^{3}_{j_3}\rangle + \dots + \langle a^{r-1}_{j_{r-1}}, a^{r}_{j_r}\rangle+ \langle a^{r-1}_{j_{r-1}}, a^{1}_{j_1}\rangle = M_0\log n$, or, due to the promise, if $\langle a^1_{j_1}, a^{2}_{j_2}\rangle + \langle a^{2}_{j_2}, a^{3}_{j_3}\rangle + \dots + \langle a^{r-1}_{j_{r-1}}, a^{r}_{j_r}\rangle+ \langle a^{r-1}_{j_{r-1}}, a^{1}_{j_1}\rangle \leq (1-\varepsilon_0)M_0\log n$, where $M_0 = \Theta(M), \varepsilon_0 = \Theta(\varepsilon)$.

    For the graph $G$ of the polynomial, we consider a vertex $v_{t_0}$ where the cycle of length $r$ starts. If there are multiple cycles, we consider any one.
    
    Let the cycle be of length $r$ be given by $(v_{t_0}, v_{t_0+1}), (v_{t_0+1}, v_{t_0+2}), \dots, (v_{t_0 + r-1}, v_{t_0})$, without loss of generality. When we construct the matrices $Q^{(j)}$'s, the idea is to construct the instance of $\varepsilon\text{-}\GapDirCycIP{r}$ from $v_{t_0}$ (i.e., from $Q^{(t_0)}$), and make every other query-key matrix corresponding to variables outside the cycle to be zero.

    Similar to as before, we construct query-key matrices such that for all $\ell_1, \dots, \ell_t \in [n]$, 
    \begin{equation}
    \label{eqn:cycle-poly}
    h(Q^{(1)}_{\ell_1}, \dots, Q^{(t)}_{\ell_t}) = 
    \Lambda(\langle a^{1}_{\ell_{t_0}}, a^{2}_{\ell_{t_0+1}}\rangle + \ldots + \langle a^{r-1}_{\ell_{t_0+r-1}}, a^{r}_{\ell_{t_0+r}}\rangle+ \langle a^{r}_{\ell_{t_0+r}}, a^{1}_{\ell_{t_0}}\rangle),
    \end{equation}
    for a scaling factor $\Lambda$.

    \paragraph{Constructing the matrices.} We form the matrices $Q^{(1)}, \dots, Q^{(t)} \in [-\Gamma, \Gamma]^{\tilde n\times \tilde d}$, $\tilde n = 2n, \tilde d = (r+1)d$ as follows, using a scaling factor $B = \omega(1)$:

    \begin{enumerate}
        \item For $Q^{(j)}$'s, if $j<t_0$ or $j>r+t_0-1$, we just make the entire matrix zero $\mathbf{0}_{2n\times (r+1)d}$.
        \item For defining $Q^{(t_0)}$, we define the first column block (starting of the cycle) as,
        \begin{equation*}
                Q^{(t_0)}_{(1:2n, 1:d)} = B\begin{bmatrix}
                    a^1_1\\ 
                    a^1_2 \\
                    \vdots \\
                    a^1_n \\
                    \mathbf{0}_d\\
                    \mathbf{0}_d\\
                    \vdots\\
                    \mathbf{0}_d
                \end{bmatrix}_{2n\times d},
            \end{equation*}
        the $r$-th column block (end of the cycle) as,
        \begin{equation*}
                Q^{(t_0)}_{(1:2n, (r-1)d+1:r.d)} = B\begin{bmatrix}
                    a^1_1\\ 
                    a^1_2 \\
                    \vdots \\
                    a^1_n\\
                    \mathbf{0}_d\\
                    \mathbf{0}_d\\
                    \vdots\\
                    \mathbf{0}_d 
                \end{bmatrix}_{2n\times d},
            \end{equation*}
        the final block that balances the inner product as
        \begin{equation*}
                Q^{(t_0)}_{(1:2n, r.d+1:(r+1)d)} = B\mathbf{1}_{2n\times d},
            \end{equation*}
        and all the other remaining blocks as $\mathbf{0}_{2n\times d}$.
        \item Now, for the matrices inside the cycle, i.e., $j\in [t_0+1, t_0+r-1]$, we define $Q^{(j)}$ as follows. For $i  = j-1, j$ (which is the traversal inside the cycle from $v_{j-1}$ to $v_j$, and $v_j$ to $v_{j-1}$ respectively), we define that block as,
        \begin{equation*}
        Q^{(j)}_{(1:2n, (i-1)d+1:i.d)} = B  \begin{bmatrix}
            a^{j-(t_0-1)}_1\\
            a^{j-(t_0-1)}_2\\
            \vdots\\
            a^{j-(t_0-1)}_n\\
                    \mathbf{0}_d\\
                    \mathbf{0}_d\\
                    \vdots\\
                    \mathbf{0}_d 
        \end{bmatrix}_{2n\times d},
        \end{equation*}
        the final block as,
        \begin{equation*}
        Q^{(j)}_{(1:2n, r.d+1:(r+1)d)} = \begin{bmatrix}
                    \mathbf{0}_{n\times d}\\
                    \mathbf{1}_{n\times d}
        \end{bmatrix}_{2n\times d},
        \end{equation*}
        and all other blocks as $\mathbf{0}_{2n\times d}$.
    \end{enumerate}

    For the value matrices $V^{(j)} \in \mathbb{R}^{(2n)\times (r+1)d}$, $j\in [t_0, t_0+r-1]$, we define the first column as,
    \begin{equation*}
        V^{(j)}_{(1:2n, 1)} = \begin{bmatrix}
            \mathbf{1}_n^T\\
            \mathbf{0}_n^T
        \end{bmatrix},
    \end{equation*}
    and for all other $j$'s, we define the first column as,
    \begin{equation*}
        V^{(j)}_{(1:2n, 1)} = \begin{bmatrix}
            \mathbf{1}_n^T\\
            \mathbf{1}_n^T
        \end{bmatrix},
    \end{equation*}
    with every other columns $\mathbf{0}_{2n}^T$.

    \paragraph{Correctness of construction.} We prove that indeed Equation \ref{eqn:cycle-poly} is satisfied when $\ell_1, \dots, \ell_t \in [n]$. When we consider $h$, all the monomials containing variables $x_j$ for $j< t_0$ or $j>t_0+r-1$ vanish since $Q^{(j)}$'s are zero. Whenever we have a monomial of the form $x_jx_{j+1 }$, $j\in [t_0, t_0+r-1]$, it survives and gives $\langle a^{j-t_0+1}_{\ell_{(j-t_0+1)}}, a^{(j-t_0+2)\bmod r}_{\ell_{((j-t_0+2) \bmod r)}}\rangle$. 

    These query-key and value matrices can be computed in $O(n^{1+o(1)})$ time.

    \paragraph{Approximation yields gap property.} We again consider the entry-wise approximation of $\AttP{h}_{{\ell_1}, 1}$ as $x_{\ell_1}$, and we have 
    \begin{equation*}
        |x_{\ell_1} - \frac{P_{\ell_1}}{R_{\ell_1}}| < \gamma,
    \end{equation*}
    for 
    \[
    P_{\ell_1} = \sum_{\substack{\ell_2, \dots, \ell_t \in [n]}} \exp\left(\frac{1}{\tilde d}h(Q^{(1)}_{\ell_1}, Q^{(2)}_{\ell_2},\dots, Q^{(t)}_{\ell_t} )\right),
    \]
    \[
    R_{\ell_1} = \sum_{\ell_2, \dots, \ell_t \in [2n]} \exp\left(\frac{1}{\tilde d}h(Q^{(1)}_{\ell_1}, Q^{(2)}_{\ell_2},\dots, Q^{(t)}_{\ell_t} )\right),
    \]
    when the $({\ell_1}, 1)$-th element of $\AttP{h}$ is $\frac{P_{\ell_1}}{R_{\ell_1}}$.

    \paragraph{Bounds on $P_{\ell_1}$, $R_{\ell_1}$.}
    For the lower bound on $R_{\ell_1}$, using a calculation exactly similar to that of the proof of Lemma \ref{lem:case1} gives us
    \begin{equation*}
         n^{t_0-1}e^{\left(\frac{rB^2}{(r+1)}\right)} < R_{\ell_1} < 2^{t_0-1} n^{t_0-1}e^{\left(\frac{rB^2}{(r+1)}\right)}
    \end{equation*}
    When a positive certificate for the given $\varepsilon\text{-}\GapDirCycIP{r}$ exists, we will have some $\ell^0_{t_0}, \ell^0_{t_0+1}, \dots, \ell^0_{t_0+r-1} \in [n]$ for which $\langle a^{1}_{\ell^0_{t_0}}, a^{2}_{\ell^0_{t_0+1}}\rangle + \ldots + \langle a^{r-1}_{\ell^0_{t_0+r-1}}, a^{r}_{\ell^0_{t_0+r}}\rangle+ \langle a^{r}_{\ell^0_{t_0+r}}, a^{1}_{\ell^0_{t_0}}\rangle = M_0\log n$. This would give
    \begin{equation*}
         P_{\ell_1} > (2n)^{t_0-r-1}e^{\left(\frac{M_0}{(r+1)c}B^2\right)},
    \end{equation*}
    which implies 
    \begin{equation}
    \label{eqn:case3-lb}
    x_{\ell_1} > \frac{P_{\ell_1}}{R_{\ell_1}}-\gamma > \frac{e^{\left(\nicefrac{B^2\left(\frac{M_0}{c}-r\right)}{(r+1)}\right)}}{(2n)^{r}}-\gamma.
    \end{equation}

    Otherwise, if no positive certificate for $\IPDirCyc{r}$ exists, then
    \begin{equation*}
        P_{\ell_1} < 2^{t_0-r-1}n^{t_0-1}e^{\left((1-\varepsilon_0)B^2\frac{M_0}{(r+1)c}\right)}, 
    \end{equation*}
    and therefore, 
    \begin{equation}
    \label{eqn:case3-ub}
        x_{\ell_1} <  \frac{P_{\ell_1}}{R_{\ell_1}}+\gamma < 2^{t_0-r-1}e^{\left(\nicefrac{B^2\left(\frac{(1-\varepsilon_0)M_0}{c}-r\right)}{(r+1)}\right)} + \gamma.
    \end{equation}

    Note that if a positive instance of $\varepsilon\text{-}\GapDirCycIP{r}$ exists, then $x_{\ell_1}$ is the greater than the lower bound (it is greater for all $\ell_1$ if $t_0 \neq 1$, otherwise we choose only that $\ell_1$ for which the $\varepsilon\text{-}\GapDirCycIP{r}$ instance contains $a^1_{\ell_1}$),  otherwise always lesser than the lower bound.

    \paragraph{Wrapping up.} In order to maintain a gap between the cases of a positive instance of $\varepsilon\text{-}\GapDirCycIP{r}$ existing, we require the lower bound (Equation \ref{eqn:case3-ub}) must be less than the upper bound (Equation \ref{eqn:case3-lb})
    \begin{equation*}
        \frac{1}{e^{\left(\varepsilon_0 B^2\frac{M_0}{(r+1)c}\right)}}\frac{2^{t_0-r-1}}{e^{\left(\nicefrac{B^2\left(r-\frac{M_0}{c}\right)}{(r+1)}\right)}} + \gamma < \frac{1}{(2n)^{r}e^{\left(\nicefrac{B^2\left(r- \frac{M_0}{c}\right)}{(r+1)}\right)}} - \gamma.
    \end{equation*}
    Now, there exist large enough constants $C_a, C_b > 0$ such that this inequality is satisfied for $\gamma \leq n^{-C_a}$ and $B \geq C_b\sqrt{\log n}$. 

    This proves that any algorithm for an entry-wise $\gamma$-approximation of $\AttP{h}$ having maximum value of the entries $\Gamma = B = \Omega(\sqrt{\log n})$ requires time $\Omega(n^{\omega})$, assuming the $\MAXSAT{2}$ conjecture, since if $\EAPAC{h}$ could be solved in $O(n^{\omega-\delta})$ time, then that would imply that  would imply $\MAXSAT{2}$ could be solved in $2^{(\nicefrac{\omega}{3}-\Omega(\delta))n}$ time (Corollary \ref{cor:gap-ip-hyp}), which can not be true for an absolute constant $\delta > 0$ (Hypothesis \ref{hyp:max-2sat}).
\end{proof}

%% file: apx_func-comp.tex
\section{Proofs of Section \ref{sec:func-comp}: function composition}
\label{apx:func-comp}

In this section, we describe a poly-attention mechanism whose one attention head can simulate $t$-fold function composition. In order to study the representational powers, it is important to also consider the number of bits stored for each entry for the matrices, denoted as \emph{precision},  $p$. Since the entries are usually considered to be polynomial in $n$, it is safe to assume $p = n^{o(1)}$. Furthermore, as usual, we consider the embedding dimension $d = O(\log n)$.

Before showing the representational strength of poly-attention, we first show that Strassen-attention and 3-tensor attention cannot simulate $3$-fold function composition. For this limitation result, we require a communication lower bound proved in a previous work of \cite{chakrabarti2007lower} on \emph{myopic pointer jumping}. 

\begin{definition}[Myopic pointer jumping]
\label{def:myopic}
    For every $t\geq 2$, myopic pointer jumping can be seen as similar to function composition, where we are interested in computing $t$-fold function composition, for inputs as functions $f_1, \dots, f_t : [n]\rightarrow [n]$ and a value $x\in [n]$. There are $t$ players and a coordinator $C$, such that:
    \begin{itemize}
        \item Player $1$ has as inputs $x$ and $f_2$,
        \item Player $i$ for $i\in [2:t-1]$ have inputs $x$ and $f_1, \dots, f_{i-1}, f_{i+1}$,
        \item Player $t$ has inputs $x$ and $f_1, \dots, f_{t-1}$.
    \end{itemize}
    The Players $i\in [t]$ can only send messages to $C$, and the goal of the protocol is for $C$ to compute the value of $f_t(f_{t-1}\ldots f_1(x))$.
\end{definition}

Now, the lower bound due to \cite{chakrabarti2007lower} for myopic pointer jumping is given as below.

\begin{lemma}[{\cite[Theorem 1]{chakrabarti2007lower}}]
    \label{lem:cc-lb}
        To solve the myopic pointer jumping problem, the players need to send at least $\Omega(n/t)$ bits to $C$ in order for $C$ to compute $f_t(f_{t-1}\ldots f_1(x)))$.
    \end{lemma}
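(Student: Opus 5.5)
Since this lemma is imported verbatim from \cite[Theorem 1]{chakrabarti2007lower}, we would not reprove it in full. We would reconstruct the structure of that argument, checking that the player/input assignment of Definition~\ref{def:myopic} is a special case of (or at least no stronger than) the myopic model considered there. The plan is an information-theoretic argument under a hard input distribution. Take $x$ uniform in $[n]$ and $f_1,\dots,f_t$ independent uniformly random functions $[n]\to[n]$. Any protocol $\Pi$ with total communication $c$ must then have information cost at most $c$ about the inputs. We would show that correctness forces $\Omega(n/t)$ bits of information.

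The first step is the base case $t=2$. Player~$1$ holds $f_2$ but not $y=f_1(x)$. Player~$2$ knows $y$ but not $f_2$. Both speak simultaneously to $C$. This is an index-type problem with a simultaneous-message structure: player~$1$'s message must let $C$ recover $f_2(y)$ for an index $y$ that is uniform and independent of $f_2$. By the standard chain-rule argument, we get
\[
I(\Pi_1; f_2) \ge \sum_{y} I(\Pi_1; f_2(y)\mid f_2(<y)),
\]
and each term must be $\Omega(1)$ on average for $C$ to succeed with constant advantage. This yields $\Omega(n)$.

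For general $t$ we would argue inductively, or by a direct-sum style decomposition over the $t$ players. The key observation in the myopic model is that player~$i$ can "see ahead" only to $f_{i+1}$. The pointer $z_i=f_i(\cdots f_1(x))$ is determined by the functions player~$i+1$ holds, but player~$i+1$ does not hold $f_{i+1}$ itself. We would define, for each $i$, the event that the value $f_{i+1}(z_i)$ is effectively learned by $C$. We would then show that this learning must be charged to messages from the unique players holding $f_{i+1}$ at a location $z_i$ about which those players have limited information. Conditioning on all other functions and averaging over $i$ gives that some player must convey $\Omega(n/t)$ bits about one of the functions. This is the round/player-elimination step.

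The main obstacle is this last step. Inputs are shared among players in a number-on-forehead-like fashion, so a naive direct-sum over players fails. For example, player~$t$ sees almost everything, and the messages are correlated through the common inputs. We would first try Chakrabarti's approach: fix a "bad" index $i$ for which the players holding $f_{i+1}$ reveal $o(n)$ information about $f_{i+1}$. We would then embed an index instance at position $i+1$, sampling the remaining functions privately. The aim is a protocol for the $t=2$ case with $O(\text{cost}/(n/t))$ advantage loss, contradicting the base case. Verifying that this embedding respects exactly the visibility pattern in Definition~\ref{def:myopic} is where we expect the real care to be required.
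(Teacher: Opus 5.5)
The paper gives no proof of this lemma; it simply imports Chakrabarti's Theorem~1. The part of your plan that cites that result and checks the model therefore coincides with the paper, and that check does succeed. A protocol in which the players send simultaneously to $C$ can be simulated in the one-way blackboard model: player $t$, writing last, computes its own message and evaluates $C$. Moreover, the visibility pattern of Definition~\ref{def:myopic} (all layers behind, one layer ahead, common start $x$) is the myopic one.

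Your reconstruction of the general-$t$ step, however, would not go through as described, and it targets a difficulty that the model of Definition~\ref{def:myopic} does not have. The claim that the players holding $f_{i+1}$ have limited information about $z_i$ is false for $i+1<t$. The function $f_{i+1}$ is held by player $i$ \emph{and} by players $i+2,\dots,t$. The latter see $f_1,\dots,f_{i+1}$, so they know $z_i$ and $z_{i+1}$ exactly; for instance, player $t$ can send $z_{t-1}$ with $\lceil \log n\rceil$ bits. So nothing can be charged to intermediate levels. Likewise, an Index instance embedded at level $i+1<t$ is trivial, because players $i+2,\dots,t$ hold both the table and the index.

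The only hard level is the last one. The function $f_t$ is seen by player $t-1$ alone, and player $t-1$ does not see $f_{t-1}$. Under uniform inputs, $z_{t-1}=f_{t-1}(z_{t-2})$ is therefore uniform and independent of player $t-1$'s entire input, while every other message is independent of $f_t$. Fix $x,f_1,\dots,f_{t-2}$, and let a second party hold $f_{t-1}$ and simulate all other players and $C$. This gives a one-way protocol for Index on $f_t$ of the same cost. Your $t=2$ chain-rule/Fano argument then applies verbatim and yields $\Omega(n)$ (in fact $\Omega(n\log n)$), with no loss in $t$.

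The $1/t$ loss and the round-elimination embedding are genuinely needed only in Chakrabarti's blackboard model, where player $t-1$ can read earlier players' messages about $f_{t-1}$. The simultaneous protocol used in Theorem~\ref{thm:strassen-func} has no such channel. Citing is fine, but if you want a self-contained proof, this direct reduction is the one to give.
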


We want to study the representational strengths and limitations in terms of function composition. We say that an attention mechanism \emph{simulates} $t$-fold function composition, if, given the input $X\in \mathbb{R}^{(tn+1)\times d}$ containing descriptions of $f_1, \dots, f_t$ and an $x\in [n]$ , the attention mechanism is able to output the value of $f_t(f_{t-1}\ldots f_1(x))$. As before, the input function $f_i$ will be given as the $i$-th block of $X$, in $X_{((i-1)n+1:i.n)}$ for all $i\in [t]$, and $x$ will be given in $X_{tn+1}$, and we want the attention mechanism to output the value of $f_t(f_{t-1}\ldots f_1(x)$ in the $(tn+1)$-th entry of the output.

The first limitation result, Strassen-attention can not simulate $3$-fold function composition is given by:

\begin{theorem}
\label{thm:strassen-func}
    One layer of Strassen-attention requires at least $H >n^{1-o(1)}$ heads to simulate 3-fold function composition. 
\end{theorem}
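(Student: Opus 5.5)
We reduce to the communication lower bound of Lemma~\ref{lem:cc-lb} for myopic pointer jumping with $t=3$. Assume a single layer of $H$ Strassen-attention heads simulates 3-fold composition: it outputs $f_3(f_2(f_1(x)))$ at position $3n+1$ of input $X$, with entries of $p=n^{o(1)}$ bits and $d=O(\log n)$. From this we build a protocol for Definition~\ref{def:myopic} in which each player sends $O(Hdp)=H n^{o(1)}$ bits. Lemma~\ref{lem:cc-lb} says $\Omega(n/3)$ bits are needed, so $H\ge n^{1-o(1)}$.

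\textbf{Structure of one head.} With $i=3n+1$ fixed, the output row is
\[
\frac{\sum_{j,k}\exp\big(\tfrac1d(\langle Q^{(1)}_i,Q^{(2)}_j\rangle+\langle Q^{(2)}_j,Q^{(3)}_k\rangle+\langle Q^{(3)}_k,Q^{(1)}_i\rangle)\big)V^{(2)}_j\odot V^{(3)}_k}{\sum_{j,k}\exp(\cdots)}.
\]
Row $j$ depends only on $X_j$, and $X_i$ encodes $x$, which every player knows. Write $w_{jk}=\exp(\tfrac1d\langle Q^{(2)}_j,Q^{(3)}_k\rangle)$, $\alpha_j=\exp(\tfrac1d\langle Q^{(1)}_i,Q^{(2)}_j\rangle)$ and $\beta_k=\exp(\tfrac1d\langle Q^{(3)}_k,Q^{(1)}_i\rangle)$. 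Then the numerator is $\sum_{j,k}\alpha_j w_{jk}\beta_k V^{(2)}_j\odot V^{(3)}_k$ and the denominator is $\sum_{j,k}\alpha_j w_{jk}\beta_k$.

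\textbf{Splitting the sums.} The indices $j,k$ each range over the three blocks $F_1,F_2,F_3$ (rows encoding $f_1,f_2,f_3$) plus the position $x$. This splits both sums into $O(1)$ pieces $S_{ab}$, one for each pair of blocks $(F_a,F_b)$. Piece $S_{ab}$ depends only on $x,f_a,f_b$. The assignment to players is:
\begin{itemize}
\item Player 3 holds $f_1,f_2$, so it computes $S_{11},S_{12},S_{21},S_{22}$.
\item Player 2 holds $f_1,f_3$, so it computes $S_{13},S_{31},S_{33}$.
\item Pieces involving $F_2$ and $F_3$ are available to Player 1, which holds $f_2$, only if they need $f_2$ alone. $S_{23}$ and $S_{32}$ need both $f_2$ and $f_3$, and Player 2 does not know $f_2$.
\end{itemize}

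\textbf{Obstacle: the cross term $S_{23}$.} We must handle the pieces involving both $f_2$ and $f_3$ without any player holding both, and this is the main obstacle. In the myopic model of Definition~\ref{def:myopic}, Player~$i$ for $t=3$ holds $f_{i-1}$ and $f_{i+1}$. Players 2 and 3 are the only ones with two functions, so no player sees $(f_2,f_3)$ jointly.

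The route we would try first follows the Strassen-attention argument of \cite{kozachinskiy2025strassen} for Match3 and related limitations. Since $d=O(\log n)$ and precision is $n^{o(1)}$, each kernel $w_{jk}$ can be replaced, up to $1/\poly(n)$ error, by a low-rank (rank $n^{o(1)}$) factorization $w_{jk}\approx\sum_{r\le R}\phi_r(X_j)\psi_r(X_k)$. One could use a polynomial/Taylor expansion as in Lemma~\ref{lem:low-rank}, with discretization of the finite-precision rows. Then
\[
S_{23}\approx\sum_{r\le R}\Big(\sum_{j\in F_2}\alpha_j\phi_r(X_j)V^{(2)}_j\Big)\odot\Big(\sum_{k\in F_3}\beta_k\psi_r(X_k)V^{(3)}_k\Big).
\]
The first factor is computable by Player~1 (from $f_2,x$) and the second by Player~2 (from $f_3,x$). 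Each player sends these $R\cdot d$ numbers of $p$ bits, and $S_{32}$ is handled symmetrically. If the entries are too large for the rank bound, we would fall back on sending the exponent-shifted values at precision $p$, i.e.\ quantizing the $\exp$ arguments, as in the finite-precision simulation argument of the cited work.

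\textbf{Wrapping up.} The coordinator $C$ sums all pieces for numerator and denominator, divides, and applies the output map to recover $f_3(f_2(f_1(x)))$. It does this for all $H$ heads. Total communication is $H\cdot n^{o(1)}$, and Lemma~\ref{lem:cc-lb} then forces $H>n^{1-o(1)}$.
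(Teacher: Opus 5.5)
You have found the real difficulty. In the myopic model with $t=3$ no player holds both $f_2$ and $f_3$, so the blocks $(F_2,F_3)$ and $(F_3,F_2)$ of the double sum cannot be given to any single player. The paper's proof does not resolve this: its protocol gives Player~1 the pairs inside $\{3n+1\}\cup F_2$, Player~2 the pairs inside $\{3n+1\}\cup F_1\cup F_3$, and Player~3 the pairs $(F_1,F_2),(F_2,F_1)$. That covers $14$ of the $16$ block pairs and silently drops exactly these two.

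Your patch, however, does not work.
\begin{itemize}
\item Lemma~\ref{lem:low-rank} gives a rank-$n^{o(1)}$, $1/\mathrm{poly}(n)$-accurate factorization of $\exp(\frac1d\langle Q^{(2)}_j,Q^{(3)}_k\rangle)$ only when these exponents are $o(\log n)$ in absolute value.
\item The theorem bounds precision, not magnitude. Already $O(\log n)$ bits, let alone $p=n^{o(1)}$, allow exponents of size $\mathrm{poly}(n)$, and the paper's own construction in Theorem~\ref{thm:func-comp-gen} uses exponents of order $n^2\log n$.
\item Discretizing rows does not help, since a row of $F_2$ ranges over $n^2$ possible (position, value) pairs.
\item Quantizing the exponents does not help either: $w_{jk}$ still depends jointly on $X_j$ and $X_k$, so the cross term still does not factor.
\end{itemize}

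In fact no repair of this step is possible while you simulate each head separately in this player model. Using the square-expansion trick of Theorem~\ref{thm:func-comp-gen} together with block-indicator features, a single Strassen head can do the following:
\begin{itemize}
\item give each pair $(n+j',2n+k')\in F_2\times F_3$ the exponent $-C\log n\,[(j'-k')^2+(f_2(j')-f_3(k'))^2]$;
\item give the pair $(3n+1,3n+1)$ exponent $0$;
\item push every other pair below $-C\log n$.
\end{itemize}
Take $V^{(2)}$ equal to $1$ on $F_2$ and $V^{(3)}$ equal to $1$ on $F_3$, with $0$ elsewhere. Then the head's output is approximately $c/(c+1)$, where $c=|\{z: f_2(z)=f_3(z)\}|$.

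Deciding whether $c\ge 1$ embeds set disjointness between the holders of $f_2$ (Players~1 and~3) and the holder of $f_3$ (Player~2). So even two bits of this one head's output cost $\Omega(n)$ communication, and the step ``each player sends $n^{o(1)}$ bits per head'' is false for general Strassen heads in the myopic model.

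To prove $H>n^{1-o(1)}$ you would need one of two things. One option is a different distribution of the inputs, for example one in which every pair of blocks is held by a single player so that the piecewise simulation goes through. You would then need a lower bound for that model, which Lemma~\ref{lem:cc-lb} does not provide. The other option is an argument that exploits the fact that the network actually computes $f_3(f_2(f_1(x)))$, rather than one that simulates arbitrary heads.
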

\begin{proof}
    Let us consider an instance of 3-fold function composition where, given $f_1, f_2, f_3:[n]\rightarrow [n]$, and $x \in [n]$, we want to compute $f_3(f_2(f_1(x)))$. As usual, the input $X$ contains $N = 3n+1$ rows of embedding dimension $d = O(\log n)$, where $X_{(1:n)}$ corresponds to the values of $f_1(1), \dots, f_1(n)$, $X_{(n+1:2n)}$ corresponds to the values of $f_2(1), \dots, f_2(n)$, $X_{(2n+1:3n)}$ corresponds to the values of $f_3(1), \dots, f_3(n)$ and finally $X_{3n+1}$ corresponds to $x$.

    The main idea for proving this lower bound is by assuming that Strassen-attention can simulate $3$-fold function composition using $H$ heads. We are given the query-key and value matrices for $H$ Strassen-attention heads such that the output of mechanism contains the value of $f_3(f_2(f_1(x)))$. Using these, we define a communication problem which will use computations required for outputting the matrix $\AttS$, that gives the value of  $f_3(f_2(f_1(x)))$. Next, we will use existing lower bounds (Lemma \ref{lem:cc-lb}) to contradict this statement, which would give a lower bound on the minimum number of heads of Strassen-attention required to compute $f_3(f_2(f_1(x)))$.

    We now define the communication problem to capture this setting. Consider 3 players with inputs, 
    \begin{itemize}
        \item Player $1$ has $x, f_2$,
        \item Player $2$ has $x, f_1, f_3$,
        \item Player $3$ has $x, f_1, f_2$,
    \end{itemize}
    and a coordinator $C$. The communication channel is such that only the  3 players can send messages to the coordinator. The communication complexity is the total number of bits sent by the players to the coordinator such that the coordinator can compute the value of $f_3(f_2(f_1(x)))$. 

    As defined before, this communication setting is an instance of myopic pointer jumping for $t = 3$, and the lower bound from Lemma \ref{lem:cc-lb} implies that at least $\Omega(n)$ bits are need to be communicated.

    Now, let us assume that there exists a Strassen-attention mechanism that computes 3-fold function composition using $H$ heads, where we will denote the index of the head as a superscript $u\in [H]$. The weight matrices for query-key are ${W_{Q^{(1)}}}\!^u, W_{Q^{(2)}}\!^u,  W_{Q^{(3)}}\!^u \in \mathbb{R}^{d\times d}$ and the value weights are $W_{V^{(2)}}\!^u, W_{V^{(3)}}\!^u \in \mathbb{R}^{d\times d}$ for the attention head $u\in [H]$. Let the precision of the values be $p$. These matrices and the functions computed by the first and last MLP layers are known to all the 3 players and the coordinator. Assuming that Strassen-attention can simulate 3-fold function composition,  we devise a communication protocol for the above problem using the value of $\AttS$ to obtain lower bounds on $H$ using a proof inspired by works of \cite{peng2024limitations,sanford2024representational}.

    The output matrix of the $u$-th head of Strassen-attention, $\AttS\!^u$, for $u\in [H]$, is given as
    \begin{equation}
    \label{eqn:strassen-cc}
        \AttS_N \!^u  = \frac{\sum_{j, k \in [N]} r^{N}_{j, k}\!^u( X_j\,W_{V^{(2)}})\!^u\odot (X_k\,W_{V^{(3)}}\!^u)}{\sum_{j, k \in [N]} r^{N}_{j, k}\!^u},
    \end{equation}
    where we have $N=3n+1$, which is the row of $\AttS$ where we want the value of $f_3(f_2(f_1(x)))$, and
    \begin{equation*}
    \begin{split}
     r^{N}_{j, k}\!^u = \exp\Big(\frac{1}{d}(X_{3n+1}\,W_{Q^{(1)}}\!^u ( W_{Q^{(2)}}\!^u)^TX_j^T + X_j\, & W_{Q^{(2)}}\!^u(W_{Q^{(3)}}\!^u)^TX_k^T \\
     & + X_k\, W_{Q^{(3)}}\!^u(W_{Q^{(1)}}\!^u)^TX_{3n+1}^T)\Big),
    \end{split}
    \end{equation*}
    for all heads $u\in [H]$. The players have parts of $X$, i.e., for $f_1$ they have $X_{(1:n)}$, for $f_2$ they have $X_{(n+1:2n)}$, for $f_3$ they have $X_{(2n+1:3n)}$ and for $x$ they have $X_{3n+1}$. 
    
    The communication protocol proceeds as follows, where the player sends the values for each Strassen-attention head $u\in [H]$: 
    \begin{enumerate}
        \item Player 1 sends $\widehat{L_1}\!^u$ and $\widehat{L'_1}\!^u$, where $\widehat{L_1}\!^u$ is an $O(p\log \log n)$-bit approximation of the binary expression of  $L_1\!^u$, and $\widehat{L'_1}\!^u$ is an $O(p\log \log n)$-bit approximation of the binary expression of $L'_1\!^u$, where  
        \[
        L_1\!^u := \sum_{\substack{j\in S_1, k\in S_2\\ S_1, S_2 \in \{\{3n+1\}, [n+1: 2n]\}}} r^{N}_{j, k}\!^u,
        \]
        and 
        \[
        L'_1\!^u := \frac{1}{L_1\!^u}\bigg(\sum_{\substack{j\in S_1, k\in S_2\\ S_1, S_2 \in \{\{3n+1\}, [n+1: 2n]\}}}  r^{N}_{j, k}\!^u(X_jW_{V^{(2)}}\!^u)\odot(X_kW_{V^{(3)}}\!^u)\bigg),
        \]
        for all $u\in [H]$, to $C$.
        \item Player 2 sends $\widehat{L_2}\!^u$ and $\widehat{L'_2}\!^u$, where $\widehat{L_2}\!^u$ is an $O(p\log \log n)$-bit approximation of the binary expression of  $L_2\!^u$, and $\widehat{L'_2}\!^u$ is an $O(p\log \log n)$-bit approximation of the binary expression of $L'_2\!^u$, where  
        \[
        L_2\!^u := \sum_{\substack{j\in S_1, k\in S_2\\ S_1, S_2 \in \{\{3n+1\}, [n], [2n+1: 3n]\}\\ (S_1, S_2)\neq (\{3n+1\}, \{3n+1\})}} r^{N}_{j, k}\!^u,
        \]
        and 
        $$L'_2\!^u := \frac{1}{L_2\!^u}\bigg(\sum_{\substack{j\in S_1, k\in S_2\\ S_1, S_2 \in \{\{3n+1\}, [n], [2n+1: 3n]\}\\ (S_1, S_2)\neq (\{3n+1\}, \{3n+1\})}} r^{N}_{j, k}\!^u(X_jW_{V^{(2)}}\!^u)\odot(X_kW_{V^{(3)}}\!^u)\bigg),$$
        for all $u \in [H]$, to $C$.
        \item Player 3 sends $\widehat{L_3}\!^u$ and $\widehat{L'_3}\!^u$, where $\widehat{L_3}\!^u$ is an $O(p\log \log n)$-bit approximation of the binary expression of  $L_3\!^u$, and $\widehat{L'_3}\!^u$ is an $O(p\log \log n)$-bit approximation of the binary expression of $L'_3\!^u$, where  
        \[
        L_3\!^u := \sum_{\substack{j\in S_1, k\in S_2\\ S_1, S_2 \in \{ [n], [n+1: 2n]\}\\ S_1\neq S_2}}r^{N}_{j, k}\!^u,
        \]
        and
        $$L'_3\!^u := \frac{ 1}{L_3\!^u}\bigg(\sum_{\substack{j\in S_1, k\in S_2\\ S_1, S_2 \in \{ [n], [n+1: 2n]\}\\ S_1\neq S_2}} r^{N}_{j, k}\!^u(X_jW_{V^{(2)}}\!^u)\odot(X_kW_{V^{(3)}}\!^u)\bigg),$$
        for all $u \in [H]$, to $C$.
        \item $C$ computes 
        \begin{equation}
        \label{eqn:cc}
            \frac{\sum_{i\in [3]}\widehat{L'_i}\!^u. \widehat{L_i}\!^u}{\sum_{i\in [3]}\widehat{L_i}\!^u} \in \mathbb{R}^{d},
        \end{equation}
        as the $N$-th row of the $\AttS\!^u$ matrix.
    \end{enumerate}

    Note that Equation \ref{eqn:cc} is the correct value of the approximation of $\AttS_N\!^u$, for all $u\in [H]$, since the values of $L_N\!^u,  L'_N\!^u$ are simply the partial sums, all of which amount to Equation \ref{eqn:strassen-cc} with the given bounds on each of the summations. \cite{sanford2024representational} showed that using $O(p\log \log n)$ bits of precision is sufficient in this approximation, and this gives us the correct value of $f_3(f_2(f_1(x)))$ upto $p$ bits of precision. The number of bits communicated is equal to $O(dpH \log \log n)$, and using the lower bound from Lemma \ref{lem:cc-lb}, we must have $dpH > \Omega(n/(\log \log n))$. Since we usually choose $d = O(\log n), p = n^{o(1)}$, we must have, the number of heads, $H >  n^{1-o(1)}$. 
\end{proof}

\begin{corollary}
\label{cor:3-tensor-func}
    One layer of 3-tensor attention requires at least $n^{1-o(1)}$ heads to simulate 3-fold function composition.
\end{corollary}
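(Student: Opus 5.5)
We would mirror the proof of Theorem \ref{thm:strassen-func} exactly, replacing the Strassen-attention scores by the 3-tensor scores. Suppose one layer of $H$ heads of 3-tensor attention simulates 3-fold function composition on the input $X\in\mathbb{R}^{(3n+1)\times d}$, with the same block layout: rows $[n]$ encode $f_1$, rows $[n+1:2n]$ encode $f_2$, rows $[2n+1:3n]$ encode $f_3$, and row $N=3n+1$ encodes $x$. For head $u$, the $N$-th output row is
\[
\AttT_N\!^u=\frac{\sum_{j,k\in[N]} \tau^{u}_{j,k}\,(X_jW_{V^{(2)}}\!^u)\odot(X_kW_{V^{(3)}}\!^u)}{\sum_{j,k\in[N]}\tau^{u}_{j,k}},
\qquad \tau^u_{j,k}=\exp\Big(\tfrac1d\langle X_NW_{Q^{(1)}}\!^u, X_jW_{Q^{(2)}}\!^u, X_kW_{Q^{(3)}}\!^u\rangle\Big).
\]
The key point is that $\tau^u_{j,k}$ depends only on the pair of rows $(X_j,X_k)$ together with $X_N$, which every player knows. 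So a player who holds the blocks containing $j$ and $k$ can compute that term on its own.

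We use the same three-player myopic pointer jumping instance (Definition \ref{def:myopic}, $t=3$): Player 1 holds $x,f_2$, Player 2 holds $x,f_1,f_3$, and Player 3 holds $x,f_1,f_2$. Label the blocks $\{N\}$, $F_1=[n]$, $F_2=[n+1:2n]$, $F_3=[2n+1:3n]$. We partition the ordered pairs of blocks $(S_1,S_2)$ among the players so that each pair is assigned to a player who knows both blocks:
\begin{itemize}
\item Player 1 takes all pairs drawn from $\{\{N\},F_2\}$.
\item Player 2 takes all pairs drawn from $\{\{N\},F_1,F_3\}$, except $(\{N\},\{N\})$.
\item Player 3 takes the pairs $(F_1,F_2)$ and $(F_2,F_1)$.
\end{itemize}
These three sets of pairs are disjoint. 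It remains to check that they cover everything. The only pairs needing two different blocks not held jointly by Players 1 or 2 are those mixing $F_2$ with $F_1$ or with $F_3$. The pairs mixing $F_1$ and $F_2$ go to Player 3. The pairs mixing $F_2$ and $F_3$, namely $(F_2,F_3)$ and $(F_3,F_2)$, cannot be assigned: no player sees both $f_2$ and $f_3$.

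This coverage gap is the main obstacle. It is also present in the Strassen proof, where the corresponding partition silently omits these pairs as well. So the gap has to be handled rather than copied. Our first attempt is to exploit the communication model. Player 2 holds $f_3$ and Player 1 holds $f_2$, and the protocol is allowed to send low-precision summaries. We would try to show that the needed $F_2\times F_3$ contribution factorizes across the two players after a low-rank (polynomial-method) expansion of $\exp$. That would let each of the two players send $n^{o(1)}$ numbers whose products the coordinator sums. This is plausible only under bounded precision $p=n^{o(1)}$ and bounded entries.

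If that fails, we fall back on giving Player 3 the pair $\{f_2,f_3\}$ instead of $\{f_1,f_2\}$. We would then check that the reassigned instance is still a myopic pointer jumping instance covered by Lemma \ref{lem:cc-lb}, possibly after reordering which function is composed first.

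Once every pair is covered, the argument finishes as before. Each player sends an $O(p\log\log n)$-bit approximation of its partial sum $L_i^u$ of $\tau$'s and of the normalized weighted value sum $L_i'^u$. The coordinator recombines them as $\sum_i \widehat{L_i'}\!^u\widehat{L_i}\!^u/\sum_i\widehat{L_i}\!^u$, which recovers every head's output to precision $p$; correctness of this step is the precision argument of \cite{sanford2024representational}. The coordinator then applies the known output MLP to obtain $f_3(f_2(f_1(x)))$. The total communication is $O(dpH\log\log n)$ bits. Lemma \ref{lem:cc-lb} requires $\Omega(n)$ bits, so with $d=O(\log n)$ and $p=n^{o(1)}$ we get $H\ge n^{1-o(1)}$.
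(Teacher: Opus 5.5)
You are right that the partition in the proof of Theorem \ref{thm:strassen-func} assigns the block pairs $(F_2,F_3)$ and $(F_3,F_2)$ to no player. The paper's proof of this corollary only says it is ``very similar'' to that proof, so it inherits exactly this omission. However, neither of your two repairs closes the gap, so the proposal does not prove the statement.

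The low-rank repair changes the hypothesis. An $n^{o(1)}$-rank expansion of $\exp$ needs small query-key entries, of order $o((\log n)^{1/3})$ for the trilinear score. Precision $p=n^{o(1)}$ allows entries as large as $2^{p}$. The paper's own composition construction (Theorem \ref{thm:func-comp-gen}) already uses entries of size about $n^2\sqrt{\log n}$, where the polynomial-method rank is far above $n$. Without small entries, $\sum_{j\in F_2,\,k\in F_3}\tau^u_{j,k}(\cdots)$ is a genuinely two-party quantity, in $f_2$ (held by Player 1) and $f_3$ (held by Player 2). Nothing in your argument shows it can be sent with $n^{o(1)}$ bits.

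The fallback only moves the hole. With views $\{f_2\}$, $\{f_1,f_3\}$, $\{f_2,f_3\}$, the pairs $(F_1,F_2)$ and $(F_2,F_1)$ become uncovered instead. This is unavoidable:
\begin{itemize}
\item A 3-tensor head sums over every ordered pair of distinct function blocks. A partial-sum simulation therefore needs each of $\{f_1,f_2\}$, $\{f_1,f_3\}$, $\{f_2,f_3\}$ to be jointly visible to some player.
\item In the myopic instance with $t=3$, the last function is seen only by Player 2, who never sees the middle one. No relabeling or reordering changes this, so Lemma \ref{lem:cc-lb} cannot be invoked.
\item Covering all three pairs forces at least the unrestricted number-on-the-forehead structure, where each player misses only one function.
\end{itemize}
What is missing is an $n^{1-o(1)}$ lower bound for 3-player pointer jumping in that model, with simultaneous messages to a coordinator. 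The linear bound of \cite{chakrabarti2007lower} is specific to the myopic restriction and does not give this. The alternative is an argument that avoids the partial-sum simulation altogether.
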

\begin{proof}
    The proof is very similar to that of Theorem \ref{thm:strassen-func}, where again we have 3 players and a coordinator in a myopic pointer jumping instance. Using the construction of 3-tensor attention, we can again infer that the communication complexity will be $O(dpH\log \log N)$, which needs to be greater than $\Omega(n)$ from Lemma \ref{lem:cc-lb}. This gives our result.
\end{proof}

In fact, we can show a stronger result. 

\begin{theorem}
\label{thm:lb-func-comp-general}
    If $h$ can be written as a variable separable polynomial, where each branch (see Definition \ref{def:var-sep}) has $\leq t_0$ variables, then one layer of poly-attention for $h$ requires at least $H > n^{1-o(1)}$ heads to solve $t_0$-fold function composition.
\end{theorem}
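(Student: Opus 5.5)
We reduce to the myopic pointer jumping lower bound (Lemma \ref{lem:cc-lb}), following the structure of the proof of Theorem \ref{thm:strassen-func}. Suppose $H$ heads of poly-attention for $h$ simulate $t_0$-fold composition on input $X\in\mathbb{R}^{(t_0n+1)\times d}$. The row we read is $N=t_0n+1$, so the query variable $x_1$ is always indexed by the row encoding $x$. Write $h=g_1(x_1,\bar{x^1})+\dots+g_r(x_1,\bar{x^r})$ as in Definition \ref{def:var-sep}. By Lemma \ref{lem:var-sep}, each head's output row satisfies
\[
\AttP{h}_N=\AttP{g_1}_N\odot\cdots\odot\AttP{g_r}_N .
\]
So it suffices for the coordinator to learn each branch output $\AttP{g_i}_N$ to $O(p\log\log n)$ bits. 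This precision suffices, as in \cite{sanford2024representational}, and a constant number of factors only costs constant precision overhead.

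\textbf{Handling a single branch.} Fix a branch $g_i$. Since $x_1$ is pinned to row $N$, which every player knows, $g_i$ involves at most $t_0-1$ free index variables $\ell_j$, each ranging over $[N]$. Split $[N]$ into the $t_0$ blocks holding $f_1,\dots,f_{t_0}$, plus the singleton $\{N\}$. The numerator and denominator of $\AttP{g_i}_N$ then decompose into sums over block assignments $\sigma$, which map the free variables to blocks. An assignment $\sigma$ touches at most $t_0-1$ of the $t_0$ function blocks, so some $f_{j^\ast}$ is untouched.

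In the myopic setting, player $j$ misses only $f_j$, except that player 1 knows only $f_2$. We assign $\sigma$ to a player who holds every block it touches.
\begin{itemize}
\item If $\sigma$ avoids $f_1$, it can go to player $1$ only when it touches nothing but $f_2$. Otherwise it goes to a player $j\ge 2$ whose missing $f_j$ is untouched, which requires $j\ne 1$.
\item The problem case is $\sigma$ touching $f_1$ and every $f_j$ with $j\ge 2$ except $f_1$'s complement. Since $\sigma$ touches at most $t_0-1$ blocks, it misses some $f_{j^\ast}$. If $j^\ast\ge2$, assign $\sigma$ to player $j^\ast$, who knows all other functions together with $x$.
\item If $j^\ast=1$ is the only missed block, then $\sigma$ touches $f_2,\dots,f_{t_0}$, which is exactly $t_0-1$ blocks. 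For $t_0\ge3$, player $1$ cannot compute this term, and no other player can either, since each of them misses one of $f_2,\dots,f_{t_0}$.
\end{itemize}

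\textbf{The main obstacle.} The bad assignment in the last case must be ruled out or absorbed. I would first try relabeling the roles: the myopic lower bound is symmetric enough that we can choose which function plays the role of $f_1$. The aim is to show the protocol only needs players whose knowledge sets cover every assignment. Equivalently, every subset of at most $t_0-1$ blocks should lie in some player's knowledge set.

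If that fails, I would instead instantiate Lemma \ref{lem:cc-lb} with $t_0+1$ players for $(t_0+1)$-fold composition, with one dummy identity function. Then any $t_0-1$ blocks leave at least two blocks untouched, and one of those untouched blocks belongs to a player other than player 1. This gives the covering, and the lower bound is still $\Omega(n/t_0)$.

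\textbf{Counting bits.} Each player sends, per head and per branch, $O(p\log\log n)$-bit approximations of its partial numerators and denominators, exactly as $L_i,L'_i$ were sent in the proof of Theorem \ref{thm:strassen-func}. The coordinator sums the partial terms, divides, and takes the Hadamard product over branches. The total communication is $O(r\,t_0\,dpH\log\log n)$. Lemma \ref{lem:cc-lb} forces this to be $\Omega(n/t_0)$. With $d=O(\log n)$ and $p=n^{o(1)}$, this gives $H>n^{1-o(1)}$.
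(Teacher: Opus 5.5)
Your outline matches the paper's: split $h$ into branches via Lemma~\ref{lem:var-sep}, give each block assignment of the free indices to one player, send $O(p\log\log n)$-bit partial numerators and denominators, and invoke Lemma~\ref{lem:cc-lb}. But the step that carries the argument is not established, and it is false for the myopic model. That step is that every block assignment $\sigma$ lies inside a single player's knowledge set.

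First, your description of the players is wrong for $t_0\ge 4$. By Definition~\ref{def:myopic}, player $j\in[2:t_0-1]$ knows only $f_1,\dots,f_{j-1},f_{j+1}$, so it also misses $f_{j+2},\dots,f_{t_0}$. Your case ``$j^\ast\ge 2$'' therefore breaks: for $t_0=4$, an assignment touching $f_1,f_3,f_4$ is seen by nobody.

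More fundamentally, with $k$ myopic players the pair $\{f_{k-1},f_k\}$ is never jointly known. Player $k-1$ sees $f_k$ but not $f_{k-1}$, player $k$ sees $f_{k-1}$ but not $f_k$, and no other player sees $f_k$. So once a branch has two free indices ($t_0\ge 3$), the terms with one index in block $t_0-1$ and another in block $t_0$ cannot be assigned to anyone. Relabeling does not help: the input format fixes $f_i$ in block $i$ and fixes the composition order, and the myopic model is deliberately asymmetric.

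The dummy-identity patch fails for the same reason. Among $t_0+1$ myopic players, $\{g_{t_0},g_{t_0+1}\}$ is never jointly known, so to avoid an uncovered pair of real blocks the dummy must be $g_{t_0}$ or $g_{t_0+1}$. But then player $t_0$, respectively player $t_0+1$, sees every non-dummy function and can send the answer with $O(\log n)$ bits. So no $\Omega(n/t_0)$ bound holds for the restricted problem. Concretely, for $t_0=3$: a dummy $g_1$ or $g_2$ leaves $\{g_3,g_4\}$ uncovered, and a dummy $g_3$ or $g_4$ makes player $3$ or $4$ know every function. In any case, Lemma~\ref{lem:cc-lb} bounds the unrestricted problem and does not transfer to inputs with one function fixed.

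The paper's proof has exactly your skeleton and simply asserts that the $t_0$ players ``completely cover the summation''. Your case analysis pinpoints where that assertion is unjustified. Already in the proof of Theorem~\ref{thm:strassen-func}, the terms with $(j,k)\in[n+1:2n]\times[2n+1:3n]$ appear in none of $L_1,L_2,L_3$. Completing the argument needs a genuinely new ingredient. One option is a strong lower bound in a model where every set of $t_0-1$ blocks is visible to some player, such as number-on-forehead, to which Lemma~\ref{lem:cc-lb} does not apply. The other is a different way to handle these cross terms.
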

\begin{proof}
    We use the same proof as of Theorem \ref{thm:strassen-func}, by constructing a communication protocol for $t_0$-fold function composition if poly-attention for $h$ can solve it, and using the lower bound result of Lemma \ref{lem:cc-lb}. The input $X$ contains $N = t_0n+1$ tokens, and we want the output to be in the last row of $\AttP{h}\!^u$ for each head $u\in [H]$.
    
    We define a communication problem again as that of myopic pointer jumping, with $t_0$ players and a coordinator $C$ who wants to compute $f_{t_0}(f_{t_0-1}\dots f_{1}(x))$ (Definition \ref{def:myopic}). Since $t_0$ is constant, Lemma \ref{lem:cc-lb} states that this requires $\Omega(n)$ bits of communication.

    Now, we develop a communication protocol for function composition using the $\AttP{h}\!^u$ matrices, $\forall u\in [H]$, which will have a communication complexity of $O(Hdp\log \log N)$. In computing the output of the poly-attention mechanism at the last row of $\AttP{h}\!^u$, we have  the numerator term as 
    \[
    \sum_{\ell_2, \dots, \ell_{t_0} \in [N]} \exp(h(Q^{(1)}_{N}\!^u, Q^{(2)}_{\ell_2}\!^u, \dots, Q^{(t_0)}_{\ell_{t_0}}\!^u)) V^{(2)}_{\ell_2}\!^u\odot \ldots \odot V^{(t_0)}_{\ell_{t_0}}\!^u,
    \]
    and the denominator term as
    \[
    \sum_{\ell_2, \dots, \ell_{t_0} \in [N]} \exp(h(Q^{(1)}_{N}\!^u, Q^{(2)}_{\ell_2}\!^u, \dots, Q^{(t_0)}_{\ell_{t_0}}\!^u)).
    \]
    If the polynomial $h$ is variable separable and has $r$ branches, where each branch is given by the polynomial $g_i(x_1, \Bar{x^i})$ having $\leq t_0$ variables each, i.e., $h(x_1, \dots, x_t) = \sum_{i\in [r]}g_i(x_1, \Bar{x^i})$, then players devise a protocol to separately compute the $(t_0n+1)$-th row of $\AttP{g_i}$ for all $i\in [r]$. Similar to the proof of Theorem \ref{thm:strassen-func}, the summation of $\ell_2, \dots, \ell_{t_0}\in [N]$ will be broken down to partial summations, which correspond to computations performed from the inputs of each player. 

    In computing the poly-attention output of each branch (both  numerator and denominator as in the proof of Theorem \ref{thm:strassen-func}), let the corresponding variables of that branch be $x_{r_1}, \dots, x_{r_{t_0}}$. Now, Player $1$ would send the summations of $\ell_{r_1}, \dots, \ell_{r_{t_0}} \in [n+1:2n]\cup \{t_0n+1\}$, Player $2$ would send the summations over $\ell_{r_1}, \dots, \ell_{r_{t_0}}\in [n]\cup [2n+1:3n]\cup \{t_0n+1\}$ except the tuples that have already been sent, and so on until Player $i$ would send the summations over $\ell_{r_1}, \dots, \ell_{r_{t_0}}\in [(i-1)n+1]\cup [i.n+1:(i+1)n]\cup \{t_0n+1\}$ except the tuples that have already been sent. Since there are $t_0-1$ variables that are not fixed ($\ell_1$ is fixed to $N$) and all the $t_0$ players with their given inputs completely cover the summation required in the softmax computation of $\AttP{h}$.
    
    In this way, the players can communicate $O(Hdp\log\log N)$ bits as before to compute the value of $\AttP{g_i}\!^u_{N}$ for all $i\in [r]$ and $ u\in [H]$, and given the poly-attention outputs for all these branching polynomials, the coordinator can compute the value of $\AttP{h}$ using Lemma \ref{lem:var-sep}.

    Therefore, with a total of $O(Hdp\log \log N)$ bits (since the number of branches, $r$, of the polynomial $h$ is constant), the coordinator will be able to solve $t_0$-fold function composition. By Lemma \ref{lem:cc-lb}, $Hdp\log \log N \geq \Omega(n)$, and considering $d = O(\log n), p=n^{o(1)}$, we require $H > n^{1-o(1)}$.
\end{proof}

Next we prove that a certain class of tree-attention, given by polynomials of the form $h_t(x_1, \dots, x_{t+1}) = x_1x_2 + x_2x_3+\ldots +x_tx_{t+1}$ can simulate $t$-fold function composition. This proves Theorem \ref{thm:func-comp}, which is also the generalization of Theorem \ref{thm:2-fold}.

\begin{theorem}
\label{thm:func-comp-gen}
    For every integer $t\geq 2$, poly-attention for the polynomial 
    \[
    h_t(x_1, \dots, x_t) = x_1x_2+x_2x_3+\ldots+ x_{t}x_{t+1}
    \]
    can simulate $t$-fold function composition using one poly-attention head.
\end{theorem}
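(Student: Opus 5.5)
Our plan is to make the softmax in the last row concentrate, up to $1/\poly(n)$ error, on the single tuple $(\ell_2,\dots,\ell_{t+1})$ that traces the composition chain $x \mapsto f_1(x) \mapsto f_2(f_1(x)) \mapsto \cdots$. We use the ``sum of squares'' idea of \cite{kozachinskiy2025strassen}. Encode each value $y\in[n]$ by a vector $e(y)=(y,y^2,1)$, scaled by a large factor $\beta$. The key identity is
\[
-(u-v)^2 = -u^2+2uv-v^2,
\]
so a negated squared difference of two encoded quantities is a bilinear form in their encodings. A monomial $x_jx_{j+1}$ can therefore express $-\beta(u_j-v_{j+1})^2$, where $u_j$ is data held by the token chosen for $x_j$ and $v_{j+1}$ is data held by the token chosen for $x_{j+1}$. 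We spend a few coordinates of the $O(\log n)$ embedding for this, plus block-indicator coordinates.

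\textbf{Token layout.} Block $i$ holds the tokens $(a,f_i(a))$ for $a\in[n]$, and the final token holds $x$. The query $Q^{(1)}_N$ is taken from the $x$-token; for $j\ge 2$, variable $x_j$ is meant to range over block $j-1$. Set the weights so that:
\begin{itemize}
\item $\langle Q^{(1)}_N, Q^{(2)}_{\ell_2}\rangle$ equals $-\beta(x-a_{\ell_2})^2$ minus a large penalty $\beta'$ if $\ell_2\notin$ block $1$;
\item for $j\ge 2$, $\langle Q^{(j)}_{\ell_j}, Q^{(j+1)}_{\ell_{j+1}}\rangle$ equals $-\beta(f_{j-1}(a_{\ell_j})-a_{\ell_{j+1}})^2$ minus the block-mismatch penalty.
\end{itemize}
Each $Q^{(j)}$ is a fixed linear map of $X$, possibly after the input MLP. $Q^{(j)}$ exposes the \emph{input} coordinate of its token toward its left neighbour and the \emph{output} coordinate toward its right neighbour; this works because $Q^{(j)}$ is a single matrix whose columns we split into disjoint groups used by the two monomials. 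The monomials can share coordinates only if we are careful, so we use separate column blocks per monomial, as in the block constructions of Lemma \ref{lem:case1}, setting to zero the parts of the other matrices that are unused in each block. Then $h_t$ evaluates to $0$ exactly on the correct chain and to at most $-\beta$ on every other tuple.

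\textbf{Output.} Let $V^{(t+1)}$ encode $f_t(a)$ in one coordinate, and let $V^{(2)},\dots,V^{(t)}$ be the all-ones vector in that coordinate. The Hadamard product then equals $f_t(a_{\ell_{t+1}})$ on that coordinate. The softmax gives total weight $1-O(n^{t}e^{-\beta})$ to the unique correct tuple, so choosing $\beta=C\log n$ (entries of magnitude $\poly(n)$, within $p=n^{o(1)}$ precision) makes the output $f_t(f_{t-1}(\cdots f_1(x)))\pm 1/\poly(n)$, which rounds correctly.

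\textbf{Main obstacle.} The delicate step is the normalization: the terms $-u^2$ and $-v^2$ must be produced inside a bilinear product of a fixed query row with a varying key row. We handle this with the constant coordinate $1$ in $e(\cdot)$: place $-\beta u^2$ on the $u$-side paired with a $1$ on the $v$-side, and symmetrically. The remaining checks are that block penalties dominate and that each matrix's column groups do not interfere.
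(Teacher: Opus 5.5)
Your proposal is correct and is essentially the paper's construction: the same sum-of-squares encoding of $-(u-v)^2$ (from Kozachinskiy et al.) in a separate three-coordinate column group for each monomial $x_jx_{j+1}$, with all-ones value matrices $V^{(2)},\dots,V^{(t)}$ and $V^{(t+1)}$ carrying the answer. The only difference is cosmetic: the paper encodes absolute positions with block offsets $(j-2)n$, so the squared-difference term by itself forces each $\ell_j$ into the correct block and your separate block-indicator penalties are not needed.
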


\begin{proof}
    For solving the problem of $t$-fold function composition, we consider the $t$ functions $f_1, \dots, f_t : [n]\rightarrow[n]$. The input (before the first MLP layer) is a sequence of numbers $\phi(1), \dots, \phi(tn+1) \in [n]$, such that for $\ell\in [n]$, $j\in [t]$, we have $\phi(\ell+(j-1)t) = f_j(\ell)$, and finally $\phi(3n+1) = x$. Our task is to compute the value of $f_t(f_{t-1}\dots f_1(x))$, and we give a construction of the MLPs, the query-key weights and the value weights of poly-attention for $h_t$, such that this Transformer layer can compute the same using only one head. We adopt the construction of \cite{kozachinskiy2025strassen} due to its simplicity, and use it to define the parameters of poly-attention.

    We define the first MLP layer such that its output, i.e., the positional encoding of the $i$-th entry of the input to poly-attention, is given by:
    \begin{equation*}
        X_i = \begin{bmatrix}
            1&
            i &
            i^2 &
            \phi(i)&
            (\phi(i))^2&
            \mathbf{0}_{3k-5}
        \end{bmatrix}_{1\times 3k},
    \end{equation*}
    for $i \in [tn+1]$. Here, a precision of $p = \Theta(\log n)$ can be used. Next, we construct the weight matrices $W_{Q^{(1)}}, \dots, W_{Q^{(t)}}$. 
    
    Our goal is to create a them such that 
    \begin{equation}
    \label{eqn:func-comp-sq}
    \begin{split}
        h_t(Q^{(1)}_{\ell_1}, \dots, Q^{(t+1)}_{\ell_{t+1}}) = -A^2\log n\bigg(&(\phi(\ell_1)-\ell_2)^2 + (\ell_3-n-\phi(\ell_2))^2\\
         &+  (\ell_4 - 2n-\phi(\ell_3))^2 + \ldots + (\ell_{t+1}-(t-1)n-\phi(\ell_{t}))^2\bigg),
    \end{split}
    \end{equation}
    for a constant $A>1$. For $\ell_1 = tn+1$, this is maximized when 
    \begin{equation*}
        \begin{split}
            \ell_2 &= \phi(\ell_1) = \phi(tn+1) = x,\\
            \ell_3 &= n+\phi(\ell_2) = n+\phi(x) = n + f_1(x)=f_2(f_1(x)),\\
            &\vdots \\
            \ell_{t+1} &= (t-1)n + \phi(\ell_{t-1}) = (t-1)n + f_{t-1}(f_{t-2}\dots f_1(x)) = f_t(f_{t-1}\dots f_1(x)),
        \end{split}
    \end{equation*}
    which is precisely our required value.
    
    For constructing $Q^{(j)} \in \mathbb{R}^{n\times 3t}$ for $j\in [t+1]$, with such properties, we can define each row as:
    \begin{enumerate}
        \item for $i = 1$:
        \begin{equation*}
            Q^{(1)}_{\ell} = A\sqrt{\log n}\begin{bmatrix}
                \phi(\ell)^2\\
                \phi(\ell)\\
                1\\
                \mathbf{0}_{3t-3}^T\\
            \end{bmatrix}^T_{3k\times 1},
        \end{equation*}
        \item for $j \geq 2$:
        \begin{equation*}
        Q^{(j)}_{\ell} = A\sqrt{\log n}\begin{bmatrix}
            \mathbf{0}_{3(j-2)}\\
            -1\\
            2(\ell-(j-2)n)\\
            -(\ell-(j-2)n)^2\\
            \phi(\ell)^2\\
            \phi(\ell)\\
            1\\
            \mathbf{0}_{3(t-j)}^T
        \end{bmatrix}^T_{3k\times 1},
    \end{equation*}
    \end{enumerate}
    for all $\ell \in [n]$.

    Note that, for any $j\in [t]$, 
    \begin{equation*}
        \langle Q^{(j)}_{\ell_j}, Q^{(j+1)}_{\ell_{j+1}} \rangle = -A^2\log n(\ell_{j+1}-n-\phi(\ell_{j}))^2,
    \end{equation*}
    which is consistent with Equation \ref{eqn:func-comp-sq}. While computing the softmax entries for $\ell_1 = tn+1$, the value of $h_t(Q^{(1)}_{tn+1}, Q^{(2)}_{\ell_2}, \dots, Q^{(t+1)}_{\ell_{t+1}})$ for all $\ell_2, \dots, \ell_{t+1}$ that do not maximize this value, will be a factor of $n^{-A}$ less than the maximum value. Since while computing softmax, we take a sum over all $\ell_2, \dots, \ell_{t+1} \in [tn+1]$, as long as we choose $A > \Omega(\sqrt{t})$, the maximum value will be obtained in the correct setting of $\ell_j$'s. 
    
    For outputting the value, we set the first column of all the $V^{(j)}$'s, $j\in [2: t]$, as ones, and the rest as zeros; and for $V^{(t+1)}$, we define the first column as $V^{(t+1)}_{\ell,1} = \ell$, for all $\ell \in [tn+1]$, and the rest as zeros. The error in the final output will be $n^{t-A^2}$, and as long as this is less than the number of bits of precision, we have the correct output.
\end{proof}

As we will see, even though poly-attention for $h_t$ will be able to solve $t$-fold function composition, the previous theorem, Theorem \ref{thm:lb-func-comp-general}, shows that not only poly-attention for $h_{t-1}$ can not simulate $t$-fold function composition, but neither can the poly-attention for the polynomial $h(x_1, \dots, x_{t+2}) = x_1x_2+x_2x_3+\ldots x_{t-1}x_{t}+x_1x_{t+1}x_{t+2}$, which is a polynomial in $t+2$ variables!

\begin{remark}
    From Theorem \ref{thm:func-comp-gen}, we saw that poly-attention for $h_2(x_1, x_2, x_3) = x_1x_2+x_2x_3$ can simulate 3-fold function composition just as Strassen-attention. Again, Strassen-attention is poly-attention for the polynomial $h(x_1, x_2, x_3) = x_1x_2+x_2x_3+x_3x_1$, which is just one monomial different from $h_2$. However, even though they might seem similar, the cost of this one monomial is huge-- $\AttP{h_2}$ can be computed in $\tilde O(n^2)$ time, while computing $\AttS$ requires at least $\Omega(n^\omega)$ time.
\end{remark}

%% file: apx_root-finding.tex
\section{Proofs of Section \ref{sec:root-finding}: polynomial root-finding}

In this final section of the proofs, we prove the strong characterization of representational strength of poly-attention introduced in Section \ref{sec:root-finding}. We show this by giving a construction of the weight matrices of a poly-attention mechanism which solves polynomial root-finding (Theorem \ref{thm:poly-root}).

In this problem of polynomial root-finding, for a fixed polynomial $p(x_1, \dots, x_t)$ and given as input a set $S\subseteq \mathbb{R}^{n}$, we are interested in finding if there are elements $y_1, \dots, y_t \in S$ such that $p(y_1, \dots, y_t) = 0$. For the output, if $y^0_1, \dots, y^0_t$  is a root of $p$ and  $S[j] = y^0_1$, then in the row $j$ of the output, we want to output an encoding of that root.

\begin{theorem}[Polynomial root-finding]
\label{thm:root-finding}
    For a polynomial $p(x_1, \dots, x_t)$ of degree $k_0$, and given an input  $S \subseteq \mathbb{R}^{n}$, for any integers $k, s$ if a polynomial $h(x_1, \dots, x_t)$ of degree $k$ and sparsity $s$ is such that all the monomials of the polynomial $p^2$ divide at least some degree $k$ monomial of $h$,  then poly-attention for $h$ with 2 attention heads can perform polynomial root-finding for $p$ with the input.
\end{theorem}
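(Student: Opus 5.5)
We follow the ``sum of squares'' strategy used in Theorem \ref{thm:func-comp-gen}: the exponent of the softmax should equal $-A^2\log n\cdot p(y_{\ell_1},\dots,y_{\ell_t})^2$. Then the attention weight concentrates on index tuples $(\ell_2,\dots,\ell_t)$ that complete $y_{\ell_1}$ to a root of $p$. Write $p^2=\sum_{\mu} c_\mu \mu$, where each monomial $\mu=\prod_j x_j^{e_j}$ has degree at most $2k_0$ and the $c_\mu$ are constants. We have to realise $-A^2\log n\cdot p^2(y_{\ell_1},\dots,y_{\ell_t})$ as $h(Q^{(1)}_{\ell_1},\dots,Q^{(t)}_{\ell_t})$. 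Since $Q^{(j)}_\ell$ may be any fixed function of the token (the first MLP computes it), we place in row $\ell$ of $Q^{(j)}$ the features $1, y_\ell, y_\ell^2,\dots,y_\ell^{2k_0}$. This handles non-multilinearity: the power $x_j^{e_j}$ becomes the coordinate $y_\ell^{e_j}$ of the single vector $Q^{(j)}_\ell$.

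\textbf{Construction.} Split the columns into $s$ blocks, one per monomial of $h$, as in Lemma \ref{lem:case1}. By hypothesis, every monomial $\mu$ of $p^2$ has support contained in the variable set of some degree-$k$ monomial $m_{i(\mu)}$ of $h$; fix one such $i(\mu)$. In block $i(\mu)$ we reserve one coordinate for $\mu$. At that coordinate, variable $x_j$ of $m_{i(\mu)}$ contributes $y_{\ell_j}^{e_j}$ when $x_j$ appears in $\mu$ and the constant $1$ otherwise, and the highest-preference variable additionally carries the factor $-A^2\log n\, c_\mu$. The $k$-wise inner product over block $i(\mu)$ then sums exactly these monomial terms.

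\textbf{Cancellation.} Lower-degree monomials of $h$ also evaluate on these blocks and produce spurious terms. We remove them with the correcting coefficients $s_i$ built in the proof of Lemma \ref{lem:case1}: the block of every other monomial $m_{i'}$ of $h$ gets negated copies chosen in monomial order, so that each submonomial's total contribution cancels. The dimension stays $O(s\cdot\#\mathrm{mon}(p^2))=O(1)$ times a constant, and the entries are $\poly(n)$. Scaling by $1/d$ is absorbed into $A$.

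\textbf{Decoding and the second head.} With $\ell_1=j$ fixed, the softmax weights satisfy $\exp(-A^2\log n\, p^2/d)$. If $p$ takes integer values on $S$, then every non-root tuple has weight at most $n^{-A'}$ relative to a root tuple, and there are at most $n^{t-1}$ such tuples. We then set the values $V^{(j)}_\ell$ to encode $y_\ell$, with ones elsewhere in the Hadamard product. The output row $j$ is then (up to $n^{t-A'}$ error) an average of encodings over root tuples containing $y_j$ in the first slot. The second head is used for two things. It computes the same attention with value all ones on a marker, so we can normalise and detect whether any root exists: if none exists, the weights are spread uniformly and the first head's output is meaningless. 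It also disambiguates when several roots share $y_j$; the output MLP compares the two heads.

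\textbf{Main obstacle.} We expect the main obstacle to be the cancellation step. We must show that the triangular system of $s_i$'s from Lemma \ref{lem:case1} still works when each block carries several coordinates for different monomials of $p^2$ with different exponent patterns. The worry is that two different monomials $\mu$ reserved in the same block could interfere through sub-monomials of $h$. We would handle this by giving each $\mu$ its own disjoint coordinates, and by zeroing coordinates per block, so that the cancellation argument applies coordinate by coordinate.
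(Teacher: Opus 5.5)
Your first head is essentially the paper's construction. You use the features $1,y,\dots,y^{2k_0}$ and one column block per monomial of $h$. Inside a degree-$k$ block you reserve one coordinate per monomial of $p^2$, with constant entries for the unused variables and the coefficient placed on the highest-preference variable. Lower-degree monomials are handled by the triangular $s_i$-corrections of Lemma~\ref{lem:case1}. Doing the cancellation coordinate by coordinate is fine, because each monomial of $h$ acts additively across coordinates.

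The gap is in the second head. Poly-attention already divides by its denominator, so its output is a convex combination of value vectors. A head that ``computes the same attention with value all ones on a marker'' therefore outputs exactly $1$ in that coordinate at every row, whether or not a root exists. It carries no information: the normalization discards the absolute size of the exponents, so the weights alone cannot tell you whether the best tuple has $p=0$. Your description of the no-root case is also wrong. The weights do not spread uniformly; they concentrate on the minimizers of $p^2$, so the first head still outputs a plausible-looking tuple. Similarly, ``comparing the two heads'' cannot separate several roots that share $y_j$, since both heads average with the same weights. The paper's proof tacitly assumes a unique maximizer too, so this is not where you fall short of it, but the fix you claim does nothing.

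What is actually missing is the query token's own value. Poly-attention aggregates only $V^{(2)},\dots,V^{(t)}$, so row $j$ of your first head contains the selected $(y_{\ell_2},\dots,y_{\ell_t})$ but never $y_j$. Without $y_j$ the output MLP can neither output an encoding of the full root nor check whether the selected tuple is a root. Evaluating $p(y_j,\hat y_2,\dots,\hat y_t)$ and testing for zero is the reliable way to decide existence, and it needs $y_j$.

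The paper spends its second head on exactly this. That head's output at row $\ell_1$ is $y_{\ell_1}$ in the first coordinate, and it is added to the first head's output. The output MLP then evaluates $p$ on the assembled tuple and checks whether it vanishes. One way to realize such a head is to concentrate the attention on $\ell_j=\ell_1$ via the sum-of-squares term $-(y_{\ell_1}-y_{\ell_j})^2$; the elements of $S$ are distinct, and you are already assuming integrality. Then let $V^{(j)}$ encode $y$.

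If you replace your second head with this one, your proposal follows the paper's argument.
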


\begin{proof}
    We give a construction of the MLP layers, query-key weights and the value weights such that the Transformer can find a root of the polynomial from $S^t$, and output it. First, given $S$, considering $s_0$ as the sparsity of $p^2$, we set the embedding dimension as $d = s_0.s$. For the input $X\in \mathbb{R}^{n\times (s_0.s)}$, let the embedding of $X_i$ after the first MLP layer be 
    \begin{equation*}
        X_i = \begin{bmatrix}
            1 &
            y_i &
            y_i^2 &
            \dots &
            y_i^{2k_0}&
            \mathbf{0}_{s_0.s-2k_0-1}
        \end{bmatrix}_{1\times (s_0.s)},
    \end{equation*}
    where we require $s_0.s > 2k_0+1$.
    
    \paragraph{Construction of first head.} Now,  our goal is to define the weight matrices such that after computing the query-key matrices $Q^{(1)}, \dots, Q^{(t)}$, the value of $h(Q^{(1)}_{\ell_1}, \dots, Q^{(t)}_{\ell_t})$ will yield $-n^2p(y_{\ell_1}, \dots, y_{\ell_t})^2$ where $y_i = S[i]$, and $\ell_1, \dots, \ell_t \in [n]$. 
    
    Choose a $h(x_1, \dots, x_t)$ of degree $k$ (where $k$ is a number greater than the maximum number of variables in each monomial of $p^2$), and is of any sparsity $s$ (satisfying $s_0.s > 2k_0$), where each monomial of $p^2$ divides at least some degree $k$ monomial of $h$. We assign each of these monomials of $p^2$ to exactly one degree $k$ monomial $m_i$ of $h$ for $i\in [s]$, and we associate a set $T_i$ which stores all the monomials of $p^2$ that are assigned to this monomial $m_i$ of $h$.
    
    Now, define $Q^{(1)}, \dots, Q^{(t)} \in \mathbb{R}^{n\times (s_0.s)}$, where each column block is of size $s_0$, as:
    \begin{enumerate}
        \item For the $i$-th column block, where for each column $j\in [s_0]$ of the block, we consider the exponents of the variables of $p$ such that $h(Q^{(1)}_{\ell_1}, \dots, Q^{(t)}_{\ell_t})$ will give evaluations of the $j$-th monomial of $-p^2$ at $(y_{\ell_1}, \dots, y_{\ell_t})$, for all $\ell_1, \dots, \ell_t \in [n]$. For these values of $i, j$, we will simply denote these terms as the monomial corresponding to this column $(i-1)s_0+j$.
        \begin{enumerate}
            \item If the $j$-th monomial of $p^2$, for $j\in [s_0]$,   consists of $k_j$ variables, and is in $T_i$ for some $i\in [s]$, let $C_jx^{d_{r_1}}_{r_1}\ldots x^{d_{r_{k_j}}}_{r_{k_j}}$ be this monomial where $x_{r_1}$ is the highest preference variable. Then, we define  the $j$-th column of the $i$-th column block of $Q^{(r_1)}$ as
            \[
            Q^{(r_1)}_{1:n, (i-1)s_0 + j} := n\begin{bmatrix}
                -C_jy_{1}^{d_{r_1}}\\
                \vdots\\
                -C_jy_n^{d_{r_1}}
            \end{bmatrix},
            \]
            and  for $1< q \leq k_j$,
            \[
            Q^{(r_q)}_{1:n, (i-1)s_0 + j} := n\begin{bmatrix}
                y_{1}^{d_{r_q}}\\
                \vdots\\
                y_n^{d_{r_q}}
            \end{bmatrix}.
            \]
            For all  $r\in [t]$ such that $x_r$ is a variable of $m_i$ and the $j$-th monomial of $p^2$ does not contain $x_r$ but is present in $T_i$, we define 
            \[
            Q^{(r)}_{1:n, (i-1)s_0 + j} := n.\mathbf{1}_n,
            \]
            and otherwise, if $x_r$ is not present in $m_i$
            \[
            Q^{(r)}_{1:n, (i-1)s_0 + j} := \mathbf{0}_n.
            \]
            \item If the $j$-th monomial of $p^2$, for $j\in [s_0]$, is not in $T_i$, then we define
            \[
            Q^{(r)}_{(1:n, (i-1)s_0+j)} =\mathbf{0}_n  ,
            \]
            for all $r\in [t]$.
        \end{enumerate}
        \item Fixing an $i$ such that $m_i$ is of degree $\leq k$, we define the query-key matrices as before, to cancel out the terms which were defined in the degree $k$. Each degree $k$ term had $s_0$ terms which could lead to non-zero values, and now for the block $i$, corresponding to the monomial $i$, the $r$-th column in that block will cancel out the $j$-th columns of each block obtained from the degree $k$-terms, for $j\in [s_0]$. 
        
        Let $s_i^j$ be the integer which is the number of occurrences of $j$-th monomial of $p^2$ while computing the monomial containing variables $x_1, \dots, x_t$ corresponding to  $m_i(Q^{(1)}_{\ell_1}, \dots, Q^{(t)}_{\ell_t})$, when we consider each of the monomials $m_{\ell}$ ordered higher preference than $i$, (i.e., $\ell < i$), and is divisible by $m_i$. 
        
        As before, $s^j_i$ is the sum defined by adding:
        \begin{itemize}
            \item $-C_j$ whenever $\ell < i$, $m_i$ divides $m_\ell$,  degree of $m_\ell$ is exactly $k$, and  the highest priority variable of $m_\ell$ is present in $m_i$.
            \item $-s^j_\ell$ whenever $\ell < i$, $m_i$ divides $m_\ell$, degree of $m_\ell$ is less than $k$, and  the highest priority variable of $m_\ell$ is also present in $m_i$.
            \item $-1$  otherwise when the above conditions are not met but $m_\ell$ divides $m_i$.
            \item $0$ in all other cases.
        \end{itemize}

        For every $j\in [s_0]$, if $x_r$ is not present in monomial $j$ of $p^2$ for $r\in [t]$, we just set 
        $$Q^{(r)}_{(1:n, (i-1)s_0+j)} :=  \mathbf{0}_{n},$$ 
        otherwise, for the highest preference $x_{r_1}$ variable of the $j$-th monomial of $p^2$, we define:
        $$Q^{(r_1)}_{(1:n, (i-1)s_0+j)} = n \begin{bmatrix}
            s^j_iy^{d_{r_1}}_1\\
            s^j_iy^{d_{r_1}}_2\\
            \vdots\\
            s_iy^{d_{r_1}}_n
        \end{bmatrix}_{n\times s_0.s},$$
         and for all other $r$ such that $x_r$ divides this monomial,
        $$Q^{(r)}_{(1:n, (i-1)s_0+j)} = n  \begin{bmatrix}
            y^{d_r}_1\\
            y^{d_r}_2\\
            \vdots\\
            y^{d_r}_n
        \end{bmatrix}_{n\times s_0.s}.$$
    \end{enumerate}
    Notice that in these constructions, we have only used linear combinations of $y_r^{q}$'s for $r \in [t]$ and $q \in [2k_0]$. Therefore, weight matrices $W_{Q^{(r)}}\in \mathbb{R}^{(s_0.s)\times (s_0.s)}$ exist for every fixed polynomial $p$ such that 
    \begin{equation*}\underbrace{
        \begin{bmatrix}
            1 & y_1^1 &\ldots & y^{2k_0}_1 & 0 & \ldots & 0 \\
            1 & y_2^1 &\ldots & y^{2k_0}_2 & 0 & \ldots & 0 \\
            \vdots &  &       & \vdots   &      &       & \vdots \\
            1 & y_n^1 &\ldots & y^{2k_0}_n & 0 &\ldots & 0 
        \end{bmatrix}^T}_{s_0.s} W_{Q^{(r)}}
    \end{equation*}
    yield the required $Q^{(r)}$'s.
    For defining the value matrices, for the first $t$ coordinates, the $r$-th coordinate of $V^{(r)}$, $r\in [2: t]$ stores the corresponding value of $x_r$, and all the other entries are of the coordinates in $[2: t]\backslash \{r\}$ are one, and the first coordinate is zero. More specifically, we define
    \begin{equation*}
        V^{(r)} = \begin{bmatrix}
            0 & 1 & \dots & 1 & y_1 & 1 &\dots & 1 & 0 & \dots & 0\\
            0 & 1 & \dots & 1 & y_2 & 1 &\dots & 1 & 0 & \dots & 0\\
            \vdots &  &  &  & \vdots &  &  &   &   &  & \vdots \\
            0 & 1 & \dots & 1 & y_n & 1 &\dots & 1 & 0 & \dots & 0
        \end{bmatrix},
    \end{equation*}
    where the $r$-th column has the values of the $y_i$'s.

    Using the construction defined above, we have $h(Q^{(1)}_{\ell_1}, \dots, Q^{(t)}_{\ell_t}) = -n^kp^2(y_{\ell_1}, \dots, y_{\ell_t})$ since the degree $k$ monomials of $h$ are what contribute to $-p^2(y_{\ell_1}, \dots, y_{\ell_t})$ from the corresponding column blocks. Inside each of these column blocks corresponding to degree $k$ monomials of $h$, the $j$-th column for $j\in [s_0]$ gives the value of the $j$-th monomial of $-p^2$ at $(y_{\ell_1}, \dots, y_{\ell_t})$. Due to our construction, all the values of $m_i(Q^{(1)}_{\ell_1}, \dots, Q^{(t)}_{\ell_t})$ are zeros when $m_i$'s are of degree $< k$, which finally gives us the required result.

    Now, for each fixed $\ell_{1}$, the value of $h(Q^{(1)}_{\ell_1}, \dots, Q^{(t)}_{\ell_t}) = -p^2(y_{\ell_1}, \dots, y_{\ell_t})$ which is maximized for some indices ${\ell_2^0}, \dots, {\ell_t^0}$, is at least $e^{n^2}$ factor larger than all the other values in the summation $\sum_{\ell_2, \dots, \ell_t \in [n]}e^{h(Q^{(1)}_{\ell_1}, \dots, Q^{(t)}_{\ell_t})}$. With the given construction of $V^{(r)}$'s, the values of $y_{\ell_2^0}, \dots, y_{\ell_t^0}$ for which $-p^2(y_{\ell_1}, *)$ is maximized, will be present in the first $t$ coordinates of the output $\AttP{h}_{\ell_i}$.

    \paragraph{Construction of second head.} Finally, we need to verify that if there exists some $\ell_1^0$ such that the values of $x_2, \dots, x_t$ encoded in $\AttP{h}_{\ell_1^0}$ indeed is a root of the polynomial. For this, we need the value of $y_{\ell_1}$'s for each of the $\ell_1$-th coordinate, and we incorporate this by using a second attention-head, whose output matrix contains the vector $\begin{bmatrix} y_1 & \dots & y_n \end{bmatrix}^T$ in the first column and all zeros elsewhere. 
    
    Therefore, when we add the two attention heads, the $\ell_1$-th row will contain the values of $(y_{\ell_1}, \dots, y_{\ell_t})$ which maximizes the value of $-p^2(y_{\ell_1}, *)$. Finally, we can check using the output MLP layer if indeed the value is a root of the polynomial.
\end{proof}

%% file: apx_experiments.tex
\section{Experimental details}
\label{sec:experiments}

\subsection{Function composition}

In this section, we explain the experimental setup behind Figure \ref{fig:graph}. We train a transformers that uses self-attention for one layer, a transformer that uses self-attention for two layers, as well as a transformer that uses tree-attention for one layer, for the attention polynomial $h(x_1, x_2, x_3) = x_1x_2 + x_2x_3$. (This is the polynomial from Theorem~\ref{thm:2-fold} above.) We infer from the experimental findings that tree-attention is better: it is faster, more learnable, and uses less space compared to its representational counterpart, the two layer self-attention. 

In the remainder of this subsection, we first explain the details behind Figure \ref{fig:graph}, which shows that despite having less trainable parameters than two layer self-attention tree-attention is more learnable. Note that two layer self-attention requires two query matrices, two key matrices, and two MLP layers, while tree-attention requires only three query-key matrices and one MLP layer. Second, we show that the time to compute tree-attention is comparable to the runtime to compute two-layer self-attention.

\paragraph{Problem set-up.} We solve the task of function composition, where, for an integer $n$, given two functions $f_1, f_2 : [n]\rightarrow [n]$ and a value $x\in [n]$, we are interested in computing the value of $f_1(f_2(x))$.

We know that a two layer transformer using self-attention can solve function composition but one layer can not \cite{peng2024limitations}, and we further proved in Theorem \ref{thm:2-fold} that tree attention can solve it as well. We show that these theoretical results are in line with practice, where transformers with two layer self-attention as well as transformers with one layer tree-attention can both solve 0-function composition for $n=25$ (which means the number of tokens is $2n+1 = 51$).

\paragraph{Input generation.} As described above, we train the transformers to learn $f_1(f_2(x))$ where $f_1, f_2 : [n]\rightarrow [n]$, and $x\in [n]$, for $n=25$. The functions $f_1, f_2$, and the value $x$, are generated uniformly at random from the set $[n]$ for each batch in each epoch.

With these $f_1, f_2$ and $x$, the input tokens to the transformers are given by a sequence of $2n+1$ tuples, $(i, f_1(i))$ for $i\in [n]$, followed by $(i, f_2(i-n))$ for $i\in [n+1:2n]$ and finally $(2n+1, x)$. 

We define separate embedding tables for each of the index (first element of the tuple) and the value of the function (second element of the tuple), given by embedding matrices $\mathbb{E}_1 \in \mathbb{R}^{(2n+1)\times d}$ and $\mathbb{E}_2 \in \mathbb{R}^{n\times d}$, where $d$ is the embedding dimension. The input to the self-attention for the token $(i, j)$ is then given as $\mathbb{E}_1(i) + \mathbb{E}_2(j)$.

\paragraph{Architecture details.} We choose a sequence length of $51$. The transformer has an embedding dimension $d=32$, number of heads $H=4$, followed by an MLP layer which uses ReLU activation with one hidden layer of size $128$. We also use the standard sinusoidal positional encoding from \cite{vaswani2017attention}, given by 
\begin{equation*}
    \begin{split}
        & PE_{i, 2j} = \sin\left(\frac{i}{10000^{\nicefrac{2j}{d}}}\right),\\
        & PE_{i, 2j+1} = \cos\left(\frac{i}{10000^{\nicefrac{2j}{d}}}\right),
    \end{split}
\end{equation*},
for $i\in [n]$, $j \in \{0, \ldots, d/2\}$, which is added to the $i$-th token.

\paragraph{Training details.} For learning, we use a batch size of $64$, a learning rate of $0.001$ and train the model using an Adam optimizer. The model is trained for $100,000$ epochs on a 2024 Apple Macbook Air with an M3 Chip, and the evaluations have been shown in Figure \ref{fig:graph} and Figure \ref{fig:graph2}.

\begin{figure}[H]
    \centering
    \includegraphics[width=0.5\linewidth]{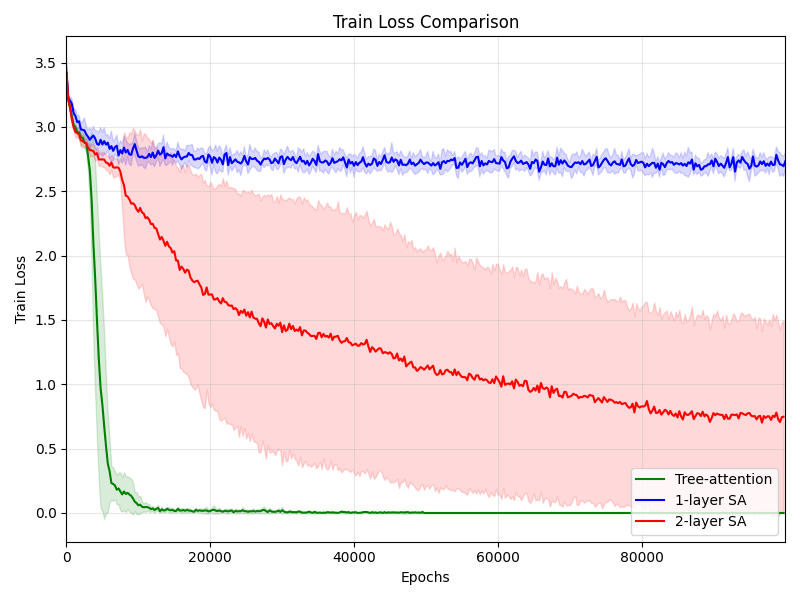}
    \caption{Training loss per epoch, averaged over 10 seeds, for learning $f_1(f_2(x))$ for sequence length $51$, on a single layer of tree-attention, one layer self-attention and two layer self-attention. Tree-attention learns faster and has less fluctuations.}
    \label{fig:graph2}
\end{figure}

\begin{figure}[H]
    \centering
    \includegraphics[width=0.5\linewidth]{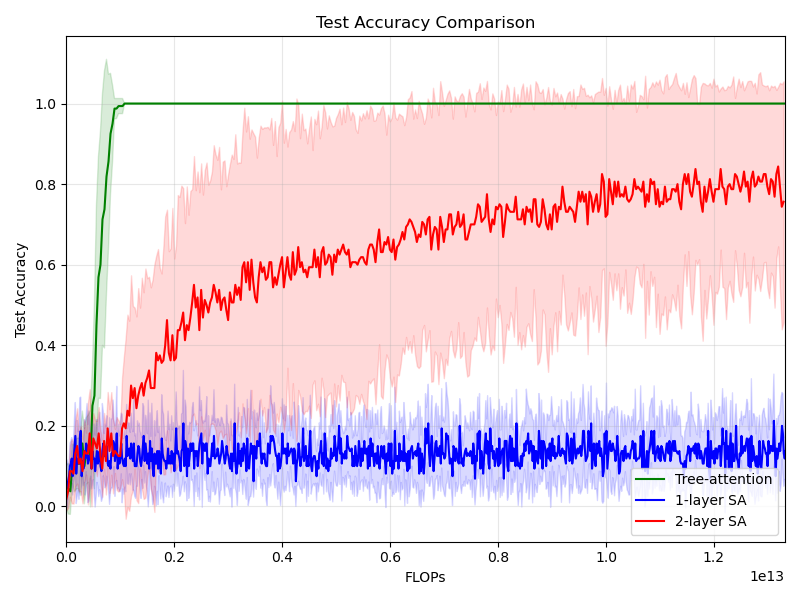}
    \caption{Accuracy per FLOP, averaged over 10 seeds, for tree-attention, 1-layer self-attention and 2-layer self-attention for learning function composition. Notice that tree-attention learns more efficiently and the learning is stable.}
    \label{fig:FLOPS}
\end{figure}

\paragraph{Observed inference running time.} We plot the running time of computing various attention schemes for sequence lengths in $\{20, 50, 100\}$. We use vocabulary size $v = 32$, embedding dimension $d = 64$, number of heads $H=4$, and hidden layer width $256$ for the transformers, and evaluate it on a batch of size $B = 64$.

With this architecture, we randomly choose query, key and value weights in $\mathbb{R}^{d\times d}$, and random weights and biases for the MLP layer. Then we randomly generate $1000$ sets of inputs  $X\in \mathbb{R}^{B \times n\times v}$ and compute the running time of the attention mechanisms. The average running time  has been depicted in the following table.

\begin{figure}[!h]
    \centering
    \begin{tabular}{cccccc}
    Seq len & 1-layer SA (ms) & 2-layer SA (ms) & 1-layer tree (ms) & 1-layer 3-tensor (ms) & 1-layer Strassen (ms) \\ \hline
    20 & $1.076 \pm 0.057$ & $1.775 \pm 0.085$ & $1.367 \pm 0.057$ & $1.442 \pm 0.062$ & $1.593 \pm 0.086$\\
    50 & $1.079 \pm 0.048$ & $1.757 \pm 0.060$ & $1.363 \pm 0.055$ & $2.911 \pm 0.044$ & $1.594 \pm 0.088$\\
    100 & $1.080 \pm 0.048$ & $1.781 \pm 0.097$ & $1.374 \pm 0.060$ & $13.813 \pm 0.051$ & $3.395 \pm 0.081$
\end{tabular}

    \caption{Average running time of various attention schemes implemented on NVIDIA A100 GPU. Tree-attention performs as fast as self-attention, implying that hidden constants in the time complexity computations are small.}
    \label{tab:time}
\end{figure}

\paragraph{Discussion.} We obtain the following conclusion about tree-attention from these experiments.
\begin{itemize}
    \item One layer tree-attention can successfully learn function composition, despite having only three query-key matrices and only one MLP layer (compared to two-layer self-attention that has two query matrices, two key matrices and two MLP layers).
    \item One layer tree-attention exhibits better learnability for function composition than two layer self-attention as in Figure \ref{fig:graph}, since accuracy increases faster for tree-attention.
    \item  Tree-attention has an efficient inference time. From Table \ref{tab:time}, we infer that it has a running time comparable to self-attention, and in our cases, even outperforms two layer self-attention.
\end{itemize}

\subsection{COGS Dataset}

To give further evidence of the practical advantage of tree-attention, we evaluate two simple models (one with self-attention and one with tree-attention) on a benchmark NLP task, the COGS dataset \cite{kim2020cogs}. We compare the two models, which differ only in which attention mechanism they use, and evaluate the difference. 

COGS is a dataset which challenges the model to perform a composition based task, in which it must parse sentences into fragments, and understand the relationships of the different fragments throughout the sentence. In our experiment, the words of the sentences, along with special characters, are input to the transformer after encoding by a pre-trained tokenizer, and the error is computed on the output which is expected to be a semantically parsed sentence.

\begin{example}
    \textbf{Input: }A melon was given to a girl by the guard .\\
    \textbf{Target: } * guard ( x \textunderscore{} 9 ) ; melon ( x \textunderscore{} 1 ) AND give . theme ( x \textunderscore{} 3 , x \textunderscore{} 1 ) AND give . recipient ( x \textunderscore{} 3 , x \textunderscore{} 6 ) AND give . agent ( x \textunderscore{} 3 , x \textunderscore{} 9 ) AND girl ( x \textunderscore{} 6 )
    
    From the above example, the transformer is supposed to figure out that  `guard' is the subject, and is present at position 9 of the sentence, where indexing starts from 0. The other nouns are `melon' and `girl', in positions 1 and 6 respectively. The verb `give', present at position 3, has logical forms given by a theme `melon' (position 1), recipient `girl' (position 6), and agent `guard' (position 9).
\end{example}

We trained both tree-attention and self-attention on the training set of COGS, and tested them on in-distribution test set as well as a generalization test set. Including a generalization test set is important to make sure the model is not just memorizing the distribution. We compare the exact token match accuracy for both, and infer that tree-attention performs better in the generalization set than self-attention. This gives strong evidence of the inherent ability of tree-attention to solve composition-related tasks.

Figure \ref{tab:COGS} shows the table for the final accuracy results. The learning plots have been described in Figure \ref{fig:COGS}, where tree-attention almost always out-performs self-attention. The training was performed over 10 randomly chosen seeds, and we see in Figure \ref{fig:COGS-seeds}, that tree-attention achieves considerable performance of around $30$-$50 \%$ accuracy on almost half of the seeds. 

\paragraph{Implementation details.} We use simple 3 layer encoder-only transformers, having embedding dimension 64 having 4 heads, and an MLP with a hidden layer of size $256$. Both transformer models (using tree-attention and self-attention), were trained for 200 epochs with a batch size of 32 (755 batches per epoch) with a learning rate of $0.001$, and tested on the in-distribution test set and the generalization test set. The results for in-distribution test token accuracy were similar, both giving an exact match of around $97.5\%$. The generalization set accuracies have been plotted as follows.

\begin{figure}[h]
\begin{tabular}{c|cc}
    & Tree-attention & Self-attention\\ \hline
   Generalization token accuracy & $\mathbf{0.727691 \pm 0.013486}$ & $0.723993 \pm 0.008649$ \\
   Generalization exact match & $\mathbf{0.264919 \pm 0.127609}$ & $0.239024 \pm 0.087350$\\
\end{tabular}

    \caption{Table for mean accuracies and standard deviation over 10 random seeds.}
    \label{tab:COGS}
\end{figure}

\begin{figure}[H]
    \centering
    \includegraphics[width=\linewidth]{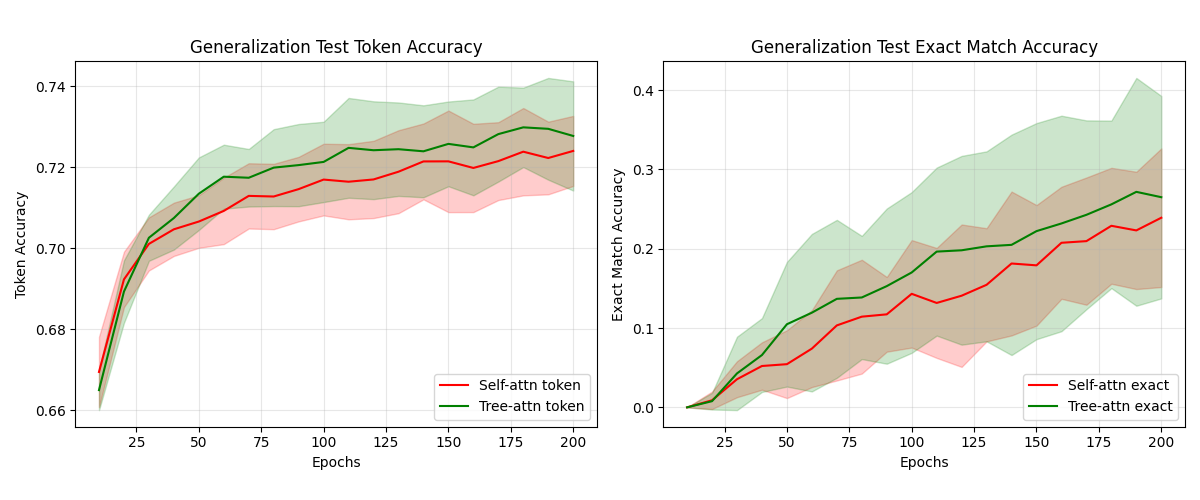}
    \caption{Plots for mean $\pm$ one standard deviation over 10 random seeds for token accuracy and exact match accuracy on the generalization set. Tree-attention has higher accuracy than self-attention.}
    \label{fig:COGS}
\end{figure}

\begin{figure}[H]
    \centering
    \includegraphics[width=\linewidth]{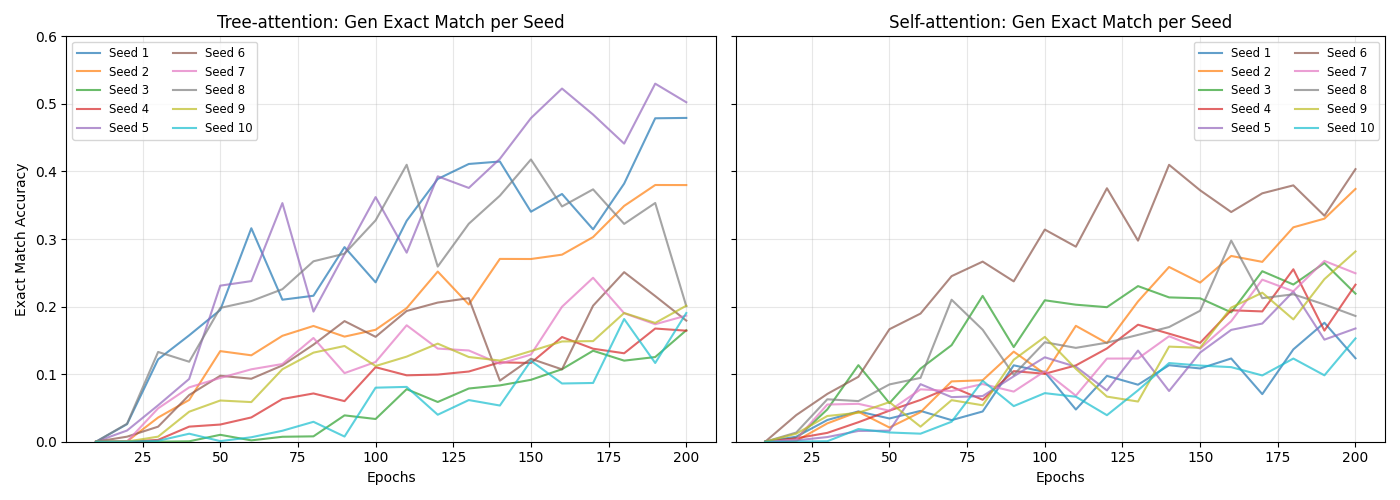}
    \caption{Exact match accuracies with each seed on the generalization set for tree-attention and self-attention. Tree-attention reaches $\sim40\%$ accuracy for 4 out of 10 random seeds.}
    \label{fig:COGS-seeds}
\end{figure}

\paragraph{Conclusion.} From the learning experiments, we infer that tree-attention is more expressive when it is used to solving composition based task. As can be inferred in Figures \ref{fig:COGS} and \ref{fig:COGS-seeds}, tree-attention noticeable performance benefits for several seeds, which calls for future work to explore learning heuristics for further strengthening the results.